\newcommand{\vv}{\bvec{v}}
\newcommand{\W}{\mathbf{W}}
\newcommand{\Rbar}{\overline{\R}}
\newcommand{\Rtilde}{\widehat{\R}}
\newcommand{\Rhat}{\widehat{\R}}
\renewcommand{\P}{\mathbf{P}}
\newcommand{\R}{\mathbf{R}}
\newcommand{\Uhat}{\hat{u}}
\newcommand{\Vhat}{\hat{v}}
\newcommand{\OT}{\mathrm{OT}}
\newcommand{\tildeOT}{\widetilde{\OT}}
\newcommand{\KL}{\mathrm{KL}}
\newcommand{\Prob}{\mathbb{P}}
\newcommand{\inner}[1]{{\left\langle #1 \right\rangle}}
\newcommand{\E}{\mathbb{E}}
\newcommand{\supp}{\mathrm{supp}}
\newcommand{\ones}{\mathbf{1}}
\newcommand{\diag}{\mathrm{diag}}
\newcommand{\bvec}[1]{\mathbf{#1}}
\newcommand{\tmax}{t_\text{max}}
\newcommand*\diff{\mathop{}\!\mathrm{d}}
\newcommand{\Reg}{\mathrm{Reg}}
\newcommand{\Fit}{\mathrm{Fit}}
\newcommand{\indep}{\perp \!\!\! \perp}
\newcommand{\Rset}{\mathbb{R}}
\newcommand{\Xset}{\mathcal{X}}
\newcommand{\vol}{\mathrm{vol}}
\newcommand{\Pc}{\mathcal{P}}
\newcommand{\rhohat}{\widehat{\rho}}
\newcommand{\rhotilde}{\tilde{\rho}}
\newcommand{\rhobar}{\overline{\rho}}
\newcommand{\A}{\mathcal{A}}
\newcommand{\TV}{\mathrm{TV}}
\newcommand{\F}{\mathcal{F}}
\newcommand{\T}{\mathcal{G}}
\newcommand{\Ex}{\mathbb{E}}
\newcommand{\Ent}{\mathrm{H}}
\newcommand{\DF}{\mathrm{DF}}
\newcommand{\Ni}{\mathbb{N}}
\newcommand{\epstheo}{h} %
\newtheorem{theorem}{Theorem}[section]
\newtheorem{prop}[theorem]{Proposition}
\newtheorem{definition}[theorem]{Definition}
\newtheorem{lemma}[theorem]{Lemma}
\theoremstyle{remark}
\newtheorem{remark}[theorem]{Remark}
\newtheorem*{theorem*}{Theorem}
\numberwithin{equation}{section} %
\begin{document}
\begin{frontmatter}
\title{Towards a mathematical theory of trajectory inference}

\begin{aug}
\author[A, C]{Hugo Lavenant\ead[label=e1]{hugo.lavenant@unibocconi.it}},
\author[B, C]{Stephen Zhang\ead[label=e2,mark]{syz@math.ubc.ca}},
\author[B]{Young-Heon Kim\ead[label=e3,mark]{yhkim@math.ubc.ca}}
\and
\author[B]{Geoffrey Schiebinger\ead[label=e4,mark]{geoff@math.ubc.ca}}

\address[A]{Department of Decision Sciences and BIDSA, Bocconi University,
\printead{e1}}

\address[B]{Department of Mathematics, University of British Columbia,
\printead{e2,e3,e4}}

\address[C]{Joint first authors}
\end{aug}

\maketitle

\begin{abstract}
We devise a theoretical framework and a numerical method to infer trajectories of a stochastic process from samples of its temporal marginals.  This problem arises in the analysis of single cell RNA-sequencing data, which provide high dimensional measurements of cell states but cannot track the trajectories of the cells over time. 
We prove that for a class of stochastic processes it is possible to recover the ground truth trajectories from limited samples of the temporal marginals at each time-point, and provide an efficient algorithm to do so in practice. The method we develop, Global Waddington-OT (gWOT), boils down to a smooth convex optimization problem posed globally over all time-points involving entropy-regularized optimal transport. We demonstrate that this problem can be solved efficiently in practice and yields good reconstructions, as we show on several synthetic and real datasets. 
\end{abstract}

\begin{keyword}
\kwd{trajectory inference}
\kwd{stochastic processes}
\kwd{convex optimization}
\kwd{optimal transport}
\kwd{developmental biology}
\kwd{single cell RNA-sequencing}
\end{keyword}

\end{frontmatter}

\tableofcontents

\section{Introduction}\label{sec:intro}

New measurement technologies like single cell RNA-sequencing (scRNA-seq)~\cite{Klein20151187,Macosko20151202} are revolutionizing the biological sciences. It is now possible to capture high-dimensional measurements of cell states for large populations of cells. One of the most exciting prospects associated with this new trove of data is the possibility of studying temporal processes such as differentiation and development: if we could analyze the trajectories cells traverse over time, we might understand how cell types emerge and are stabilized, and how they destabilize with age or in diseases such as cancer. 
Current measurement technologies, however, cannot directly measure trajectories of cellular differentiation because the observation process is destructive, necessarily killing the cells of interest.
With this motivation, the bioinformatics community has rushed to develop methods to infer trajectories from independent samples collected at various time-points along a developmental progression~\cite{trapnell2014, farrell2018, wolf2019, schiebinger2019}. However, there has been relatively little theoretical  work on this  problem. If these methods will be used to understand disease and develop new therapies, we need to know when to trust the results.

We propose a mathematical framework to phrase and analyze the trajectory inference problem: we view it as the recovery of the \lq law on paths\rq~induced by a stochastic differential equation (SDE) from samples of the marginals (Sections~\ref{sec:setup}~--~\ref{sec:inference_problem}). Within this framework we leverage a classical connection between entropically-regularized optimal transport and entropy minimization~\cite{leonard2013} to develop a convex variational approach to solve the inverse problem, and we establish consistency of our proposed estimator.

The key insight from this optimal transport perspective is the following: if one can reconstruct the marginals of the process for all time, this would also uniquely determine the trajectories (see Theorem~\ref{thm:SDE_grad_min_KL}). Therefore, we develop a method for reconstructing the curve of marginals from samples collected at various time-points (visualized in Figure~\ref{fig:curve_perspective}). Our primary theoretical contribution is Theorem~\ref{theo:main_convergence}, where we prove that the solution to a certain convex optimization problem recovers the true law on paths in the limit of infinitely many time-points (even if each sampled marginal contains only a single time-point). We then show how to discretize time and space to obtain a finite-dimensional convex problem (Section~\ref{sec:methodology}), and we test this practical method in simulations and also on real scRNA-seq data (Section~\ref{sec:numerical_results}). 
We refer the reader to Section~\ref{sec:summ_of_contrib} for a detailed summary of our contributions, and to Section~\ref{sec:related_work} for a survey of related work. While this article is written for a mathematical audience, and no detailed knowledge of biology is required to understand the results, we refer the reader to Appendix~\ref{sec:measurement} for a primer on single cell measurement technologies.

\subsection{Mathematical setup}
\label{sec:setup}

\begin{figure}[h]
    \centering
    \includegraphics[width = 0.75\linewidth]{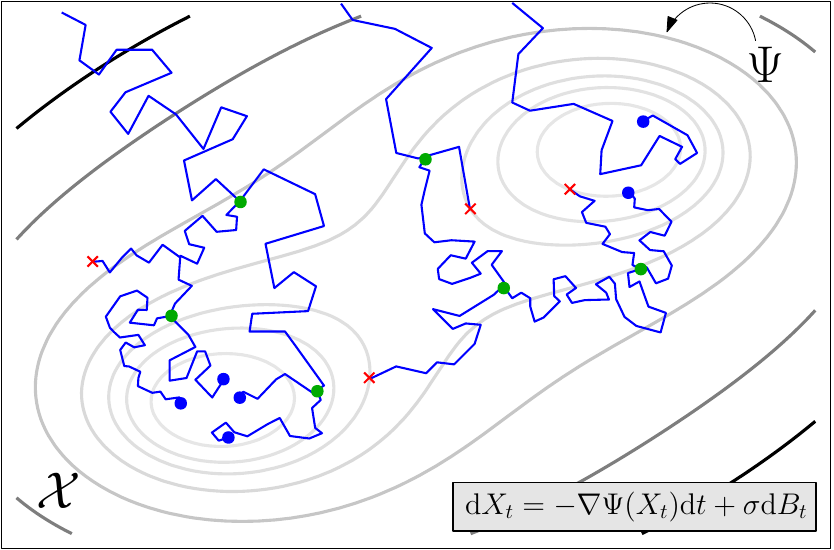}
    \caption{Illustration of example cell trajectories (in blue) of a diffusion-drift process \eqref{eq:diffusion_drift_sde} with branching in the case where $\bvec{v}(x) = -\nabla \Psi(x)$, i.e. there is a potential landscape. Green dots correspond to branching events, red crosses correspond to cell death, and blue circles represent cell states stopped at the final time $t_\mathrm{max}$.}
    \label{fig:branching_cartoon}
\end{figure}

The mathematical setting of trajectory inference can be understood as follows.
We model cells as evolving and proliferating in a high-dimensional space of cell states, a representation of which we take to be $\mathcal{X} = \mathbb{R}^d$ with $d$ potentially large. 

The mathematical description we adopt is a drift-diffusion process with branching. %
The evolution of any cell over an infinitesimal time interval $\diff t$ is governed by the SDE
\begin{align} 
    \diff X_t = \bvec{v}(t,X_t) \diff t + \sigma \diff B_t, \label{eq:diffusion_drift_sde}
\end{align}  
where $X_t\in \mathcal{X}$ denotes the state of the cell at time $t$, $\diff B_t$ is the increment of a $d$-dimensional Wiener process, and $\sigma^2$ is the diffusion coefficient.
As $t$ varies, $X_t$ describes a path, or {\em trajectory}, through $\mathcal{X}$, and the SDE~\eqref{eq:diffusion_drift_sde} induces a probability law on such trajectories. Our goal will be to recover this law on trajectories from independent samples collected at various time-points. 

To model the cell division and death, we employ the following classical branching mechanism: each cell is equipped with an exponential clock of rate $\tau^{-1}$. When the clock rings, the cell dies with probability $p_d$, or splits into two cells with probability $p_b = 1 - p_d$. We allow $\tau, p_d, p_b$ to vary in both space and time. \emph{A priori}, they may also depend on the position of other cells. We provide a conceptual illustration of particle trajectories from this branching process in Figure \ref{fig:branching_cartoon} for the case of potential driven dynamics, i.e. where $\vv = -\nabla \Psi$. 

A population of cells is modeled as a probability distribution on $\Xset$. In the limit of a very large number of cells, we assume this distribution has a density $\rho_t(x)$ at time $t$ and position $x \in \mathcal{X}$. This density solves the following partial differential equation:
\begin{align}
    \frac{\partial \rho}{\partial t} = - \mathrm{div}( \rho \bvec{v} ) + \frac{\sigma^2}{2} \Delta \rho + J \rho \label{eq:diffusion_drift_branching_pde}
\end{align} 
where the three terms on the right correspond respectively to the effects of drift, diffusion and branching. Here $J : \mathbb{R}^d \to \mathbb{R}$ describes the average branching rate (with $J > 0$ if cells are dividing and $J < 0$ if cells are dying) and is linked to the microscopic parameters by $J = \tau^{-1} ( p_b - p_d )$. 
With the additional assumption that the drift $\vv$ is the gradient of a potential, this PDE is exactly the one used by Weinreb et al. \cite{weinreb2018}, who analyze the equilibrium setting. The corresponding potential is usually called Waddington's landscape in the biology literature~\cite{waddington1957}.

\subsection{Inference goal}\label{sec:inference_problem}
Because of the destructive nature of the measurement process, we cannot observe trajectories taken by individual cells, but only snapshots of populations in time. From a mathematical point of view, we start a process (\ref{eq:diffusion_drift_sde}) with branching, let it evolve until a time $t_1$ (the first measurement time) and then we have access to the positions of cells at this time $t_1$. Then, we start a separate and independent process and let it evolve until time $t_2$. Proceeding in this way, we obtain samples from $\rho_{t_i}$ for each instant $t_1,\ldots,t_T$. We denote the samples at each time-point $t_i$ by 
\begin{equation}
\label{eq:samples}
  X^{1}_{t_i}, \ldots, X^{N_i}_{t_i}\sim \rho_{t_i}, \quad \text{for $i = 1,\ldots,T.$}
\end{equation}
Since at each time-point we sample from independent realizations of the process, the data from distinct times $t_i \ne t_j$ are independent: 
\begin{equation*}
    X^k_{t_i} \indep X^l_{t_j} \quad \text{for $(i,k) \ne (j,l)$.}
\end{equation*} 
{\bf Our goal is to reconstruct the trajectories traversed by cells from these independent samples.}
While the trajectories are determined by the drift vector field $\bvec{v}$ in the stochastic differential equation~\eqref{eq:diffusion_drift_sde}, we do not aim to recover $\bvec{v}$ directly. Instead, we aim to recover the probability law on trajectories induced by the SDE: this approach enables us to stay in the framework of convex optimization, and prevents us from having to parameterize $\bvec{v}$.
In the simpler setting without branching, the SDE~\eqref{eq:diffusion_drift_sde} induces a probability law on paths valued in $\Xset$, with sample paths $X_t$ describing continuous functions valued in $\Xset$ and parameterized by time $t$.
The situation is more complicated with branching because a cell can have multiple descendants at later time-points, and so the sample paths are in fact trees in $\Xset$. %
Therefore in the case with branching we aim to recover the law on paths induced by selecting a descendant at random at each bifurcation (and even if a cell would die, allow it to proceed). %
Mathematically, this is still equivalent to the law on paths corresponding to the SDE~\eqref{eq:diffusion_drift_sde}, without the additional mechanism of branching or death (i.e. $\tau = \infty$). %

Our theoretical results establish that we can recover the true law on paths in the absence of branching. These results are summarized in Section~\ref{sec:summ_of_contrib} and stated precisely in Section~\ref{sec:theory_details}. 
We develop computationally efficient methodology for solving the trajectory inference problem in Section 3 (summarized in Section~\ref{sec:numerics_intro}) and show how to extend the approach to the case with branching in Section~\ref{sec:branching_intro}. 

\begin{figure}[h]
    \centering
    \begin{subfigure}{0.5\linewidth}
        \centering\includegraphics[width = \linewidth]{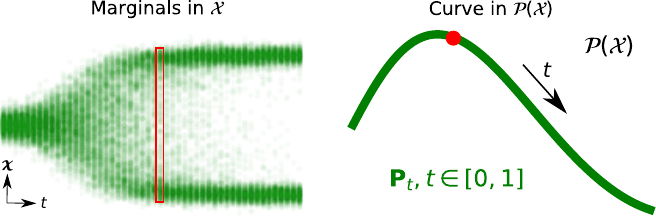}
        \caption{}
    \end{subfigure} \\ 
    \begin{subfigure}{0.75\linewidth}
        \centering\includegraphics[width = \linewidth]{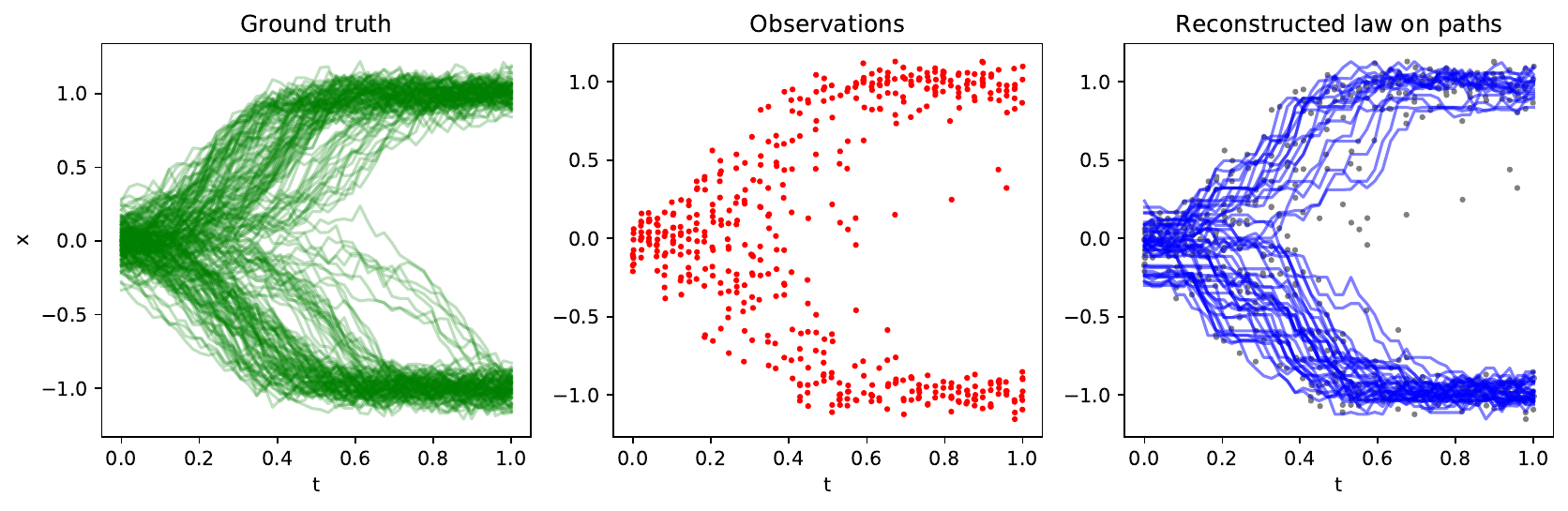}
        \caption{}
    \end{subfigure}
    \caption{(a) A stochastic process valued in $\Xset$ can be thought of as a curve valued in $\Pc(\Xset)$ parameterized by time. (b) Illustration of the inference problem: at each time-point particles (red) are sampled from an underlying ground truth process (green). From these samples, we seek to estimate the underlying law on paths (blue).}
    \label{fig:paths_perspective}
\end{figure}

\subsection{Summary of contribution}\label{sec:summ_of_contrib}

We develop a theory of reconstructing developmental trajectories from static snapshots. 
Our theoretical development begins by characterizing the true law on paths (or trajectories) induced by the SDE~\eqref{eq:diffusion_drift_sde} as the solution to a certain optimization problem (Theorem~\ref{thm:var_char}). This allows us to recover the trajectories from the \lq curve of marginals\rq,~as illustrated in Figure~\ref{fig:paths_perspective}. We then show how to set up a regression to recover a \lq developmental curve\rq~(and the corresponding trajectories) from sparsely sampled data (Fig~\ref{fig:curve_perspective}), and we prove that this estimator is consistent; these results are given in Theorem~\ref{thm:sketch_convergence},  which is our main theoretical contribution.

We begin by focusing on the case without branching, where our results (in Theorem~\ref{thm:sketch_convergence}) are already new and could be applied in other contexts. In a sentence, we have access to samples from the marginals of a SDE~\eqref{eq:diffusion_drift_sde} and we want to reconstruct its law (see Figure~\ref{fig:paths_perspective} for an illustration). By law, we mean the probability distribution $\P$ induced by the SDE on $\Omega = C([0, \tmax], \Xset)$, the space of continuous paths valued in $\Xset$. For our theoretical analysis, we assume that $\Xset$ is a smooth and compact Riemannian manifold, though for our numerical experiments the data lives in $\Rset^d$. In the context of cellular development, this manifold assumption means that cellular trajectories describe paths in a certain low-dimensional manifold embedded within gene expression space \cite{moon2018manifold}. In principle, even the gene expression space itself may be considered as a Riemannian manifold or, even more generally, a certain metric space with respect to a suitable metric to be discovered. Developing a theory over Riemannian manifolds is a step in this direction. Note that our method and proofs are robust so they can be adapted to work in both curved and flat space settings.

Furthermore, the notational convention we adopt is that if $\R \in \Pc(\Omega)$ is a law on the space of paths, we denote by $\R_{t_i}$ the law of $X_{t_i}$ under $\R$ (it is a probability distribution over $\Xset$) while $\R_{t_i,t_{i+1}}$ is the law of $(X_{t_i}, X_{t_{i+1}})$ under $\R$ (it is a probability distribution over $\Xset^2$). Moreover, for measures $\alpha, \beta$ on $\Xset$ with the same total mass, we denote by $\Pi(\alpha,\beta)$ the set of measures on $\Xset^2$ such whose marginals are $\alpha$ and $\beta$. As an example, $\R_{t_i,t_{i+1}} \in \Pi(\R_{t_i}, \R_{t_{i+1}})$. 

\subsubsection{Potential driven dynamics}
Solutions of the SDE~\eqref{eq:diffusion_drift_sde} are not in general characterized by their temporal marginals. For example, if the drift $\bvec{v}$ induces a periodic motion, then the distribution of cells may be constant in time even though the individual cells themselves move. We refer to \cite{weinreb2018} for an exhaustive discussion on this issue. To remove this identifiability problem, as in \cite{weinreb2018}, we require the velocity field $\bvec{v}$ to be the gradient of a smooth time-dependent potential function, which we denote by $\Psi$. This requirement can be justified by its simplicity, but also by Theorem~\ref{thm:var_char} below which shows that the
assumption makes the law of the SDE identifiable from the temporal marginals via an elegant variational
characterization. Moreover, it is consistent with the assumption of a Waddington’s landscape. From a probabilistic point of view, as we allow for a \emph{time dependent} potential, it corresponds to an assumption of instantaneous reversibility~\cite{ge2006reversibility}).

To state the assumption precisely, we consider $\P$ the measure on $\Omega$ which is the law of the SDE
\begin{equation}
    \label{eq:diffusion_drift_sde_grad}
    \diff X_t = -\nabla \Psi(t,X_t) \diff t + \sigma \diff B_t,
\end{equation} 
and we denote by $\P_t = \rho_t$ the temporal marginal at time $t$ (Figure~\ref{fig:paths_perspective}a). Our approach is to show that $\P$ is uniquely characterized by its curve of temporal marginals as solution of a variational problem, and then show that this variational characterization can be discretized and leads to a tractable algorithm to solve the inference problem.

\subsubsection{A variational characterization}

Let $\W^\sigma$ the law of the reversible Brownian motion on $\Xset$ with diffusivity $\sigma^2$. The relative entropy between two probability measures $\alpha, \beta$ on a space $\Omega$ is $\Ent(\alpha|\beta) = \int_\Omega \log\left ( \diff \alpha / \diff \beta \right)  \diff \alpha$. Our first result is the following.

\begin{theorem}[See Theorem~\ref{thm:SDE_grad_min_KL}]
\label{thm:var_char}
If $\P \in \Pc(\Omega)$ is the solution the SDE \eqref{eq:diffusion_drift_sde_grad} with $\Ent(\P_0 | \W^\sigma_0) < + \infty$ and we consider $\R \in \Pc(\Omega)$ any probability measure on the set of $\Xset$-valued paths satisfying $\R_t = \P_t$ for all $t \in [0,\tmax]$, then there holds 
\begin{equation}
\label{eq:var_char}
\Ent(\P|\W^\sigma) \leqslant \Ent(\R|\W^\sigma),
\end{equation}
with equality if and only if $\P = \R$.
\end{theorem}

In other words, given the knowledge of the marginals $\rho_t$, to reconstruct $\P$ one has to minimize the \emph{strictly convex} functional $\Ent(\cdot| \W^\sigma )$ among all law having these temporal marginals $\rho_t$.

We refer to Section \ref{sec:theory_details} and in particular Theorem \ref{thm:SDE_grad_min_KL} for a precise statement and a proof. Though not exactly phrased like this, the result can be traced back to previous works on diffusion processes \cite{follmer1988random,cattiaux1994minimization} and can be read implicitly in the works of the community working on the Schrödinger problem~\cite{leonard2013}, a sub-field of optimal transport~\cite{monge1781,kantorovich1942,villani2008}. 

Note that we will assume that the diffusion coefficient $\sigma^2$ is known. Without this assumption, the trajectories are not uniquely determined by the marginals $\rho_{t}$.  
For example, if $\Psi(x) = \frac{c_\Psi}{2} \| x \|^2 $ is a quadratic potential, then the equilibrium measure of the SDE \eqref{eq:diffusion_drift_sde_grad} is the isotropic Gaussian measure with variance $\frac{2d c_\Psi}{\sigma^2}$.
Therefore one can produce the same equilibrium measure but different trajectories by making the potential steeper (by increasing $c_\Psi$) while also increasing $\sigma$ so that the ratio $\frac{c_\Psi}{\sigma^2}$ is constant. We prefer to look at the scenario where $\sigma$ is known but not $\Psi$ as the latter contains more information about the trajectories of the cells.  

\subsubsection{A link with entropically-regularized optimal transport}
\label{sec:link_OT}

\begin{figure}
    \centering\includegraphics[width = 0.5\linewidth]{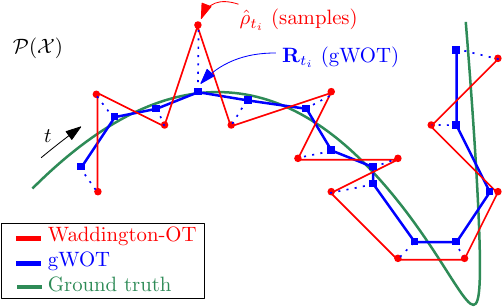}
    \caption{Conceptual illustration of our global regression method (gWOT) compared to the straightforward Waddington-OT ``gluing'' approach.}
    \label{fig:curve_perspective}
\end{figure}

Suppose for a moment that we know the marginals $\rho_t$ for some instants $t_1,\ldots,t_T$. Then it makes sense to look for the law $\R$ which minimizes $ \Ent(\R|\W^\sigma)$ among all laws such that $\R_{t_i} = \rho_{t_i}$ for $i=1, \ldots T$. Though {\em a priori} phrased on the very large space $\Pc(\Omega)$, this minimization can be performed efficiently with the help of entropy-regularized optimal transport.

Specifically, it always holds that
\begin{equation*}
\Ent(\R|\W^\sigma) \geqslant    \Ent(\R_{t_1,t_2}| \W_{t_1,t_2}) + \sum_{i=2}^{T-1} \left( \Ent(\R_{t_i,t_{i+1}}|\W_{t_i,t_{i+1}}) - \Ent(\R_{t_i}|\W_{t_i}) \right),
\end{equation*}
with equality if $\R$ is Markovian and the law of $\R$ between $t_i$ and $t_{i+1}$  is constructed as a convolution of Brownian bridges, see Proposition~\ref{prop:time_disc_appendix} for a precise statement. Thus if the marginals are fixed it makes sense to recover $\R$ in two stages. We first recover consecutive pairwise laws $\R_{t_i, t_{i+1}}$ specified by 
\begin{align}
    \inf_{\R_{t_i, t_{i+1}} \in \Pi(\rho_{t_i}, \rho_{t_{i+1}})} \Ent(\R_{t_i, t_{i+1}} | \W^{\sigma}_{t_i, t_{i+1}}).\label{eq:gluing}
\end{align}
The optimal law $\R_{t_i, t_{i+1}}$ is nothing else than the unique solution $\gamma$ of the entropy-regularized optimal transport problem with source and target marginals $(\mu, \nu) = (\rho_{t_i}, \rho_{t_{i+1}})$ and regularization parameter $\varepsilon = \sigma^2 (t_{i+1} - t_i)$, which can be written, at least if $\Xset$ is flat: 
\begin{align*}
    \inf_{\gamma \in \Pi(\mu, \nu)} \int \frac{1}{2} \| x - y \|^2 \diff \gamma(x, y) + \varepsilon \Ent(\gamma | \mathcal{L}).
\end{align*}
We refer the reader to Appendix~\ref{sec:background_OT} for a brief presentation of this theory. 
We emphasize that the choice of the regularization parameter in the optimal transport problem depends on the noise level $\sigma$. 

Second, one ``glues'' these different pairwise laws as follows. {Consider first the Markov chain indexed by the instants $\{ t_i \}_{i=1}^T$ whose consecutive pairwise laws are given by $\R_{t_i, t_{i+1}}$. That is the law $\tilde{\R}$ on $\Xset^T$ such that $\tilde{\R}[X_{t_{i+1}} \in \cdot | X_t, t \leq t_i] = \tilde{\R}[X_{t_{i+1}} \in \cdot | X_{t_i}]$, and that it coincides with the solution of \eqref{eq:gluing}. At intermediate times $t_i < t < t_{i+1}$, the process with law $\R$ can be characterized in terms of Brownian bridges with diffusion coefficient $\sigma^2$: that is the law of $(X_{t_i})_{i=1}^T$ is given by $\tilde{\R}$, and conditioned on $X_{t_i} = x_i$ for $i=1,\ldots,T$, the law of $\R$ is the same as a Brownian motion conditioned on $X_{t_i} = x_i$ for $i=1,\ldots,T$.}

Roughly speaking, the resulting curve will be piece-wise geodesic in Wasserstein space (i.e. the space of probability distributions with the optimal transport metric), as illustrated in Figure~\ref{fig:curve_perspective}. Note, however, this should only be relied on for intuition because entropic regularization breaks the metric properties of optimal transport. %
This ``gluing''  approach was essentially the one used by Schiebinger et al. \cite{schiebinger2019} under the name Waddington-OT.

\subsubsection{Reconstructing curves from data}

In practice we will have imperfect information about the marginals, obtained by observing finite samples at various time-points~\eqref{eq:samples}, from which we form the empirical distributions 
\begin{equation*}
    \rhohat_{t_i} = \frac 1 {N_i} \sum_{j=1}^{N_i} \delta_{X_{t_i}^j} \quad \text{for $i = 1,\ldots,T$}.
\end{equation*}
We view these empirical distributions as noisy data along the true curve $\P_t$  (Figure~\ref{fig:curve_perspective}). Directly using the gluing approach of Waddington-OT would break if the number of samples per time-point does not go to $+ \infty$: it is illustrated in Figure~\ref{fig:curve_perspective} where ``connecting the dots'' from noisy samples produces a jagged output.

To remedy this, we look for a law $\R$ which minimizes a sum of a ``data-fitting'' term, allowing $\R_{t_i}$ to differ from $\rhohat_{t_i}$, and a ``regularizer'' that we take to be $\Ent(\R | \W^\sigma)$ following our variational characterization~\eqref{eq:var_char}.

For our theoretical analysis, we work with absolutely continuous data $\rhohat^\epstheo_{t_i}$ obtained by convolving $\rhohat_{t_i}$ against a Gaussian of width $\epstheo$, which shrinks to $0$ as the number of instants $T$ grows.  (More precisely, as we are on a Riemannian manifold we use the heat flow to regularize measures). Note that the practical method we introduce in Section~\ref{sec:methodology} does not introduce this convolution, even though the practical form we choose for the data fitting term mimics this effect, see Section~\ref{sec:choice_of_data_fitting}. 

Specifically, we attempt to recover $\P$ by minimizing the \emph{convex} functional
\begin{equation}
\label{eq:opt_theory}
F_{T,\lambda,\epstheo}( \R) := \sigma^2 \Ent(\R|\W^\sigma) + \frac{1}{\lambda}  \sum_{i=1}^{T} |t_{i+1} - t_i| \, \Ent(\rhohat^{\epstheo}_{t_i} | \R_{t_i}) ,
\end{equation} 
which takes as its argument a law on paths $\R \in \Pc(\Omega)$.
Here $\lambda$ is a regularization parameter specifying the trade-off between data-fitting (second term) and regularization (first term). Up to a constant which does not depend on $\R_{t_i}$ (hence is irrelevant in the minimization), the data-fitting term $\Ent(\rhohat^{\epstheo}_{t_i} | \R_{t_i}) $ corresponds to a \emph{cross-entropy} and can be understood as a log-likelihood of the data $\rhohat^{\epstheo}_{t_i}$ given the reconstructed marginal $\R_{t_i}$. In fact, it would be exactly a log-likelihood if we had not convolved the empirical distributions $\hat \rho_{t_i}$ with a Gaussian. We defer to Section \ref{sec:theory_details} and in particular Remark \ref{rk:choice_DF} for additional comments on the choice of this term. We have included a factor $\sigma^2$ in front of $\Ent(\R|\W^\sigma)$ because this is the appropriate scaling in the limit $\sigma \to 0$, see \cite[Section 5]{leonard2013} (note however that in our analysis $\sigma^2$ is fixed). 

In the limit $\lambda \to 0$ the data-fitting term dominates, thus $\R_{t_i}$ is very close to $\rhohat_{t_i}$, and we end up minimizing $ \Ent(\R|\W^\sigma)$ with marginal constraints: we fall back on Waddington-OT. In the general case there will be a trade-off between the two effects: we call this new approach \emph{Global Waddington-OT} (gWOT) as our method uses information of \emph{all} time points at once to reconstruct the law $\R$, effectively sharing information across time points. We prove the following consistency result.

\begin{theorem}[See Theorem~\ref{theo:main_convergence}]
\label{thm:sketch_convergence}
Let $\R^{T,\lambda,\epstheo}$ denote the minimizer of $F_{T,\lambda,\epstheo}$, defined in~\eqref{eq:opt_theory}. Then in the limit $T\to \infty$, followed by $\lambda \to 0, \epstheo \to 0$, we have that $\R^{T,\lambda,\epstheo}$ converges narrowly to $\P$ in the space $\Pc(\Omega)$.
\end{theorem}

We refer again to Section \ref{sec:theory_details} and in particular Theorem \ref{theo:main_convergence} for a precise statement and a proof. 

This result states that the estimator~\eqref{eq:opt_theory} is consistent, provided we have at least one sample per time-point ($N_i \ge 1$). 
According to our perspective on developmental curves, each empirical distribution $\hat{\rho_{t_i}}$ forms a data-point along the curve (Fig~\ref{fig:curve_perspective}). The number of samples $N_i$ determines the \lq noise-level\rq~ of the time-point. 
If $N_i \to + \infty$ (corresponding to more and more samples by time point), it means that we have a good knowledge of the marginals $\P_{t_i} = \rho_{t_i}$, and we could apply the (simpler) Waddington-OT algorithm described above in Section~\ref{sec:link_OT}. However our result still holds in the case $N_i$ small, up to the extreme case $N_i = 1$ (only one sample per time point): the latter case is the most interesting from the point of view of applications as it means we can learn the curve even from limited data. It is also the hardest from the theoretical point of view as it means that the reconstructed marginal $\R_{t_i}$ is \emph{not} close to $\rhohat^{\epstheo}_{t_i}$, that is, to the observed data. We can overcome such an issue because we assume that the instants where a measurement has been performed becomes dense in $[0,\tmax]$.

We emphasize that this result is not quantitative and we do not have a rate of convergence. As we discuss in Section~\ref{sec:discussion}, this quantitative rate might be determined by some notion of the ``curvature'' of the curve $\P_t$ in $\Pc(\Xset)$.  A quantification would also most likely lead to a result where $\lambda$ and $\epstheo$ decay slowly enough (at a rate depending on $T$ and the $N_i$) for the convergence to hold. Based on the discussion above, we expect in this case the convergence to be slower if $N_i$ stays bounded rather than $N_i \to + \infty$.

A quantitative rate might also shed light on the optimal design of experiments for inferring developmental trajectories from single-cell RNA-sequencing datasets. Roughly speaking, the cost of collecting a dataset is determined by the total number of samples $N = \sum_{i=1}^T N_i$. One can ask, given a fixed budget of $N$ samples (i.e. $N$ cells), how should one select $T$ (and hence $N_i$) to obtain the best estimate of the developmental curve and trajectories.  
Our perspective on trajectory inference as regression of developmental curves, together with the consistency result where $N_i = 1$ suggests that it may be optimal to select $T$ as large as possible (i.e. $T = N$ so that $N_i = 1$).

Overall, this perspective of developmental curves motivates a practical and computationally efficient
approach for recovering developmental trajectories from snapshots collected at various time-points. We
introduce a computational methodology to do so in Section~\ref{sec:numerics_intro}, and we discuss in Section~\ref{sec:branching_intro} the extension of our theoretical results to the case with branching.

\subsubsection{Numerics: discretizing in time and space}
\label{sec:numerics_intro}
In addition to our theoretical guarantee, we show that the functional $F_{T,\lambda,\epstheo}$ of \eqref{eq:opt_theory} can be discretized and that the resulting function can be optimized efficiently. Specifically, as we will see in Section~\ref{sec:methodology}, the optimization variables are the reconstructed marginals $\R_{t_i}$ for $i=1, \ldots, T$ and one seeks to minimize a functional of the form
\begin{equation}
\label{eq:cartoon_minimization_pb}
    \lambda \Reg(\R_{t_1}, \ldots, \R_{t_T}) + \Fit(\R_{t_1}, \ldots, \R_{t_T})
\end{equation}
where the regularizer $\Reg(\R_{t_1}, \ldots, \R_{t_T})$ is a sum of pairwise entropy-regularized optimal transport distance, while $\Fit(\R_{t_1}, \ldots, \R_{t_T})$ is the data fitting term between the temporal marginals and the measurement $\rhohat_{t_i}$. For the latter we will depart slightly from our theoretical framework and choose one which produces better outputs in practice. 

Once fully discretized (that is, also discretized in space), \eqref{eq:cartoon_minimization_pb} corresponds to a constrained convex optimization problem. To solve it in practice, we look at the dual which becomes unconstrained and the gradient of the dual problem can be evaluated in closed form \cite{cuturi2016smoothed}. Thus we solve our problem through its dual via a gradient descent, relying on automatic differentiation to compute the gradients. We leverage state of the art libraries in computational optimal transport, in particular the KeOps library \cite{charlier2020} to enable GPU-accelerated computations with automatic differentiation compatibility. We illustrate our method in Section~\ref{sec:numerical_results} on synthetic examples.

\subsubsection{Extending to the case with branching}
\label{sec:branching_intro}

Cell division and death are an essential aspect of most biological processes, and this is the fundamental motivation for us to consider branching in (\ref{eq:diffusion_drift_sde}). However, accounting for branching is a challenging task in trajectory inference because the data (see Equation~\eqref{eq:samples}) only contain information on the \emph{relative} abundance of cells at each observed instant in time. Therefore, there is a problem of identifiability of the effects of transport and branching \cite{chizat2018scaling, fischer2019}, and failure to appropriately account for branching can result in spurious mass transport being introduced to explain for appearance or disappearance of mass. 

We model a process with branching by dispensing with the unit mass constraint. The population of cells at any time $t$ is described by a \emph{positive measure} $\rho_t$, integration over which corresponds to cell numbers or biomass. Since the marginals of such a process no longer have the same mass, we can no longer use the framework of probability laws on paths as done previously. In particular, this affects the form of the regularizing functional, which we previously took to be the relative entropy $\Ent(\R|\W^\sigma)$ on probability laws in $\Pc(\Omega)$. 

With branching, it seems natural to replace the reference process $\W^\sigma$, which was a Brownian motion, by a \emph{branching Brownian motion} (see e.g. \cite[Chapter 1]{etheridge2000introduction}). Along these lines, the first author is currently working with Aymeric Baradat on extending the theoretical framework of Section~\ref{sec:theory_details} to work directly on the law of processes with both diffusion and branching \cite{AymericHugo}. Though the work is still in progress, let us give two outputs from it. First, it is possible to prove an analogue of Theorem~\ref{thm:SDE_grad_min_KL}, where the process $\P$ is transformed into the law of a branching and diffusion process. However, such $\P$ is parameterized by a single scalar function $\Psi(t,x)$: that is not only the drift is $-\nabla \Psi$, but the branching mechanism is also a function of $\Psi$. This leaves less freedom about what the ``ground truth'' should be as one cannot choose independently the drift and the branching mechanism. Second, numerically handling entropy minimization with respect to branching Brownian motion seems to be more challenging, and at least there is no simple way to adapt the framework we use in the present article.  

We instead propose a simple modification of our model that allows us to continue using ordinary Brownian motion for the reference measure. 
While the effects of branching, drift and diffusion all take place simultaneously in the process described by \eqref{eq:diffusion_drift_sde}, we introduce an artificial separation between the effects of transport and branching by alternating between a transport step that captures spatial dynamics, and a branching step that accounts for cell division and cell death (see Figure \ref{fig:growth} for a conceptual illustration).

In terms of the population-level PDE \eqref{eq:diffusion_drift_branching_pde}, this alternating scheme is equivalent to operator splitting of the drift-diffusion and branching effects \cite[Section 5.6]{petter2017finite}, where for each interval $(t_{i}, t_{i+1})$ we approximate \eqref{eq:diffusion_drift_branching_pde} by the system
\begin{align}
    \frac{\partial \rho^{*}_t}{\partial t} &= J\rho^{*}_t,  \quad \rho^{*}_{t_i}(\cdot) = \rho_{t_i}(\cdot) \label{eq:split_pde_growth} \\ 
    \frac{\partial \rho_t}{\partial t} &= -\mathrm{div}(\rho_t \bvec{v}) + \frac{\sigma^2}{2} \Delta \rho_t, \quad \rho_{t_i}(\cdot) = \rho^{*}_{t_{i+1}}(\cdot)  \label{eq:split_pde_transport}
\end{align}
The solution of the branching component \eqref{eq:split_pde_growth} is exactly 
\begin{align*}
    \rho^*_{t_{i+1}}(x) &= \rho_{t_i}(x) \exp\left(J(x) \Delta t_i\right) = g_i(x) \rho_{t_i}(x),
\end{align*}
where we have defined the quantity 
\begin{align*}
    g_i(x) &= \exp(J(x) \Delta t_i).
\end{align*}
Recall that $J$ is related to the birth-death process parameters by $J = \tau^{-1} (p_b - p_d)$.
Overall, the term $g_i$ is a multiplicative factor by which the density at each location increases over the time interval $(t_i, t_{i+1})$ exponentially in the birth-death rate. 

Applying this splitting scheme approximation, the problem of inferring the diffusion-drift component of the dynamics amounts to finding the coupling that describes the evolution \eqref{eq:split_pde_transport}. As we explain in Section~\ref{sec:growth}, we obtain an optimization problem whose structure is very similar to \eqref{eq:cartoon_minimization_pb}, and for which the same kind of algorithms can be applied to find the optimum. We also implement and test this method on synthetic examples, as well as the biological dataset of \cite{schiebinger2019}.

\subsubsection{A note on the drift and the potential} \label{sec:reconstruction_of_drift}

As the reader may have seen, we never directly use the assumption that $\P$ is the law of the SDE \eqref{eq:diffusion_drift_sde_grad} with drift being a gradient except for proving the variational characterization of Theorem~\ref{thm:SDE_grad_min_KL}. One can rather read our results as: if one uses Waddington-OT or gWOT to reconstruct the law of a SDE, then the reconstruction procedure outputs the ground truth only if it is the law of SDE with drift being a gradient. 

In addition we note that once we have obtained the reconstructed law $\R$, information about the drift $\vv$ (thus the potential $\Psi$ such that $\nabla \Psi = \vv$) can be recovered via a regression problem. Indeed, at least on short timescales, if $(X_t)_{t \in [0,\tmax]}$ has a law given by $\R$ and $\Xset$ is flat,
\begin{align}
    \bvec{v}(t_i, x) \sim \mathbb{E}_{\R} \left[ \left. \frac{X_{t_{i+1}} - X_{t_i} }{ t_{i+1} - t_{i} } \right| X_{t_i} = x  \right] \label{eq:drift_reconstruct}
\end{align}
Thus, one can set up a learning problem (e.g. taking $\bvec{v}$ to belong to a parametric class of functions) to find a $\bvec{v}$ which approximates the right hand side. Alternatively it is possible to estimate the drift at each observed point by directly computing the expectation \eqref{eq:drift_reconstruct}, as we do later in Section \ref{sec:tristable}.

Finally, let us comment about the assumption that the potential (thus the drift) is time-dependent. This assumption allows for cell-cell interactions at the population level in the following sense:
In the presence of cell-cell interactions, each cell $X_t^i$ experiences a drift $\bvec{v}^i$ which is a function of the position of all other cells, i.e.
\begin{align*}
    \bvec{v}^i(X^i_t) = \bar{\bvec{v}} \bigl( X^i_t, \{X^j_t\}_{j \neq i} \bigr).
\end{align*}
Assuming that one can solve this system of SDEs, we can output $\rho_t$ the density of cells at time $t$. If there is a large number of cells, we can reasonably make a mean field approximation $\bvec{v}^i(X^i_t) \approx \bar{\bvec{v}}( X^i_t, \rho_t )$.
In other words, we assume that the cells $\{X^j_t\}_{j \neq i}$ are infinitely many and distributed according to $\rho_t$. This in turn can be viewed as generating a time-varying vector field $\bvec{v}_t$ without interactions $\bvec{v}_t(x) = \bar{\bvec{v}}(x, \rho_t)$. So effectively, we can picture the motion of cells as \emph{independent} particles moving in a \emph{time dependent} environment. We emphasize that we do not claim one can \emph{solve} the (forward) system of cell-cell interactions this way, rather that for the \emph{analysis} of measurements coming from such a system, in order to reconstruct the trajectories, it is \emph{as if} each cell was moving independently from the others in a time dependent environment.

\subsection{Organization of the article}

The rest of the article is organized as follows: In Section~\ref{sec:theory_details} we state and prove the variational characterization of the law on paths induced by an SDE, as well as the convergence result for minimizers of the functional defined in \eqref{eq:opt_theory}. Then, in Section~\ref{sec:methodology}, we explain precisely how the variational characterization leads to convex problems which can be discretized and solved efficiently in practice. We have implemented our method and conclude with numerical results in Section~\ref{sec:numerical_results}. We emphasize that Section~\ref{sec:theory_details} can be read independently from the two other ones.    

\subsection{Related work}

\label{sec:related_work}

We conclude this introduction by explaining where our inspiration comes from as well as the link with other works. 

\paragraph*{Learning with optimal transport as a regularizer}

One main source of inspiration for our work is \cite{schmitzer2019dynamic}, where the authors set up a learning problem for trajectory inference where the regularization term comes from optimal transport, and the data-fitting term is a log-likelihood. This approach was similar to the one followed in \cite{bredies2020optimal, bredies2019extremal} where the authors set up a learning problem with an optimal transport regularizer. They give a detailed theoretical analysis of the problem they solve as well as a numerical method \cite{bredies2020generalized}. In \cite{bredies2019extremal, bredies2020generalized}, they show that their approach leads to a (spatial) discretization free algorithm as the minimizers of the learning problem they consider are sparse and live ultimately on a low dimensional space. Compared to these works, we work with entropy minimization rather than ``plain'' optimal transport, and we provide an identification of what the ``ground truth'' should be (that is, laws of SDE) as well a theoretical proof of consistency in the setting of sparse data. In addition, by relying on entropy-regularized optimal transport we can leverage efficient numerical tools as presented Section \ref{subsection:algorithm}. 

The problem we tackle has the form of the minimization of an entropy over the space of paths together with a data-fitting term which is a function only of the temporal marginals. This is very close to the problem studied in \cite{benamou2019} in a different context (namely Mean Field Games) where the data-fitting term is replaced by a different functional, which still depends only on the marginals. That work helped us understand the effect of the discretization in time presented in Section \ref{sec:discretization_time_space} and suggested a Sinkhorn-like algorithm for finding solutions of the dual problem. Although such an approach can be derived exclusively for the setting without branching, in practice we find that the L-BFGS method works more generally and converges faster.

Ultimately, once we have reconstructed marginals, we interpolate between time points with Schrödinger bridges. Some works \cite{benamou2019second, chen2018measure, chewi2020fast} try to produce smoother interpolations, that is \emph{splines} in the Wasserstein space. However, in these works the authors assume precise knowledge of the temporal marginals, contrary to the framework of sparse data that we tackle. %

\paragraph*{Comparison to other trajectory inference methods}

Numerous methods have been proposed in recent years for recovering trajectories from scRNA-seq time-courses. However, few provide theoretical guarantees. One notable exception is the work by Weinreb et al.~\cite{weinreb2018}, who analyzed the equilibrium  case, when data are sampled from a single snapshot of a process at its steady state. 
They leveraged results from spectral graph theory~\cite{ting2011analysis} 
to establish that an underlying diffusion-drift equation can be identified from such a snapshot when the drift is conservative, i.e. it arises from a potential function.

We provide the first theoretical analysis of the inference problem in the non-equilibrium case, in which our data are a series of independent samples from the temporal marginals. We demonstrate that recovery can be achieved through convex optimization. 
Some recent methods \cite{yeo2020, chen2020solving, tong2020trajectorynet} use the same type of generative model as we do, but they rather parameterize the potential $\Psi$ by a neural network and then learn the weights of the network thanks to the data. Although neural networks are powerful for learning representations of the dynamics, due to nonconvexities they are susceptible to local minima. Compared to these works, we write a convex learning problem (hence numerical optimization is guaranteed to reach a global optimizer) which we prove converges to the ground truth. 

Finally, we acknowledge that there are important variants and extensions of the trajectory inference problem that we have not treated. For example, it is possible to recover additional information, such as estimates of velocity in gene expression space~\cite{RNAvelocity}, cell lineage~\cite{Yachie}, or spatial location~\cite{slideseq}.
Lineage tracing in particular has been demonstrated to be crucial for accurate trajectory inference~\cite{packer2019lineage}, especially in cases of complex convergent trajectories. While we have recently demonstrated that OT-based methodology can be extended to leverage lineage information~\cite{forrow2020}, we have not incorporated this into the theoretical framework we present here.

\paragraph*{Inference on stochastic processes}

There is a huge body of literature on inference of stochastic processes, see for instance \cite{sorensen2004parametric,mcgoff2015statistical} for general surveys or \cite{bishwal2007parameter} for a book focusing on stochastic differential equations. Non-parametric Bayesian approaches are in particular important \cite{nickl2017nonparametric,beskos2006exact} and put a prior distribution on the drift and sometimes directly on the potential function. Contrary to all of these works, we assume that we do not observe a trajectory, but samples of the temporal marginals, thus we do not have access to temporal correlations. 

In addition, we do not look at the equilibrium regime but rather at the transition one. For a reversible SDE (which is equivalent to requiring that $\bvec{v} = - \nabla \Psi$ is a gradient), at the equilibrium the steady state $\rho$ is proportional to $\exp( - 2 \Psi/\sigma^2)$ and it is already well understood that the drift and the potential can be inferred from the steady state. Here our temporal marginals differ from a steady state, our drift is time dependent and the process is not assumed to be reversible but rather instantaneously reversible \cite{ge2006reversibility}. Eventually, we are looking at a regime where the time horizon is fixed and the sampling frequency tends to $+ \infty$. 

\section{Theoretical results: a convex variational approach to trajectory inference}
\label{sec:theory_details}

This section provides theoretical justifications for our method, especially regarding the convergence of the scheme as the number of measurements goes to infinity. The main results, already mentioned in the introduction, are: the variational characterization of laws of an SDE with drift being a gradient; and the convergence to the ground truth of the reconstructed law from sparse data in some limiting regime of the parameters.

\paragraph*{Setting and notations}

Let $\Xset$ be a compact smooth Riemannian manifold without boundary. The Laplace-Beltrami operator on $C^2(\Xset)$ is denoted by $\Delta$. We will denote by $K$ a lower bound on its Ricci curvature, $K > - \infty$ by compactness. The normalized volume measure on $\Xset$ is $\vol$: it is normalized in such a way that $\int_{\Xset} \diff \vol =1$. 

Up to a change of the temporal scaling, we assume without loss of generality that $\tmax = 1$. We denote by $\Omega = C([0,\tmax], \Xset) = C([0,1], \Xset)$ the set of  continuous $\Xset$-valued paths endowed with the topology of uniform convergence and its Borel $\sigma$-algebra. It is a Polish space, and $\Pc(\Omega)$ is the set of laws on the space of paths. We endow $\Pc(\Omega)$ with the topology of narrow convergence, that is, convergence against bounded continuous functions. We denote by $(X_t)_{t \in [0,1]}$ the canonical process on $\Omega$, and for each $t \in [0,1]$ we write $X_t(\omega) = \omega_t$ for the evaluation at time $t$. If $\R \in \Pc(\Omega)$ is a probability measure on the space of paths $\Omega$, we denote by $\R_t \in \Pc(\Xset)$ its marginal at time $t$. That is, if $X$ is a random element of $\Omega$ distributed according to $\R$ then $X_t$ is distributed according to $\R_t$.

By a Wiener measure with diffusivity $\sigma^2$, we mean an element of $\Pc(\Omega)$ which is a diffusion measure generated by the second order elliptic operator $f \mapsto \frac{\sigma^2}{2} \Delta f$ (in the sense of \cite[Definition 1.3.1]{hsu2002stochastic}). By the law of solutions of the SDE \eqref{eqn:SDE-grad} below, we mean diffusion measure generated by $f \mapsto \frac{\sigma^2}{2} \Delta f - \nabla \Psi \cdot \nabla f$. We denote by $\W^\sigma$ the reversible Wiener measure on $\Xset$ with diffusivity $\sigma^2$. Here, ``reversible'' means that the initial condition is $\vol$ which is invariant under the heat flow on a manifold without boundary, thus for every $t \in [0,1]$ there holds $\W^\sigma_t = \vol$.

Eventually, we recall that $\Ent(\alpha|\beta)$ denotes the entropy between two probability measures defined on the same measured space, see Appendix~\ref{sec:entropy_heatflow} for more details.

\paragraph*{Statement of the results}

Let us start with the result over which our work relies on: a variational characterization of the law of SDE when its drift is gradient. 

\begin{theorem}
\label{thm:SDE_grad_min_KL}
Let $\P_0 \in \Pc(\Xset)$ be a probability distribution with $\Ent(\P_0|\vol)< + \infty$. Let $\Psi : [0,1] \times \Xset \to \Rset$  be a smooth ($C^2$) time-dependent potential. We consider $\P$ the law of the SDE
\begin{equation}\label{eqn:SDE-grad}
\diff X_t = - \nabla \Psi(t,X_t)  \diff t + \sigma \diff B_t
\end{equation} 
with initial condition $\P_0$. Then, if $\R \in \Pc(\Omega)$ is such that $\R_t = \P_t$ for all $t \in [0,1]$, there holds 
\begin{equation*}
\Ent(\P|\W^\sigma) \leqslant \Ent(\R|\W^\sigma)
\end{equation*}
with equality if and only if $\P = \R$.
\end{theorem}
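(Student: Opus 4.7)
The plan is to leverage the Girsanov-type representation of finite-entropy path measures combined with a Helmholtz-type orthogonality argument that exploits the fact that the drift of $\P$ is a gradient. First, I would observe that the inequality is trivial if $\Ent(\R|\W^\sigma)=+\infty$, so I may assume $\Ent(\R|\W^\sigma)<\infty$. By a classical result of F\"ollmer (adapted to manifolds in the style of Cattiaux--L\'eonard), any $\R\in\Pc(\Omega)$ with $\Ent(\R|\W^\sigma)<\infty$ is the law of a semimartingale of the form $\diff X_t=b^\R(t,X_t)\diff t+\sigma\diff B_t$ with $X_0\sim \R_0$, for some (measurable) drift $b^\R$ satisfying $\E_\R\bigl[\int_0^1|b^\R(t,X_t)|^2\diff t\bigr]<\infty$. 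Applying Girsanov's theorem together with the chain rule for relative entropy yields
\begin{equation*}
\Ent(\R|\W^\sigma)=\Ent(\R_0|\vol)+\frac{1}{2\sigma^2}\,\E_\R\!\left[\int_0^1|b^\R(t,X_t)|^2\diff t\right],
\end{equation*}
and the analogous identity holds for $\P$ with $b^\P(t,x)=-\nabla\Psi(t,x)$ (this is where smoothness of $\Psi$ is used to make the formula rigorous).

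Next, I would use the assumption $\R_t=\P_t$ for all $t$, writing $\rho_t$ for this common density against $\vol$. Both $\R$ and $\P$ solve the Fokker--Planck equation for their respective drifts, so subtracting the two equations gives
\begin{equation*}
\mathrm{div}\bigl(\rho_t\,(b^\R(t,\cdot)+\nabla\Psi(t,\cdot))\bigr)=0\qquad\text{in the sense of distributions,}
\end{equation*}
for a.e.\ $t\in[0,1]$. Setting $w_t:=b^\R(t,\cdot)+\nabla\Psi(t,\cdot)$, the vector field $\rho_t w_t$ is therefore divergence-free. Expanding the square $|b^\R|^2=|\nabla\Psi|^2-2\,w_t\cdot\nabla\Psi+|w_t|^2$ and integrating against $\rho_t\diff\vol\diff t$, the cross term vanishes by the orthogonality $\int_\Xset(\rho_t w_t)\cdot\nabla\Psi\diff\vol=-\int_\Xset\Psi\,\mathrm{div}(\rho_t w_t)\diff\vol=0$ (an integration by parts, legitimate on the closed manifold $\Xset$ and after a standard mollification argument since $w_t$ need only be in $L^2(\rho_t)$). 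Combining with the Girsanov identities and using $\R_0=\P_0$:
\begin{equation*}
\Ent(\R|\W^\sigma)-\Ent(\P|\W^\sigma)=\frac{1}{2\sigma^2}\,\E_\R\!\left[\int_0^1|w_t(X_t)|^2\diff t\right]\geqslant 0.
\end{equation*}

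For the equality case, if the right-hand side vanishes then $w_t=0$ holds $\rho_t\diff\vol\diff t$-a.e., so $b^\R=-\nabla\Psi$ coincides with $b^\P$ on the support of $\R$; combined with $\R_0=\P_0$ and uniqueness in law of the SDE \eqref{eqn:SDE-grad} (ensured by the Lipschitz regularity of $\nabla\Psi$ on the compact manifold), this forces $\R=\P$. I expect the main technical obstacle to be step one: ensuring that the drift representation and the Girsanov identity are valid for an arbitrary $\R$ of finite entropy on a manifold, without \emph{a priori} regularity of $b^\R$. This is where I would lean most heavily on the existing literature (F\"ollmer's entropy/drift identity, L\'eonard's survey on the Schr\"odinger problem, and the Cattiaux--L\'eonard decomposition), together with a mollification argument to justify the integration by parts producing the orthogonality of $\nabla\Psi$ and the divergence-free field $\rho_t w_t$.
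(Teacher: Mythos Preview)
Your overall strategy is sound and represents a genuinely different route from the paper's, but there is one real gap in the execution. F\"ollmer's representation only guarantees that the drift of a finite-entropy $\R$ is $\mathcal{F}_t$-\emph{adapted}, not a function of the current state: you get $\diff X_t=\beta_t\,\diff t+\sigma\,\diff B_t$ with $\beta_t$ adapted and $\E_{\R}\int_0^1|\beta_t|^2\,\diff t<\infty$, but there is no reason $\beta_t$ should have the form $b^{\R}(t,X_t)$ unless $\R$ is Markov, which is not assumed. This matters because your Fokker--Planck subtraction and the orthogonality step require a vector field on $\Xset$. The fix is standard: the continuity equation for $\rho_t$ holds with the Markovian projection $\tilde b(t,x):=\E_{\R}[\beta_t\mid X_t=x]$, and conditional Jensen gives $\E_{\R}[|\beta_t|^2]\geqslant\int_{\Xset}|\tilde b(t,x)|^2\rho_t(\diff x)$. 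Your orthogonality argument then applies to $\tilde b$; for the equality case one now needs both $\tilde b=-\nabla\Psi$ and equality in Jensen (forcing $\beta_t=\tilde b(t,X_t)$), which together yield $\R=\P$.

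The paper's proof avoids F\"ollmer's representation for $\R$ altogether. It computes $\diff\P/\diff\W^\sigma$ explicitly via Girsanov and then applies It\^o's formula to $\Psi(t,X_t)$ to eliminate the stochastic integral, so that the resulting density depends on the path only through $X_0$, $X_1$, and a Riemann integral of a function of $(s,X_s)$ --- in other words, only through the temporal marginals. The inequality then drops out in one line from strict convexity of $x\mapsto x\log x$, since $\E_{\R}[\log(\diff\P/\diff\W^\sigma)]=\E_{\P}[\log(\diff\P/\diff\W^\sigma)]$ whenever $\R_t=\P_t$ for all $t$. This route is shorter and sidesteps the delicate construction of a drift for an arbitrary $\R$ on a manifold; your approach, once patched, has the merit of making the role of the gradient assumption transparent as an $L^2(\rho_t)$-orthogonality between gradients and divergence-free fields, and of connecting directly to the dynamical formulation of the Schr\"odinger problem.
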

 
\noindent In other words, with a perfect knowledge of the marginals (the $\P_t$), and a knowledge of the noise level $\sigma$, one just needs to minimize the strictly convex functional $\Ent(\cdot | \W^\sigma)$ to recover the ``ground truth''  $\P$. Although not stated exactly as this, this result can be read implicitly in the literature on the Schrödinger problem. We do not claim originality of this variational characterization, but for the sake of completeness, we still present a short proof which was suggested to us by Aymeric Baradat. This proof relies on Girsanov's theorem which gives the Radon-Nikodym density of $\P$ with respect to $\W^\sigma$, and the strict convexity of the entropy. 
 
\bigskip

Let us turn now to the framework of sparse data where we do not have a perfect knowledge of the marginals. More specifically, for $T$ different instants $t_1, \ldots t_T$, we have $\{ X^{T}_{i,j}\}_{j=1}^{N_i^{T}}$ samples from $\P_{t_i}$. Our approach is to look for $\R \in \Pc(\Omega)$ whose marginals are close to the measurements, and such that $\Ent(\R | \W^\sigma)$ is as small as possible. 

As a data-fitting term between our ``reconstructed marginal'' $r = \R_{t_i}$ and the ``measurement'' $p = 1/N_i^{T} \sum_j \delta_{X^T_{i,j}}$ we use  $\Ent(p|r)$. More specifically, as we need the measurement $p$ to be absolutely continuous, we will convolve it with a Gaussian of variance $h$. Up to the constant $\Ent(p|\vol)$, this data-fitting term coincides with the cross-entropy, that we denote by $\DF(r, p)$ for ``Data-Fitting'':
\begin{equation*}
\DF(r, p) = \Ent(p|r) - \Ent(p|\vol),
\end{equation*} 
see Definition \ref{defi:DF} below. Note that this is nothing but $\DF(r, p) =- \int (\log r) p$. Though $\DF(r,p)$ is not bounded from below, it is if we assume some regularity on $p$ as detailed in Lemma \ref{lem:F_bd_below} below. As $p$ (the ``measurement'') is fixed in our optimization procedure, minimizing $\Ent(p|r)$ or $\DF(r,p)$ as function of $r$ lead to the same result. For reasons clarified in Remarks \ref{rk:choice_DF} and \ref{rk:DF_limit} below, it actually more convenient to work with $\DF(r,p)$ rather than $\Ent(p|r)$, and this is the choice we make in this section.  

\begin{remark}
\label{rk:choice_DF}
The key points of this data-fitting term are that it is linear with respect to $p$ and that, for a given $p$ it is minimized for $r=p$. The linearity in $p$ enables an averaging effect which is crucial if we know $p$ only weakly in time (that is, for each $t$ we have a bad approximation of the marginal but such an approximation gets better when averaged in time). Note that among all smooth local functionals of $r$ and $p$, that is, ones that can be written $\int_{\Xset} f(p(x),r(x)) \diff x$ for some smooth $f$, only $\DF$ is linear in $p$ and minimized for $r=p$ (for $p$ fixed). 
\end{remark}

We are able to prove the following result, which is the main one of this section. It states that minimizers of the functional introduced in \eqref{eq:cartoon_minimization_pb} in the introduction converge, in some regime, to the ground truth.

\begin{theorem}
\label{theo:main_convergence}
Let $\P_0 \in \Pc(\Xset)$ be a probability distribution with $\Ent(\P_0|\vol)< + \infty$. Let $\Psi : [0,1] \times \Xset \to \Rset$ a smooth ($C^2$) time-dependent potential. We consider $\P$ the law of the SDE
\begin{equation*}
\diff X_t = - \nabla \Psi(t,X_t)  \diff t + \sigma \diff B_t
\end{equation*} 
with initial condition $\P_0$.

Let us assume the following:
\begin{enumerate}[label=(\roman*)]
\item For every $T \geqslant 1$, we have a sequence of ordered instants $\{ t^{T}_i \}_{i=1}^T$ between $0$ and $1$, and the family $\{ t^{T}_i \}_{i=1}^T$ becomes dense in $[0,1]$ as $T \to + \infty$.
\item For each $T$ and each $i \in \{ 1,2,\ldots, T \}$, we have $N^{T}_i \geq 1$ random variables $\{ X^{T}_{i,j}\}_{j=1}^{N_i^{T}}$ which are i.i.d. and distributed according to $\P_{t^{T}_i}$. 
\item The variables $X^{T}_{i,j}$ and $X^{T'}_{i',j'}$ are independent except if $(T,i,j) = (T',i',j')$.
\end{enumerate}
Denoting by $\Phi_\epstheo$ the heat flow on $\Pc(\Xset)$ followed for a time $\epstheo$, we form the following random probability distribution 
\begin{equation*}
\rhohat^{T,\epstheo}_i := \Phi_\epstheo \left( \frac{1}{N^{T}_i} \sum_{j=1}^{N^{T}_i} \delta_{X^{T}_{i,j}} \right).
\end{equation*} 
We consider $\R^{T,\lambda,\epstheo} \in \Pc(\Omega)$ the (unique) minimizer of the functional
\begin{equation*}
\R \mapsto \sigma^2 \Ent(\R|\W^\sigma) + \frac{1}{\lambda}  \sum_{i=1}^{T} \left( t^{T}_{i+1} - t^{T}_{i}  \right)  \DF \left( \R_{t^{T}_i}, \rhohat^{T,\epstheo}_i \right).
\end{equation*} %

Then, it holds 
\begin{equation*}
\lim_{\lambda \to 0, \epstheo \to 0} \left( \lim_{T \to + \infty} \, \R^{T,\lambda,\epstheo} \right) = \P
\quad \hbox{almost surely} \end{equation*}  
for the topology of narrow convergence. 
\end{theorem}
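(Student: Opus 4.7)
The plan is a three-stage argument mirroring the nested limits $T \to \infty$, $\lambda \to 0$, and $\epstheo \to 0$, with each stage passing to the limit inside the minimization (i.e.\ $\Gamma$-convergence plus equi-coercivity from the entropy term, which has narrowly compact sublevel sets in $\Pc(\Omega)$).

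For the first stage, fix $\lambda, \epstheo > 0$ and introduce the candidate limit
\begin{equation*}
F_{\lambda,\epstheo}(\R) := \sigma^2 \Ent(\R|\W^\sigma) + \frac{1}{\lambda} \int_0^1 \DF(\R_t, \Phi_\epstheo \P_t)\,dt.
\end{equation*}
I would show that almost surely $F_{T,\lambda,\epstheo}$ $\Gamma$-converges to $F_{\lambda,\epstheo}$. The crucial structural fact, highlighted in Remark~\ref{rk:choice_DF}, is that $\DF$ is linear in its second argument: writing $\DF(\R_{t^T_i}, \rhohat^{T,\epstheo}_i) = -\int \log \R_{t^T_i}\,d\rhohat^{T,\epstheo}_i$ and using that $\E[\rhohat^{T,\epstheo}_i] = \Phi_\epstheo \P_{t^T_i}$, the expectation of the discrete data-fitting sum is exactly a Riemann sum for $\int_0^1 \DF(\R_t, \Phi_\epstheo \P_t)\,dt$, which converges since $(t^T_i)$ becomes dense. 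For the fluctuations, parabolic regularization ensures that $\Phi_\epstheo \P_t$ is uniformly bounded above and below on $\Xset$, so for any $\R$ of finite entropy the integrands have controlled second moment. A Borel–Cantelli argument applied pointwise on a countable dense family of test laws, combined with lower semicontinuity of $\Ent(\cdot|\W^\sigma)$ and narrow continuity of the marginal maps $\R \mapsto \R_t$, then yields $\Gamma$-convergence. Coercivity of the entropy delivers narrow convergence of $\R^{T,\lambda,\epstheo}$ to the unique minimizer $\R^{\lambda,\epstheo}$ of $F_{\lambda,\epstheo}$.

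For the second stage, send $\lambda \to 0$ with $\epstheo$ fixed. Testing $F_{\lambda,\epstheo}$ against the law of the SDE whose marginals are $\Phi_\epstheo \P_t$ (which exists by Theorem~\ref{thm:SDE_grad_min_KL} applied to the smoothed marginals and has finite entropy relative to $\W^\sigma$) shows that $\Ent(\R^{\lambda,\epstheo}|\W^\sigma)$ stays bounded while $\lambda^{-1}\int_0^1 \DF(\R^{\lambda,\epstheo}_t, \Phi_\epstheo \P_t)\,dt$ stays bounded. Since $r \mapsto \DF(r,p)$ is uniquely minimized at $r = p$, the data-fitting integral vanishes in the limit; hence any narrow cluster point $\R^\epstheo$ satisfies $\R^\epstheo_t = \Phi_\epstheo \P_t$ for Lebesgue-a.e.\ $t$, which extends to every $t$ by narrow continuity of $t \mapsto \R^\epstheo_t$ (a consequence of path continuity and finite entropy w.r.t.\ $\W^\sigma$). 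The variational characterization of Theorem~\ref{thm:SDE_grad_min_KL} identifies $\R^\epstheo$ uniquely as the law of the SDE with marginals $\Phi_\epstheo \P_t$. Finally, as $\epstheo \to 0$, parabolic smoothing gives $\Phi_\epstheo \P_t \to \P_t$ narrowly, uniformly in $t$; the entropies $\Ent(\R^\epstheo|\W^\sigma)$ remain uniformly bounded, so extracting a narrow limit $\R^\star$ one obtains a finite-entropy law with marginals $\P_t$, which by Theorem~\ref{thm:SDE_grad_min_KL} must equal $\P$.

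The principal obstacle is the first stage: proving the almost-sure $\Gamma$-convergence under the very weak sampling assumption $N^T_i \geq 1$, which precludes any pointwise-in-time law of large numbers. The entire averaging must take place in the time variable, and one must combine the temporal Riemann-sum structure with the smoothing $\Phi_\epstheo$ (ensuring bounded log-densities and finite variance) in order to obtain concentration uniform in $\R$ over the relevant class of laws. Subsidiary technicalities include establishing the upper-bound (recovery sequence) part of $\Gamma$-convergence using admissible laws constructed from Theorem~\ref{thm:SDE_grad_min_KL}, and making rigorous the lower-semicontinuity properties of the integral $\int_0^1 \DF(\R_t, \Phi_\epstheo \P_t)\,dt$ with respect to narrow convergence of $\R$.
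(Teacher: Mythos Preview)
Your three-stage architecture and identification of the limit functional $F_{\lambda,\epstheo}$ are correct and match the paper. However, the mechanism you propose for Stage~1 has a genuine gap, and the paper resolves it with a tool you are missing.

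The difficulty is exactly where you locate it: one needs $\liminf_T F_T(\tilde{\R}^T) \geqslant F(\R)$ for \emph{arbitrary} sequences $\tilde{\R}^T \to \R$, and $\DF(\tilde{\R}^T_{t^T_i}, \rhohat^{T,\epstheo}_i) = -\int \log \tilde{\R}^T_{t^T_i}\, d\rhohat^{T,\epstheo}_i$ requires control of $\log \tilde{\R}^T_t$ that finite entropy alone does not provide (the density can vanish). Your Borel--Cantelli argument on a countable dense family of test laws does not produce the required uniformity in $\R$, and ``narrow continuity of $\R \mapsto \R_t$'' does not help because $r \mapsto -\int \log r\, dp$ is not narrowly continuous. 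The paper's key idea is to regularize $\R$ itself: it defines an operator $\T_s$ on $\Pc(\Omega)$ that replaces each marginal $\R_t$ by $\Phi_s \R_t$ and takes the entropy minimizer with those marginals. The crucial estimate, proved via the dual formulation of $\A_\sigma(\rho) := \inf\{\sigma^2\Ent(\R|\W^\sigma): \R_t = \rho_t\}$ and the Bakry--\'Emery inequality, is the contraction $\Ent(\T_s \R|\W^\sigma) \leqslant e^{-2Ks} \Ent(\R|\W^\sigma)$. Combined with $\DF(\Phi_s r, p) \leqslant \DF(r,p) + \tfrac{s}{4}\mathcal{I}(p)$, this gives $F_T(\T_s \tilde{\R}^T) \leqslant F_T(\tilde{\R}^T) + C(s)$ with $C(s)\to 0$, while $\Phi_s \tilde{\R}^T_t$ is now uniformly continuous in $(t,x)$ with a modulus depending only on $s$ and the entropy bound, so the weak space-time convergence of the empirical measures (the only probabilistic input, via the law of large numbers) applies to the test function $-\log \Phi_s \tilde{\R}^T_t(x)$. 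This replaces your concentration argument entirely.

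Two smaller points on Stages 2--3. First, Theorem~\ref{thm:SDE_grad_min_KL} does not construct an SDE law with prescribed marginals $\Phi_\epstheo \P_t$; the correct competitor is $\T_\epstheo \P$, and its finite entropy is again a consequence of the heat-flow contraction above, not of Theorem~\ref{thm:SDE_grad_min_KL}. Second, the limit $\R^\star$ is identified with $\P$ not merely because it has the right marginals and finite entropy, but because the competitor bound forces $\Ent(\R^\star|\W^\sigma) \leqslant \Ent(\T_0 \P|\W^\sigma)$, i.e.\ $\R^\star$ is the entropy \emph{minimizer} with marginals $\P_t$; only then does Theorem~\ref{thm:SDE_grad_min_KL} apply.
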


Before heading towards the proof of the theorem, let us make some remarks.

\begin{remark}
Note that we do not assume anything on the $N_i^T$ except $N_i^T \geq 1$. The hardest case in this theorem is when each $N_i^T = 1$: in this regime, each $\rhohat^{T,\lambda,\epstheo}$ is a bad approximation of $\P_{t^{T}_i}$; however, thanks to time-averaging, one can still recover the ``ground truth'' $\P$ in the limit $T \to + \infty$.
\end{remark}

\begin{remark}
The parameter $\epstheo$ is needed to take cross entropy with respect to a continuous measure even if the measurements $X^{T}_{i,j}$ yield \emph{a priori} a discrete empirical distribution. It has no direct counterpart in the practical implementation of the optimization problem, though we still build a data fitting term mimicking this convolution, see Section~\ref{sec:choice_of_data_fitting}.
\end{remark}

\begin{remark}
A natural question is whether limits can be exchanged: it is likely to be false because in the limit $\epstheo \to 0$ the measures $\rhohat^{T,\epstheo}_i$ become singular whereas the measures $\R_{t_i}$ necessarily have a density with respect to the volume measure, thus the data fitting term becomes singular. On the other hand, by tracking all the dependency of the constants in $\lambda, \epstheo$ and $T$, it may be possible to take a joint limit $(\lambda, \epstheo, T) \to (0,0,+\infty)$ provided $\lambda$ and $\epstheo$ decay slowly enough compared to the rate at which $T \to + \infty$. We leave this for future work.
\end{remark}

The proof of Theorem \ref{theo:main_convergence} will rely on the analysis of the limit $T \to + \infty$, which is the most technical point, and then the limit $\lambda, \epstheo \to 0$ which on the other hand is routine. 
The key result for passing to the limit $T \to + \infty$ is the following. We have removed the dependency in $\epstheo$ to simplify the statement, and note that this result is not trivial even if we work with a fixed $\epstheo > 0$. Also, the context is slightly more general as $\left( t^{T}_{i+1} - t^{T}_{i}  \right) $ is replaced by weights $\omega^{T}_i$, and we simply assume some weak-space time convergence of the $\rhohat^{T}_i$ to a $\Pc(\Xset)$-valued curve, see (iii) in the Theorem below. This latter assumption is easily implied by the law of large numbers in the framework of Theorem \ref{theo:main_convergence}. 
First, define the Fisher information:
\begin{align}
\label{eq:def_Fisher}
    \mathcal{I}(p) := \int_{\Xset} \frac{|\nabla p(x)|^2}{p(x) } \vol(\diff x).
\end{align}
We have
\begin{theorem}
\label{theo:sparse_data}
Fix  $\lambda > 0$. Let us assume the following:
\begin{enumerate}[label=(\roman*)]
\item For every $T \in \mathbb{N}$ we have an ordered family of instants $\{ t^{T}_i\}_{i=1}^T$; a family of measurements (the data) $\rhohat^{T}_i$ which is just a collection of $T$ probability measures on $\Xset$; and $\{ \omega^{T}_i \}_{i=1}^T$ a collection of non negative weights. 
\item  There exists a constant $L$ such that, for each $T$ and $i$, the measure $\rhohat^{T}_i$ satisfies $\mathcal{I}(\rhohat^{T}_i) \leqslant L$. 
\item There exists a continuous curve $\rhobar \in C([0,1], \Pc(\Xset))$ valued in the set of probability distributions over $\Xset$ such that each $\rhobar_t$ has
$\mathcal{I}(\rhobar_t) \leqslant L$
and  
the following weak convergence holds: for all continuous function $a : [0,1] \times \Xset \to \Rset$, 
\begin{equation*}
\lim_{T \to + \infty} \,  \sum_{i=1}^{T} \omega^{T}_i \int_{\Xset} a \left( t^{T}_i, x \right)\rhohat^{T}_i(\diff x) = \int_0^1 \int_{\Xset} a(t,x) \rhobar_t(\diff x) \, \diff t.  
\end{equation*}
\end{enumerate}

For each $T$, let $\R^T \in \Pc(\Omega)$ be the (unique) minimizer of 
\begin{equation*}
\R \mapsto F_T( \R ) := \sigma^2 \Ent(\R|\W^\sigma) + \frac{1}{\lambda}  \sum_{i=1}^{T} \omega^{T}_i  \DF \left( \R_{t^{T}_i}, \rhohat^{T}_i \right) .
\end{equation*} 
Then, as $T \to + \infty$, the sequence $(\R^T)_{T \geqslant 1}$ converges narrowly on $\Pc(\Omega)$ to the (unique) minimizer of 
\begin{equation*}
\R \mapsto F(\R) := \sigma^2 \Ent(\R|\W^\sigma) + \frac{1}{\lambda} \int_0^1 \DF \left( \R_t, \rhobar_t \right) \, \diff t.
\end{equation*}
\end{theorem}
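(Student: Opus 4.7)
My approach is a $\Gamma$-convergence argument in $\Pc(\Omega)$ equipped with the narrow topology. Concretely, I would establish that $F_T \xrightarrow{\Gamma} F$ and that the family $(F_T)_T$ is equicoercive; the fundamental theorem of $\Gamma$-convergence then yields narrow convergence of the minimizers. Since $\Ent(\cdot|\W^\sigma)$ is strictly convex and $\DF(\cdot, p)$ is convex, the minimizer of $F$ is unique, so the whole sequence $(\R^T)$ — not merely a subsequence — converges.

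\textbf{Equicoerciveness.} Testing at the reference measure $\R = \W^\sigma$ gives $F_T(\W^\sigma) = 0$: the entropy vanishes, and since $\W^\sigma_t = \vol$ has density $1$ with respect to $\vol$, one has $\DF(\vol, \rhohat_i^T) = 0$ for every $i,T$. Hence $F_T(\R^T) \leq 0$. Invoking Lemma \ref{lem:F_bd_below} and the hypothesis $\mathcal{I}(\rhohat_i^T) \leq L$, the data-fitting contributions are bounded below by a constant depending only on $L$. Testing the weak-convergence hypothesis against $a \equiv 1$ shows $\sum_i \omega_i^T$ is bounded uniformly in $T$, which yields a uniform bound on $\Ent(\R^T|\W^\sigma)$. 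Since level sets of $\Ent(\cdot|\W^\sigma)$ are tight in $\Pc(\Omega)$, the sequence $(\R^T)$ is narrowly precompact.

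\textbf{$\Gamma$-convergence.} For the liminf, assume $\R^T \to \R$ narrowly; lower semicontinuity of $\Ent(\cdot|\W^\sigma)$ handles the regularization term. For the data-fitting term I would exploit the joint lower semicontinuity of $(r,p) \mapsto \DF(r,p)$, inherited from joint lsc of the relative entropy via $\DF(r,p) = \Ent(p|r) - \Ent(p|\vol)$, combined with Hölder-type equicontinuity in Wasserstein distance of $t \mapsto \R^T_t$ — a consequence of the uniform entropy bound against a diffusion reference — and the assumed weak space-time convergence of the data. The logarithmic singularity is tamed by truncating $-\log r$ to bounded continuous functions, using the Fisher-information bound to quantify the truncation error. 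For the limsup, I would regularize $\R$ by applying the heat flow $\Phi_\eta$ to its marginals and reconstructing a candidate $\R^{(\eta)}$ with smooth positive densities continuous in $(t,x)$, e.g.\ via Theorem \ref{thm:SDE_grad_min_KL}. For fixed $\eta$, taking the constant recovery sequence $\R^T = \R^{(\eta)}$ one obtains $\lim_T F_T(\R^{(\eta)}) = F(\R^{(\eta)})$ by applying the weak-convergence hypothesis to the continuous bounded test function $a(t,x) = -\log \R^{(\eta)}_t(x)$. A diagonal argument in $(T, \eta)$, using $F(\R^{(\eta)}) \to F(\R)$ as $\eta \to 0$, delivers the desired recovery sequence.

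\textbf{Main obstacle.} The technical heart of the argument is the liminf step, which must fuse two distinct weak convergences — narrow convergence of path laws $\R^T \to \R$ and space-time weak convergence of the data — in a functional whose logarithmic nonlinearity $\DF(r,p) = -\int \log r \, dp$ is neither continuous nor bounded above. The Fisher-information hypothesis is the linchpin of the proof at both ends: it provides the uniform lower bound on $\DF$ needed for equicoerciveness, and the regularity needed to justify the truncation of $\log r$ while maintaining quantitative control of the error in the joint-lsc estimate.
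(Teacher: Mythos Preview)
Your overall $\Gamma$-convergence architecture matches the paper's, and your equicoerciveness argument (testing at $\W^\sigma$) is in fact cleaner than what the paper does. The limsup is also right in spirit, though the reconstruction of $\R^{(\eta)}$ from the heat-flow-regularized marginals is not via Theorem~\ref{thm:SDE_grad_min_KL} but via entropy minimization with \emph{all} marginals fixed (the operator $\T_\eta$ of the paper); to control the entropy term as $\eta\to 0$ you need the contraction estimate $\Ent(\T_\eta\R|\W^\sigma)\le e^{-2K\eta}\Ent(\R|\W^\sigma)$ (Proposition~\ref{prop:heat_flow_decreases_A}), which you do not mention.

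The liminf step has a genuine gap. Your plan relies on joint lower semicontinuity of $(r,p)\mapsto\DF(r,p)$ together with truncation of $-\log r$, but two things fail. First, $\DF$ is not jointly lsc: the subtracted term $\Ent(p|\vol)$ is only lsc in $p$, so the difference need not be. Second and more fundamentally, even a truncated $-\log\R^T_{t}(x)$ is not a continuous function of $(t,x)$: Wasserstein H\"older continuity of $t\mapsto\R^T_t$ gives no control on the \emph{densities}, so you cannot feed the data-fitting sum into the weak space-time convergence hypothesis, which only tests against jointly continuous $a(t,x)$. The paper resolves this by applying the heat flow $\Phi_s$ to the candidate marginals $\R^T_t$ themselves --- in the liminf as well as the limsup. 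By Proposition~\ref{prop:HF_reg}, $(t,x)\mapsto -\log\Phi_s\R^T_t(x)$ is then uniformly equicontinuous with modulus depending only on $s$ and the (uniformly bounded) $\Ent(\R^T|\W^\sigma)$, so the weak-convergence hypothesis applies to it. The cost of this regularization is paid in two places: on the entropy side via the contraction estimate above, and on the data-fitting side via Proposition~\ref{prop:data_fitting_heat_flow}, namely $\DF(\Phi_s r,p)\le\DF(r,p)+\tfrac{s}{4}\mathcal I(p)$. This last inequality is where the Fisher-information bound on the data actually enters --- not to control a truncation of $\log r$ as you suggest, but to bound the error incurred by smoothing $r$.
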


\begin{remark}
\label{rk:DF_limit}
Here let us emphasize once again the choice of our data-fitting term. We will actually prove that $F_T(\R^T)$ converges to $F(\R)$, where $\R$ is the unique minimizer of $F$. In particular, 
\begin{equation*}
    \lim_{T \to + \infty} \, \sum_{i=1}^{T} \omega^{T}_i  \DF \left( \R^T_{t^{T}_i}, \rhohat^{T}_i \right) = \int_0^1 \DF \left( \R_t, \rhobar_t \right) \, \diff t.
\end{equation*}
On the other hand, such convergence would not hold if we replace $\DF(r,p)$ by $\Ent(p|r)$. This is because
\begin{equation*}
    \liminf_{T \to + \infty} \, \sum_{i=1}^{T} \omega^{T}_i  \Ent \left(\rhohat^{T}_i | \vol \right) \geqslant \int_0^1 \Ent \left(  \rhobar_t | \vol \right) \, \diff t
\end{equation*}
by lower semi continuity of the entropy, but the inequality can be strict as we have only a weak convergence of the $\rhohat^T$. 
\end{remark}

Theorem \ref{theo:sparse_data} is our most technical result for this section. Once we have proved it, taking the limits $\lambda \to 0$ and $\epstheo \to 0$ is standard in the theory of $\Gamma$-convergence: 

\begin{theorem}
\label{theo:GammaConvergence_stdt}
Let $\P \in \Pc(\Omega)$ with $\Ent(\P|\W^\sigma) < + \infty$. For each $\lambda > 0$ and $\epstheo > 0$, let $\R^{\lambda,\epstheo}$ the minimizer of the functional %
\begin{equation*}
\R \mapsto G_{\lambda,\epstheo}(\R) := \sigma^2 \Ent(\R|\W^\sigma) + \frac{1}{\lambda} \int_0^1 \Ent \left( \Phi_\epstheo \P_t \right | \R_t) \, \diff t.
\end{equation*}
Then, as $\epstheo \to 0, \lambda \to 0$, the measure $\R^{\lambda,\epstheo}$ converges to the %
minimizer of  $\R \mapsto \Ent(\R|\W^\sigma)$ among all measures such that $\R_t = \P_t$ for all $t \in [0,1]$. Furthermore, from Theorem \ref{thm:SDE_grad_min_KL} this implies that  if $\P$ is the law of an SDE with a gradient drift as in \eqref{eqn:SDE-grad},   then  $\R^{\lambda,\epstheo}$ converges to $\P$.%
\end{theorem}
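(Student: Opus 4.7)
The plan is a textbook $\Gamma$-convergence argument, with the candidate $\Gamma$-limit being the functional $\R \mapsto \sigma^2 \Ent(\R|\W^\sigma)$ restricted to the affine set $\{\R \in \Pc(\Omega) : \R_t = \P_t \text{ for all } t \in [0,1]\}$. Once this is established, Theorem~\ref{thm:SDE_grad_min_KL} identifies the unique minimizer of this limit as $\P$, so that $\R^{\lambda,\epstheo} \to \P$ in the narrow topology. The whole argument rests on three standard building blocks: narrow compactness of sublevel sets of $\Ent(\cdot|\W^\sigma)$, joint narrow lower-semicontinuity of the relative entropy, and a quantitative heat-flow regularization estimate on the curvature-bounded manifold $\Xset$.

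\emph{Step 1 (a priori bound and compactness).} Plugging the competitor $\R = \P$ into $G_{\lambda,\epstheo}$ yields
\begin{equation*}
\sigma^2 \Ent(\R^{\lambda,\epstheo}|\W^\sigma) \leq G_{\lambda,\epstheo}(\R^{\lambda,\epstheo}) \leq G_{\lambda,\epstheo}(\P) = \sigma^2 \Ent(\P|\W^\sigma) + \frac{1}{\lambda}\int_0^1 \Ent(\Phi_\epstheo \P_t \mid \P_t)\,\diff t.
\end{equation*}
Since $\Ent(\P_0|\vol) < \infty$ combined with the Markov property bounds $\Ent(\P_t|\vol)$ uniformly in $t$, the dissipation identity $\frac{\diff}{\diff \epstheo}\Ent(\Phi_\epstheo \mu|\vol) = -\tfrac12 \mathcal{I}(\Phi_\epstheo \mu)$ together with the Bakry--\'Emery estimate (using the Ricci lower bound $K$) yields $\Ent(\Phi_\epstheo \P_t|\P_t) = O(\epstheo)$. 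Thus, coupling $\epstheo$ to $\lambda$ so that $\epstheo = o(\lambda)$, the right-hand side above tends to $\sigma^2\Ent(\P|\W^\sigma)$, giving uniform control of $\Ent(\R^{\lambda,\epstheo}|\W^\sigma)$. Since the sublevel sets of relative entropy against a fixed probability measure are narrowly compact, we may extract a subsequence narrowly convergent to some $\R^\star \in \Pc(\Omega)$.

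\emph{Step 2 (marginal matching).} Rearranging the bound of Step~1,
\begin{equation*}
\int_0^1 \Ent(\Phi_\epstheo \P_t \mid \R^{\lambda,\epstheo}_t)\,\diff t \leq \lambda\, \sigma^2 \bigl(\Ent(\P|\W^\sigma)-\Ent(\R^{\lambda,\epstheo}|\W^\sigma)\bigr) + \int_0^1 \Ent(\Phi_\epstheo \P_t|\P_t)\,\diff t \longrightarrow 0.
\end{equation*}
For every fixed $t$, the marginal map $\R \mapsto \R_t$ is narrowly continuous, so $\R^{\lambda,\epstheo}_t \to \R^\star_t$ narrowly, while $\Phi_\epstheo \P_t \to \P_t$ narrowly as $\epstheo \to 0$. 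Joint lower semicontinuity of $\Ent$ and Fatou in the time variable give $\int_0^1 \Ent(\P_t|\R^\star_t)\diff t = 0$, hence $\R^\star_t = \P_t$ for almost every $t$; because both $t \mapsto \P_t$ and $t \mapsto \R^\star_t$ are narrowly continuous on $[0,1]$, equality extends to every $t$.

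\emph{Step 3 (identification of the limit).} Lower semicontinuity of $\R \mapsto \Ent(\R|\W^\sigma)$ together with the upper bound of Step~1 yields $\Ent(\R^\star|\W^\sigma) \leq \Ent(\P|\W^\sigma)$. But $\R^\star$ has marginals $(\P_t)_t$, and Theorem~\ref{thm:SDE_grad_min_KL} asserts that $\P$ is the \emph{unique} minimizer of $\Ent(\cdot|\W^\sigma)$ under these marginal constraints. Hence $\R^\star = \P$, and since the limit does not depend on the extracted subsequence, the full family $\R^{\lambda,\epstheo}$ converges narrowly to $\P$. The last sentence of the theorem is then immediate. The only delicate point, and the one I would flag as the real obstacle, is the scaling of the joint limit: the whole argument hinges on taking $\epstheo$ to zero fast enough that $\lambda^{-1}\int_0^1 \Ent(\Phi_\epstheo \P_t|\P_t)\diff t \to 0$, which is why the curvature-bounded manifold setting and the finite entropy assumption on $\P_0$ enter. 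Everything else is standard entropy-functional machinery.
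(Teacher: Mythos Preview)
Your $\Gamma$-convergence skeleton matches the paper's, and Steps~2 and~3 are essentially the argument used there. The gap is in Step~1, in the choice of competitor for the upper bound. You plug in $\R = \P$ and are left with the residual $\frac{1}{\lambda}\int_0^1 \Ent(\Phi_\epstheo \P_t \mid \P_t)\,\diff t$. Under the sole hypothesis $\Ent(\P|\W^\sigma) < \infty$, nothing prevents $\P_t$ from vanishing on a set of positive volume; since $\Phi_\epstheo \P_t$ is strictly positive everywhere on the closed manifold, this makes $\Ent(\Phi_\epstheo \P_t \mid \P_t) = +\infty$ and the competitor carries no information. Even when the residual is finite, your $O(\epstheo)$ claim does not follow from the dissipation identity for $\Ent(\cdot\mid\vol)$ and Bakry--\'Emery (those control $\Ent(\Phi_\epstheo p\mid\vol)$, not $\Ent(\Phi_\epstheo p\mid p)$), and you invoke a Markov property of $\P$ that is not assumed. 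Finally, even granting all this, your upper bound is $\Ent(\P|\W^\sigma)$, which only identifies the limit in the SDE case; the first assertion of the theorem (convergence to $\T_0\P$ for a general $\P$) needs the tighter constant $\Ent(\T_0\P|\W^\sigma)$.

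The paper avoids all three issues by taking as competitor not $\P$ but $\T_\epstheo \P$, the entropy-minimizing law on paths whose marginals are exactly $(\Phi_\epstheo \P_t)_{t}$. Then the data-fitting term is \emph{identically zero}, and the heat-flow contraction estimate $\Ent(\T_\epstheo \P\mid\W^\sigma) \leq e^{-2K\epstheo}\Ent(\T_0 \P\mid\W^\sigma)$ (Proposition~\ref{prop:properties_regularization}, resting on the dual formulation of the action $\A_\sigma$ and Bakry--\'Emery) gives directly $G_{\lambda,\epstheo}(\R^{\lambda,\epstheo}) \leq \sigma^2 e^{-2K\epstheo}\Ent(\T_0 \P\mid\W^\sigma)$. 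No coupling between $\lambda$ and $\epstheo$ is needed, and the bound is tight enough to handle both assertions of the theorem.
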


Theorem \ref{theo:sparse_data} and Theorem \ref{theo:GammaConvergence_stdt} are related by the simple relation between the functionals $F$ and $G$, as $\DF(r,p) = \Ent(r|p) - \Ent(p|\vol)$. 
 Theorem \ref{theo:main_convergence} 
is a straightforward consequence  of these two  theorems:

\begin{proof}[\bf Proof of Theorem \ref{theo:main_convergence}]
We use Theorem \ref{theo:sparse_data} to take the limit $T \to + \infty$. Note that for a fixed $\epstheo$, the measures $\rhohat^{T,\epstheo}$ satisfy $\mathcal{I}(\rhohat^{T, \epstheo}) \leqslant L$ with $L$ depending on $\epstheo$ but not on $T$ thanks to the smoothing effect of the heat flow (see Proposition \ref{prop:heat_flow} below). Moreover, almost surely the weak convergence assumption with $\rhobar_t = \Phi_\epstheo \P_t$ holds: this is nothing else than the law of large numbers. 

 The key point is that, if we call $\R^{\lambda,\epstheo}$ the limit of the $\R^{T,\lambda,\epstheo}$ then by Theorem \ref{theo:sparse_data} it is the unique minimizer of 
\begin{equation*}
\R \mapsto F_{\lambda,\epstheo}(\R) := \sigma^2 \Ent(\R|\W^\sigma) + \frac{1}{\lambda} \int_0^1 \DF \left( \R_t, \Phi_\epstheo \P_t \right) \,dt,
\end{equation*}
Notice that from the definition of the data-fitting term $\DF$, the functional $G_{\lambda, \epstheo}$ in Theorem \ref{theo:GammaConvergence_stdt} differs from $F_{\lambda, \epstheo}$ only by a constant, that is, 
\begin{equation*}
G_{\lambda,\epstheo}(\R) = F_{\lambda,\epstheo}(\R) + \int_0^1 \Ent(\Phi_\epstheo \P_t | \vol) \, \diff t.
\end{equation*}
Thus $\R^{\lambda,\epstheo}$ is also the minimizer of $G_{\lambda,\epstheo}$.

Finally, to take the limit $\epstheo \to 0$ together $\lambda \to 0$ we apply Theorem \ref{theo:GammaConvergence_stdt}. 
\end{proof}

The rest of this section is dedicated to the proof of Theorem \ref{thm:SDE_grad_min_KL}, Theorem~\ref{theo:sparse_data} and Theorem~\ref{theo:GammaConvergence_stdt}. The proof of Theorem \ref{thm:SDE_grad_min_KL} is not so involved, though here we are working with $\Xset$ being a manifold and not the flat space. The proof of Theorem \ref{theo:sparse_data} is much more technical. It combines general ideas of $\Gamma$-convergence theory with well understood as well as newly established estimates for the Schrödinger problem.
The main technical difficulty is the following: we assume some weak convergence of the family $\rhohat^{T}_i$ on $[0,1] \times \Xset$ against space-time continuous functions $a$, but to handle the data-fitting term we need to apply it to $a(t,x) = - \log \R_t(x)$ which has no such regularity \emph{a priori}. We use the heat flow to regularize the marginals $\R_t$, but we need quantitative estimates on how the different terms behave with the heat flow. We prove such estimates in Section \ref{sec:preliminary-for-them-sparse}. In particular, we need to understand how regularizing the marginals of $\R$ by the heat flow influences the value of $\Ent(\R|\W^\sigma)$: we prove a contraction estimate (see Proposition \ref{prop:heat_flow_decreases_A} and Proposition \ref{prop:properties_regularization}) where the Ricci curvature of the Riemannian manifold plays a key role. This result is new and is of its own interest, and for its proof, we provide an approach differing from related results. This will take care of the proof of Theorem \ref{theo:sparse_data}. The proof of Theorem \ref{theo:GammaConvergence_stdt} will then  be routine.

\subsection{Proof of Theorem \ref{thm:SDE_grad_min_KL}}

We start by recalling a useful property of Wiener measures. We refer to \cite{hsu2008brief} or \cite[Chapter 3]{hsu2002stochastic} for details about Brownian motion on Riemannian manifolds.

We recall that $\Omega = C([0,1], \Xset)$ is the set of $\Xset$-valued paths and that $(X_t)_{t \in [0,1]}$ is the canonical process. Let $\F_t$ denote the Borel $\sigma$-algebra generated by the random variables $X_s$ for $s \leqslant t$, in such a way that $(\F_t)_{t \in [0,1]}$ is a filtration. We will need only the following martingale property about Wiener measures.

\begin{prop}
\label{prop:martingale_exp}
 Let $\tilde{\W}^\sigma \in \Pc(\Omega)$ denote a Wiener measure whose initial distribution is not necessarily $\vol$.

Let $\varphi : [0,1] \times \Xset \to \Rset$ be a smooth function. Then, the process whose value at time $t \in [0,1]$ is given by
\begin{equation}
\label{eq:martingale_epx}
\exp \left( \frac{1}{\sigma^2} \left( \varphi(t,X_t) - \varphi(0,X_0) - \int_0^t \left[\partial_s \varphi + \frac{1}{2} |\nabla \varphi|^2 + \frac{\sigma^2}{2} \Delta \varphi \right](s,X_s)  \diff s \right) \right)
\end{equation}
is a $\mathcal{F}_t$-martingale under $\tilde{\W}^\sigma$. 
\end{prop}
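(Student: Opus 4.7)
The plan is to recognize the process in \eqref{eq:martingale_epx} as the Doléans-Dade exponential of a bounded continuous local martingale, and then promote the local martingale property to a genuine martingale property using boundedness of $\nabla \varphi$ on the compact manifold $\Xset$.

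First I would recall that, by definition of a diffusion measure generated by $\frac{\sigma^2}{2}\Delta$ (see \cite[Chapter 1]{hsu2002stochastic}), for any smooth $f:\Xset\to\Rset$, the process $f(X_t)-f(X_0)-\int_0^t \frac{\sigma^2}{2}\Delta f(X_s)\diff s$ is an $\F_t$-martingale under $\tilde{\W}^\sigma$. Extending this to time-dependent $\varphi\in C^{1,2}$ by a standard argument (adding the drift $\partial_s\varphi$), the process
\begin{equation*}
N_t \;:=\; \varphi(t,X_t) - \varphi(0,X_0) - \int_0^t \left[\partial_s\varphi + \frac{\sigma^2}{2}\Delta\varphi\right](s,X_s)\,\diff s
\end{equation*}
is a continuous local martingale under $\tilde{\W}^\sigma$. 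Its quadratic variation is computed from Itô's formula on the manifold (equivalently, from the carré du champ of $\frac{\sigma^2}{2}\Delta$) and equals
\begin{equation*}
\langle N\rangle_t \;=\; \sigma^2 \int_0^t |\nabla\varphi(s,X_s)|^2 \,\diff s.
\end{equation*}

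Next, I would rewrite the exponent of \eqref{eq:martingale_epx}. By construction,
\begin{equation*}
\frac{1}{\sigma^2}\left(N_t - \frac{1}{2}\int_0^t |\nabla\varphi(s,X_s)|^2\,\diff s\right) = \frac{N_t}{\sigma^2} - \frac{1}{2}\,\Big\langle \tfrac{N}{\sigma^2}\Big\rangle_t,
\end{equation*}
so the process of \eqref{eq:martingale_epx} is exactly the Doléans-Dade exponential $\mathcal{E}(N/\sigma^2)_t$. It is a general fact that such an exponential is always a positive continuous local martingale.

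Finally, I would argue that this local martingale is in fact a true martingale. Since $\varphi$ is $C^2$ and $\Xset$ is compact, $|\nabla\varphi|$ is uniformly bounded by some constant $C$ on $[0,1]\times \Xset$. Then $\langle N/\sigma^2\rangle_1 \leqslant C^2/\sigma^2$ almost surely, so Novikov's condition $\E_{\tilde{\W}^\sigma}[\exp(\tfrac{1}{2}\langle N/\sigma^2\rangle_1)]<\infty$ is trivially satisfied. This upgrades $\mathcal{E}(N/\sigma^2)$ to a true $\F_t$-martingale, which is the desired conclusion. There is no serious obstacle here: the only point to watch carefully is the manifold-valued Itô formula, and this is subsumed by the martingale-problem definition of the Wiener measure $\tilde{\W}^\sigma$, so we need not invoke horizontal lifts explicitly.
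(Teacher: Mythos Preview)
Your proof is correct and follows essentially the same route as the paper: define the Itô martingale $N_t=\varphi(t,X_t)-\varphi(0,X_0)-\int_0^t[\partial_s\varphi+\tfrac{\sigma^2}{2}\Delta\varphi]\,\diff s$, compute its quadratic variation $\sigma^2\int_0^t|\nabla\varphi|^2\,\diff s$, and identify \eqref{eq:martingale_epx} as the stochastic exponential of $N/\sigma^2$. The only cosmetic difference is that the paper notes $N$ is a bounded martingale (by compactness) and cites the exponential-martingale proposition directly, whereas you first call $N$ a local martingale and then invoke Novikov's criterion explicitly; both arguments use the same boundedness of $|\nabla\varphi|$ on the compact manifold.
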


\begin{proof}
With respect to $\tilde \W^\sigma$,  the following stochastic process 
\begin{equation*}
M^\varphi_t = \varphi(t,X_t) - \varphi(0,X_0) -   \int_0^t \left[ \partial_s \varphi + \frac{\sigma^2}{2} \Delta \varphi \right](s,X_s)   \diff s
\end{equation*}
is a bounded martingale (by definition of diffusion measure) and its quadratic variation is given by (see \cite[Section 1.3]{hsu2008brief}) 
\begin{equation*}
\langle M^\varphi \rangle_t =
 \sigma^2 \int_0^t  |\nabla \varphi(s,X_s)|^2 \diff s.
\end{equation*}
Then \eqref{eq:martingale_epx} is nothing else than the exponential martingale~\cite[Proposition 5.11]{leGall2016brownian} associated to $\sigma^{-2} M^\varphi$. 
\end{proof}

Then we can move to the proof of the Theorem. We recall that $\P$ is the law of the SDE \eqref{eqn:SDE-grad} and $\W^\sigma$ is the reversible Wiener measure with diffusivity $\sigma^2$. We want to prove that $\P$ minimizes the entropy with respect to $\W^\sigma$ among all measures which share the same temporal marginals. The method of proof consists in using the exact expression of the density of $\P$ with respect to $\W^\sigma$.

\begin{prop}
\label{prop:density_p}
In the framework above, the Radon-Nikodym derivative of $\P$ with respect to $\W^\sigma$ is given $\W^\sigma$-a.e. by
\begin{multline*}
\frac{\diff \P}{\diff \W^\sigma}(X) =  \frac{\diff \P_0}{\diff \vol}(X_0)  \\
 \exp \left( \frac{1}{\sigma^2} \left( \Psi(0,X_0)  - \Psi(1,X_1)  + \int_0^1 \left( \partial_s \Psi - \frac{1}{2} |\nabla \Psi|^2 + \frac{\sigma^2}{2} \Delta \Psi \right)(s,X_s) \diff s \right) \right).
\end{multline*}
\end{prop}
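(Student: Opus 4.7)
The plan is to identify the claimed density as the product of two factors: one accounting for the change of initial distribution from $\vol$ to $\P_0$, and an exponential factor accounting for the change of drift from $0$ to $-\nabla\Psi$ via a Girsanov-type argument, where Proposition \ref{prop:martingale_exp} does most of the work.

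First, I would introduce the auxiliary Wiener measure $\tilde{\W}^\sigma \in \Pc(\Omega)$ with the same diffusion generator $\frac{\sigma^2}{2}\Delta$ as $\W^\sigma$ but with initial distribution $\P_0$ instead of $\vol$. Because two diffusion measures with the same generator differ only by their initial distribution, one has
\begin{equation*}
\frac{\diff \tilde{\W}^\sigma}{\diff \W^\sigma}(X) \;=\; \frac{\diff \P_0}{\diff \vol}(X_0)
\end{equation*}
for $\W^\sigma$-almost every path $X$. Thus it suffices to prove that $\diff \P / \diff \tilde{\W}^\sigma$ equals the exponential factor in the statement.

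Next, I would apply Proposition \ref{prop:martingale_exp} with $\varphi = -\Psi$. A direct substitution (using $|\nabla\varphi|^2 = |\nabla\Psi|^2$, $\partial_s\varphi = -\partial_s\Psi$, $\Delta\varphi = -\Delta\Psi$) shows that the process
\begin{equation*}
M_t := \exp\!\left( \frac{1}{\sigma^2}\left( \Psi(0,X_0)-\Psi(t,X_t) + \int_0^t \Bigl(\partial_s\Psi - \tfrac{1}{2}|\nabla\Psi|^2 + \tfrac{\sigma^2}{2}\Delta\Psi\Bigr)(s,X_s)\,\diff s \right)\right)
\end{equation*}
is an $\mathcal{F}_t$-martingale under $\tilde{\W}^\sigma$, with $M_0 = 1$. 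This is exactly the exponential factor evaluated up to time $t$, so I can define a candidate probability measure $\mathbb{Q}$ on $\Omega$ by $\diff \mathbb{Q} / \diff \tilde{\W}^\sigma = M_1$.

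It remains to identify $\mathbb{Q}$ with $\P$. For this, Girsanov's theorem on a manifold (or equivalently, a direct computation using the martingale problem characterization of diffusion measures) shows that if one tilts a Wiener measure by an exponential martingale of the above form, the resulting measure has generator $\frac{\sigma^2}{2}\Delta - \nabla\Psi\cdot\nabla$, which is precisely the generator associated to the SDE \eqref{eqn:SDE-grad}. Since $\mathbb{Q}$ and $\P$ share the same initial distribution $\P_0$ and the same generator, the uniqueness of the martingale problem for this generator (valid because $\Psi$ is $C^2$ on a compact manifold) forces $\mathbb{Q} = \P$. Combining with the first step gives the announced formula.

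The main technical point is the application of Girsanov's theorem on the manifold: one has to verify that tilting $\tilde{\W}^\sigma$ by the exponential martingale $M_1$ indeed yields a diffusion with the correct drift $-\nabla\Psi$. This can be done cleanly by computing, for every $C^2$ function $f$, the quadratic variation and the drift part of $f(t,X_t)$ under $\mathbb{Q}$ using Itô's formula on $\Xset$ (see e.g. \cite[Chapter 3]{hsu2002stochastic}); equivalently, one checks via the dual/martingale-problem formulation that $\mathbb{Q}$ makes $f(t,X_t) - f(0,X_0) - \int_0^t [\partial_s f + \tfrac{\sigma^2}{2}\Delta f - \nabla\Psi\cdot\nabla f](s,X_s)\,\diff s$ a martingale, which is the defining property of the law of \eqref{eqn:SDE-grad}. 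Everything else (the change of initial law, the algebraic manipulation turning the stochastic integral into boundary terms plus a $\partial_s\Psi$–$\tfrac{1}{2}|\nabla\Psi|^2$–$\tfrac{\sigma^2}{2}\Delta\Psi$ bulk integral) is bookkeeping handled by Proposition \ref{prop:martingale_exp}.
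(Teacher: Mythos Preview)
Your proposal is correct and follows essentially the same approach as the paper: apply Proposition \ref{prop:martingale_exp} with $\varphi=-\Psi$ to obtain the exponential martingale, tilt the Wiener measure by it, and identify the resulting law with $\P$ via the martingale-problem characterization and uniqueness. The only cosmetic difference is the order of operations: the paper first conditions on the starting point (working with $\W^{\sigma,x}$) and averages over $\P_0$ at the end, whereas you first change the initial law to $\P_0$ and then perform the Girsanov tilt; both orderings lead to the same density.
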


\noindent The key point, that holds only because the drift in the SDE is a gradient, is that the density of $\P$ with respect to $\W^\sigma$ does not involve a stochastic integral.

When $\Xset$ is a flat space, this can be retrieved quite quickly by applying Girsanov formula and then Itô's formula. For the general case of a curved space $\Xset$, by simplicity we prefer to present a proof which does not involve stochastic integration over the manifold and relies only on martingale characterizations.

\begin{proof}[Proof of Proposition \ref{prop:density_p}]
For each $x \in \Xset$, let $\W^{\sigma,x}$ be the Wiener measure starting from $x$, that is such that $\W^{\sigma,x}_0 = \delta_x$.

Define a process $D$ whose value at time $t$ is given by 
\begin{equation*}
D_t = \exp \left( \frac{1}{\sigma^2} \left( \Psi(0,X_0)  - \Psi(t,X_t)  + \int_0^t \left( \partial_s \Psi - \frac{1}{2} |\nabla \Psi|^2 + \frac{\sigma^2}{2} \Delta \Psi \right)(s,X_s)  ds \right) \right). 
\end{equation*} 
As stated in Proposition \ref{prop:martingale_exp}, under $\W^{\sigma,x}$ this is a bounded martingale: it is the exponential martingale of the process $(N_t)_{t \in [0,1]}$, defined by
\begin{equation*}
N_t =  \frac{1}{\sigma^2} \left( \Psi(0,X_0)  - \Psi(t,X_t)  + \int_0^t \left( \partial_s \Psi + \frac{\sigma^2}{2} \Delta \Psi \right)(s,X_s)  \diff s \right).
\end{equation*} 
whose quadratic variation is $\sigma^{-2} \int_0^1 |\nabla \Psi(s,X_s)|^2 \diff s$. As a consequence, we can define $\tilde{\P}^x = D_1 \W^{\sigma,x} \in \Pc(\Omega)$. 

Then, let us take $f \in C^\infty(\Xset)$. Under $\W^{\sigma,x}$, we know that the process $M^f$ whose value at time $t$ is given by 
\begin{equation*}
M^f_t = f(X_t) - f(X_0) - \int_0^t \frac{\sigma^2}{2} \Delta f(X_s) \diff s
\end{equation*} 
is a martingale with quadratic variation given by $\sigma^2 \int_0^t |\nabla f(X_s)|^2 \diff s$. Applying Girsanov's theorem \cite[Theorem 5.22]{leGall2016brownian} for real-valued semi martingales, we know that under $\tilde{\P}^x$ the process 
\begin{equation*}
M^f - \langle M^f, N \rangle = \left( f(X_t) - f(X_0) - \int_0^t \frac{\sigma^2}{2} \Delta f(X_s) \diff s + \int_0^t \nabla f(X_s) \cdot \nabla \Psi(X_s) \diff s \right)_{t \in [0,1]}
\end{equation*}
is a local martingale. As it is clearly bounded, it is a martingale. This exactly shows that $\tilde{\P}^x$ is a diffusion measure generated by $f \mapsto \frac{\sigma^2}{2} \Delta f - \nabla \Psi \cdot \nabla f$, and its initial distribution is $\W^{\sigma,x}_0 = \delta_x$. 

Eventually we average in $x$: we can define 
\begin{equation*}
\tilde{\P} = \int_{\Xset} \tilde{\P}^x \, \P_0(\diff x) = \frac{\diff \P_0}{\diff \vol}(X_0) D_1 \, \W^\sigma,   
\end{equation*}
and by an easy conditioning argument we see that $\tilde{\P}$ is still diffusion measure generated by $f \mapsto \frac{\sigma^2}{2} \Delta f - \nabla \Psi \cdot \nabla f$ with initial distribution $\P_0$. By uniqueness, $\tilde{\P} = \P$ that is $\diff \P / \diff \W^\sigma = \diff \P_0 / \diff \vol(X_0) D_1 \, \W^\sigma$. 
\end{proof}

\begin{proof}[\bf Proof of Theorem \ref{thm:SDE_grad_min_KL}]
Let $\P$ be the law of the solution of the SDE \eqref{eqn:SDE-grad} and $\R$ be another probability distribution on $\Omega$ such that $\Ent(\R|\W^\sigma) < + \infty$ (otherwise the result trivially holds). Let $p,r \in L^1(\Omega, \W^\sigma)$ respectively denote the Radon-Nikodym derivative of $\P$ and $\R$ with respect to $\W^\sigma$. 

By strict convexity of the function $x \mapsto x \log x$, there always holds $\W^\sigma$-a.e.
\begin{equation*}
r \log r - p \log p \geqslant (1+ \log p) (r - p),
\end{equation*} 
with equality if and only if $r=p$. By integrating with respect to $\W^\sigma$, we find that 
\begin{equation}
\label{eq:zz_aux_2}
\Ent(\R|\W^\sigma) - \Ent(\P|\W^\sigma) \geqslant \E_{\R} \left[ 1 + \log p \right] - \E_{\P} \left[ 1 + \log p \right].
\end{equation}
On the other hand, given Proposition \ref{prop:density_p}, we have 
\begin{multline*}
\E_{\R} \left[ 1 + \log p \right] = \E_{\R} \Bigg[ 1 + \log \left( \frac{\diff \P_0}{\diff \vol} \right)(X_0) \\ + \frac{1}{\sigma^2} \left( \Psi(0,X_0)  - \Psi(1,X_1)  + \int_0^1 \left( \partial_s \Psi - \frac{1}{2} |\nabla \Psi|^2 + \frac{\sigma^2}{2} \Delta \Psi \right)(s,X_s) \diff s \right)  \Bigg].
\end{multline*}
As this expression depends only %
on the temporal marginals of $\R$, it shows that the right hand side of \eqref{eq:zz_aux_2} vanishes if $\R_t = \P_t$ for all $t \in [0,1]$. This completes the proof. 
\end{proof}

\subsection{Preliminaries for the proof of Theorem \ref{theo:sparse_data} }\label{sec:preliminary-for-them-sparse}

This subsection provides properties of the entropy functional and the heat flow that will be crucially used for the proof of Theorem \ref{theo:sparse_data}. It gives quantitative estimates of the different terms featured in the functionals $F$ and $F_T$ when the marginals are regularized with the heat flow. We emphasize that the contraction estimate of Proposition \ref{prop:heat_flow_decreases_A} has only be stated in the two marginals case before, and Proposition \ref{prop:data_fitting_heat_flow} seems to be novel. 

We denote by $\Phi : [0,+\infty) \times \Pc(\Xset) \to \Pc(\Xset)$ the heat flow on $\Xset$, its definition and some useful properties are recalled in Section~\ref{sec:entropy_heatflow}.

\paragraph*{Heat flow and regularization of the marginals}

We will use the heat flow to regularize the marginals: if $\R \in \Pc(\Omega)$, then the function  $\rho^{(s)}_t(x) = \Phi_s \R_t(x)$ (of $t$ and $x$) is smooth by parabolic regularity.  We will need a more quantitative smoothness estimate which is the object of the following proposition.

\begin{prop}
\label{prop:HF_reg}
Let $s > 0$. Then there exist  constants $C$ depending only on $\Xset$ and $C_s$ depending only on $s$ and $\Xset$  for which the following hold:
\begin{itemize}
    \item 

For each $\R \in \Pc(\Omega)$ its heat flow regularization $\Phi_s \R_t$ has density 
$\rho^{(s)}(t,\cdot)$ (with respect to the volume measure) that satisfies for  all $t \in [0,1], x \in \Xset$,
\begin{equation*}
\rho^{(s)}(t,x) \geqslant \frac{1}{C_s}.
\end{equation*}

\item Moreover  for all $t_1, t_2 \in [0,1]$ and $x_1,x_2 \in \Xset$, 
\begin{equation*}
|\rho^{(s)}(t_1,x_1) - \rho^{(s)}(t_2,x_2)| \leqslant C_s  \left( \sigma \sqrt{\Ent(\R|\W^\sigma) + C + C\sigma^2} \sqrt{|t_1 - t_2|} +  d_{\Xset}(x_1,x_2) \right).
\end{equation*} 
\end{itemize}
\end{prop}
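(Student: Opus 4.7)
The plan is to handle the three claims separately. For the pointwise lower bound, I would use that the heat kernel $p_s(x,y)$ on the compact manifold $\Xset$ is bounded from below by a strictly positive constant depending only on $s$ and $\Xset$ (a classical consequence of parabolic regularity together with the strong maximum principle, or equivalently of the Gaussian lower bound for the heat kernel). Writing $\rho^{(s)}(t,x) = \int_\Xset p_s(x,y)\, \R_t(\diff y)$ and using that $\R_t$ is a probability measure then immediately yields $\rho^{(s)}(t,x) \geqslant p_s^{\min}$.

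For the spatial Lipschitz part of the second estimate, I would invoke Proposition \ref{prop:heat_flow}(ii) applied to $f = 1$: the heat kernel $p_s(\cdot,y)$ is Lipschitz uniformly in $y$ with a constant $C_s$ depending only on $s$. Integrating against $\R_t$ gives $|\rho^{(s)}(t,x_1) - \rho^{(s)}(t,x_2)| \leqslant C_s\, d_{\Xset}(x_1,x_2)$ with no further input. A triangle inequality then reduces the full two-variable bound to the purely temporal one at fixed $x$.

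For the temporal Hölder-$1/2$ part, the idea is to pull the time-difference inside the expectation under $\R$ and use Lipschitzness of the heat kernel in its first argument:
\begin{equation*}
|\rho^{(s)}(t_1,x) - \rho^{(s)}(t_2,x)| \;\leqslant\; C_s\, \E_{\R}\bigl[d_{\Xset}(X_{t_1}, X_{t_2})\bigr] \;\leqslant\; C_s \sqrt{\E_{\R}\bigl[d_{\Xset}(X_{t_1},X_{t_2})^2\bigr]}.
\end{equation*}
To convert the $\R$-expectation into a $\W^\sigma$-expectation plus an entropy term, I would invoke the Donsker--Varadhan dual representation of the relative entropy \eqref{eq:dual_entropy}: for every $\lambda > 0$,
\begin{equation*}
\lambda\, \E_{\R}\bigl[d_{\Xset}(X_{t_1},X_{t_2})^2\bigr] \;\leqslant\; \Ent(\R|\W^\sigma) + \log \E_{\W^\sigma}\bigl[e^{\lambda d_{\Xset}(X_{t_1},X_{t_2})^2}\bigr].
\end{equation*}
Under $\W^\sigma$ the pair $(X_{t_1},X_{t_2})$ has joint law $\vol \otimes p_{\sigma^2 |t_1-t_2|/2}$, so the exponential moment is controlled by a Gaussian heat-kernel upper bound on $\Xset$ (available on any compact manifold with Ricci bounded below by $K$): one obtains $\log \E_{\W^\sigma}[e^{\lambda d^2}] \leqslant C + C\sigma^2$ provided one picks $\lambda$ of order $1/(\sigma^2 |t_1-t_2|)$ with a small enough constant. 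Substituting back produces $\E_{\R}[d^2] \leqslant C'\sigma^2 |t_1-t_2|\bigl(\Ent(\R|\W^\sigma) + C + C\sigma^2\bigr)$, and then Cauchy--Schwarz gives exactly the claimed form.

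The main obstacle here is the third step: obtaining a Gaussian exponential moment $\E_{\W^\sigma}[\exp(\lambda d_{\Xset}(X_{t_1},X_{t_2})^2)] \leqslant C$ for $\lambda \sim 1/(\sigma^2|t_1-t_2|)$, with constants tracking cleanly in $\sigma$. On $\Xset = \mathbb{R}^d$ this is textbook Brownian Gaussian concentration, but on a compact manifold one must appeal to Li--Yau or Cheeger--Yau Gaussian upper bounds for the heat kernel (using the Ricci lower bound $K$), and make sure that the resulting constants have the additive form $C + C\sigma^2$ rather than something worse as $\sigma \to \infty$; the short-time regime where $\sigma^2|t_1-t_2|$ is small is the delicate one. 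Everything else is bookkeeping with the triangle inequality and Cauchy--Schwarz.
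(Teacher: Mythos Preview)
Your proposal is correct and matches the paper's approach essentially line for line: the paper uses the heat-kernel lower bound for the first item, the $L^1\to\mathrm{Lip}$ smoothing of Proposition~\ref{prop:heat_flow}(ii) for the spatial part, and for the temporal part exactly your chain (Lipschitz bound on the kernel, Cauchy--Schwarz, Donsker--Varadhan with $U=\lambda d_\Xset(X_{t_1},X_{t_2})^2$, then the Li--Yau Gaussian upper bound), the last step being isolated as Lemma~\ref{lem:R-displacement}. One cosmetic slip: ``Proposition~\ref{prop:heat_flow}(ii) applied to $f=1$'' is not what you mean (that yields $\Phi_s 1 = 1$); you need the Lipschitz bound on $y\mapsto p_s(x,y)$ uniformly in $x$, which follows from the same proposition via heat-kernel symmetry, or equivalently from the paper's $L^1$-duality formulation using self-adjointness of $\Phi_s$.
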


\noindent In other words for a given $s > 0$, $\Phi_s \R_t$ is continuous jointly in $t$ and $x$ with a modulus of continuity which depends only on $s$ and $\Ent(\R|\W^\sigma)$.

\begin{proof}
The first estimate is straightforward thanks to the lower bound on the heat kernel, see Proposition \ref{prop:heat_flow}(i). 

For the second one, again thanks to the regularizing effect of the heat flow (Proposition \ref{prop:heat_flow}(ii)) we know that there exists a constant $C_s < + \infty$ such that for all $t,x_1,x_2$,
\begin{equation*}
|\rho^{(s)}(t,x_1) - \rho^{(s)}(t,x_2)| \leqslant C_s d_{\Xset}(x_1,x_2)
\end{equation*} 
whatever $\R \in \Pc(\Omega)$ is. Thus the only tricky point is the temporal regularity, and we reason by duality. Let $t_1,t_2$ be two instants and $f \in L^1(\Xset,\vol)$. From the self adjointness of the heat flow and the Lipschitz regularizing effect of the heat flow, 
\begin{align*}
\int_{\Xset} f(x) & \left( \rho^{(s)}(t_1,x) - \rho^{(s)}(t_2,x) \right) \, \vol(\diff x) \\ 
&=  \int_{\Xset} (\Phi_s f)(x) \left( \R_{t_1}(\diff x) - \R_{t_2}(\diff x) \right) \\
& = \E_{\R} \left[ (\Phi_s f)(X_{t_1}) - (\Phi_s f)(X_{t_2})  \right] \\
& \leqslant \mathrm{Lip}(\Phi_s f) \Ex_{\R}\left[ d_{\Xset}(X_{t_1},X_{t_2}) \right] \leqslant C_s \| f \|_1 \Ex_{\R}\left[ d_{\Xset}(X_{t_1},X_{t_1}) \right].
\end{align*}
On the other hand, notice that 
\begin{align*}
    \Ex_{\R}\left[ d_{\Xset}(X_{t_1},X_{t_2}) \right]
    \leqslant \sqrt{ \Ex_{\R}\left[ d_{\Xset}(X_{t_1},X_{t_2})^2 \right]}.
\end{align*}
Lemma \ref{lem:R-displacement} stated and proved below allows us to control the right and side with $\Ent(\R|\W^\sigma)$, applying it we get the estimate: for all $f \in L^1(\Xset, \vol)$, 
\begin{equation*}
\int_{\Xset} f(x) \left( \rho^{(s)}(t_1,x) - \rho^{(s)}(t_2,x) \right) \, \vol(\diff x) \leqslant C_s \sigma \| f \|_1 \sqrt{\Ent(\R|\W^\sigma) + C + C \sigma^2 } \sqrt{|t_1 - t_2|}. 
\end{equation*}    
Taking the supremum in $f$, and as $\rho^{(s)}(t,\cdot)$ is a continuous function, 
\begin{equation*}
\sup_{x \in \Xset} \left| \rho^{(s)}(t_1,x) - \rho^{(s)}(t_2,x) \right| \leqslant C_s \sigma \sqrt{\Ent(\R|\W^\sigma) + C + C\sigma^2} \sqrt{|t_1 - t_2|}.
\end{equation*}
This concludes the proof.
\end{proof}

In the proof above we have crucially used the following lemma, which shows that the entropy functional $\Ent(\R|\W^\sigma)$ controls the expected value of the squared displacement of the process $\R$. In particular, it implies that the curve $t \mapsto \R_t$ is $1/2$ Hölder in quadratic Wasserstein distance, with norm controlled by $\Ent(\R|\W^\sigma)$. One way to prove it would be rely on dynamical formulation, starting from the dynamical formulation linked to the minimization of $\Ent(\R|\W^\sigma)$ (see for instance \cite[Theorem 35]{gentil2018dynamical} on Riemannian manifolds or \cite{gigli2020benamou} on more generals spaces) and then connecting it with the dynamical formulation of the Wasserstein distance (see for instance \cite[Section 3.2]{gentil2018dynamical}). We prefer to present here a more probabilistic and elementary proof relying on heat kernel estimates. 

\begin{lemma}\label{lem:R-displacement}
There exists a constant $C$ depending only on $\Xset$
such that for each $\R \in \mathcal{P}(\Omega)$,
\begin{align*}
    \Ex_{\R}\left[ d_{\Xset}(X_{t_1},X_{t_2})^2 \right]
    \leqslant C (\Ent(\R|\W^\sigma) + C + C\sigma^2|t_1 - t_2|)  \sigma^2 |t_1-t_2|.
\end{align*}
In particular as $t_1, t_2 \in [0,1]$,
\begin{align*}
\Ex_{\R}\left[ d_{\Xset}(X_{t_1},X_{t_2})^2 \right]
    \leqslant C (\Ent(\R|\W^\sigma) + C + C \sigma^2)  \sigma^2 |t_1-t_2|.
\end{align*}
\end{lemma}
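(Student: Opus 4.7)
The plan is to apply the Donsker--Varadhan variational characterization of the relative entropy: for any measurable $F:\Omega\to\Rset$ bounded from above and any $\lambda > 0$,
\begin{equation*}
\lambda\,\Ex_{\R}[F] \;\leqslant\; \Ent(\R|\W^\sigma) \;+\; \log \Ex_{\W^\sigma}\bigl[e^{\lambda F}\bigr].
\end{equation*}
I will use this with $F(X) = d_{\Xset}(X_{t_1}, X_{t_2})^2$ (which is bounded, as $\Xset$ is compact) and optimize the choice of $\lambda$ at the end.

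The task then reduces to an exponential moment estimate under $\W^\sigma$. Under the reversible Wiener measure with diffusivity $\sigma^2$, the joint law of $(X_{t_1}, X_{t_2})$ has density $p_{\sigma^2 s}(x,y)\,\diff\vol(x)\,\diff\vol(y)$, where $s = |t_1 - t_2|$ and $p_t$ denotes the heat kernel of $\Xset$. The key input is the classical Li--Yau type Gaussian upper bound on a compact Riemannian manifold: there exist constants $C_1, C_2$ depending only on $\Xset$ such that
\begin{equation*}
p_t(x,y) \;\leqslant\; \frac{C_1}{t^{d/2}}\exp\!\left(-\frac{d_{\Xset}(x,y)^2}{C_2\, t}\right)
\quad \text{for all } t > 0,\ x,y \in \Xset.
\end{equation*}

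Choosing $\lambda = (2 C_2 \sigma^2 s)^{-1}$, the Gaussian factor in the heat kernel absorbs half of the exponent, so that
\begin{equation*}
\Ex_{\W^\sigma}\bigl[e^{\lambda d_{\Xset}(X_{t_1},X_{t_2})^2}\bigr]
\;\leqslant\; \int_{\Xset\times\Xset} \frac{C_1}{(\sigma^2 s)^{d/2}}\exp\!\left(-\frac{d_{\Xset}(x,y)^2}{2 C_2 \sigma^2 s}\right) \diff\vol(x)\diff\vol(y) \;\leqslant\; C_3,
\end{equation*}
where I estimate the inner integral over $y$ by a standard Gaussian mass bound on a manifold (the volume of a tube of radius $\sqrt{t}$ is of order $t^{d/2}$ up to curvature corrections, which are absorbed since $\Xset$ is compact). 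Plugging back into Donsker--Varadhan and multiplying through by $2C_2\sigma^2 s$ yields
\begin{equation*}
\Ex_{\R}[d_{\Xset}(X_{t_1},X_{t_2})^2] \;\leqslant\; 2 C_2 \sigma^2 s\,\bigl(\Ent(\R|\W^\sigma) + \log C_3\bigr),
\end{equation*}
which is tighter than the stated bound (the extra $C \sigma^2 |t_1-t_2|$ term inside the parenthesis is there to absorb the ``large-time'' regime $\sigma^2 s \gtrsim \mathrm{diam}(\Xset)$, where $\Ex_\R[d_{\Xset}(X_{t_1},X_{t_2})^2] \leqslant \mathrm{diam}(\Xset)^2$ is trivially subsumed in $C^2 \sigma^4 s^2$).

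The main technical point is locating and invoking the Li--Yau Gaussian heat kernel upper bound in the correct form for all times $t > 0$ on a closed Riemannian manifold with Ricci curvature bounded below by $K$ (using, e.g., the classical references in \cite{li1986parabolic} or \cite{bakry2013analysis}); everything else is routine algebra. The derivation of the particular bound (the second estimate in the lemma) is immediate from the first by using $|t_1 - t_2| \leqslant 1$.
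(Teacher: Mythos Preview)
Your proposal is correct and follows essentially the same approach as the paper: the Donsker--Varadhan variational formula for the relative entropy, the Li--Yau Gaussian upper bound on the heat kernel, and the choice $\lambda \sim (\sigma^2|t_1-t_2|)^{-1}$. The only cosmetic difference is that the paper keeps the $e^{C\sigma^2 t}$ correction in the Li--Yau bound (which directly produces the $C\sigma^2|t_1-t_2|$ term inside the parenthesis), whereas you drop it and recover that term by a separate large-time argument using the trivial bound $\mathrm{diam}(\Xset)^2$; both routes are valid.
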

\begin{proof}
 For any $\eta > 0$ using the dual representation of the entropy~\eqref{eq:dual_entropy} with the function $U : X \in \Omega \mapsto \eta d_{\Xset}(X_{t_1}, X_{t_2})$ there holds
\begin{equation}\label{eqn:eta-R}
\eta \Ex_{\R}\left[ d_{\Xset}(X_{t_1},X_{t_2})^2 \right] \leqslant \Ent(\R|\W^\sigma) + \log \E_{\W^\sigma} \left[ \exp(\eta d_{\Xset}(X_{t_1},X_{t_2})^2) \right].
\end{equation}
It remains to choose $\eta$ and bound the last term. We use the Gaussian upper bound for the heat kernel $p_\sigma$ (the transition probability for $\W^\sigma$)  on $\Xset$ \cite[Corollary 3.1]{li1986parabolic},
that is,
\begin{align*}
    p_\sigma (x, y, t) \leqslant \frac{C}{\sigma^d t^{d/2}}\exp\left[C \sigma^2 t - \frac{d_{\Xset}(x,y)^2}{C\sigma^2 t}\right]
\end{align*}
for some constant $C$ depending only on $\Xset$ and $d$ is the dimension of $\Xset$.
Note that 
\begin{align*}
   & \E_{\W^\sigma} \left[ \exp(\eta d_{\Xset}(X_{t_1},X_{t_2})^2) \right] \\
   & = \int_{\Xset}\int_{\Xset} \exp\left[\eta d_{\Xset}(x,y)^2\right] p_\sigma(x,y, |t_1-t_2|)  \, \vol(\diff x) \vol (\diff y) \\
   & \leqslant \int_{\Xset}\int_{\Xset} \frac{C}{\sigma^d|t_1-t_2|^{d/2}} \exp\left[C \sigma^2 |t_1-t_2| - \frac{d_{\Xset}(x,y)^2}{C\sigma^2|t_1-t_2|} + \eta d_{\Xset}(x,y)^2\right]  \vol(\diff x) \vol (\diff y)
\end{align*}
Letting $\eta =\frac{1}{2C \sigma^2 |t_1-t_2|} $ we get for some constant $C_1$ depending only on $\Xset$:
\begin{align}
     & \E_{\W^\sigma} \left[ \exp(\eta d_{\Xset}(X_{t_1},X_{t_2})^2) \right] \nonumber \\ 
     &\leqslant  \int_{\Xset}\int_{\Xset} \frac{C}{\sigma^d|t_1-t_2|^{d/2}} \exp\left[C \sigma^2 |t_1-t_2| - \frac{d_{\Xset}(x,y)^2}{2C\sigma^2|t_1-t_2|}\right]  \vol(\diff x) \vol (\diff y)\nonumber \\
      & \leqslant  C_1 \exp\left[ C\sigma^2 |t_1-t_2| \right],
      \label{eqn:eta-Rbis}
\end{align}
where the last inequality comes from the estimate (applied with $s = |t_1 - t_2|$)
\begin{align*}
    \int_{\Xset}\int_{\Xset} \frac{1}{s^{d/2}} \exp\left[ - \frac{d_{\Xset}(x,y)^2}{s}\right]\vol(\diff x) \vol (\diff y) \leqslant C_2
\end{align*}
for some constant $C_2$ depending only on $\Xset$. To see this last estimate,  consider a constant $\bar r>0$ depending on the compact smooth manifold $\Xset$, where $\bar r$ is smaller than the injectivity radius of $\Xset$ and  
for each $x \in \Xset$ the geodesic polar coordinates $(r, \theta)$ at $x$ in the geodesic ball  $B_{\bar r} (x)$ has the Riemannian volume form  with bound  $\vol \leqslant 2 r^{d-1} \diff r \diff \theta$. Then, 
\begin{align*}
   \int_{\Xset} \frac{1}{s^{d/2}} & \exp\left[ - \frac{d_{\Xset}(x,y)^2}{s}\right]   \vol (\diff y) \\   & \leqslant 
     \int_{B_{\bar r}(x)} \frac{1}{s^{d/2}} \exp\left[ - \frac{r^2}{s}\right] \vol (\diff y) +  \int_{\Xset \backslash B_{\bar r}(x)} \frac{1}{s^{d/2}} \exp\left[ - \frac{\bar r^2}{s}\right] \vol (\diff y)\\
    & \leqslant  \int_{\Rset^d} \frac{1}{s^{d/2}} \exp\left[ - \frac{r^2}{s}\right] 2 r^{d-1} \diff r \diff \theta + \int_{\Xset} \frac{1}{s^{d/2}} \exp\left[ - \frac{\bar r^2}{s}\right] \vol (\diff y)
\end{align*}
where the first term in the last line is  a universal constant depending only on $d$ and the second term is bounded by $\vol(\Xset)$ multiplied by a constant depending only on $\bar r$. Integrating this with respect to  $\vol(\diff x)$ gives the desired estimate. 

Now, back to \eqref{eqn:eta-R} and plugging the estimate \eqref{eqn:eta-Rbis} with our choice of $\eta$, we see
\begin{align*}
     \Ex_{\R}\left[ d_{\Xset}(X_1,X_2)^2 \right] \leqslant (\Ent(\R|\W^\sigma) + C_3 + C \sigma^2 |t_1-t_2|)  2C \sigma^2 |t_1-t_2|.
\end{align*}
This completes the proof. 
\end{proof}

\paragraph*{Heat flow and entropy on the space of paths}

When we regularize the marginals, it is not straightforward to see how the entropy on the space of paths changes. To that end, we introduce an auxiliary variational problem, the one where all the temporal marginals are fixed.

\begin{definition}
\label{defi:action}
Let $\rho \in C([0,1], \Pc(\Xset))$ be a continuous curve valued in $\Pc(\Xset)$ (with respect to the narrow topology). We define $\A_\sigma(\rho)$ to be
\begin{equation*}
\A_\sigma(\rho) =
\inf \left\{ \sigma^2 \Ent(\R|\W^\sigma) \ : \ \R \in \Pc(\Omega) \text{ and } \forall t \in [0,1], \R_t = \rho_t \right\}.
\end{equation*}
By convention $\A_\sigma(\rho) = + \infty$ if the minimization problem above has no admissible competitor. 
\end{definition} 

The key point is a dual representation of the action $\A$ which allows us to adopt a PDE perspective on the problem. 

\begin{prop}
\label{prop:dual_raw_form}
Let $\rho \in C([0,1], \Pc(\Xset))$. Then there holds 
\begin{multline*}
\A_\sigma(\rho) = \sigma^2 \Ent(\rho_0|\vol) \\
+ \sup_{\varphi} \left\{ - \int_{\Xset} \varphi(0,x) \, \rho_0(\diff x)  - \int_0^1 \int_{\Xset} \left( \partial_t \varphi + \frac{1}{2} |\nabla \varphi|^2 + \frac{\sigma^2}{2} \Delta \varphi \right) \rho_t(\diff x) \, \diff t \right\}.
\end{multline*}
where the supremum is taken over all $\varphi \in C^2([0,1] \times \Xset)$ such that $\varphi(1,\cdot) = 0$. 
\end{prop}

\begin{proof}
We start from a duality result from \cite[Proposition 2.3]{arnaudon2017entropic}  which enables us to write 
\begin{multline*}
\A_\sigma(\rho) = \sigma^2 \Ent(\rho_0|\vol) \\ + \sigma^2 \, \sup_{\psi} \left\{ \int_0^1 \int_{\Xset} \psi(t,x) \, \rho_t(\diff x) \, \diff t - \int_{\Xset} \left[ \log \E_{\W^{\sigma,x}} \exp \left( \int_0^1 \psi(t,X_t) \, \diff t \right) \right] \, \rho_0(\diff x)  \right\}
\end{multline*}
where $\W^{\sigma,x}$ is the Wiener measure starting at $x \in \Xset$, namely, it is the Wiener measure which satisfies $\W^{\sigma,x}_0 = \delta_x$.  Here the supremum is taken over all $\psi \in C([0,1] \times \Xset)$. Importantly, their result handles the case $\A_\sigma(\rho) = + \infty$, that is, the left hand side is finite if and only if the right hand side is.

Their result is originally stated for $\Xset$ being the torus but the proof can be copied word for word in a Polish space. Moreover, in their result they have an additional constraint about the law of $\R_{0,1}$ the joint law of $(X_0,X_1)$: removing a constraint amounts to removing a Lagrange multiplier, hence the result stated above.

The key point is that if we let $- \psi = \frac{1}{\sigma^2} ( \partial_t \varphi + \frac{1}{2} |\nabla \varphi|^2 + \frac{\sigma^2}{2} \Delta \varphi)$ for some smooth $\varphi$ satisfying the terminal condition $\varphi(1,x) = 0$, then by the martingale properties of Wiener measures recalled in Proposition \ref{prop:martingale_exp}, 
\begin{equation*}
    \E_{\W^{\sigma,x}} \left[ \exp \left( - \frac{\varphi(0,X_0)}{\sigma^2} + \int_0^1 \psi(t,X_t) \, \diff t \right) \right] =1.
\end{equation*}
As $X_0 = x$ under $\W^{\sigma,x}$, we see that
\begin{align*}
\int_{\Xset} \left[ \log \E_{\W^{\sigma,x}} \exp \left( \int_0^1 \psi(t,X_t) \, \diff t \right) \right] \, \rho_0(\diff x) &= \int_{\Xset} \left[ \log  e^{\sigma^{-2} \varphi(0,x)} \right] \, \rho_0(\diff x)\\ & =  \frac{1}{\sigma^2} \int_{\Xset} \varphi(0,x) \, \rho_0(\diff x). 
\end{align*}
On the other hand, for any $\psi \in C([0,1] \times \Xset)$ there exists $\varphi$ satisfying the terminal condition $\varphi(1,x) = 0$ such that $- \psi = \frac{1}{\sigma^2}( \partial_t \varphi + \frac{1}{2} |\nabla \varphi|^2 + \frac{\sigma^2}{2} \Delta \varphi$): it is enough to solve the linear backward diffusion heat equation 
\begin{equation*}
\partial_t u + \frac{\sigma^2}{2} \Delta u = - \psi u
\end{equation*}
with terminal condition $u(1,x) = 1$ and take $\varphi = \sigma^2 \log u$. 

Therefore we can rewrite
\begin{multline*}
\A_\sigma(\rho) 
= \sigma^2 \Ent(\rho_0|\vol) \\ + \sup_{\varphi} \left\{ - \int_0^1 \int_{\Xset} \left( \partial_t \varphi + \frac{1}{2} |\nabla \varphi|^2 + \frac{\sigma^2}{2} \Delta \varphi \right) \rho_t(\diff x) \, \diff t - \int_{\Xset} \varphi(0,x)  \rho_0(\diff x)  \right\}. \qedhere
\end{multline*}
\end{proof}

The main result is the following contraction result for $\A_\sigma$ under the heat flow. This can be seen as a path-space counterpart of the well known contraction of entropy under the heat flow (which we recall in Proposition \ref{prop:heat_flow}(iv)). Closely related results are also available in the case where only two marginals are fixed, for instance a contraction estimate has been derived in the Schrödinger problem for smooth densities in \cite[Theorem 37]{gentil2018dynamical}. On the other hand, still in the two marginal case but in the limit $\sigma \to 0$, it is well understood that the heat flow is Lipschitz with respect to Wasserstein distance \cite[Theorem 1]{erbar2010heat}. Our proof strategy differs from the aforementioned articles: by relying on the dual formulation and a Bakry-\'Emery estimate we do not have to assume that the densities are smooth, nor use any advanced concepts of Riemannian geometry.

\begin{prop}
\label{prop:heat_flow_decreases_A}
Let $\rho \in C([0,1], \Pc(\Xset))$ %
and  define, for $s \geqslant 0$, the new curve $\rho^{(s)} : t \mapsto \Phi_s \rho_t$. Furthermore let $K$ be a lower bound on the Ricci curvature of the manifold $\Xset$. Then, for any $s \geqslant 0$ it holds that 
\begin{equation*}
\A_\sigma(\rho^{(s)}) \leqslant e^{ - 2 K s } \A_\sigma(\rho).
\end{equation*}
\end{prop}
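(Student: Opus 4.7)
The plan is to argue via the dual representation of Proposition~\ref{prop:dual_raw_form} and establish the contraction separately for the initial-time entropy and for the supremum over admissible test functions. Write $\mathcal{L}(\rho,\varphi)$ for the expression appearing under the supremum in~\eqref{eq:dual_raw_form}, so that $\A_\sigma(\rho) = \sigma^2 \Ent(\rho_0|\vol) + \sup_\varphi \mathcal{L}(\rho,\varphi)$, the supremum being taken over $\varphi \in C^2([0,1]\times\Xset)$ with $\varphi(1,\cdot) = 0$.

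For the entropy contribution, Proposition~\ref{prop:heat_flow}(iv) directly yields $\Ent(\rho^{(s)}_0|\vol) \leqslant \min(e^{-2Ks},1)\,\Ent(\rho_0|\vol)$. Using nonnegativity of the entropy, this upper bound is itself at most $e^{-2Ks}\,\Ent(\rho_0|\vol)$ for either sign of $K$: when $K \geqslant 0$ the contraction is exactly the statement, while when $K < 0$ the factor $e^{-2Ks} \geqslant 1$ absorbs the possible absence of decay. Thus the prefactor $e^{-2Ks}$ propagates through the first term of the dual formula.

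The core of the argument is the pointwise inequality $\mathcal{L}(\rho^{(s)},\varphi) \leqslant e^{-2Ks}\,\mathcal{L}(\rho,\psi)$ for the reparametrized test function $\psi := e^{2Ks}\,\Phi_s\varphi$; taking the supremum on both sides will conclude, since $\psi$ is admissible ($\Phi_s\varphi \in C^2$ because the heat semigroup smooths and commutes with $\partial_t$, while $\psi(1,\cdot)=0$ follows by linearity from $\varphi(1,\cdot)=0$). To prove the inequality, I will use the self-adjointness of the heat semigroup to transfer $\Phi_s$ off the measures $\rho^{(s)}_t$ and onto the test function, together with the fact that $\Phi_s$ commutes with $\partial_t$ and with $\Delta$. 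The three \emph{linear} pieces of $\mathcal{L}(\rho^{(s)},\varphi)$ then become exactly the same three linear pieces evaluated at $\Phi_s\varphi$ and paired with $\rho_t$. The \emph{quadratic} piece becomes $\tfrac{1}{2}\int_0^1\int_\Xset \Phi_s(|\nabla\varphi|^2)\,\diff\rho_t\,\diff t$, and the Bakry--\'Emery estimate~\eqref{eq:Bakry_Emery} gives $\Phi_s(|\nabla\varphi|^2) \geqslant e^{2Ks}\,|\nabla\Phi_s\varphi|^2$. Since this quadratic term enters $\mathcal{L}$ with a minus sign, that lower bound translates into an \emph{upper} bound on $\mathcal{L}(\rho^{(s)},\varphi)$ in terms of $\Phi_s\varphi$. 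Substituting $\Phi_s\varphi = e^{-2Ks}\psi$ then scales each linear term by $e^{-2Ks}$, and the quadratic term by the same factor as well (from $e^{2Ks}\cdot e^{-4Ks} = e^{-2Ks}$), so the whole expression factors cleanly as $e^{-2Ks}\,\mathcal{L}(\rho,\psi)$.

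The main delicate point is the choice of the rescaling constant $e^{2Ks}$ in $\psi$: it is uniquely dictated by the requirement that the Bakry--\'Emery factor arising from the quadratic term match the scaling of the linear terms after substitution, so that a common $e^{-2Ks}$ can be factored out of all of $\mathcal{L}$. The sign conventions between the Bakry--\'Emery estimate (a lower bound on the gradient of $\Phi_s\varphi$) and the quadratic term in $\mathcal{L}$ (appearing with a minus sign) must also be tracked carefully. Once these two pieces are in place, the remaining steps are elementary algebraic manipulations and routine verification of admissibility.
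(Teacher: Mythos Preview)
Your proposal is correct and follows essentially the same approach as the paper: both use the dual formulation of Proposition~\ref{prop:dual_raw_form}, transfer $\Phi_s$ from the measures to the test function via self-adjointness, invoke the Bakry--\'Emery estimate~\eqref{eq:Bakry_Emery} on the quadratic term, introduce the rescaled competitor $\tilde\varphi = e^{2Ks}\Phi_s\varphi$, and close with the entropy contraction~\eqref{eq:contraction_entropy_heat_flow}. The only cosmetic difference is that you handle the entropy term and the supremum term separately from the outset, whereas the paper first derives the combined estimate $\A_\sigma(\rho^{(s)}) \leqslant e^{-2Ks}\A_\sigma(\rho) + \sigma^2[\Ent(\Phi_s\rho_0|\vol) - e^{-2Ks}\Ent(\rho_0|\vol)]$ and then bounds the bracket.
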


\begin{proof}
This is a consequence of  the dual formulation in Proposition \ref{prop:dual_raw_form}.
If $\varphi : [0,1] \times \Xset \to \Rset$ is a $C^2$ function with $\varphi(1, \cdot) =0$ then by self adjointness of the heat flow, 
\begin{multline*}
\int_{\Xset} \varphi(0,\cdot) \, \rho^{(s)}_0 + \int_0^1 \int_{\Xset} \left( \partial_t \varphi + \frac{1}{2} |\nabla \varphi|^2 + \frac{\sigma^2}{2} \Delta \varphi \right) \rho^{(s)}_t \diff t  \\
= \int_{\Xset} \left\{ \Phi_s \varphi \right\}(0,\cdot) \, \rho_0 + \int_0^1 \int_{\Xset} \left( \partial_t \left\{ \Phi_s \varphi \right\} + \Phi_s \left\{ \frac{1}{2} |\nabla \varphi|^2 \right\} + \frac{\sigma^2}{2} \Delta\left\{ \Phi_s \varphi \right\} \right) \rho_t \, \diff t.
\end{multline*}
We have used that $\Phi_s \partial_t \varphi = \partial_t \Phi_s \varphi$ (this is Schwarz theorem) and $\Phi_s \Delta \varphi = \Delta \Phi_s \varphi$ (which can be checked for instance by noticing that $s \mapsto \Delta \Phi_s \varphi$ also follows the heat flow). To handle the term with the gradient, we use the Bakry-\'Emery estimate \eqref{eq:Bakry_Emery}. Thus, by writing $\tilde{\varphi}_s = e^{2sK} \Phi_s \varphi$, there holds
\begin{align*}
-\int_{\Xset} \varphi(0,\cdot) \, \rho^{(s)}_0 & - \int_0^1 \int_{\Xset} \left( \partial_t \varphi + \frac{1}{2} |\nabla \varphi|^2 + \frac{\sigma^2}{2} \Delta \varphi \right) \rho^{(s)}_t \diff t \\
&\leqslant - e^{-2sK} \left[ \int_{\Xset} \tilde{\varphi}(0,\cdot) \, \rho_0 + \int_0^1 \int_{\Xset} \left( \partial_t \tilde{\varphi_s} + \frac{1}{2} |\nabla \tilde{\varphi}_s|^2 + \frac{\sigma^2}{2} \Delta \tilde{\varphi}_s \right) \rho_t \, \diff t  \right] \\
&\leqslant  e^{-2sK} (\A_\sigma(\rho) - \sigma^2 \Ent(\rho_0|\vol)) 
\end{align*}
where the last inequality comes from Proposition \ref{prop:dual_raw_form}. Taking the supremum in $\varphi$, we end up with the estimate 
\begin{equation*}
\A_\sigma(\rho^{(s)}) \leqslant e^{-2sK} \A_\sigma(\rho) + \sigma^2 \left[ \Ent(\Phi_s \rho_0|\vol) - e^{-2Ks} \Ent(\rho_0|\vol) \right].
\end{equation*}
The second term in the right hand side is always non-positive: this is a classical result that we recall in \eqref{eq:contraction_entropy_heat_flow}, and it concludes the proof. 
\end{proof}

We now define the regularizing operator $\T_s$ which acts at the level of laws on the space of paths.
\begin{definition}
For each $\R \in \Pc(\Omega)$ with $\Ent(\R|\W^\sigma) < + \infty$ and for each $s \geqslant 0$ define 
\begin{equation*}
\T_s(\R) = \mathrm{argmin} \left\{ \Ent(\tilde{\R} | \W^\sigma) \ : \ \forall t \in [0,1], \ \tilde{\R}_t = \Phi_s \R_t \right\}.
\end{equation*}
That is, among all probability distributions on the space of paths whose marginals coincide with $t \mapsto \Phi_s \R_t$, the measure $\T_s(\R) \in \Pc(\Omega)$ is the one with the smallest entropy.
\end{definition}
To see why 
the measure $\T_s(\R)$ is well defined, first notice that 
 thanks to Proposition \ref{prop:heat_flow_decreases_A} we have  $\A_\sigma((\Phi_s \R_t)_t) \leqslant e^{-2Ks} \A_\sigma((\R_t)_t) \leqslant e^{-2Ks} \Ent(\R|\W^\sigma) < + \infty$. This guarantees that the minimization problem has nonempty admissible solutions. Since 
 each sublevel set of the entropy is compact, there exists a minimizer and from strict convexity of the entropy functional such a minimizer is uniquely determined. 

 Notice that 
 \begin{align*}
    \A_\sigma((\Phi_s \R_t)_t) =\Ent(\T_s(\R)|\W^\sigma)
 \end{align*}
 and we have

\begin{prop}
\label{prop:properties_regularization}
For each $\R \in \Pc(\Omega)$ with $\Ent(\R|\W^\sigma)< + \infty$ the following holds:
\begin{enumerate}[label=(\roman*)]
\item For any $s \geqslant 0$, 
$\Ent(\T_s(\R) |\W^\sigma) \leqslant e^{-2Ks} \Ent(\T_0 (\R)|\W^\sigma) \leqslant e^{-2Ks} \Ent( \R |\W^\sigma)$.
\item %
$\T_s(\R)$ converges to $\T_0(\R)$ 
narrowly as $s \to 0^+$. 
\end{enumerate}
\end{prop}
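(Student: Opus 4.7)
For part (i), I would unfold the definitions. By construction $\Ent(\T_s(\R)|\W^\sigma) = \A_\sigma((\Phi_s\R_t)_t)$ and similarly $\Ent(\T_0(\R)|\W^\sigma) = \A_\sigma((\R_t)_t)$, since $\Phi_0$ is the identity. Proposition \ref{prop:heat_flow_decreases_A} applied to the curve $t\mapsto \R_t$ then gives directly $\A_\sigma((\Phi_s\R_t)_t) \leqslant e^{-2Ks}\A_\sigma((\R_t)_t)$, i.e.\ the first inequality. The second inequality is immediate: $\R$ itself is an admissible competitor in the variational problem defining $\T_0(\R)$, hence $\A_\sigma((\R_t)_t) \leqslant \Ent(\R|\W^\sigma)$.

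For part (ii), the plan is to combine narrow precompactness with identification of all subsequential limits. From (i) we have the uniform bound $\Ent(\T_s(\R)|\W^\sigma) \leqslant \max(1, e^{-2K})\,\Ent(\R|\W^\sigma)$ for $s \in [0,1]$, so by Proposition \ref{prop:compact-sublevel-H} the family $\{\T_s(\R)\}_{s\in[0,1]}$ is narrowly relatively compact in $\Pc(\Omega)$. Let $s_n \to 0^+$ and suppose $\T_{s_n}(\R) \to \R^\star$ narrowly along some subsequence (which I still index by $n$).

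To conclude I would check that $\R^\star = \T_0(\R)$ by verifying admissibility and optimality. For admissibility: the evaluation map $\omega \mapsto \omega_t$ from $\Omega$ to $\Xset$ is continuous, so narrow convergence on $\Pc(\Omega)$ transfers to narrow convergence of the time-$t$ marginals; hence $\R^\star_t = \lim_n \Phi_{s_n}\R_t$. Narrow continuity of the heat flow at $s=0^+$ (standard on a closed manifold, and in any case an easy consequence of Proposition \ref{prop:heat_flow} combined with dominated convergence against continuous test functions) gives $\Phi_{s_n}\R_t \to \R_t$ narrowly for each $t$, so $\R^\star_t = \R_t$. Thus $\R^\star$ is a competitor in the problem defining $\T_0(\R)$. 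For optimality, lower semicontinuity of $\Ent(\cdot|\W^\sigma)$ (Proposition \ref{prop:compact-sublevel-H}) together with (i) yields
\begin{equation*}
\Ent(\R^\star|\W^\sigma) \leqslant \liminf_{n\to\infty} \Ent(\T_{s_n}(\R)|\W^\sigma) \leqslant \lim_{n\to\infty} e^{-2Ks_n}\Ent(\T_0(\R)|\W^\sigma) = \Ent(\T_0(\R)|\W^\sigma).
\end{equation*}
By strict convexity of $\Ent(\cdot|\W^\sigma)$ the minimizer $\T_0(\R)$ is unique, so $\R^\star = \T_0(\R)$. Since every narrow cluster point of $\{\T_{s_n}(\R)\}$ equals $\T_0(\R)$ and the family is precompact, the whole net $\T_s(\R)$ converges narrowly to $\T_0(\R)$ as $s \to 0^+$.

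The main technical point to double-check is the narrow continuity $\Phi_s\R_t \to \R_t$ as $s\to 0^+$ for each $t$; this is where one uses that $\Xset$ is a compact manifold and that the heat semigroup maps continuous functions to continuous functions converging uniformly to their initial data. Everything else is a standard precompactness/lsc/uniqueness argument on top of part (i) and Proposition \ref{prop:heat_flow_decreases_A}.
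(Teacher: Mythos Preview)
Your proposal is correct and follows essentially the same route as the paper: part (i) is unwound directly from Proposition \ref{prop:heat_flow_decreases_A} and the definitions of $\T_s$ and $\A_\sigma$, and part (ii) is the same precompactness plus lower-semicontinuity argument, identifying any subsequential limit with $\T_0(\R)$ via the marginal convergence $\Phi_{s_n}\R_t \to \R_t$ and uniqueness of the minimizer. The only cosmetic difference is that the paper phrases the marginal convergence via $L^1(\Xset,\vol)$ convergence of densities rather than narrow convergence, but this is immaterial.
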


\begin{proof}
The first property is nothing else than a rewriting of Proposition \ref{prop:heat_flow_decreases_A} together with the definition of $\T_s$ and $\A_\sigma$. 

\medskip

For the second property we use a sequential characterization. Let $(s_n)_{n \in \Ni}$ a sequence which goes to $0$. Thanks to the contraction estimate, we know that $\Ent(\T_{s_n} \R | \W^\sigma)$ is uniformly bounded in $n$. Let $\tilde{\R}$ be any accumulation point of $\T_{s_n} \R$: it exists thanks to the compactness of the sublevel sets of $\Ent(\cdot|\W^\sigma)$. The only thing to prove is $\tilde{\R} = \T_0 \R$. Below we do not relabel subsequence, that is we assume that $\T_{s_n} \R \to \tilde{\R}$ as $n \to + \infty$.

The marginals of $\tilde{\R}$ are the same as the marginal of $\R$: this is straightforward to check as the marginals of $\T_{s_n} \R$ (which converge to the ones of $\tilde{\R}$) are the $(\Phi_{s_n} \R_t)_{t \in [0,1]}$ and $\Phi_{s_n} f \to f$ as $s_n \to 0$ for instance in $L^1(\Xset, \vol)$. Thus, using the lower semi continuity of the entropy, the definition of $\T_{s_n}$ and then the contraction estimate for the action,
\begin{align*}
\Ent(\tilde{\R} | \W^\sigma) \leqslant \liminf_{n \to + \infty} \, \Ent( \T_{s_n}(\R) | \W^\sigma ) = \liminf_{n \to + \infty} \, \A_\sigma( (\Phi_{s_n}\R_t)_t | \W^\sigma ) \\ 
\leqslant \liminf_{n \to + \infty} \, e^{-2Ks_n} \A_\sigma((\R_t)) = \A_\sigma((\R_t)_t).
\end{align*}
This shows by definition that $\tilde{\R} = \T_0 \R$ and concludes the proof.
\end{proof}

\noindent We think that it should be possible to prove that $\T_0$ is continuous on its domain of definition, that would imply  the set $\{ \R \in \Pc(\Omega) \ : \ \R = \T_0 \R \}$ is closed.  Notice that Theorem \ref{thm:SDE_grad_min_KL} asserts that the law of a SDE whose drift is a smooth gradient belongs to $\{ \R \in \Pc(\Omega) \ : \ \R = \T_0  \R \}$, and we think that it is possible to prove that this set is the closure of laws of such SDEs. However in the proof of Theorem  \ref{theo:sparse_data}  we do not rely on this property and rather use the strict convexity of the entropy.

\begin{remark}
One may prefer to have a direct probabilistic construction of the operator $\T_s$ whose role is to smooth the marginals while not increasing too much the entropy with respect to $\W^\sigma$. In the case where $\Xset$ is the torus such a construction has been performed in \cite{baradat2020small} (with a construction that also handles temporal boundary values in a finer way). From a law $\R \in \Pc(\Omega)$, one regularizes it by considering the law of $(X_t + Z_s)_{t \in [0,1]}$ where $(X_t)_{t \in [0,1]}$ follows $\R$ while $Z_s$ is random variable distributed according to $\Phi_s \delta_0$ and independent from $(X_t)_{t \in [0,1]}$. The intuition is that $Z_s$ is a Gaussian random variable of variance proportional to $s$ but this is not exactly the case because the heat flow on the torus is not obtained from projection of the heat flow on $\Rset^d$. Then, it is clear that the marginals of the new process are the $(\Phi_s \R_t)_{t \in [0,1]}$ while evaluating the entropy is easy. To perform a similar construction on a Riemannian manifold seems to be a more delicate matter which would likely involve parallel transport: this is out of the scope of the present article, and we have preferred to present our proof that is based on $\A_\sigma$ and its dual formulation.
\end{remark}

\paragraph*{The data-fitting term and its behavior under the heat flow}
Our last preliminaries before we start the proof of Theorem \ref{theo:sparse_data} concern the data-fitting term and its behavior under the heat flow. 

We define the data-fitting term as follows, sometimes the denomination ``cross-entropy'' is used.

\begin{definition}
\label{defi:DF}
If $p,r \in \Pc(\Xset)$ such that $\Ent(p|\vol) < + \infty$ we define
\begin{equation*}
\DF(r,p) = \Ent(p|r) - \Ent(p|\vol).
\end{equation*}
In particular, if $r \ll \vol$, identifying a measure with its density with respect to the Lebesgue measure, 
\begin{equation*}
\DF(r,p) = - \int_{\Xset} \log r(x) \, p(\diff x).
\end{equation*}
\end{definition}

We have this easy property which follows directly from the joint convexity and lower semi continuity of the entropy.

\begin{prop}
If $\Ent(p|\vol) < + \infty$ then the function $r \mapsto \DF(r,p)$ is convex and lower semi continuous on $\Pc(\Xset)$.
\end{prop}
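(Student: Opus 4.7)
The plan is to reduce the proposition to the convexity and lower semicontinuity of the relative entropy in its second argument, which is a standard consequence of the variational (dual) formulation already recalled in this section. Since the hypothesis $\Ent(p|\vol) < +\infty$ makes $\Ent(p|\vol)$ a finite real constant that does not depend on $r$, the map $r \mapsto \DF(r,p)$ differs from $r \mapsto \Ent(p|r)$ only by a finite additive constant, so convexity and narrow lower semicontinuity of the one are equivalent to those of the other. It therefore suffices to prove that $r \mapsto \Ent(p|r)$ is convex and narrowly lower semicontinuous on $\Pc(\Xset)$.

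For this, I would appeal directly to the dual representation~\eqref{eq:dual_entropy}, which expresses
\begin{equation*}
\Ent(p|r) = \sup_{U} \left\{ \int_{\Xset} U(x)\, p(\diff x) - \log \int_{\Xset} e^{U(x)}\, r(\diff x) \right\},
\end{equation*}
the supremum being taken over bounded continuous $U : \Xset \to \Rset$. For each fixed such $U$, the constant term $\int U\, dp$ plays no role, and the map $r \mapsto -\log \int e^{U}\, dr$ is continuous for narrow convergence (since $e^{U}$ is bounded and continuous, and bounded away from $0$) and convex in $r$ (it is the composition of the convex decreasing function $-\log$ with the positive linear functional $r \mapsto \int e^{U}\, dr$). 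A pointwise supremum of convex and narrowly continuous functions is convex and narrowly lower semicontinuous, which is exactly the desired statement for $r \mapsto \Ent(p|r)$.

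An equivalent way to conclude is to freeze the first variable in Proposition~\ref{prop:compact-sublevel-H}, which already asserts the joint convexity and joint lower semicontinuity of $(p,r)\mapsto \Ent(p|r)$ on $\Pc(\Xset)^2$; restricting a jointly convex/jointly l.s.c.\ functional to a fixed value of one coordinate preserves both properties in the remaining coordinate. I do not anticipate any genuine obstacle: the only minor point requiring attention is to record that the assumption $\Ent(p|\vol) < +\infty$ is used solely so that the definition $\DF(r,p) = \Ent(p|r) - \Ent(p|\vol)$ is unambiguously an element of $(-\infty, +\infty]$, so that subtraction of the finite constant preserves convexity and lower semicontinuity without producing $+\infty - \infty$.
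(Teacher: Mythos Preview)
Your proposal is correct and matches the paper's own argument: the paper simply states that the property ``follows directly from the joint convexity and lower semi continuity of the entropy,'' i.e.\ from Proposition~\ref{prop:compact-sublevel-H}, which is exactly your second route. Your additional unpacking via the dual formula~\eqref{eq:dual_entropy} is also correct and just makes the one-line justification more explicit.
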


In the proof of Theorem \ref{theo:sparse_data}, we will need a quantitative control of the effect of the heat flow on the data-fitting term $\DF$ as in the following proposition.
Here, unlike the usual heat flow on the entropy functional in the literature, the heat flow is applied to $r$, the reference measure of the relative entropy functional $\Ent(p|r)$.

\begin{prop}
\label{prop:data_fitting_heat_flow}
Take $p,r \in \Pc(\Xset)$ and assume that the density of $p$ satisfies (in Sobolev sense) $\mathcal{I}(p) < + \infty$ where we recall that $\mathcal{I}$ is defined in \eqref{eq:def_Fisher}. Then, for every $s > 0$, 
\begin{equation*}
\DF( \Phi_s r, p ) \leqslant \DF( r,p ) + \frac{1}{4}\mathcal{I}(p) s.
\end{equation*}
(It is remarkable that the second term of the right hand side is independent of $r.)$
\end{prop}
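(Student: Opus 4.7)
The plan is to differentiate $s \mapsto \DF(\Phi_s r, p)$, show the derivative is bounded by $\mathcal{I}(p)/4$, and integrate. The key observation is that even though $\DF(r,p) = -\int \log r \, dp$ involves $r$, the bound on the derivative will depend only on $p$.

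First I would reduce to the case where $r$ has a smooth density bounded below by a positive constant. This regularization is automatic along the heat flow: by Proposition \ref{prop:heat_flow}(i)--(ii), $\Phi_\eta r$ is smooth and strictly positive for every $\eta > 0$. Applying the (yet to be proven) smooth-case inequality to $\Phi_\eta r$ with step $s$ gives
\begin{equation*}
\DF(\Phi_{s+\eta} r, p) \leqslant \DF(\Phi_\eta r, p) + \frac{\mathcal{I}(p) s}{4},
\end{equation*}
and one passes to $\eta \to 0^+$ using lower semi-continuity of $\DF(\cdot, p)$ (from Proposition \ref{prop:compact-sublevel-H}) and continuity of the left-hand side (since $\Phi_s r$ is smooth positive, so $\log \Phi_{s+\eta} r \to \log \Phi_s r$ uniformly).

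In the smooth positive case, write $r_s = \Phi_s r$, so that $\partial_s r_s = \Delta r_s$. Boundedness of $r_s$ away from $0$ and above justifies differentiation under the integral sign:
\begin{equation*}
\frac{d}{ds}\DF(r_s, p) = -\int_\Xset \frac{\Delta r_s}{r_s}\, p \, \diff\vol.
\end{equation*}
Integration by parts on the closed manifold (no boundary terms) followed by expansion of $\nabla(p/r_s)$ rewrites this as
\begin{equation*}
\frac{d}{ds}\DF(r_s, p) = \int_\Xset \frac{\nabla p \cdot \nabla r_s}{r_s}\, \diff\vol - \int_\Xset \frac{p|\nabla r_s|^2}{r_s^2}\, \diff\vol.
\end{equation*}
Denoting by $A \geqslant 0$ the second term and applying Cauchy-Schwarz with the factorization $\frac{\nabla p \cdot \nabla r_s}{r_s} = \frac{|\nabla p|}{\sqrt{p}} \cdot \frac{\sqrt{p}\,\nabla r_s}{r_s}$ controls the first term by $\sqrt{\mathcal{I}(p)\, A}$, so that
\begin{equation*}
\frac{d}{ds}\DF(r_s, p) \leqslant \sqrt{\mathcal{I}(p)\, A} - A \leqslant \sup_{A \geqslant 0}\bigl(\sqrt{\mathcal{I}(p)\, A} - A\bigr) = \frac{\mathcal{I}(p)}{4},
\end{equation*}
the last supremum being achieved at $A = \mathcal{I}(p)/4$. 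Integrating over $[0, s]$ yields the claim.

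The main obstacle I anticipate is not the differential computation, which is clean and depends critically on the heat equation plus Cauchy-Schwarz, but the verification that $\DF(\Phi_\eta r, p) \to \DF(r, p)$ as $\eta \to 0^+$ (rather than merely the inequality $\liminf \geqslant \DF(r, p)$ that comes for free from lower semi-continuity). To close this gap I would use the Jensen-type bound $\log \Phi_\eta r \geqslant \Phi_\eta \log r$ together with self-adjointness of the heat flow, giving $\DF(\Phi_\eta r, p) \leqslant -\int \log r \cdot \Phi_\eta p\, \diff\vol$; convergence $\Phi_\eta p \to p$ in $L^1(\vol)$ (which is valid since $\mathcal{I}(p) < \infty$ implies $p$ has a density) then provides the matching $\limsup$ bound.
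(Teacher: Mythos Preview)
Your proposal is correct and follows essentially the same route as the paper: differentiate $s\mapsto \DF(\Phi_s r,p)$ using the heat equation, integrate by parts, and bound the resulting expression $-\int p|\nabla r_s|^2/r_s^2 + \int \nabla p\cdot\nabla r_s/r_s$ by $\mathcal{I}(p)/4$ via the elementary inequality $ab - a^2 \leqslant b^2/4$ (which you phrase as Cauchy--Schwarz plus optimization, equivalently). The paper's proof is more terse and omits the regularization/limit discussion you provide; your added care there is welcome, though note that the $\limsup$ argument via Jensen requires $\log r \in L^1(p)$ to pass the limit, which is automatic when $\DF(r,p)<+\infty$ (the only nontrivial case).
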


\begin{proof}
By a slight abuse of notation, we denote by $r \in L^1(\Xset,\vol)$ the density of $r$ with respect to $\vol$.
Let us write by $r(s,\cdot)$ the density of $\Phi_s r$ with respect to $\vol$. It satisfies the heat equation 
\begin{equation*}
\frac{\partial r}{\partial s}  = \Delta r.
\end{equation*}
In particular, we can compute:
\begin{align*}
\frac{\diff}{\diff s} \DF( \Phi_s r, p )& = \frac{\diff}{\diff s} \int_{\Xset} - \log r(s,x) p(x) \, \vol(\diff x) = -\int_{\Xset} \frac{\partial_s r(s,x)}{r(s,x)} p(x) \, \vol(\diff x) \\
& = -  \int_{\Xset} \frac{\Delta r(s,x)}{r(s,x)} p(x) \, \vol(\diff x) =  \int_{\Xset} \nabla r (s,x) \cdot \nabla \left( \frac{p(x)}{r(s,x)} \right) \, \vol(\diff x) \\
 &  = -  \int_{\Xset} \frac{|\nabla r(s,x)|^2}{r(s,x)^2} p(x) \, \vol(\diff x) + \int_{\Xset} \frac{\nabla r(s,x)}{r(s,x)} \cdot\frac{ \nabla p(x)}{p(x)} p(x) \, \vol(\diff x)\\
 & \leqslant \frac{1}{4} \int_{\Xset} \frac{|\nabla p(x)|^2}{p(x)} \vol(\diff x) = \frac{1}{4} \mathcal{I}(p)
\end{align*}
where the inequality comes from the inequality $ab \leq a^2 + b^2/4$ valid for any $a,b \in \mathbb{R}$ that we use in the second integral. Integrating this equation with respect to $s$ yields the conclusion. 
\end{proof}

\subsection{Proof of Theorem \ref{theo:sparse_data}}

We now have all the tools at our disposal to prove the convergence result. 

\begin{lemma}
\label{lem:F_bd_below}
With the assumptions of Theorem \ref{theo:sparse_data}, the functionals $F_T$ and $F$ are bounded from below by a constant independent on $T$.
\end{lemma}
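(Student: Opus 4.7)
The plan is to use the non-negativity of the relative entropy together with a logarithmic Sobolev inequality on the compact manifold $\Xset$ to reduce the problem to the observation that the total weight $\sum_i \omega^T_i$ stays bounded in $T$. First, since $\Ent(\cdot|\W^\sigma) \geqslant 0$, the regularizing terms in both $F_T$ and $F$ are automatically non-negative, so only the data-fitting part needs to be controlled from below. By the very definition of $\DF$ together with the non-negativity of relative entropy,
$$\DF(r,p) \;=\; \Ent(p|r) - \Ent(p|\vol) \;\geqslant\; -\Ent(p|\vol),$$
uniformly in $r \in \Pc(\Xset)$ (and in particular uniformly in the argument $\R$ of the functional). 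Consequently
$$F_T(\R) \;\geqslant\; -\frac{1}{\lambda} \sum_{i=1}^{T} \omega^T_i \, \Ent\!\left(\rhohat^{T}_i\big|\vol\right), \qquad F(\R) \;\geqslant\; -\frac{1}{\lambda} \int_0^1 \Ent(\rhobar_t|\vol) \, \diff t.$$

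Next I would invoke a (possibly defective) logarithmic Sobolev inequality on the compact Riemannian manifold $\Xset$: there exist constants $C_1, C_2 > 0$ depending only on $\Xset$ (in particular on the Ricci lower bound $K$) such that every probability density $p$ on $\Xset$ satisfies $\Ent(p|\vol) \leqslant C_1 \mathcal{I}(p) + C_2$. Under hypothesis (ii), this immediately gives $\Ent(\rhohat^{T}_i|\vol), \Ent(\rhobar_t|\vol) \leqslant M := C_1 L + C_2$, independently of $T$, $i$, and $t$. This log-Sobolev estimate is the only mildly non-trivial ingredient, but it is a standard consequence of the Bakry--\'Emery criterion (with a possibly shifted constant when $K < 0$) or of a Sobolev embedding argument on a compact manifold.

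Finally, to ensure the bound on $F_T$ is independent of $T$, I would apply the weak convergence hypothesis (iii) to the constant function $a \equiv 1$: this yields
$$\sum_{i=1}^{T} \omega^T_i \;\longrightarrow\; \int_0^1 \int_{\Xset} \diff \rhobar_t \, \diff t \;=\; 1,$$
so in particular $\sup_T \sum_{i=1}^{T} \omega^T_i < +\infty$. Combining the three ingredients gives $F_T(\R) \geqslant -M\lambda^{-1}\sup_T \sum_i \omega^T_i$ and $F(\R) \geqslant -M/\lambda$, both independent of $\R$ and of $T$, which proves the lemma.
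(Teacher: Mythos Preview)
Your proof is correct and follows essentially the same approach as the paper: bound $\DF(r,p) \geqslant -\Ent(p|\vol)$, then control $\Ent(p|\vol)$ via a log-Sobolev inequality in terms of the Fisher information, which is uniformly bounded by assumption. The paper actually invokes a tight log-Sobolev inequality $\Ent(p|\vol) \leqslant C\,\mathcal{I}(p)$ (no additive constant), and does not explicitly verify that $\sum_i \omega^T_i$ stays bounded in $T$; your use of assumption~(iii) with $a\equiv 1$ to secure this is a welcome extra bit of care.
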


\begin{proof}
As the entropy $\Ent(\R|\W^\sigma)$ is non negative, the problematic terms are the $\DF(\R_{t_i}, \rhohat^T_i)$ which can be negative. However, we always have
\begin{equation*}
    \DF(r,p) \geqslant - \Ent(p|\vol).
\end{equation*}
On the other hand, using log-Sobolev inequalities (see, e.g. \cite[Equation (1.1)]{chen1997estimates}), there exists a constant $C$ which depends only on $\Xset$ such that for all $p$, 
\begin{equation*}
\Ent(p|\vol) \leqslant C \mathcal{I}(p),    
\end{equation*}
where $p$ is the Fisher information. The result follows as $\mathcal{I}(\rhohat^T_i)$ is assumed to be uniformly bounded in $T$ and $i$ and so is $\mathcal{I}(\rhobar_t)$ in $t$.
\end{proof}

\noindent If the lower bound $K$ on the Ricci curvature of $\Xset$ is strictly positive, then we can take $C = 1/K$ in the proof above. In the general case, the compactness of $\Xset$ ensures the finiteness of $C$.

\begin{prop}
\label{prop:GammaLimsup}
Use the notation and assumptions of Theorem  \ref{theo:sparse_data}. 
Suppose $\R \in \Pc(\Omega)$ with $F(\R) < + \infty$ and $\T_0 \R = \R$. Then there exists a sequence $\tilde{\R}^{T}$ which converges to $\R$ as $T \to + \infty$ and such that 
\begin{equation*}
\limsup_{T \to + \infty} F_T(\tilde{\R}^{T}) \leqslant F(\R)
\end{equation*}
\end{prop}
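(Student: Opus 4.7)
The plan is to build the recovery sequence using the path-space heat-flow regularization $\T_s$ introduced earlier, with the regularization parameter shrinking suitably with $T$. The naive choice $\tilde \R^T = \R$ fails because the sum $\sum_i \omega^T_i \DF(\R_{t^T_i}, \rhohat^T_i) = -\sum_i \omega^T_i \int \log \R_{t^T_i}(x)\,\rhohat^T_i(\diff x)$ pairs the weakly converging measurements against a potentially singular logarithmic test function, whereas the weak-convergence hypothesis on $\rhohat^T_i$ only applies to genuinely continuous space--time functions.

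First, for fixed $s > 0$ set $\R^{(s)} := \T_s(\R)$. Its temporal marginals are $\Phi_s \R_t$, whose densities $\rho^{(s)}(t,x)$ are jointly continuous in $(t,x)$ and uniformly bounded below by a positive constant that depends only on $s$ and $\Ent(\R|\W^\sigma)$ (Proposition \ref{prop:HF_reg}). Thus $(t,x) \mapsto -\log \rho^{(s)}(t,x)$ is an honest bounded continuous test function on $[0,1] \times \Xset$. Applying the weak-convergence hypothesis of Theorem \ref{theo:sparse_data} to this test function, together with the contraction estimate of Proposition \ref{prop:properties_regularization}(i), yields
\begin{equation*}
\limsup_{T \to \infty} F_T(\R^{(s)}) \;\leqslant\; \sigma^2 e^{-2Ks}\,\Ent(\R | \W^\sigma) + \frac{1}{\lambda} \int_0^1 \DF(\Phi_s \R_t, \rhobar_t)\,\diff t.
\end{equation*}

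Next I would send $s \to 0^+$. The first term manifestly converges to $\sigma^2 \Ent(\R|\W^\sigma)$. For the second, Proposition \ref{prop:data_fitting_heat_flow} combined with the uniform Fisher-information bound $\mathcal{I}(\rhobar_t) \leqslant L$ gives $\DF(\Phi_s \R_t, \rhobar_t) \leqslant \DF(\R_t, \rhobar_t) + sL/4$, hence
\begin{equation*}
\limsup_{s \to 0^+} \int_0^1 \DF(\Phi_s \R_t, \rhobar_t)\,\diff t \;\leqslant\; \int_0^1 \DF(\R_t, \rhobar_t)\,\diff t.
\end{equation*}
Combining the two estimates yields $\limsup_{s \to 0^+} \limsup_{T \to \infty} F_T(\R^{(s)}) \leqslant F(\R)$.

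Finally, a standard diagonal extraction produces a sequence $s_T \to 0^+$ such that $\tilde \R^T := \R^{(s_T)}$ satisfies $\limsup_T F_T(\tilde \R^T) \leqslant F(\R)$. Proposition \ref{prop:properties_regularization}(ii) together with the hypothesis $\T_0 \R = \R$ ensures that $\tilde \R^T$ converges narrowly to $\R$. The main technical obstacle is precisely the one flagged at the outset: $\DF(\cdot, \rhobar_t)$ is only lower semicontinuous in its first argument, so one cannot plug $\R$ directly into $F_T$; the quantitative contraction estimates on $\T_s$ and on $\DF$ under the heat flow are the tools designed for this, and the assumption $\T_0 \R = \R$ is what guarantees that the smoothed sequence returns to $\R$ itself (rather than to its projection $\T_0 \R$) in the narrow limit.
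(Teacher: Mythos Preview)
Your proof is correct and follows essentially the same approach as the paper: regularize $\R$ via $\T_s$, use Proposition~\ref{prop:HF_reg} so that the weak space--time convergence of the data applies to the continuous test function $-\log \rho^{(s)}(t,x)$, control the entropy term via Proposition~\ref{prop:properties_regularization}(i) and the data-fitting term via Proposition~\ref{prop:data_fitting_heat_flow}, and then diagonalize in $s_T \to 0$. The only cosmetic difference is the order in which you combine the estimates (the paper first bounds $F(\T_s \R) \leqslant F(\R) + O(s)$ and then shows $F_T(\T_s \R) \to F(\T_s \R)$, whereas you keep the $\Phi_s$ in the data-fitting integral and pass to the limit in $s$ at the end), but the ingredients and logic are identical.
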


\begin{proof}
Let $s > 0$. Combining Proposition \ref{prop:data_fitting_heat_flow} to handle the data-fitting term and Proposition \ref{prop:properties_regularization} to handle the law on the space of paths, 
\begin{align*}
F(\T_s \R) & = \sigma^2 \Ent(\T_s \R|\W^\sigma) + \frac{1}{\lambda} \int_0^1  \DF(\Phi_s \R_t, \rhobar_t) \, \diff t  \\
& \leqslant \sigma^2 e^{-2Ks} \Ent(\R|\W^\sigma) + \frac{1}{\lambda} \int_0^1  \DF(\R_t, \rhobar_t) \, \diff t  + \frac{s}{4\lambda} \int_0^1\mathcal{I}(\rhobar_t)dt. 
\end{align*}
Thus it holds 
\begin{equation*}
\limsup_{s \to 0} \, F(\T_s \R) \leqslant F(\R).
\end{equation*}
On the the other hand, as $- \log ([\T_s \R]_t)$ is a continuous function of $t$ and $x$ (this is Proposition \ref{prop:HF_reg}), we can use the weak convergence of the $\rhohat^{T}$ to $\rhobar$ and write, for $s > 0$ 
\begin{equation*}
\lim_{T \to + \infty} \sum_{i=1}^T \omega^{T}_i \DF \left( \left[\T_s \R\right]_{t^{T}_i}, \rhohat^{T}_i  \right) = \int_{0}^1 \DF( [\T_s \R]_t, \rhobar_t ) \, \diff t.
\end{equation*}
This reads exactly as: for all $s > 0$, there holds $\lim_{T \to + \infty} F_T(\T_s \R) = F(\T_s \R)$.
To conclude, it is enough to define $\tilde{\R}^{T} = \T_{s_T} \R$ for a sequence $(s_T)_{T \geqslant 1}$ which converges to $0$ slowly enough as $T \to + \infty$.
\end{proof}

\begin{prop}
\label{prop:GammaLiminf}
Use the notation and assumptions of Theorem  \ref{theo:sparse_data}. 
For each $T \geqslant 1$, let $\tilde{\R}^{T} \in \Pc(\Omega)$ and assume that it converges narrowly to some $\R \in \Pc(\Omega)$ as $T\to\infty$. Then 
\begin{equation*}
F(\T_0 \R) \leqslant \liminf_{T \to + \infty} \, F_T(\tilde{\R}^{T}).
\end{equation*} 
\end{prop}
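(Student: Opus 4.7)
The plan is to split $F_T$ into its entropy and data-fitting contributions and treat them separately, with the heat-flow regularization $\Phi_s$ acting as the main tool to upgrade narrow convergence on $\Pc(\Omega)$ into convergence of integrands suitable for assumption (iii). We may assume without loss of generality that the liminf is finite and, after extraction, that $F_T(\tilde{\R}^T)$ tends to it. Lemma \ref{lem:F_bd_below} together with assumption (ii) bounds the data-fitting terms from below, so $\Ent(\tilde{\R}^T|\W^\sigma)$ is uniformly bounded in $T$. Then Proposition \ref{prop:compact-sublevel-H} (lower semi-continuity of the entropy) gives $\Ent(\R|\W^\sigma) \leqslant \liminf_T \Ent(\tilde{\R}^T|\W^\sigma)$, and by the very definition of $\T_0$ as a marginal-constrained minimizer, $\Ent(\T_0 \R|\W^\sigma) \leqslant \Ent(\R|\W^\sigma)$; this handles the regularizing part of the functional.

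For the data-fitting part, since $[\T_0 \R]_t = \R_t$, the goal is to prove
\[
\int_0^1 \DF(\R_t, \rhobar_t)\, \diff t \leqslant \liminf_T \sum_{i=1}^T \omega^{T}_i \, \DF(\tilde{\R}^T_{t^{T}_i}, \rhohat^{T}_i).
\]
The difficulty is that $(t,x) \mapsto -\log \R_t(x)$ is not continuous a priori, so assumption (iii) cannot be invoked directly on it. I will remedy this by introducing an auxiliary regularization scale $s > 0$: Proposition \ref{prop:data_fitting_heat_flow} combined with assumption (ii) yields
\[
\sum_i \omega^{T}_i \, \DF(\tilde{\R}^T_{t^{T}_i}, \rhohat^{T}_i) \geqslant \sum_i \omega^{T}_i \, \DF(\Phi_s \tilde{\R}^T_{t^{T}_i}, \rhohat^{T}_i) - \frac{sL}{4} \sum_i \omega^{T}_i,
\]
and $\sum_i \omega^{T}_i$ is bounded (it converges to $1$ by applying (iii) to $a \equiv 1$).

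Next I pass to the limit $T \to +\infty$ with $s$ fixed. Proposition \ref{prop:HF_reg} ensures that the densities $(t,x) \mapsto \Phi_s \tilde{\R}^T_t(x)$ are uniformly bounded below by $1/C_s > 0$ and share a common modulus of continuity in $(t,x)$ depending only on $s$ and the uniform entropy bound. Narrow convergence in $\Pc(\Omega)$ implies $\tilde{\R}^T_t \to \R_t$ narrowly in $\Pc(\Xset)$ for each $t$, and convolution against the smooth heat kernel turns this into pointwise convergence $\Phi_s \tilde{\R}^T_t(x) \to \Phi_s \R_t(x)$. Combined with equicontinuity (Arzelà--Ascoli on $[0,1] \times \Xset$), this upgrades to uniform convergence, and then to uniform convergence of $-\log \Phi_s \tilde{\R}^T_t(x)$ to $-\log \Phi_s \R_t(x)$. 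Replacing the $T$-dependent integrand by the $T$-independent continuous function $a(t,x) = -\log \Phi_s \R_t(x)$ (continuous by Proposition \ref{prop:HF_reg} applied to $\R$, which has finite entropy) up to an $o(1)$ error, assumption (iii) yields
\[
\lim_T \sum_i \omega^{T}_i \, \DF(\Phi_s \tilde{\R}^T_{t^{T}_i}, \rhohat^{T}_i) = \int_0^1 \DF(\Phi_s \R_t, \rhobar_t)\, \diff t.
\]

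Finally I let $s \to 0$. For each $t$, $\Phi_s \R_t \to \R_t$ narrowly, and the lower semi-continuity of $r \mapsto \DF(r, \rhobar_t)$ together with Fatou's lemma (justified by the uniform lower bound $\DF \geqslant -CL$ from the log-Sobolev estimate in Lemma \ref{lem:F_bd_below}) gives $\int_0^1 \DF(\R_t, \rhobar_t)\, \diff t \leqslant \liminf_{s \to 0} \int_0^1 \DF(\Phi_s \R_t, \rhobar_t)\, \diff t$. Sending $s \to 0$ after having sent $T \to \infty$, and combining with the entropy inequality from the first paragraph, yields $F(\T_0 \R) \leqslant \liminf_T F_T(\tilde{\R}^T)$. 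The main obstacle throughout is the lack of joint $(t,x)$-regularity of the log-marginal in the data-fitting term; Proposition \ref{prop:HF_reg} is precisely designed to supply the equicontinuity needed to bypass it.
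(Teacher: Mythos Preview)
Your proof is correct and follows essentially the same strategy as the paper: regularize the marginals by the heat flow at scale $s$, use Proposition~\ref{prop:HF_reg} to gain equicontinuity of the log-densities, invoke assumption~(iii) to pass to the limit $T\to\infty$, and then send $s\to 0$.

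The organizational difference is worth noting. The paper regularizes the whole functional at once, writing $F_T(\tilde{\R}^T) \geqslant F_T(\T_s\tilde{\R}^T) - C(s)$, which requires the contraction estimate $\Ent(\T_s\tilde{\R}^T|\W^\sigma) \leqslant e^{-2Ks}\Ent(\tilde{\R}^T|\W^\sigma)$ from Proposition~\ref{prop:properties_regularization}(i) to control the entropy term, and then at the end uses the convergence $\T_s\R \to \T_0\R$ from Proposition~\ref{prop:properties_regularization}(ii) together with lower semi-continuity of $F$. You instead split entropy and data-fitting from the outset: the entropy part is handled directly by lower semi-continuity and the definition of $\T_0$, so the heat flow only touches the data-fitting part via Proposition~\ref{prop:data_fitting_heat_flow}, and the $s\to 0$ step goes through Fatou plus lower semi-continuity of $r\mapsto\DF(r,\rhobar_t)$. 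Your route therefore bypasses Proposition~\ref{prop:heat_flow_decreases_A} and Proposition~\ref{prop:properties_regularization} entirely, making the argument slightly more self-contained; the paper's packaging via $\T_s$ is more symmetric with the companion $\Gamma$-limsup statement (Proposition~\ref{prop:GammaLimsup}) but carries the extra machinery.
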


\begin{proof}
The proof follows the same path as for Proposition \ref{prop:GammaLimsup}. Assume that $\liminf_{T \to + \infty} \, F_T(\tilde{\R}^{T}) < + \infty$ otherwise there is nothing to prove. In particular, (up to an extraction that we do not relabel), there holds $\sup_T \Ent(\tilde{\R}^{T}| \W^\sigma) < + \infty$.
Combining Proposition \ref{prop:data_fitting_heat_flow} to handle the data-fitting term and Proposition \ref{prop:properties_regularization} to handle the law on the space of paths, we have 
\begin{align*}
F_T(\T_s \tilde{\R}^{T}) & = \sigma^2 \Ent(\T_s \tilde{\R}^{T}|\W^\sigma) + \frac{1}{\lambda} \sum_{i=1}^T \omega^{T}_i  \DF \left( \Phi_s \tilde{\R}^{T}_{t^T_i}, \rhohat^{T}_i \right)   \\
& \leqslant \sigma^2 e^{-2Ks} \Ent(\tilde{\R}^{T}|\W^\sigma) + \frac{1}{\lambda} \sum_{i=1}^T \omega^{T}_i \DF \left( \tilde{\R}^{T}_{t^{T}_i}, \rhohat^{T}_i \right) +  \frac{s}{4\lambda} \sum_{i=1}^T \omega^{T}_i\mathcal{I}(\rhohat^{T}_i).
\end{align*} 
This time we rewrite it as 
\begin{equation*}
F_T(\tilde{\R}^{T}) \geqslant F_T(\T_s \tilde{\R}^{T}) - C(s),
\end{equation*}
where 
\begin{align*}
C(s) = | e^{-2Ks}-1| \sigma^2  \sup_T \Ent(\tilde{\R}^{T}| \W^\sigma) + \frac{s}{4\lambda} \sum_{i=1}^T \omega^{T}_i\mathcal{I}(\rhohat^{T}_i)  
\end{align*}
is upper bounded by a quantity independent of $T$ (in particular, from the assumption (ii) of Theorem \ref{theo:sparse_data}) and $\lim_{s\to 0^+}C(s) =0$. 

To consider the data-fitting term  let $a^{T}_s (t, x)$ denote the family of functions $a^{T}_s(t,x) := - \log \left( \Phi_s \tilde{\R}^{T}_t (x) \right)$, parameterized by $T$. Notice that from the definition of $\DF$ we have 
\begin{align*}
 \sum_{i=1}^T \omega^{T}_i  \DF \left( \Phi_s \tilde{\R}^{T}_{t^T_i}, \rhohat^{T}_i \right) & =  \sum_{i=1}^T \omega^{T}_i \int_{\Xset} a^{T}_s \left( t^{T}_i, x \right) \, \rhohat^{T}_i(\diff x).
\end{align*}
For a given $s > 0$, the family of functions $a^{T}_s(t,x)$ indexed by $T$ is uniformly equicontinuous thanks to Proposition \ref{prop:HF_reg}. Up to extraction, as $T\to \infty$ it converges uniformly on $[0,1] \times \Xset$ to the function  $a_s(t,x)= -\log\left( \Phi_s \R_t\right)$  which is equal to $- \log \left( [\T_s \R]_t \right)$. Combining this uniform convergence with the weak convergence of the $\rhohat^T_i$, we can pass to the limit of the data-fitting term:
\begin{align*}
\lim_{T \to + \infty} \, \sum_{i=1}^T \omega^{T}_i  \DF \left( \Phi_s \tilde{\R}^{T}_{t^T_i}, \rhohat^{T}_i \right)
& = \lim_{T \to + \infty} \, \sum_{i=1}^T \omega^{T}_i \int_{\Xset} a^{T}_s \left( t^{T}_i, x \right) \, \rhohat^{T}_i(\diff x) \\
 & = \int_{0}^1 \int_{\Xset} a_s(t,x) \, \rhobar_t(\diff x)  \diff t = \int_0^1  \DF \left( \T_s \R_t, \rhobar_t \right) \diff t.
\end{align*}
Together with the lower semi continuity of the entropy, we have  $F(\T_s \R) \leqslant \liminf_{T\to\infty} F_T(\T_s \tilde{\R}^T)$. 
The results of above two paragraphs allow us to write (for each $s>0$)
\begin{equation*}
\liminf_{T \to + \infty} \, F_T(\tilde{\R}^{T}) \geqslant F(\T_s \R) - C(s).
\end{equation*}
To conclude we send $s \to 0^+$, using the lower semi continuity of $F$ and the convergence of $\T_s \R$ to $\T_0 \R$ when $s \to 0^+$ (from Proposition \ref{prop:properties_regularization}). 
\end{proof}

\begin{proof}[\bf Proof of Theorem \ref{theo:sparse_data}]
Let $\R$ be the minimizer of $F$ and $\R^{T}$ the minimizer of $F_T$, that is, $F_T(\R^{T}) = \min_{\Pc(\Omega)} F_T$. Note that by optimality, there must hold $\T_0 \R = \R$ (and also $\T_0 \R^{T} = \R^{T}$). Using Proposition \ref{prop:GammaLimsup}, we can find a sequence $\tilde{\R}^{T}$ which converges narrowly to $\R$ as $T \to + \infty$ and such that 
\begin{equation*}
F(\R) \geqslant \limsup_{T \to + \infty} F_T(\tilde{\R}^{T}) \geqslant \limsup_{T \to + \infty} \, \min_{\Pc(\Omega)} F_T  = \limsup_{T \to + \infty} \, F_T(\R^{T}).
\end{equation*} 
In particular, the sequence $F_T(\R^{T})$ is bounded, which implies (by Lemma \ref{lem:F_bd_below}) that the sequence $\Ent(\R^{T}|\W^\sigma)$ is bounded too, then from the compactness of sublevel sets of the entropy $\Ent$ we have an accumulation point, say, $\hat{\R}$ of the sequence $(\R^T)$. %
Using the optimality of $\R$ and Proposition \ref{prop:GammaLiminf}, we get 
\begin{equation*}
F(\R)  \leqslant F(\T_0 \hat{\R}) \leqslant \liminf_{T \to + \infty} F_T(\R^{T}). 
\end{equation*}
Thus we have equalities everywhere and we conclude
\begin{equation*}
F(\R) = \limsup_{T \to + \infty} F_T(\tilde{\R}^{T}) 
= \lim_{T \to + \infty} F_T(\R^{T}).
\end{equation*}
In particular, this implies that 
\begin{equation*}
F_T(\tilde{\R}^{T}) - F_T(\R^{T}) = F_T(\tilde{\R}^{T}) - \min_{\Pc(\Omega)} F_T 
\end{equation*}
converges to $0$ as $T \to + \infty$. Thanks to the 1-convexity of $\Ent(\cdot|\W^\sigma)$ (Lemma \ref{lemma:entropy_strictly_convex}) as well as the convexity of the data-fitting term, $\| \tilde{\R}^{T} - \R^{T} \|_{\TV}$ converges to $0$ as $T \to + \infty$. Note that TV-convergence is stronger than narrow convergence, so  combined with the narrow convergence of $\tilde{\R}^{T}$ to $\R$, we conclude that $\R^{T}$ converges narrowly to $\R$ as $T \to + \infty$.
\end{proof}

\subsection{Proof of Theorem \ref{theo:GammaConvergence_stdt}}

\begin{proof}[\bf Proof of Theorem \ref{theo:GammaConvergence_stdt}]
First, using $\R = \T_\epstheo \P$ as a competitor in $G_{\lambda,\epstheo}$ and using the contraction estimate given by Proposition \ref{prop:properties_regularization}, we get 
\begin{equation*}
\min_{\Pc(\Omega)} \, G_{\lambda,\epstheo}  =  G_{\lambda,\epstheo}(\R^{\lambda,\epstheo}) \leqslant \sigma^2 \Ent(\T_\epstheo \P |\W^\sigma) \leqslant \sigma^2 e^{-2K\epstheo} \Ent(\T_0 (\P)|\W^\sigma).  
\end{equation*}
In particular, $G_{\lambda,\epstheo}(\R^{\lambda,\epstheo})$ is uniformly bounded in $\lambda$ and $\epstheo$. As a direct consequence, $\Ent(\R^{\lambda,\epstheo}|\W^\sigma)$ is uniformly bounded. Due to Proposition \ref{prop:compact-sublevel-H}  this implies that the family  $\R^{\lambda,\epstheo}$ belongs to a compact set in the narrow topology. Let $\tilde{\R}$ be any accumulation point in the limit $\lambda \to 0, \epstheo \to 0$. We only need to show that necessarily $\tilde{\R} = \T_0 \P$. 

Note that 
 $$\sigma^2 \Ent(\R^{\lambda, \epstheo}|\W^\sigma) \leqslant G_{\lambda, \epstheo} (\R^{\lambda, \epstheo}) \leqslant \sigma^2 e^{-2K\epstheo} \Ent(\T_0 (\P)|\W^\sigma), $$ thus 
by sending $\epstheo \to 0$  and using the lower semi continuity of the entropy to get 
\begin{equation*}
\Ent(\tilde{\R}|\W^\sigma) \leqslant \Ent( \T_0 \P | \W^\sigma).
\end{equation*}
Now, using Fatou's Lemma and the joint lower semi continuity of the entropy, 
\begin{equation*}
\int_0^1 \Ent(\P_t | \tilde{\R}_t) \, \diff t \leqslant \liminf_{\lambda \to 0, \epstheo \to 0} \, \int_{0}^1 \Ent(\Phi_\epstheo \P_t | \R^{\lambda,\epstheo}_t) \, \diff t \leqslant \liminf_{\lambda \to 0, \epstheo \to 0} \, \left( \lambda \, \sup_{\lambda,\epstheo} G_{\lambda,\epstheo}(\R^{\lambda,\epstheo}) \right) = 0.
\end{equation*}
Thus we conclude that $\tilde{\R}_t = \P_t$ for almost every $t$; in fact, the equality holds for every $t$ due to the continuity of the marginals in $t$. Therefore by definition of $\T_0$ we deduce $\tilde{\R} = \T_0 \P$. This concludes the proof. 
\end{proof}

\FloatBarrier

\section{Methodology: Wasserstein regression}\label{sec:methodology}

Our theoretical results establish that the law on paths $\P$ of a stochastic differential equation~\eqref{eq:diffusion_drift_sde_grad} can be recovered from the data~\eqref{eq:samples} by minimizing the convex functional~\eqref{eq:opt_theory}. In this section our aim will be to detail the development of a tractable, finite-dimensional convex optimization problem~\eqref{eq:optim_dsc_time} motivated by the theory~\eqref{eq:opt_theory} which is amenable to efficient computational solution. %
As we discussed in the introduction, $\{\rhohat_{t_i}\}_{i = 1}^T$ are noisy samples from the ground truth process $\P$. The ``gluing'' approach \eqref{eq:gluing} as per Waddington-OT in this regime of limited data results in a reconstructed law that is a poor estimate of the true law. Our proposed method, Global Waddington-OT (gWOT), optimizes in both the marginals $\R_{t_i}$ and couplings to produce a global regression that counteracts noisy fluctuations in the data introduced by sampling effects.

\subsection{Discretization in space and time}\label{sec:discretization_time_space}

\paragraph*{Time discretization}

For the moment, $\Xset$ still denotes a compact Riemannian manifold and $\W^\sigma \in \Pc(\Omega)$ is the law of the reversible Brownian motion with diffusivity $\sigma^2$.
Following the theoretical framework of Section~\ref{sec:theory_details}, we consider a general setting where we seek to minimize a loss functional $\mathrm{L} : \mathcal{P}(\Omega) \to [0, \infty]$ over laws on paths $\R$ in continuous space and time, 
\begin{equation}
	\label{eq:optim_ctstime}    
    \mathrm{L}(\R) = \lambda \Reg(\R) + \Fit(\R_{t_1}, \ldots, \R_{t_T}), \quad \R \in \mathcal{P}(\Omega), 
\end{equation}
where we choose the regularization term $\Reg(\R)$ to be the relative entropy of laws on paths as per \eqref{eq:opt_theory}, given by $$\Reg(\R) = \sigma^2 \Ent(\R | \W^\sigma).$$
In addition, we impose that the data-fitting term $\Fit(\cdot)$ be local in time, that is, it will only depend on the temporal marginals of $\R$ at the measurement times $t_1, \ldots, t_T$. The setting of Theorem \ref{theo:main_convergence} corresponds to $\Fit(\cdot)$ being a sum of cross-entropies of the reconstructed marginals relative to the measurements, and in practice we explore a more general data-fitting term as discussed in Section~\ref{sec:choice_of_data_fitting}.

Although the minimization in \eqref{eq:optim_ctstime} is \emph{a priori} over the very large space of laws on paths $\Pc(\Omega)$, the minimizer can in fact be characterized by solving a corresponding minimization problem in a much smaller space of marginals $\Pc(\Xset^2)^{T-1}$. To see this, let $\R^\star$ be the minimizer of the loss functional $\mathrm{L}$ defined in \eqref{eq:optim_ctstime}. As $\Fit(\cdot)$ depends only on the temporal marginals, we know that $\R^\star$ minimizes $\Ent(\cdot|\W^\sigma)$ among all $\R$ such that $\R^\star_{t_i} = \R_{t_i}$ for $i \in \{1, \ldots, T\}$. By arguments already developed elsewhere (see for instance \cite[Section 2]{benamou2019} the joint law $\R^\star_{t_1, \ldots, t_T}$ at the instants $t_1,\ldots,t_T$ is enough to reconstruct $\R^\star$. Moreover, one needs only to reconstruct the pairwise couplings $\R^\star_{t_i, t_{i+1}}$ for $i \in \{ 1,\ldots,T-1 \}$ to recover $\R^\star_{t_1, \ldots, t_T}$. Indeed, a general argument (\cite[Theorem 4.5]{baradat2020minimizing}) shows that the minimizer of $\Ent(\cdot|\W^\sigma)$ with marginal constraints on a subset of $[0,\tmax]$ is a Markov process (as soon as $\W^\sigma$ is a Markov process).
These considerations are summarized below.

First, let us recall the definition of the entropic regularization of optimal transport.

\begin{definition} \label{def:ot}
    Let $\pi_0$ be a given initial distribution on $\Xset$ and let $\pi_0 \W^{\sqrt{\varepsilon}}$ be the law of a Brownian motion with diffusivity $\varepsilon$ started from initial distribution $\pi_0$ at time $t = 0$. Let $\pi_0 \W^{\sqrt{\varepsilon}}_{0, 1}$ be the joint law of this process at times $t = 0, 1$. Then for positive measures $\alpha, \beta$ of equal mass on $\Xset$, we define 
    \begin{align}
        \OT_\varepsilon(\alpha, \beta; \pi_0) &= \inf_{\gamma \in \Pi(\alpha, \beta)} \varepsilon \Ent(\gamma | \pi_0 \W^{\sqrt{\varepsilon}}_{0, 1}). \label{eq:entropic_ot}
    \end{align}
\end{definition}

\noindent The reader not familiar with optimal transport is referred to Appendix \ref{sec:background_OT} for a background on such theory and its link with entropy minimization. 

The key proposition to handle time discretization (which effectively comes from \cite{benamou2019}) is the following.   

\begin{prop}\label{prop:time_disc}
    The minimizer $\R \in \Pc(\Omega)$ of \eqref{eq:optim_ctstime} has marginals $\R_{t_1}, \ldots, \R_{t_T} \in \Pc(\Xset)$ at instants $t_1, \ldots, t_T$ respectively that are the unique minimizers of the discrete-time functional
    \begin{align}
        \inf_{\R_{t_1}, \ldots, \R_{t_T} \in \mathcal{P}(\Xset)} \lambda \Reg(\R_{t_1}, \ldots, \R_{t_T}) + \Fit(\R_{t_1}, \ldots, \R_{t_T}), \label{eq:optim_dsc_time}
    \end{align}
    where  
    \begin{align*}
        \Reg(\R_{t_1}, \ldots, \R_{t_T}) &= \dfrac{1}{\Delta t_1} \OT_{\sigma^2 \Delta t_1} (\R_{t_1}, \R_{t_2}; \pi_0) + \sum_{i = 2}^{T-1} \dfrac{1}{\Delta t_i} \OT_{\sigma^2 \Delta t_i} (\R_{t_i}, \R_{t_{i+1}}; \R_{t_i}).
    \end{align*}
    and we have written $\pi_0$ for the stationary distribution of the heat flow on $\Xset$. 
    Furthermore, for $1 \leq i \leq T-1$ let $\R_{t_i, t_{i+1}} \in \Pi(\R_{t_i}, \R_{t_{i+1}})$ be the optimal coupling of Definition \ref{def:ot} corresponding to each of the terms $\OT_{\sigma^2 \Delta t_1} (\R_{t_1}, \R_{t_2}; \pi_0)$, $\OT_{\sigma^2 \Delta t_i} (\R_{t_i}, \R_{t_{i+1}}; \R_{t_i})$. Now let 
    \begin{align}
        \R_{t_1, \ldots, t_T} &= \R_{t_1, t_2} \circ \cdots \circ \R_{t_{T-1}, t_T} \label{eq:concat_finite_dim}
    \end{align}
    be the unique Markov process whose pairwise couplings are $\R_{t_i, t_{i+1}}$. Then the law $\R$ that minimizes \eqref{eq:optim_ctstime} can be fully characterized in terms of the finite-dimensional distribution \eqref{eq:concat_finite_dim} by 
    \begin{align*}
        \R(\cdot) = \int_{\mathcal{X}^T} \W^\sigma(\cdot | x_1, \ldots, x_T) \diff\R_{t_1, \ldots, t_T}(x_1, \ldots, x_T)
    \end{align*}
    where $\W^\sigma(\cdot | x_1, \ldots, x_T)$ is the law of the reversible Brownian motion with diffusivity $\sigma^2$ conditioned on passing through $x_1, \ldots, x_T$ at times $t_1, \ldots, t_T$ respectively. 
\end{prop}
\begin{proof}
    See Appendix \ref{sec:proof_time_dis}.
\end{proof}

\paragraph*{Space discretization}

We insist that in the discussion of the previous subsection, no information is lost between \eqref{eq:optim_ctstime} and \eqref{eq:optim_dsc_time} (and the convergence when $T \to + \infty$ is guaranteed by Theorem \ref{theo:main_convergence}). On the other hand, upon introducing a discretization of the space $\Xset$ we must necessarily depart from our theoretical framework and we have for the moment no guarantee of convergence.

Necessarily we will restrict to working on a discrete approximation of the space $\Xset$ and thus deal with discrete measures supported on a fixed finite set $\overline{\mathcal{X}} \subset \mathcal{X}$. Since the space $\Xset$ is assumed to have high dimension, discretization by gridding is infeasible although we remark that our approach can certainly be applied directly to a gridded space. Therefore, by default we choose $\overline{\mathcal{X}}$ to be equal to the union of all measured points, i.e.
$$\bigcup_{i = 1}^{T} \supp(\hat \rho_{t_i}) = \overline{\mathcal{X}},$$
but in general $\overline{\mathcal{X}}$ may taken to be larger. See Section \ref{sec:aug_supp} where we explore a strategy to add points to the support.

Next we will need to replace each term in \eqref{eq:optim_dsc_time} with its discrete counterpart. The $\R_{t_i}$ are now probability distributions on $\overline{\Xset}$ and can be represented by vectors in the probability simplex in $\Rset^{|\overline{\Xset}|}$. The pairwise couplings $\R_{t_i,t_{i+1}}$ can likewise be represented by a matrix of size $|\overline{\Xset}| \times |\overline{\Xset}|$ whose row and column sums correspond to $\R_{t_i}$ and $\R_{t_{i+1}}$ respectively. Finally, $\W^\sigma$ is replaced by a Markov chain on $\overline{\Xset}$ that can be thought of as approximating the reversible Brownian motion on $\overline{\Xset}$, i.e. the process with transition probabilities given by
\begin{align}
    \Prob(X_{t_{i+1}} = x | X_{t_i} = x') &\propto \exp\left(-\frac{1}{2\sigma^2 \Delta t_i} \| x - x'\|^2\right), \quad x, x' \in \overline{\Xset}. \label{eq:disc_markov_chain}
\end{align}
The discrete space $\overline{\Xset}$ is in general not a faithful approximation of $\Xset$: in the case where $\overline{\Xset}$ is comprised of observed samples, we are explicitly constrained to a region of the ambient space $\Xset$ that is visited by the process. Thus, the stationary distribution of the Markov chain \eqref{eq:disc_markov_chain} encoding the heat flow on $\overline{\Xset}$ will not resemble that of the process in the continuous space $\Xset$. In practice therefore, the choice of $\pi_0$ the initial distribution of $\W^\sigma$ is up to the user, and we will adopt the convention of taking it to be uniform on $\overline{\Xset}$. The overall question of convergence of the Markov process on $\overline{\Xset}$ to the Wiener measure is not an easy one, as it is related to the convergence of a graph Laplacian on $\overline{\Xset}$ to the Laplace Beltrami operator on $\Xset$. We prefer not to tackle the question of finding a convergent space discretization in the present article and leave it for future work.

\subsection{Choice of the data-fitting term} \label{sec:choice_of_data_fitting}

Before presenting in details our choice of data-fitting term, let us introduce the following additional notation for optimal transport in the context of discrete measures. It corresponds to the usual definition of entropy-regularized optimal transport for discrete measures~\cite{peyre2019}.

\begin{definition}
    If $\alpha, \beta$ are two measures of the same mass on a discrete space $\overline{\Xset} \subset \Rset^d$, we can drop the semicolon in $\OT$ and define 
    \begin{align*}
        \OT_\varepsilon(\alpha, \beta) &= \inf_{\gamma \in \Pi(\alpha, \beta)} \varepsilon \Ent(\gamma | K_\varepsilon) - \varepsilon \inner{\ones \otimes \ones, \gamma},
    \end{align*}
    where $(K_\varepsilon)_{ij} = \exp\left(-\frac{1}{2\varepsilon}\|x_i - x_j\|^2\right)$ for all $x_i, x_j \in \overline{\Xset}$, commonly known as the Gibbs kernel \cite{peyre2019}, while $\ones \otimes \ones$ is the vector indexed by $\overline{\Xset} \times \overline{\Xset}$ whose all entries are $1$. Note that in the case of normalized measures $\alpha$ and $\beta$, the final term results in only a constant offset. 
\end{definition}

In order to improve the performance of our scheme in practice when available data is limited compared to the theoretical setting, we make some modifications to the form of the data-fitting functional $\Fit(\cdot)$. In particular, we will take it to be 
\begin{align}
    \Fit(\R_{t_1}, \ldots, \R_{t_T}) &= \sum_{i = 1}^{T} w_i \enskip \inf_{\Rtilde_{t_i}} \left[ \OT_{\varepsilon_i} (\R_{t_i}, \Rtilde_{t_i}) + \lambda_i \Ent(\rhohat_{t_i} | \Rtilde_{t_i}) \right], \label{eq:dffunc}
\end{align}
where the parameters $w_i > 0, \sum_i w_i = 1$ may be specified by the user to assign varying weights for each time-point. By default when each time-point has the same number of observed particles, we use uniform weights $w_i = T^{-1}$. Section \ref{sec:preprocessing_and_params} discusses in depth the choice of $w_i$, among other parameters.

Compared to the form of the functional~\eqref{eq:opt_theory} discussed in Section \ref{sec:theory_details}, which enforces that the reconstructed marginals $\R_{t_i}$ are close to the measured data $\rhohat_{t_i}$ directly via a cross-entropy term $\Ent(\rhohat_{t_i} | \R_{t_i})$, we instead opt for a more lenient data-fitting term through addition of an entropy-regularized optimal transport term $\OT_{\varepsilon_i} (\R_{t_i}, \Rtilde_{t_i})$ that connects the reconstructed marginal $\R_{t_i}$ to an intermediate marginal $\Rtilde_{t_i}$.  This is then compared to the data $\rhohat_{t_i}$ by a cross-entropy term $\Ent(\rhohat_{t_i} | \Rtilde_{t_i})$. Here $\lambda_i$ tunes the balance between the cross-entropy and the optimal transport term, while the strength of the entropic regularization $\varepsilon_i$ can be chosen by the user.

This specific modification is motivated by observations of the behavior of several choices of data-fitting functionals in a setting where limited data are available. The cross-entropy $\Ent(\rhohat_{t_i} | \R_{t_i})$  in particular is local in space and therefore is only sensitive to pointwise agreement between $\R_{t_i}$ and $\rhohat_{t_i}$, that is, the recovered marginal $\R_{t_i}$ must place some mass at each point in the supports of the measurements $\rhohat_{t_i}$. In practice when available data is limited, measured data points may be subject to significant sampling noise and thus not accurately reflect the characteristics of the underlying process. This may then result in undesirable artifacts and fluctuations in the reconstructed marginals, where the reconstructed process is forced ``out of its way'' to pass through all measured data points. This effect is illustrated in Figure \ref{fig:datafitting_counterexample2}. 

In order to remedy this issue, we bestow upon the data-fitting term some awareness of the structure of the underlying space $\overline{\mathcal{X}}$ by allowing for some spatial rearrangement of mass from $\R_{t_i}$ to $\Rhat_{t_i}$ via the optimal transport term $\OT_{\varepsilon_i}(\R_{t_i}, \Rhat_{t_i})$. 
We find that the choice of a hybrid data-fitting functional \eqref{eq:dffunc} yields a reconstruction with the highest quality even in the regime of high sampling-induced noise. 
On the other hand, another choice would be to use a purely optimal transport loss function between $\R_{t_i}$ and $\rhohat_{t_i}$ as present in the literature \cite{hashimoto2016, yeo2020}.
We did not choose this option since in the regime when $N_i$ is small, fluctuations in the observations due to finite sampling would have to be accounted for by transportation of mass. Therefore, minor discrepancies between the reconstruction and the data would incur an unreasonably high cost. As an example, a pure optimal transport loss performs poorly in a setting where particles are sampled from a bistable process with $N_i = 1$, as we illustrate in Figure \ref{fig:datafitting_counterexample2}: the reconstructed mass is placed equidistant from either mode and is clearly at the wrong place.

Finally, we remark that since entropy-regularized optimal transport is connected to Gaussian deconvolution~\cite{rigollet2018entropic}, the addition of the $\OT_{\varepsilon_i}$ term has also the additional effect to mimic the Gaussian convolution in~\eqref{eq:opt_theory}. In practice, the value of the data-fitting entropic regularization parameter $\varepsilon_i$  is chosen empirically to be small enough so that the resulting diffusive effect is insignificant relative to the spatial scale of the data. 

\begin{figure}
    \centering
    \includegraphics[width = \linewidth]{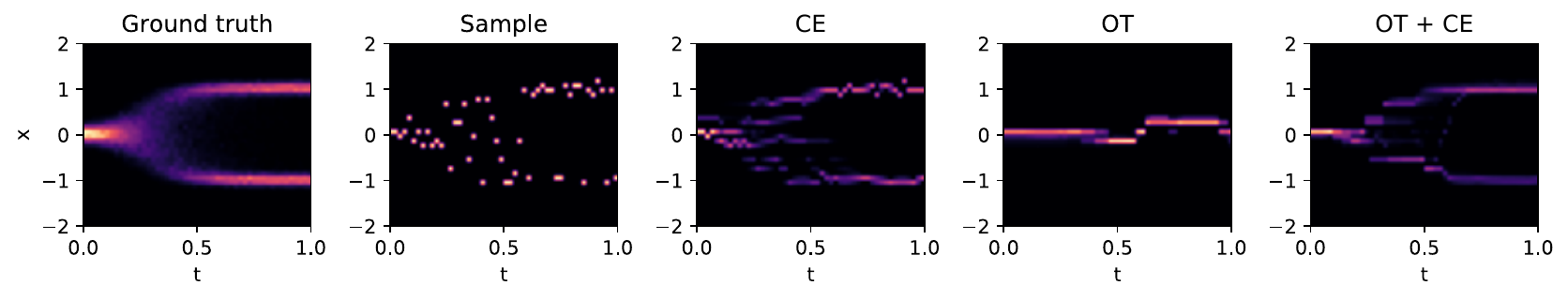}
    \caption{Illustration of the effects of various choices for the data-fitting functional on estimated marginals of a bistable process, where samples are obtained at 50 timepoints with $N_i = 1$. CE, OT, OT+CE correspond respectively to data-fitting via cross entropy, optimal transport, and a combination of optimal transport and cross-entropy as in \eqref{eq:dffunc}. }
    \label{fig:datafitting_counterexample2}
\end{figure}

\subsection{Extending to the case with branching}\label{sec:growth}

We show here how to extend the methodology developed above to the case where particles can branch, but at a rate that is known (either exactly or approximately). Specifically, we show how to modify the optimization procedure of Sections \ref{sec:discretization_time_space} and \ref{sec:choice_of_data_fitting} in order to recover the \emph{transport} component of the drift-diffusion branching process.

\paragraph*{Regularizing functional with branching}
Here we present a simple modification of our regularizing functional that allows us to continue using ordinary Brownian motion for the reference measure. As described earlier in Section \ref{sec:summ_of_contrib}, we employ a splitting scheme to resolve the effects of branching \eqref{eq:split_pde_growth} from that of diffusion and drift \eqref{eq:split_pde_transport}. As we discussed in Section \ref{sec:inference_problem}, we aim to recover the law that describes the diffusion-drift component of the dynamics.

To implement this splitting approach, we will treat the candidate process $\R$ as a sequence of $T-1$ transport couplings that correspond to the evolution of the diffusion-drift component \eqref{eq:split_pde_transport}
\begin{align*}
    \R = (\R_{t_1, t_2}, \R_{t_2, t_3}, ..., \R_{t_{T-1}, t_T}),
\end{align*}
which we intersperse with branching according to \eqref{eq:split_pde_growth}. We illustrate this approach in Figure \ref{fig:growth}. 
The effect of branching is captured by permitting the facing marginals of the couplings to differ by a factor of $g_i$ --- that is, we would like $\R_{t_i} \approx g_{i} \Rbar_{t_i}$. If we have exact knowledge of the branching rates, then this should be an equality. 
\begin{figure}[h]
    \center\includegraphics[]{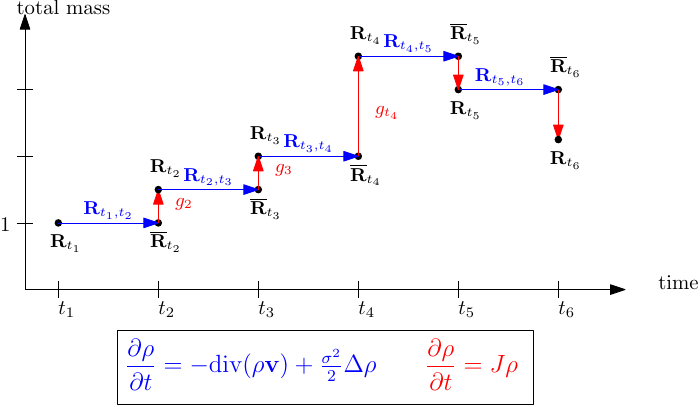}
    \caption{Illustration of our splitting scheme for accounting for branching: at each time $t_i$, we first perform transport from $\R_{t_i}$ to $\Rbar_{t_{i+1}}$ to capture the diffusion-drift component \eqref{eq:split_pde_transport}, and then perform branching from $\Rbar_{t_{i+1}}$ to $\R_{t_{i+1}}$. to capture the branching component \eqref{eq:split_pde_growth}.}
    \label{fig:growth}
\end{figure}

Since the total mass of marginals $\R_{t_i}$ may differ, the transport terms with more total mass would be over-weighted in the objective function, whilst those with less total mass would be under-weighted. To combat this effect, we introduce tuning parameters $m_i$ which are estimates of the total mass of~$\R_{t_i}$. 

Finally, in order to quantify the extent to which branching rates $g_i$ are known, we introduce a branching penalty $G_i(\cdot, \cdot)$ to be a term to enforce the effect of branching at time $t_i$.  $G_i(\Rbar_{t_i}, \R_{t_i})$ should encourage that $\R_{t_i} \approx g_i \Rbar_{t_i}$ in some appropriate way. Two specific choices we consider for enforcing branching are:
\begin{itemize}
    \item Exact branching constraint:
        \begin{align}\label{eq:exact_growth_constraint}
            G_i(\Rbar_{t_i}, \R_{t_i}) = \iota(\R_{t_i} = g_i\Rbar_{t_i}) = \begin{cases} 
            0, &\R_{t_i} = g_i \Rbar_{t_i} \\
            +\infty, &\text{otherwise}
        \end{cases}
        \end{align}
    \item Soft branching constraint with penalty $\kappa_i$:
        \begin{align}\label{eq:soft_growth_constraint}
            G_i(\Rbar_{t_i}, \R_{t_i}) = \kappa_i \KL(\R_{t_i} | g_i \Rbar_{t_i} ).
        \end{align}
\end{itemize}
In the above, we take $\KL(\alpha | \beta)$ to denote the Kullback-Leibler divergence generalized to positive measures (see e.g. \cite{chizat2018scaling, peyre2019}), defined for $\alpha, \beta \in \mathcal{M}_+(\Xset)$ to be 
\begin{align*}
    \KL(\alpha | \beta) &= \int_{\Xset} \log\left( \frac{\diff\alpha}{\diff\beta} \right) \diff\alpha - \int_{\Xset} \diff\alpha + \int_{\Xset} \diff\beta.%
\end{align*}
We make note that when $\alpha$ and $\beta$ are equal in mass, that $\KL$ is the same as $\Ent$. 
Note also that the exact branching constraint can also be understood as the soft branching constraint with $\kappa = +\infty$. Incorporating these effects, the appropriate regularizing functional can be written as
\begin{multline}
        \Reg(\R_{t_1}, \ldots, \R_{t_T})  = \inf_{\Rbar_{t_2}} \left[ \dfrac{1}{m_1 \Delta t_1} \OT_{\sigma^2 \Delta t_1}(\R_{t_1}, \Rbar_{t_2}; \pi_0) + \dfrac{1}{m_2 \Delta t_1} G_2(\overline{\R}_{t_2}, \R_{t_2}) \right] \\
        +\sum_{i = 2}^{T-1} \inf_{\Rbar_{t_{i+1}}} \left[ \dfrac{1}{m_i \Delta t_i} \OT_{\sigma^2 \Delta t_i} (\R_{t_i}, \Rbar_{t_{i+1}}; \R_{t_i}) + \dfrac{1}{m_{i+1}\Delta t_i} G_{i+1}(\overline{\R}_{t_{i+1}}, \R_{t_{i+1}})\right]. \label{eq:regfunc_growth}
\end{multline}
An important note is that since we now optimize over positive measures $\R_{t_i} \in \mathcal{M}_+(\overline{\Xset})$, in order to avoid ambiguity it is necessary in our optimization problem to demand that $\R_{t_1} \in \Pc(\overline{\Xset})$. In other words, we will model our process as starting with unit mass and subsequently deviating due to the effects of branching. 

\paragraph*{Data-fitting functional with branching}

The form of the data-fitting functional in the case of branching is similar to that of \eqref{eq:dffunc} from Section \ref{sec:choice_of_data_fitting}, except we scale appropriately by user-specified weights $1/m_i$ so that $m_i^{-1} \Rtilde_{t_i}$ has mass of roughly order 1. We also use $\KL$ instead of $\Ent$ since $m_i^{-1} \Rtilde_{t_i}$ may not perfectly normalized. 
\begin{align}
    \Fit(\R_{t_1}, \ldots, \R_{t_T}) &= \sum_{i = 1}^{T} w_i \enskip \inf_{\Rtilde_{t_i}} \left[ \dfrac{1}{m_i}\OT_{\varepsilon_i} (\R_{t_i}, \Rtilde_{t_i}) + \lambda_i \KL(\rhohat_{t_i} | m_i^{-1} \Rtilde_{t_i}) \right]. \label{eq:dffunc_growth}
\end{align}

\subsection{Algorithmic considerations}
\label{subsection:algorithm}

We now discuss methods for computational solution of the general problem with branching, of which the model without branching is a special case when $g_i \equiv 1, m_i \equiv 1$ and we enforce the null branching rate exactly as per \eqref{eq:exact_growth_constraint}. Owing to the complexity of the model and especially since we seek to handle the situation of branching, an iterative Sinkhorn-type scheme is out of reach in general, moreover in the case of branching this seems to be a fundamental limitation \cite{AymericHugo}. Instead, we resort to solving this variational problem using gradient-based methods \cite{cuturi2016smoothed, frogner2015learning}. 
 
Direct computation of the gradient of the optimal transport loss with respect to one of its marginals is a costly procedure, requiring first solution of a Sinkhorn scaling subproblem to find the optimal coupling \cite{frogner2015learning}. Instead, since our problem is convex, we choose to proceed via the dual problem. One particular advantage of dealing with the dual problem is that the Legendre dual of the optimal transport loss and therefore its gradients can be evaluated in closed form \cite{cuturi2016smoothed}, eliminating the need to solve a series of costly subproblems. We summarize the dual formulation of our problem which we solve in practice in the following. 

\begin{prop}\label{prop:dual}
    The unique solution to the dual problem corresponding to  
    \begin{align*}
        \inf_{\substack{\R_{t_1} \in \Pc(\overline{\Xset}), \\ \R_{t_2}, \ldots, \R_{t_T} \in \mathcal{M}_+(\overline{\Xset})}}^{} \lambda \Reg(\R_{t_1}, \ldots, \R_{t_T}) + \Fit(\R_{t_1}, \ldots, \R_{t_T}) 
    \end{align*}
    where $\Reg$ and $\Fit$ are specified by \eqref{eq:regfunc_growth} and \eqref{eq:dffunc_growth} respectively can be found by solving the concave maximization problem 
    \begin{align}
        \sup_{ \{\Uhat_i, \Vhat_i \}_{i = 1}^T} &-\dfrac{\lambda}{m_1 \Delta t_1}\OT^*_{\sigma^2 \Delta t_1}(u_1, v_1; \pi_0) - \sum_{i = 2}^T \dfrac{\lambda}{m_i \Delta t_{i-1}} G_i^*(\phi_{i-1}, \psi_i) \nonumber \\ 
        &- \sum_{i = 1}^T \dfrac{w_i}{m_i} \OT_{\varepsilon_i}^*(\Uhat_i, \Vhat_i) -\sum_{i = 1}^T \lambda_i w_i \KL^*\left( \rhohat_{t_i} \Big| -\dfrac{\Vhat_i}{\lambda_i} \right)  , \label{eq:dual}
    \end{align}
    where $\OT_{\sigma^2 \Delta t_1}^*(\cdot, \cdot; \pi_0)$ is the Legendre transform of $\OT_{\sigma^2 \Delta t_1}(\cdot, \cdot; \pi_0)$ constrained to have arguments in $\Pc(\overline{\Xset})$; $\OT_{\varepsilon_i}^*(\cdot, \cdot)$ and $G_i^*(\cdot, \cdot)$ are respectively the Legendre transforms of $\OT_{\varepsilon_i}(\cdot, \cdot)$ and $G_i(\cdot, \cdot)$ in both their arguments, and $\KL^*$ here denotes the Legendre transform of the generalized Kullback-Leibler divergence $\KL$ in its second argument. Furthermore, \eqref{eq:dual} is written in terms of auxiliary variables $\{ u_i \}_{i = 1}^{T-1}, \{ v_i \}_{i = 1}^{T-1}, \{ \phi_i \}_{i = 1}^{T-1}$ and $\{ \psi_i \}_{i = 2}^{T}$, which are functions of the optimization variables $\{ \Uhat_i, \Vhat_i \}_{i = 1}^T$, defined recursively by the following relations:
    \begin{align*}
        \begin{cases}
            \dfrac{\lambda u_1}{\Delta t_1} + w_1 \hat{u}_1 = 0, \\
            \dfrac{\lambda u_i}{\Delta t_i} + \dfrac{\lambda \psi_i}{\Delta t_{i-1}} + w_i \hat{u}_i = 0, &\text{ for } 2 \leq i \leq T - 1 \\
            \dfrac{\lambda \psi_T}{\Delta t_{T-1}} + w_T \hat{u}_T = 0, \\ 
            \dfrac{v_{i-1}}{m_{i-1}} + \dfrac{\phi_{i-1}}{m_i} = 0, &\text{ for } 2 \leq i \leq T.\\
            u_i = -\sigma^2 \Delta t_i \log\left( \overline{K}_{\sigma^2 \Delta t_i} \exp\left( \dfrac{v_i}{\sigma^2 \Delta t_i} \right)\right), &\text{ for } 2 \leq i \leq T-1.
        \end{cases}. %
    \end{align*}
    In the above, by $\overline{K}_{\sigma^2 \Delta t_i}$ we denote the transition matrix of time-$\Delta t_i$ transition probabilities for the reference process $\W^{\sigma}$ (also see Definition \ref{def:gibbs}). Furthermore, for the choices of branching constraints introduced in \eqref{eq:soft_growth_constraint}, \eqref{eq:exact_growth_constraint} we may set $G_i^*(\cdot, \cdot) = 0$ and add additional constraints for $1 \leq i \leq T-1$:
    \begin{align*}
        \begin{split}
        \begin{cases}
            \phi_{i} = -g_i \psi_{i+1}  &\text{ for hard branching constraint \eqref{eq:exact_growth_constraint}} \\
            \phi_{i} = \kappa g_i \log(1 - \psi_{i+1}/\kappa) &\text{ for soft branching constraint \eqref{eq:soft_growth_constraint}}
        \end{cases}
        \end{split}%
    \end{align*}
\end{prop}
\begin{proof}
    See Appendix \ref{proof:dual}. For the reader's reference, we list in the appendix the Legendre transforms of relevant functions in Table \ref{table:legendre}, and illustrate the recurrence relationship of the auxiliary variables in Figure \ref{fig:dependency_diagram}.
\end{proof}

To solve the dual problem \eqref{eq:dual}, any gradient-based optimization method can be used since the problem is unconstrained and convex in the variables $\{ \Uhat_i, \Vhat_i \}_{i = 1}^T$. In order to easily evaluate gradients of the dual objective \eqref{eq:dual} we employ automatic differentiation, although we note that a more involved computation of gradients by hand is indeed possible and may improve performance in practice. We chose to implement our method using the PyTorch framework \cite{paszke2017} to leverage its automatic differentiation engine and also support for GPU acceleration.

Evaluation of the dual objective \eqref{eq:dual} involves stepping through a recurrence relation involving the auxiliary variables, and in particular requires $\mathcal{O}(T)$ convolutions against kernel matrices $\overline{K}_{\sigma^2 \Delta t_i}$ of dimension $|\overline{\Xset}|^2$, whose entries are functions of the squared Euclidean distances between pairs of points in $\overline{\Xset}$. In settings with many time-points and where $\overline{\Xset}$ is large, storage of these kernel matrices and evaluating convolutions become increasing costly. To avoid storing these kernels explicitly in memory and also improve overall performance, we employ the KeOps library \cite{charlier2020} to enable GPU-accelerated on-the-fly computation of these kernel convolutions with automatic differentiation compatibility.  In practice, we generally solve the dual problem \eqref{eq:dual} using L-BFGS with a tolerance on the primal-dual gap as the stopping criterion, although alternative criteria such as a tolerance on the gradient or simply setting a fixed number of iterations may also be used. We direct the reader to Section \ref{sec:code_avail} for our implementation of gWOT as an open-source software package.

\section{Numerical Results}\label{sec:numerical_results}

\paragraph*{Overview}
In this section, we investigate in detail the behavior and performance of our implementation of the computational method described in Section \ref{sec:methodology}, which we refer to as Global Waddington-OT (gWOT). Primarily, we will deal with the setting of simulated data in which we ensure that the assumptions described in Section \ref{sec:theory_details} are explicitly satisfied, first in the absence of branching and then with branching. From these numerical demonstrations, we find ample evidence that our regularization-based method is able to produce accurate estimates of laws on paths with significantly less error compared to the existing Waddington-OT approach. Finally, we present an example application to a subset of the scRNA-seq cellular reprogramming dataset published by Schiebinger et al. \cite{schiebinger2019}. 

For brevity, we defer some supplementary figures to Section \ref{sec:supp_fig} and some additional supporting results to Section \ref{sec:supp_results}. In Section \ref{sec:kernel} we compare the performance of gWOT to a straightforward kernel-smoothing approach, and in Section \ref{sec:preprocessing_and_params} we make some remarks about preprocessing of input data and choice of parameters for the application of gWOT. 

Following the convention used previously, for all results we show rescaled times so that the first and last time-points correspond to $t = 0, 1$ respectively. We denote  by $\mathcal{N}(x, M)$ the Gaussian with center $x \in \Rset^d$ and covariance matrix $M$, and by $I_d$ the $d \times d$ identity matrix. 

\subsection{Simulated data without branching}\label{sec:tristable}

\paragraph*{Simulation setup and parameters} 
We test first the performance of gWOT in the absence of branching and consider a tri-stable diffusion-drift process in $\Xset = \mathbb{R}^4$, in which the evolution each particle $X_t$ over time is driven by the gradient of the potential function
\begin{align*}
    \Psi(x) &= 4\|x - x_0\|^2 \|x - x_1 \|^2 \|x - x_2\|^2 , 
\end{align*}
where the three potential wells are located at
\begin{align*}
    x_0 &= 0.95 (\cos(\pi/6), \sin(\pi/6), 0, 0),  \\
    x_1 &= 1.05 (\cos(5\pi/6), \sin(5\pi/6), 0, 0), \\
    x_2 &= (\cos(-\pi/2), \sin(-\pi/2), 0, 0).
\end{align*}
Since these potential wells are positioned at differing distances away from the origin, the resulting potential landscape is asymmetric about the origin and so particles initialized about the origin will have a greater propensity to settle in closer wells -- namely $x_0$. To illustrate this, we show the potential $\Psi$ as a function of the first two dimensions of the space $\Xset$ in Figure \ref{fig:tristable_potential}. Note that although the asymmetry in this potential landscape is subtle and may be difficult to discern visually, it introduces appreciable asymmetry to the resulting probability law on trajectories. 

\begin{figure}[h]
    \centering
    \includegraphics[width = 0.5\linewidth]{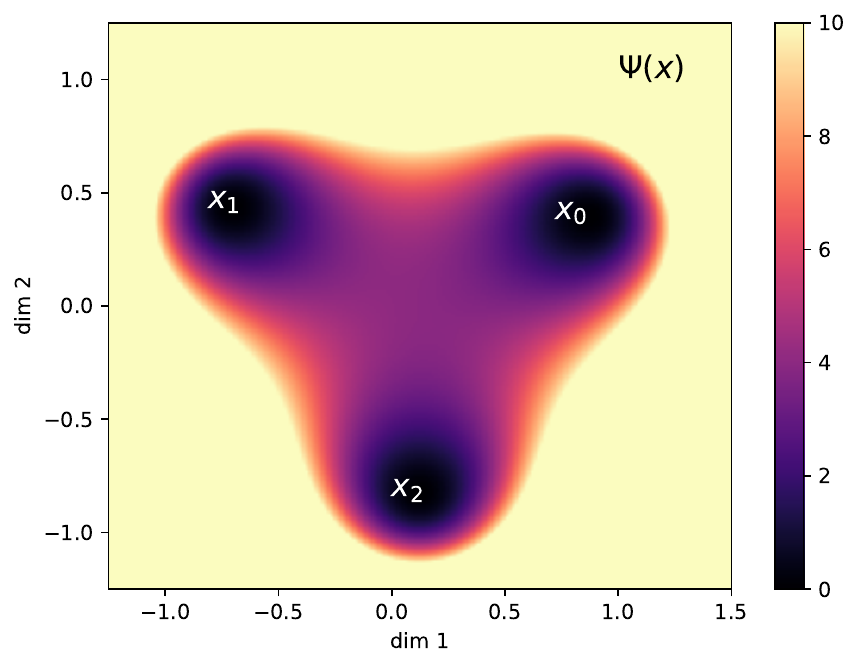}
    \caption{Potential function $\Psi$ for tristable process in $\mathbb{R}^4$, shown in the first 2 dimensions, i.e. $z = \Psi(x, y, 0, 0)$. }
    \label{fig:tristable_potential}
\end{figure}

At the initial time $t = 0$, particles are distributed isotropically about the origin following the law $X_0 \sim 0.15\mathcal{N}(0, I_4)$ and evolve following the diffusion-drift process \eqref{eq:diffusion_drift_sde} with diffusivity $\sigma^2 = 0.25$. To simulate this process in practice, we specify a temporal step size $\tau \ll 1$ and iteratively update particle positions $X_t$ following the Euler-Maruyama scheme \cite{higham2001}
\begin{align}
    X_{t + \tau} = X_t - \tau \nabla \Psi(X_t) + \sigma \sqrt{\tau} Z, \quad Z \sim \mathcal{N}(0, I_d). \label{eq:euler_maruyama}
\end{align}
We specified $T = 50$ time-points $\{ t_i : i = 1, \ldots, 50 \}$ uniformly spaced in the interval $t \in [0, 0.4]$ and chose $\tau$ so that a total of $\approx\!\!10^3$ steps corresponded to the overall interval $t \in [0, 0.4]$. Independently at each time-point, snapshots of $N=20$ particles were sampled from independent realizations of the process to form the input data $\{ \rhohat_{t_i} : i = 1, \ldots, 50 \}$. At each time $t_i$, $\rhohat_{t_i}$ is thus an empirical measure supported on $N$ points in $\Xset$. Given this input, we applied gWOT by solving the dual problem \eqref{eq:dual} for a range of values of the regularization strength parameter $\lambda \in \texttt{logspace}(-4, -1, 10)$. All other parameters were taken to be constant: we used $\varepsilon_i^\mathrm{DF} = 0.025, m_i \equiv 1, g_i \equiv 1, \lambda_i = 1, w_i = 1/T$, and as discussed in Section \ref{sec:methodology} we took the initial distribution for the reference process $\pi_0$ to be uniform. The reader is directed to Section \ref{sec:preprocessing_and_params} for a detailed discussion about the choice of parameters in general.

\paragraph*{Results: estimated marginals}
In order to evaluate the quality of outputs produced by gWOT for varying choices of the regularization strength $\lambda$, we must be able to compare the reconstruction output to a ground truth. One straightforward route is to compare each reconstructed marginal $\R_{t_i}$ to the corresponding true marginal $\rho_{t_i}$ of the ground truth process using the 2-Wasserstein ($W_2$) metric (for the reader who is new to optimal transport, this is described in detail in Appendix \ref{sec:background_OT}). That is, we consider the quantity $d_{W_2}(\R_{t_i}, \rho_{t_i})$ for each time-point $t_i$ as a measure of the error in the estimated marginal $\R_{t_i}$. This choice is reasonable since the optimal law on paths $\R$ is uniquely characterized by its temporal marginals $\{\R_{t_i}\}_{i = 1}^{T}$ as was discussed at length in Section \ref{sec:discretization_time_space}. We reason therefore that improved estimates of the marginals (as measured by the $W_2$ metric) should correspond to improved estimates of the law on paths and vice versa. Furthermore, the $W_2$ distance between distributions supported on the discrete space $\overline{\Xset}$ can be computed exactly with relative ease \cite{peyre2019}. However, obtaining exactly the temporal marginals $\{\rho_{t_i}\}_{i = 1}^T$ of the ground truth process is in general computationally infeasible, and so we instead generate a Lagrangian approximation to this ground truth by simulating the evolution of 5000 particles according to the same generating SDE  \eqref{eq:diffusion_drift_sde} and sampling the marginal empirical distributions. As a summary of overall performance over all time-points we take the mean $W_2$ error 
\begin{align*}
    \E[d_{W_2}(\R_{t_i}, \rho_{t_i})] = \frac{1}{T} \sum_{i = 1}^T d_{W_2}(\R_{t_i}, \rho_{t_i}),
\end{align*}
where for simplicity by $\rho_{t_i}$ we refer to the Lagrangian approximations to the ground truth marginals. This average marginal error was computed for each value of $\lambda$ as an average over 10 identical repeated simulations, and we found that it  was minimized for $\lambda = 2.154 \times 10^{-3}$ (we show supporting results for varying $\lambda$ in Figure \ref{fig:tristable_lamda_dep}(a)). This value of $\lambda$ was used for all our downstream analyses. In Figure \ref{fig:tristable_scatter} we show observed samples at selected time-points $t = 0.00, 0.29, 0.59, 0.90$ overlaid on corresponding ground truth approximations, as well as the reconstructed marginal distributions $\R_{t_i}$ obtained by solving \eqref{eq:dual} for the optimal $\lambda$. 

\begin{figure}[h]
    \centering
    \includegraphics[width = 0.75\linewidth]{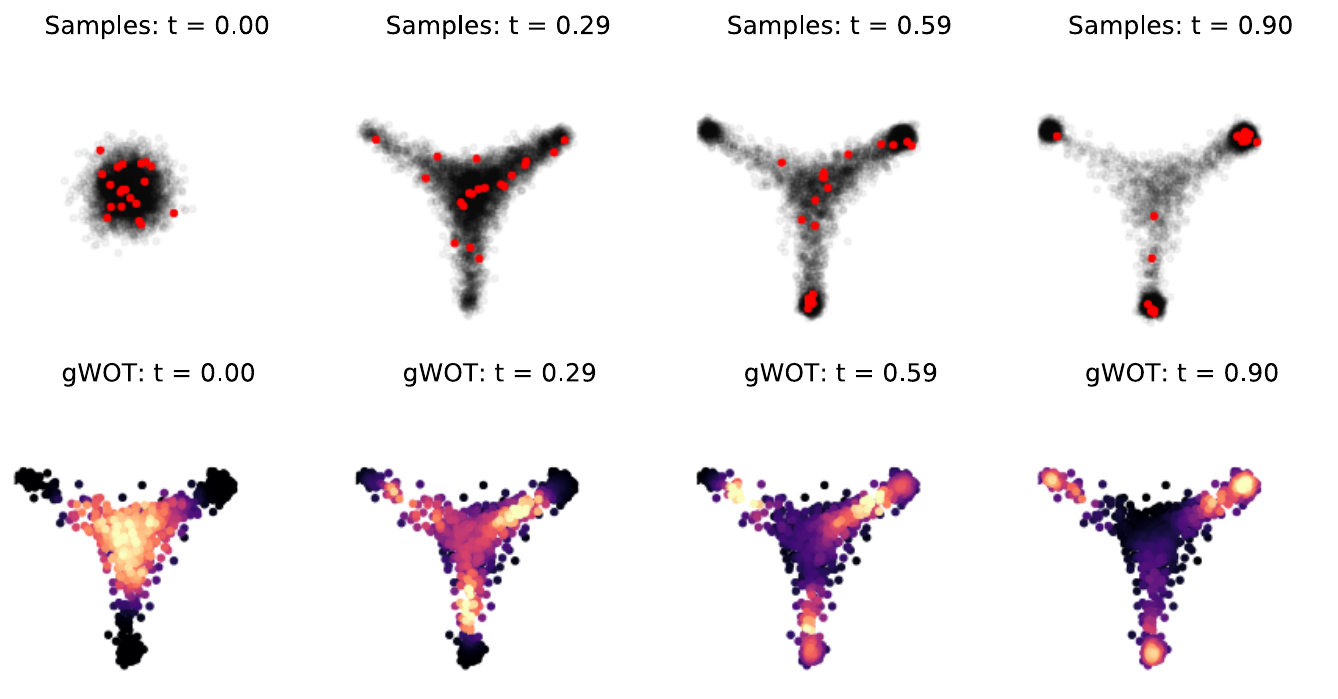}
    \caption{(Top) Sampled observations (red) overlaid on ground truth (grey) in the case $N = 20, T = 50$, at rescaled time coordinates $t = 0. 0.29, 0.59, 0.90$. (Bottom) Estimated marginals $\R_{t_i}$ at corresponding time-points found by gWOT for the optimal regularization parameter $\lambda_\mathrm{opt} = 2.154 \times 10^{-3}$. }
    \label{fig:tristable_scatter}
\end{figure}

\paragraph*{Results: estimating laws on paths}
As we discussed previously in Section \ref{sec:intro}, the mathematical object of direct relevance to trajectory inference are the sample paths taken by particles, and so it is natural to regard both the underlying (ground truth) process and inference outputs as probability laws on the space of paths. We note that the ground truth and the inference outputs reside in different spaces, namely $\mathcal{P}(\Omega)$ and $\mathcal{P}(\overline{\Xset}^T)$ respectively. These are very large spaces -- even in the discrete case it scales exponentially in the number of time-points. It is therefore infeasible to deal directly with laws on paths, but since we are dealing with Markov processes, collections of sample paths can be sampled efficiently by consecutively sampling from the transition kernels. In Figure \ref{fig:tristable_sample_paths}(a) we display sample paths obtained from the ground truth, as well as paths sampled from the estimated laws output by gWOT and Waddington-OT respectively. Visually, it is easy to observe that the low sampling density causes the performance of the Waddington-OT method to degrade since the marginals are treated as fixed and paths are therefore forced to pass through only observed particle locations at each time-point. This leads to suboptimal paths that thrash across the support. On the other hand, gWOT optimizes over marginals as well as paths, and therefore alleviates this effect by ``filling in'' missing data at each time-point which might otherwise result in spurious paths.  

\begin{figure}[h]
    \begin{subfigure}{0.75\linewidth}
        \centering\includegraphics[width = \linewidth]{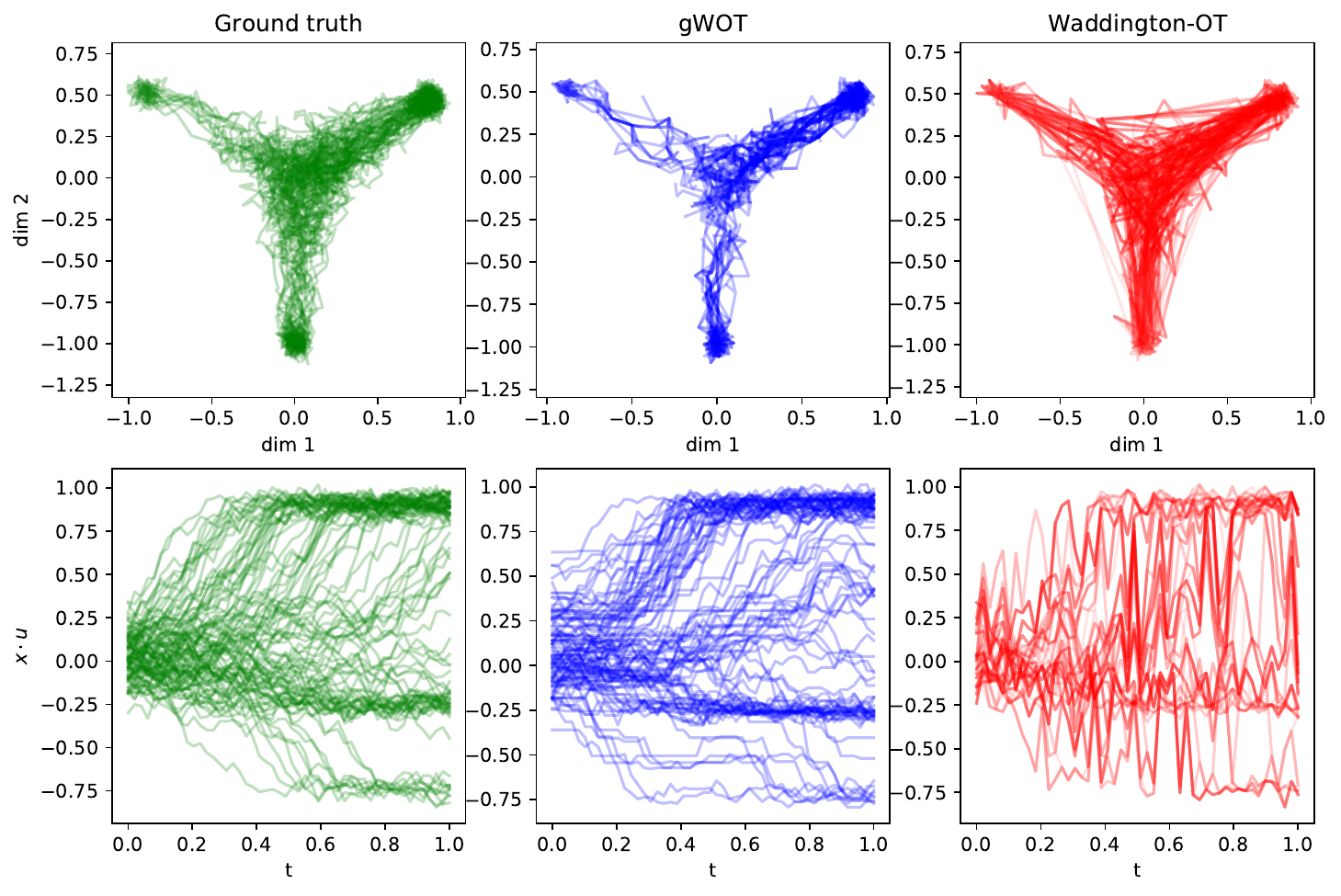} 
        \caption{}
    \end{subfigure} \\ 
    \begin{subfigure}{0.3\linewidth}
        \centering\includegraphics[width = \linewidth]{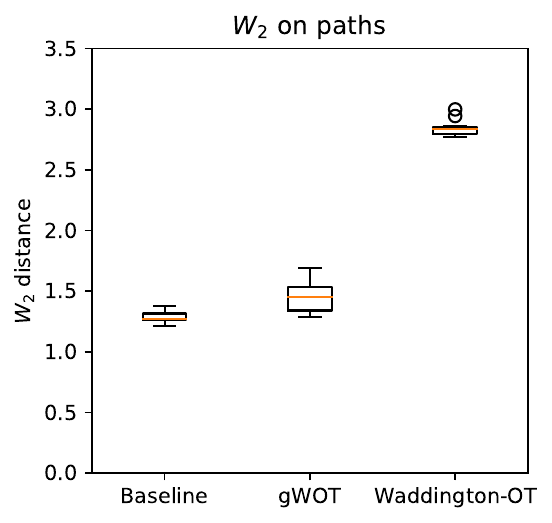}
        \caption{}
    \end{subfigure}
    \caption{(a) Comparison of samples of 100 paths drawn from the ground truth simulation (green), gWOT output (blue) and Waddington-OT output (red) for $N=20$ observed particles at $T = 50$ time-points. The upper row of plots shows 100 paths displayed as ensembles in the first two dimensions of the space $\Xset = \mathbb{R}^4$. In the lower row of plots, we display the paths as functions of time. The vertical coordinate is a projection $x \cdot u$ of $\Xset = \mathbb{R}^4$ onto a chosen subspace, here chosen to be the one spanned by $u = (\cos(\pi/12), \sin(\pi/12), 0, 0)$. \\ 
    (b) $W_2$ estimates on sample paths, computed over 10 repeats for samples of 1000 paths.}
    \label{fig:tristable_sample_paths}
\end{figure}

To go beyond visual observations and achieve a quantitative comparison of probability laws on paths, a natural metric of choice is the $W_2$ metric on the space $\mathcal{P}(\Omega)$ where the ground metric is chosen to be the $L^2$ norm on the space $\Omega = C([0, 1], \Xset)$, i.e.
\begin{align*}
    d(f, g)^2 = \int_0^1 \| f(t) - g(t) \|_2^2 \diff{t}.
\end{align*}
Unfortunately, again due to the size of the space of path-valued probability laws, exact computation of Wasserstein distances in this space quickly becomes computationally intractable. As an approximation, we compute instead an empirical $W_2$ distance between collections of paths sampled from underlying laws on paths. To be precise, in the discrete setting with $T$ evenly spaced time-points on $[0, 1]$ and for two collections of paths $\{ f_{ik} : k = 1, \ldots, T\}_{i = 1}^n$ and $\{ g_{ik} : k = 1, \ldots, T \}_{i = 1}^n$, we compute the $W_2$ distance between two empirical measures with a cost matrix 
\begin{align*}
    C_{ij} &= d(f_i, g_j)^2 = \frac{1}{T} \sum_{k = 1}^T \| f_{ik} - g_{jk} \|^2, \quad i, j \in \{1, \ldots, T\}.
\end{align*}
Throughout this paper, we will compute all empirical $W_2$ distances on paths as being between sample collections of $10^3$ paths and summarize over 10 independent samplings. 
Importantly, since we are dealing with finite samples of paths the expected $W_2$ distance $\E\left[d_{W_2}(\hat{f}, \hat{g})\right]$ between any two distinct size-$10^3$ samples $\hat{f}, \hat{g}$ of paths drawn from the ground truth will be nonzero. To serve as a baseline for comparison, we compute 10 values of $d_{W_2}(\hat{f}, \hat{g})$ for random $\hat{f}, \hat{g}$ sampled from the ground truth. We summarize these empirical distances for $N = 20, T = 50$ in Figure \ref{fig:tristable_sample_paths}(b) in which we note that gWOT achieves performance close to the baseline, whereas Waddington-OT does markedly worse.

Since gWOT is designed for the setting of few measurements per time-point with significant missing data, in the regime of large $N$ we expect that gWOT and Waddington-OT should perform similarly. To investigate this, we applied gWOT to a time-series sampled as described previously but with $N = 250$ sampled particles at each time-point. We defer these results to Appendix \ref{sec:supp_fig}, displaying the sample paths in Figure \ref{fig:tristable_sample_paths_largeN}(a). We note that in comparison to the case of $N = 20$ in Figure \ref{fig:tristable_sample_paths}(a), in the case of $N = 250$ the sample paths computed by Waddington-OT appear visually to be significantly improved. This is confirmed when we compute the $W_2$ distance estimates on sample paths, shown in Figure \ref{fig:tristable_sample_paths_largeN}(b), where we see that the difference in performance between gWOT and Waddington-OT is now significantly reduced. 

\paragraph*{Estimation of the drift}
We remarked earlier in Section \ref{sec:reconstruction_of_drift} that estimates of the drift field $\vv_t$ may be extracted from the law on paths $\R$ estimated by gWOT. In the current example, the drift field does not vary with time and so we estimate the drift by averaging over all $T$ time-points:
\begin{align*}
    \hat{\vv}(x) &= \frac{1}{T} \sum_{i = 1}^T \E_{\R_{t_i, t_{i+1}}} \left[ \left. \frac{X_{t_{i+1}} - X_{t_i}}{\Delta t_i} \right| X_{t_i} = x \right]. 
\end{align*}
In Figure \ref{fig:tristable_velocity} we show the respective drifts estimated from couplings computed by gWOT and Waddington-OT alongside the ground truth drift $\vv(x) = -\nabla \Psi(x)$, as well as the mean cosine similarities $$\E_x\left[\frac{1}{2}(1 - \cos\angle(\vv(x), \hat{\vv}(x)))\right]$$ of the estimated fields to the ground truth. We observe that gWOT estimates a drift field that is much closer to the ground truth, in keeping with our previous comparisons of the laws on paths. 

\begin{figure}
    \centering\includegraphics[width = 0.75\linewidth]{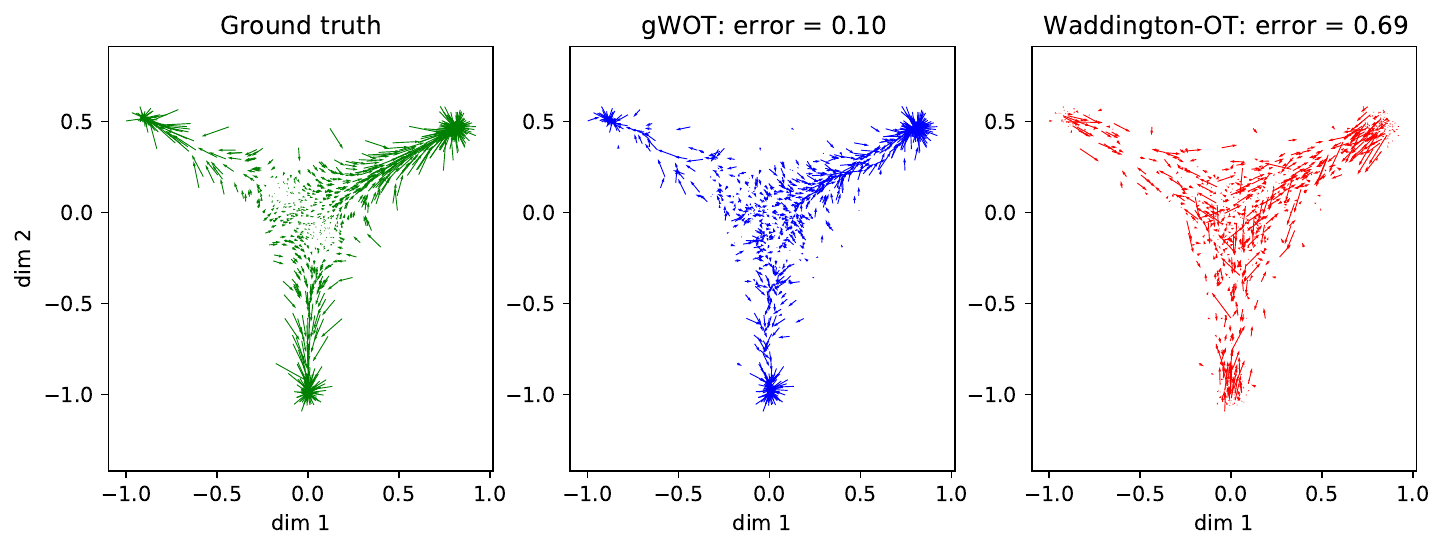}
    \caption{Comparison of drift fields estimated by gWOT (blue) and Waddington-OT (red) to ground truth. We show the mean cosine similarity (scaled between 0 and 1) for both cases.} 
    \label{fig:tristable_velocity}
\end{figure}

\paragraph*{Performance for varying $N$ and $T$}
We next investigate the behavior of gWOT more generally for varying regimes of $(N, T)$ in addition to varying $\lambda$ in the setting of the same simulation. We allow $N$ and $T$ to vary from 5-100 and 10-100 respectively, and reconstruction performance was summarized in terms of the expected $W_2$ error on marginals as described earlier. As in the case of paths, we computed a baseline $W_2$ error on marginals to be the the expected $W_2$ distance between ground truth marginals across all time-points and over 10 repeated samplings of 5000 particles.  
For each fixed value of $(N, T)$ we selected the optimal value $\lambda$ that minimized this error. In Figure \ref{fig:tristable_N_T}(a), we show the error as a function of $N$ for several fixed values of $T$. From this we see that both the sample marginals and gWOT marginal estimates improve with increasing $N$, but with the gWOT marginal estimates consistently achieving a significant reduction in the error relative to the raw samples. Also, increasing the number of time-points $T$ with $N$ fixed further improves the gWOT estimates at each individual marginal. In contrast, this has no effect on the error for samples as expected, since at each time-point the number of observed particles $N$ remains constant. In Figure \ref{fig:tristable_N_T}(b) we examine the behavior for fixed $N$ and varying $T$ and find that, as before, increasing the number of time-points $T$ leads to on average a reduction in the $W_2$ error for any single marginal. This confirms that information is being shared ``globally'' across time-points to improve estimates, hence the name of the method. Finally, we note that for $N$ and $T$ both large, gWOT achieves an error that is comparable to the baseline error. 

As a supporting result, in Figure \ref{fig:tristable_lamda_dep}(b) we show the optimal value of $\lambda$ found to minimize the mean $W_2$ error on marginals as a function of $(N, T)$. From this we observe that the optimal $\lambda$ has an inverse relationship with $N$ and $T$ which is as expected -- that is, with more data from observations the need for regularization diminishes. Although the approach used in practice for gWOT is not mathematically identical to the form \eqref{eq:opt_theory} used to prove the theoretical convergence result, our findings are evidence that gWOT is able to accurately reconstruct probabilistic trajectories especially in the regime where few particles are captured at many time-points (small $N$ and large $T$), and that gWOT improves in accuracy as the amount of data increases.

\begin{figure}[h]
    \centering
    \begin{subfigure}{0.495\linewidth}
        \centering\includegraphics[width = \linewidth]{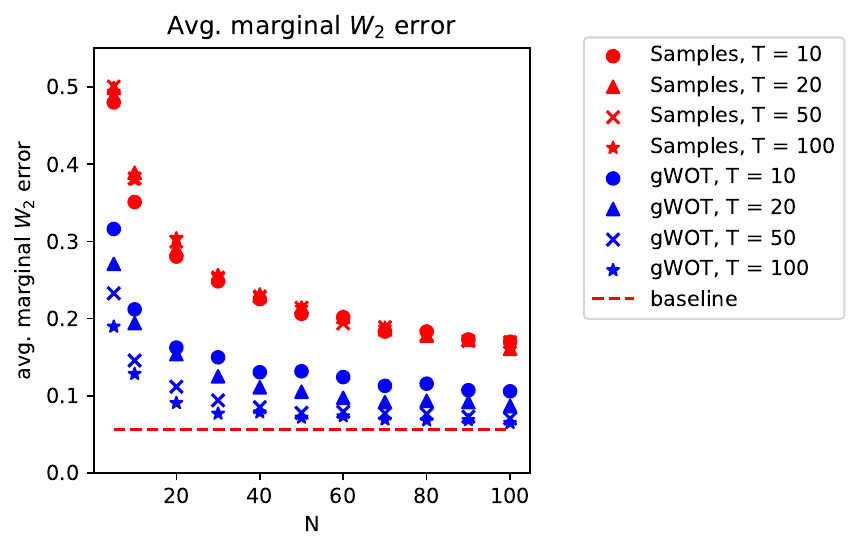}
        \caption{}
    \end{subfigure}
    \begin{subfigure}{0.495\linewidth}
        \centering\includegraphics[width = \linewidth]{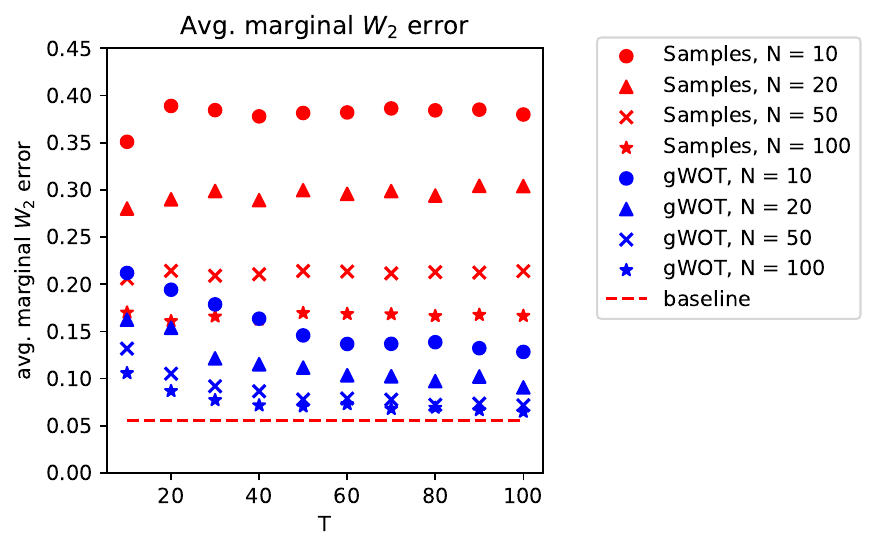}
        \caption{}
    \end{subfigure}
    \begin{subfigure}{0.35\linewidth}
        \centering\includegraphics[width = \linewidth]{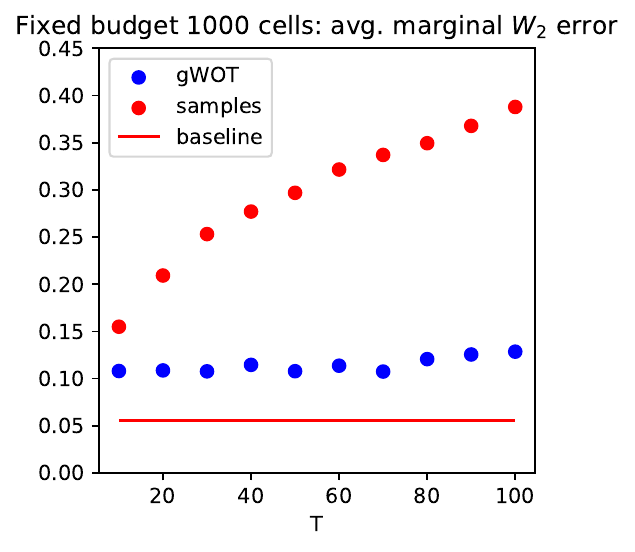}
        \caption{}
    \end{subfigure}
    \caption{(a) Expected marginal $W_2$ error as a function of $N$, for fixed $T$. Note that whilst both samples (red) and gWOT (blue) improve with increasing $N$, gWOT achieves a consistent improvement compared to the samples, especially for small $N$. Note that for $N, T$ both large, gWOT achieves close to baseline error. \\ 
    (b) Expected marginal $W_2$ error as a function of $T$, for fixed $N$. The decreasing trend for gWOT is evidence for cooperativity across time-points. \\
    (c) Expected marginal $W_2$ error as a function of $T$ with the total number of measured particles fixed to be 1000 (note that this is not exact and up to rounding error).}
    \label{fig:tristable_N_T}
\end{figure}

\paragraph*{Tradeoff behavior between $N$ and $T$ } 
Finally, we examine performance in the setting where the total number of measured particles $NT$ is fixed, and we vary the number of time-points at which to make observations. To investigate this in simulation, we generated data from simulations where the total number of sampled particles was fixed to be $NT = 1000$ (up to rounding error). This ``budget'' of observable particles was divided evenly into snapshots at $T$ time-points, where $T$ was varied from 10 to 100. At one extreme, few time-points are sampled but with many observations at each time (high spatial resolution; low temporal resolution), and at the other extreme many time-points are sampled, but very few measurements at each time (low spatial resolution; high temporal resolution). As previously, gWOT was applied with different values of the parameter $\lambda$ and for each $(N, T)$ we picked the value of $\lambda$ which minimized the average marginal $W_2$ error over 10 identical simulations. We summarize the performance of gWOT relative to the raw samples in Figure \ref{fig:tristable_N_T}(c). As expected, the error for samples increases with the number of time-points since fewer measurements are made for each marginal. On the other hand, there is very little variation in the error for the gWOT method, indicating that relatively little is lost by sacrificing marginal sampling for more time-points. 

\FloatBarrier

\subsection{Simulated data with branching} \label{sec:branching}

\paragraph*{Simulation setup and parameters} We now turn to consider processes with branching. As mentioned earlier, dealing with only normalized (probability) distributions introduces a fundamental issue of identifiability of branching and transport and necessitates the relaxation to general positive measures discussed in Section \ref{sec:growth}. As in Section \ref{sec:tristable}, we will take $\Xset = \mathbb{R}^4$, and consider the following bistable potential
\begin{align*}
    \Psi(x) &= \| x - x_0 \|^2 \| x - x_1 \|^2,
\end{align*}
where potential wells are located at $x_0 = 1.15(1, 1, 0, 0)$ and $x_1 = (-1, -1, 0, 0)$. Particles are initially distributed according to $X_0 \sim 0.1 \mathcal{N}(0, I_4)$ and evolve following a diffusion-drift process \eqref{eq:diffusion_drift_sde} driven by $\Psi$ with diffusivity $\sigma^2 = 0.25$, subject to branching and death at spatially dependent exponential rates 
\begin{align*}
    \beta(x, t) &= 5\left(\frac{\tanh(2\inner{x, e_1}) + 1}{2}\right),  \\
    \delta(x, t) &= 0,
\end{align*}
where we write $e_1$ for the basis vector $(1, 0, 0, 0)$. We show a schematic of the potential $\Psi$ and birth rate function $\beta$ in the first dimension of $\Xset$ in Figure \ref{fig:growth_examples}(a). To simulate this process in practice we again employ the Euler-Maruyama method \eqref{eq:euler_maruyama} as previously, except at each time step of length $\tau$ particles first undergo a displacement as per \eqref{eq:euler_maruyama} followed by division with probability $\beta(X_{t + \tau}, t + \tau) \tau$ or annihilation with probability $\delta(X_{t + \tau}, t + \tau) \tau$. Note that in our setting since $\delta = 0$, particles are only subject to division. 

The components of transport and branching in this problem result in two competing effects. First, particles are initialized isotropically about the origin and diffuse towards either of two wells $x_0, x_1$ with the well $x_0$ being further away. Thus in the absence of branching, more particles are expected to drift towards the well $x_1$. On the other hand, the spatial region near the well $x_0$ is subject to a much higher branching rate ($\beta(x_0, t) \approx 4.95$) than the well $x_1$ ($\beta(x_1, t) \approx 0.08$). The consequence of this is that overall, more particles will be observed near the well $x_0$. We illustrate this in Figure \ref{fig:growth_examples}(b) where we show the ground truth evolution of the processes with and without branching. 

\begin{figure}
    \begin{subfigure}{0.49\linewidth}
        \includegraphics[width = \linewidth]{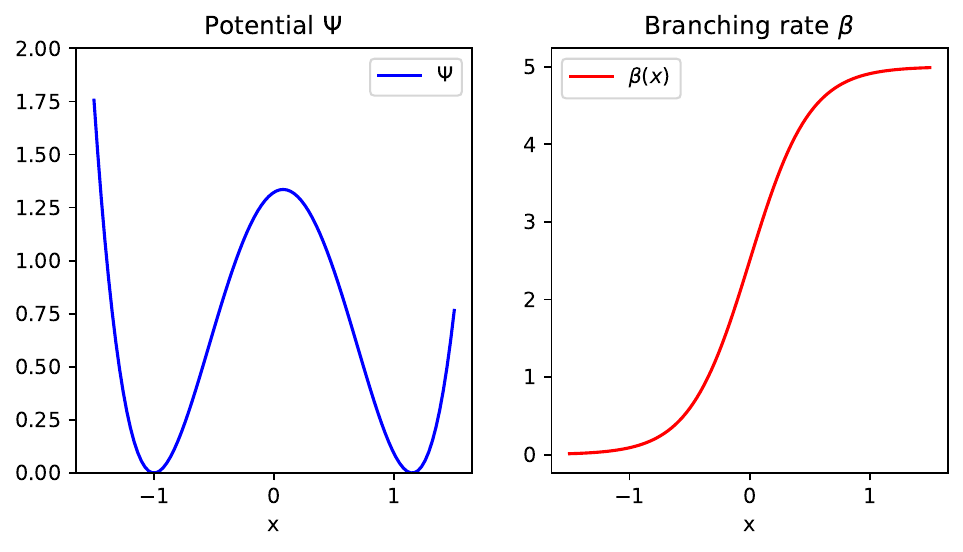}
        \caption{}
    \end{subfigure}
    \begin{subfigure}{0.49\linewidth}
        \includegraphics[width = \linewidth]{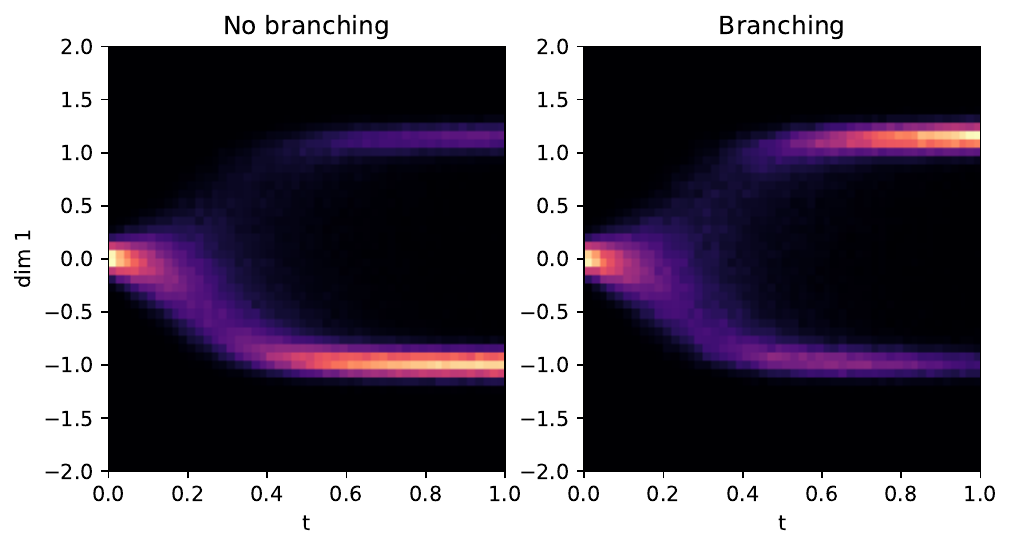}
        \caption{}
    \end{subfigure}
    \caption{(a) Potential $\Psi$ and branching rate $\beta$ as a function of the first dimension in $\Xset$. (b) Comparison of process without branching and with branching effects.}
    \label{fig:growth_examples}
\end{figure}

As an initial investigation of the ability of gWOT to account for branching, we sampled $T = 50$ time-points $\{ \rhohat_{t_i} \}_{i = 1}^{T}$ at evenly spaced intervals in $0 \leq t \leq 0.75$, each with $N = 20$ particles sampled from the process with branching using the aforementioned discretization. As input branching rate estimates to gWOT, we computed here the true branching rates given estimated birth and death rates $\beta_\mathrm{est}, \delta_\mathrm{est}$ in the form of a matrix $g_{ij}$ with entries 
\begin{align*}
    g_{ij} = \exp(\Delta t_i (\beta_\mathrm{est}(x_{j}) - \delta_\mathrm{est}(x_{j}))) \approx 1 + \Delta t_i (\beta_\mathrm{est}(x_{j}) - \delta_\mathrm{est}(x_{j})) + \mathcal{O}(\Delta t_i^2)
\end{align*}
which describe the branching factor at each spatial location $x_j \in \overline{\Xset}$ at time instant $t_i$. As a very rough estimate of the total mass at time $i$, we computed the average branching factor over all spatial locations $\overline{g}_k = {|\overline{\Xset}|}^{-1}\sum_{j} g_{kj}$ at each time-point $t_k$ and accumulated it geometrically up to time $t_i$, i.e. $m_i = \prod_{k < i} \overline{g}_k$. We used $\kappa = 5$ and chose $\lambda = 2.154 \times 10^{-3}$, the optimal value found earlier for the simulation with $N = 20, T = 50$ in Section \ref{sec:tristable}. We note that although the simulations are different, we find that roughly this value of $\lambda$ works well in a wide range of scenarios. Default values were used for all other parameters, i.e. the same as those used in Section \ref{sec:tristable}. 

As a priori estimates for the branching rates, we use 
\begin{align*}
    \beta_\mathrm{est}(x) &= \beta_0 \left( \frac{\tanh(2 \inner{x, e_1}) + 1}{2} \right), \\
    \delta_\mathrm{est}(x) &= 0,
\end{align*}
where the value $\beta_0$ essentially controls how quickly particles near the branch at $x_0$ grow, relative to particles near the branch at $x_1$. We will consider a scenario where we know the true branching rate exactly, i.e. we take $\beta_0 = 5$, and we will compare to estimates output by gWOT without a priori estimates for branching rates, i.e. we take $\beta_0 = 0$ and therefore $g_{ij} = 1$. In addition, we consider enforcing both an exact branching constraint ($\kappa = +\infty$) and soft branching constraint ($\kappa = 5$). To evaluate the outputs, we choose to use again the $W_2$ distance on paths. This will be particularly useful since the proportion of particles located near the well $x_0$ will increase with time due to branching, so a failure to appropriately account for branching should result in spurious transfer of mass from one branch to the other. 

\begin{figure}[h]
    \begin{subfigure}{0.75\linewidth}
        \centering\includegraphics[width = \linewidth]{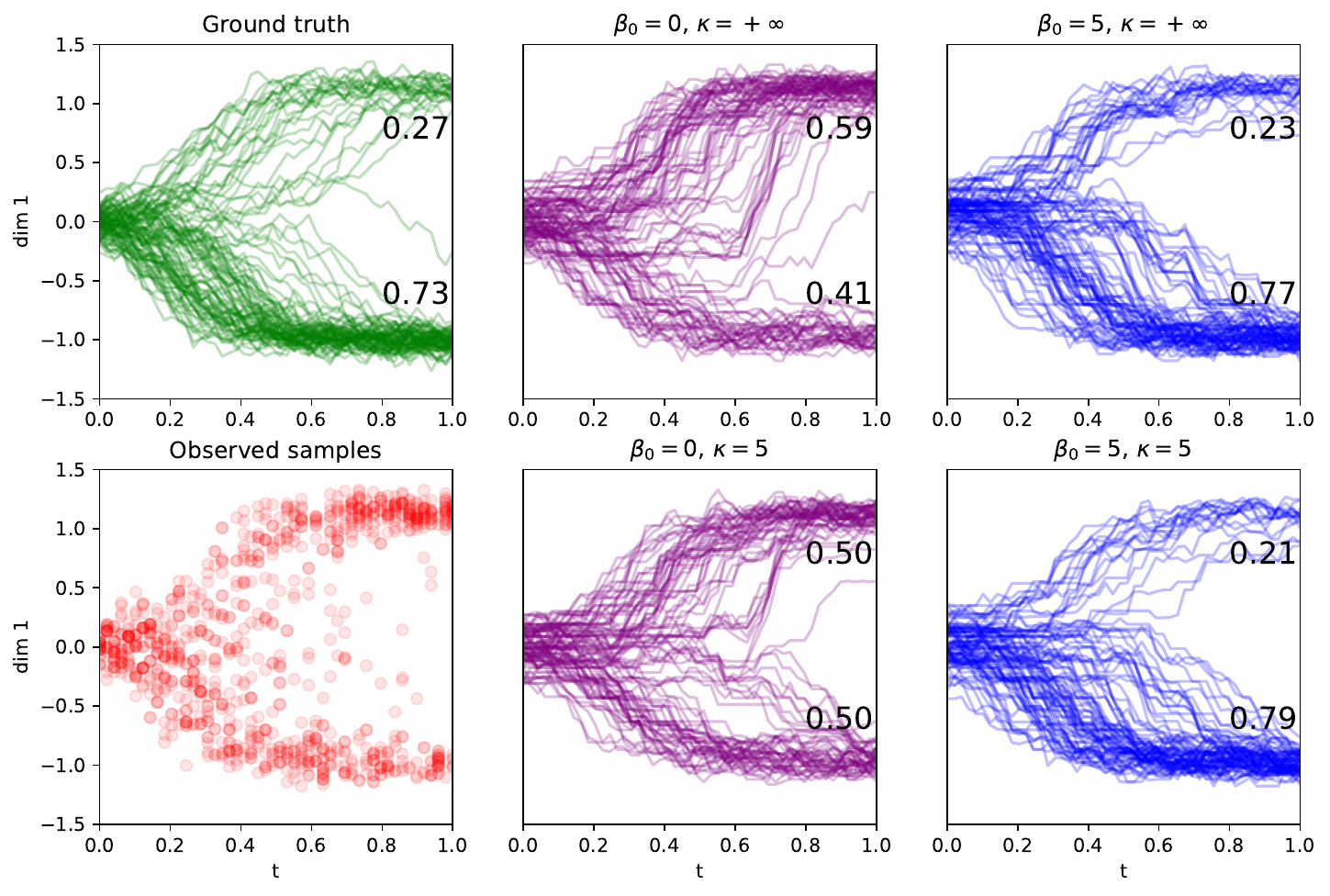}
        \caption{}
    \end{subfigure}
    \begin{subfigure}{0.3\linewidth}
        \centering\includegraphics[width = \linewidth]{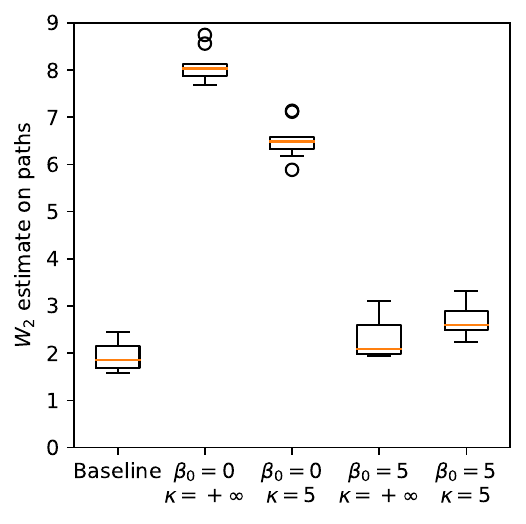}
        \caption{}
    \end{subfigure}
    \caption{(a) Sample paths drawn from ground truth without branching (green), gWOT output with no a priori branching rate ($\beta_0 = 0$, purple), and gWOT output with correct branching rates ($\beta_0 = 5$). For the reconstructions, we annotate the fraction of paths that terminate closer to the left or right wells, based off the first coordinate at the final time for 1000 sampled paths.\\ 
    (b) $W_2$ estimates on sample paths for gWOT with ($\beta_0 = 5$) and without ($\beta_0 = 0$) a priori branching rates.}
    \label{fig:growth_sample_paths}
\end{figure}

\paragraph*{Results: importance of accounting for branching}
Recall that our objective for inference in the presence of branching is to recover information about the underlying displacement law -- that is, given prior knowledge on the branching rate we seek to estimate the law on paths that results from only the diffusion-drift component \eqref{eq:diffusion_drift_sde_grad}, with branching switched off. Therefore, we take the ground truth process to be the process we sample from originally, but with all branching switched off. In Figure \ref{fig:growth_sample_paths}(a), we display collections of sample paths drawn from this ground truth, compared to sample paths drawn from the gWOT output with ($\beta_0 = 5$) and without ($\beta_0 = 0$) a priori information on the branching rate. A key distinction here is the proportion of paths that end up near the well at $x_1$ compared to the well at $x_0$ -- in the case $\beta_0 = 0, \kappa = +\infty$ we note the presence of a collection of artifactual paths that transfer additional mass to the faster growing branch near $x_0$ in order to explain the increase in mass in that branch due to branching. In contrast, these paths are not present in the gWOT output when the correct branching rate ($\beta_0 = 5$) is specified. 

To quantitatively compare these laws on paths to the ground truth, we computed estimates of the $W_2$ distance on paths for each of the cases $\beta_0 = 0, 5$, $\kappa = 5, +\infty$. Figure \ref{fig:growth_sample_paths}(b) summarizes these results for empirical $W_2$ distances between samples of $10^3$ paths, repeated 10 times. From this it is clear that accurate estimates of the branching rate are essential to obtaining an accurate reconstruction, with $\beta_0 = 0$ resulting in significantly worse performance compared to $\beta_0 = 5$, which achieves near-baseline performance. Interestingly, we note that with $\beta_0 = 0$, performance is improved by allowing for deviation from the specified branching behavior by using a soft branching constraint.  

\paragraph*{Misspecification of branching} 
We now turn to further investigation of the effect of misspecification of the branching rate (namely, underestimation or overestimation of $\beta_0$) and choice of the branching constraint penalization parameter $\kappa$ on the quality of the estimated law on paths, as quantified by the estimated $W_2$ distance on paths. For the same generative process as earlier, we consider applying gWOT with $0 \leq \beta_0 \leq 10$ and $\kappa$ varying from 1 to 25. For each pair of values $(\beta_0, \kappa)$, we compute the empirical $W_2$ distance to the ground truth. We summarize these results over values of $(\beta_0, \kappa)$ in Figure \ref{fig:growth_g0_kappa_dependence}(a). As is reasonable to expect, the choice of $\beta_0$ has the largest effect on the quality of the reconstruction, with the best results when $\beta_0 = 5$ corresponding to a precise knowledge of the true branching rate. We show also the performance for varying $\beta_0$ in Figure \ref{fig:growth_g0_kappa_dependence}(b) by displaying for each choice of $\beta_0$ the empirical $W_2$ on paths for the corresponding optimal choice of $\kappa$. From this, we observe that when the branching rate estimate is specified correctly ($\beta_0 = 5$), gWOT with both exact and soft branching constraints perform similarly, but when $\beta_0$ is misspecified we observe that the soft constraint always results in better performance than the exact constraint. Finally, in Figure \ref{fig:growth_g0_kappa_dependence}(c) we show the empirical $W_2$ error as a function of $\kappa$ and $\kappa = +\infty$ for various fixed $\beta_0$. From this we observe that picking larger values of $\kappa$ to enforce the branching constraint more strictly results in performance closer to the case of the hard branching constraint. 

\begin{figure}
    \centering
    \begin{subfigure}{0.375\linewidth}
        \centering\includegraphics[width = \linewidth]{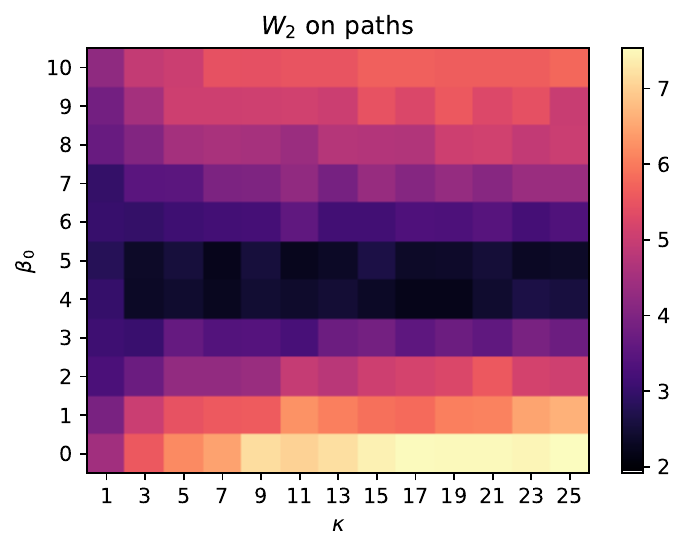}
        \caption{}
    \end{subfigure}
    \begin{subfigure}{0.375\linewidth}
        \centering\includegraphics[width = \linewidth]{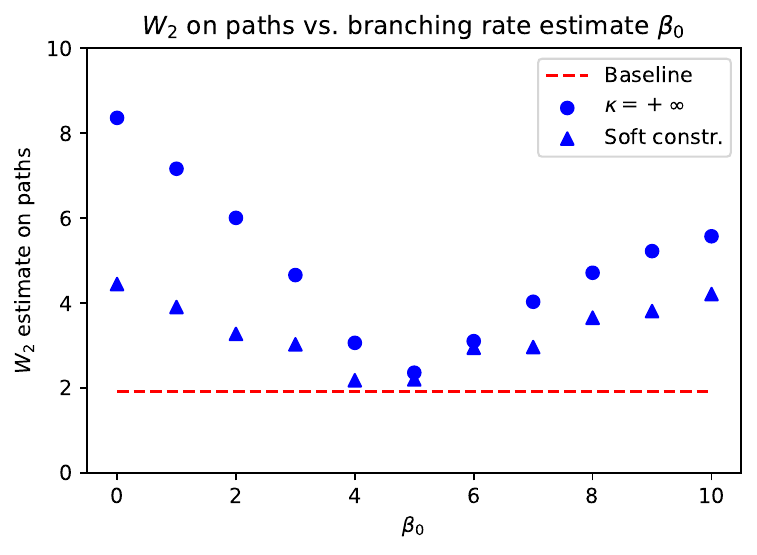}
        \caption{}
    \end{subfigure}
    \begin{subfigure}{0.75\linewidth}
        \centering\includegraphics[width = \linewidth]{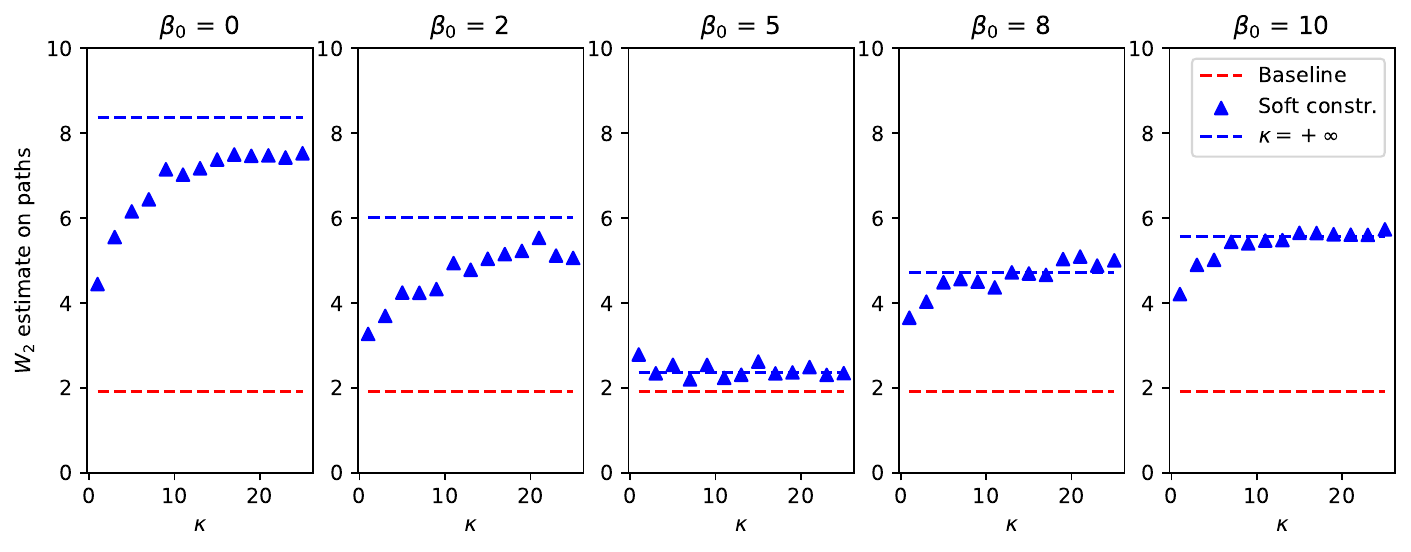}
        \caption{}
    \end{subfigure}
    \caption{(a) Empirical $W_2$ on paths for gWOT with different values of $\beta_0$ and $\kappa$ (b) Summary of gWOT performance for varying values of estimated branching rate $\beta_0$, for the exact branching constraint ($\kappa = +\infty$) and soft branching constraint ($\kappa = 5$) (c) Summary of gWOT performance for varying values of branching penalization $\kappa$ for soft branching constraint. For reference, we show also the performance for the exact branching constraint ($\kappa = +\infty$).}
    \label{fig:growth_g0_kappa_dependence}
\end{figure}

\bigskip

\subsection{Reprogramming scRNA-seq time series} \label{sec:reprogramming}

\paragraph*{Overview}
    As a proof-of-principle application of gWOT to real-world datasets, we consider the stem-cell reprogramming time series dataset published by Schiebinger et al. \cite{schiebinger2019}, comprised of single-cell transcriptome profiles for a series of time-points sampled at 12 hour intervals from a growing population of cells over an 18-day reprogramming experiment. In Section \ref{sec:tristable} we remarked that Waddington-OT is accurate when each time-point consists of a large number of observations and is a good approximation of the population. Such a scenario is not particularly interesting as the resulting performance of gWOT and Waddington-OT would be very similar. Instead, we consider subsampling each time-point in the full dataset to 100 cells per time-point in order to make a comparison between the methods in the regime of limited sampling at each time-point. We consider for this example the subset of 14 time-points between days 6 and 11.5. 

    As an input to gWOT, we use a 10-dimensional PCA projection of the cellular gene expression profiles. Growth of the cellular population plays a major role in the stem-cell reprogramming process, and so we employ the branching rates estimated in \cite{schiebinger2019} for each cell from cell-cycle gene signatures. For the sake of clarity, we defer the details of our preprocessing steps and choice of model parameters in Section \ref{sec:reprogramming_details}. We repeated all computations over 25 independent subsamplings of the full dataset. We show the output marginals at several selected time-points for one of these subsamplings in the input PCA coordinates in Figure \ref{fig:reprog_marginals}. For reference, in Figure \ref{fig:reprog_marginals_fle} we show the same marginals in the force-layout embedding (FLE) coordinates computed in \cite{schiebinger2019}.

\begin{figure}[h]
    \centering\includegraphics[width = 0.75\linewidth]{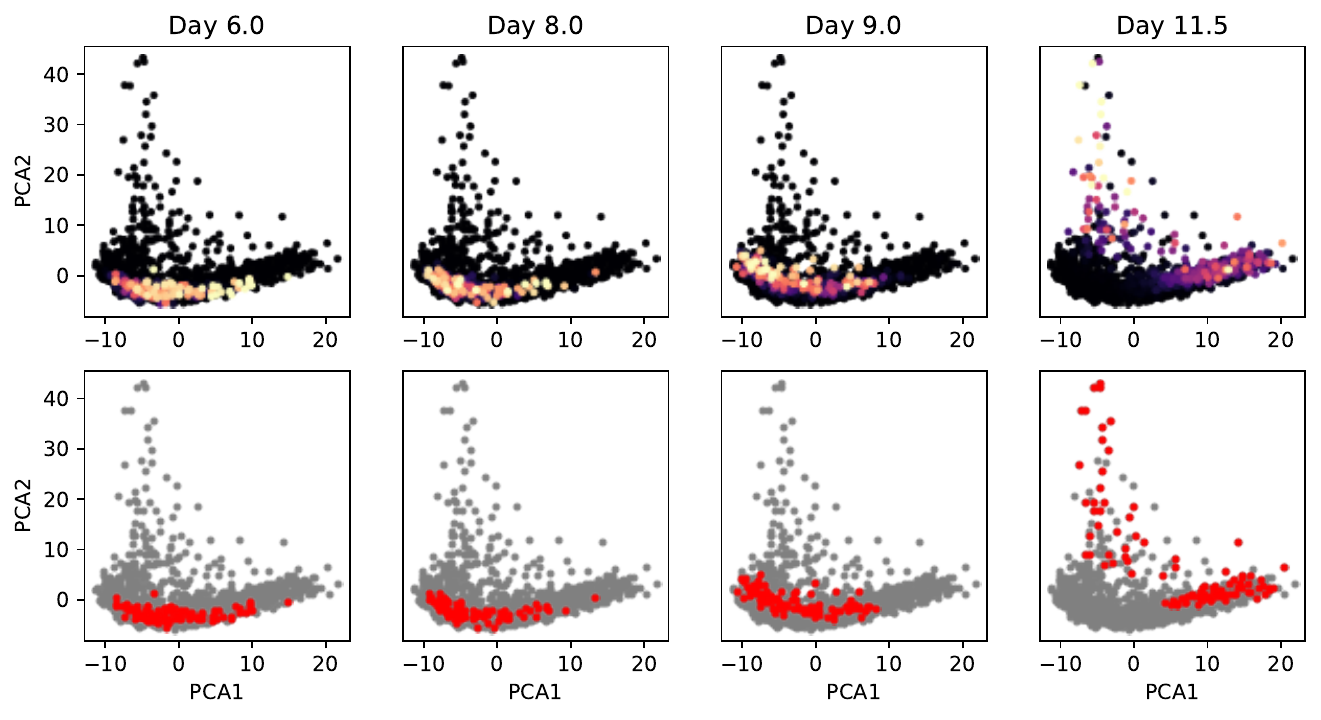}
    \caption{Inferred and sampled marginals at selected time-points for subsampled reprogramming data in the first two PCA coordinates. }
    \label{fig:reprog_marginals}
\end{figure}

\paragraph*{Results: marginal reconstruction} 
Since we are running gWOT on subsamplings of each time-point, we expect that gWOT should be able to produce improved estimates of the input marginals as part of the coupling estimation process. At each time-point, we may treat the full dataset as a proxy for the ``ground truth'', and use it as a reference for evaluating model performance. Thus, as a first assessment of the performance we compute the 2-Wasserstein distance between the full (non-subsampled) time-point and reconstructed marginal at each time. We reason that successful marginal reconstruction should reduce the noise introduced from subsampling. This is summarized as a ratio of 2-Wasserstein distances $d_{W_2, \text{reconstructed}}/d_{W_2, \text{sample}}$ in Figure \ref{fig:reprogramming_ratios}(a). From this we observe a visible but moderate improvement for all time-points we considered. We make particular note that this dataset was originally generated with the application of Waddington-OT in mind, and thus the relatively large (approx. 12 hour) temporal gap between time-points means that the amount of useful information that can be ``shared'' across times is quite limited. 

\paragraph*{Results: hold-one-out validation}
    We then used hold-one-out validation (as done in \cite{schiebinger2019}) to investigate the predictive value of the estimated couplings output by gWOT, taking Waddington-OT as a baseline. Excluding the first and last time-points, we held out successive time-points and applied gWOT to the remaining time-points following the procedure detailed in Appendix \ref{sec:reprogramming_details}. Using the obtained couplings, we approximated the held-out time-point by 5000 sampled points using the geodesic interpolation scheme as described in \cite[Supplementary materials]{schiebinger2019}. We reason that improved estimates of held-out time-points should be indicative of improved coupling estimates. As with the marginals, we compute the 2-Wasserstein distance from the estimate to the full (non-subsampled) time-point for each held-out time. We display the ratio $d_{W_2, \text{gWOT}}/d_{W_2, \text{WOT}}$ in Figure \ref{fig:reprogramming_ratios}(b), from which we observe that, with exception of a single time-point at day 8, our method performs roughly as well or better than Waddington-OT. 

\begin{figure}[h]
    \centering
    \begin{subfigure}{0.75\linewidth}
        \includegraphics[width = \linewidth]{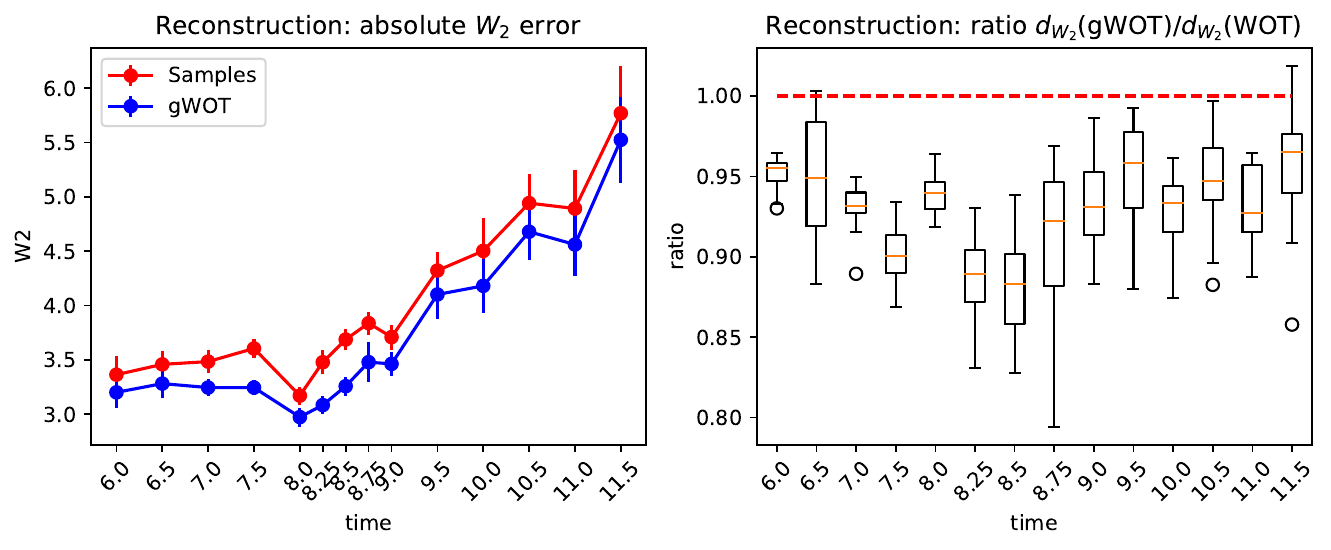}
        \caption{}
    \end{subfigure} \\ 
    \begin{subfigure}{0.75\linewidth}
        \includegraphics[width = \linewidth]{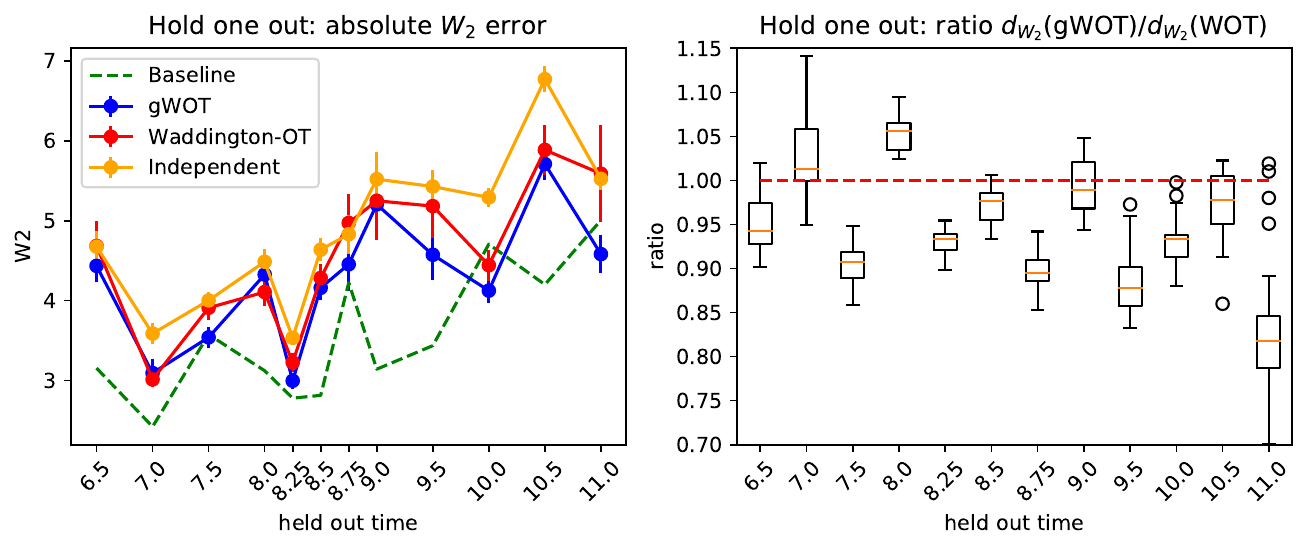}
        \caption{}
    \end{subfigure}
    \caption{(a) Summary of marginal reconstruction performance for subsampled reprogramming data in terms of $W_2$ distance between sampled/reconstructed marginals to the full dataset for each time. (b) Summary of hold-one-out interpolation performance for subsampled reprogramming data in terms of $W_2$ distance between interpolated marginals and the full held-out time-point. We show summarized results over 25 independent repeats.}
    \label{fig:reprogramming_ratios}
\end{figure}

\paragraph*{Results: correcting sampling bias} 

We next sought to test whether gWOT could help correct \lq batch effects\rq, which can pose notorious problems in single-cell RNA-sequencing datasets~\cite{batch_effects}.
Using the cell-type annotations provided by \cite{schiebinger2019}, we grouped cells by type into 6 classes: ``IPS'', ``Stromal'', ``Neural'', ``Epithelial'', ``Trophoblast'', and ``None''. We introduced artificial sampling bias into the data by perturbing sampled cell-type proportions, with the expectation the spline output by gWOT should counteract random fluctuations in the relative sizes of the sampled cell-type clusters. At each timepoint $t_i$, perturbed proportions $\hat{p}^{(i)}$ were sampled from a Dirichlet distribution $\hat{p}^{(i)} \sim \mathrm{Dir}(5p^{(i)} + 10^{-3})$, where we write $p^{(i)}$ to be the true (i.e. calculated from the full dataset at $t_i$) proportions. Subsequently for each timepoint, 250 cells were sampled following $\hat{p}^{(i)}$. Since multiple cell-types emerge only midway through the time-course, we considered days 8-14 for this analysis. The data were preprocessed as previously, i.e. as described in Section \ref{sec:reprogramming_details}. 

We used $\lambda = 2.5\times 10^{-3}$ and considered 25 independent samplings for each regularization level. From the marginals output by gWOT, we computed at each time $t_i$ the estimated cell-type proportions as a sum of the marginal weights $\R_{t_i}$ over cell-types. That is, for timepoint $t_i$ and cell-type $j$, 
\begin{align*}
    p_\mathrm{gWOT}^{(i)}(j) &= \frac{\sum_{\mathrm{type}(x_k) = j} \R_{t_i}(x_k)}{\sum_{k} \R_{t_i}(x_k)}.
\end{align*}
As a null model for comparison, we computed a static, null proportion $p_\mathrm{null}$ to be the mean of $\hat{\rho}^{(i)}$ across all timepoints. To measure the discrepancy between the true and estimated cell-type proportions, we opt to use the $\KL$-divergence of the true proportions with respect to the estimated proportions:
\begin{align*}
    q \mapsto \KL(p^{(i)} | q).
\end{align*}

\begin{figure}[h]
    \centering
    \includegraphics[width = 0.75\linewidth]{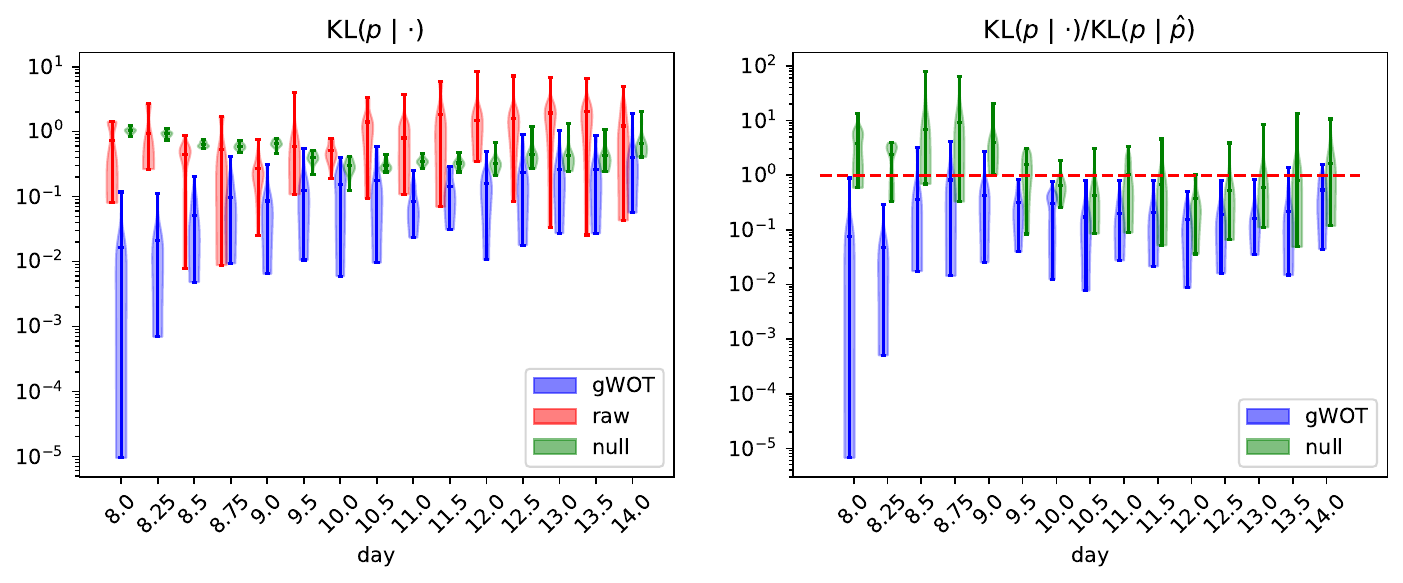}
    \caption{Summary of discrepancies between true and estimated cell-type proportions. We show $\KL(p^{(i)} | q)$ in absolute terms (left), as well as compute the ratio $\KL(p^{(i)} | q)/\KL(p^{(i)} | \hat{p})$ to quantify the improvement relative to the raw, perturbed proportions (right).}
    \label{fig:reprogramming_prop}
\end{figure}

We display the results summarized over 25 independent samplings in Figure \ref{fig:reprogramming_prop}, from which we observe that gWOT is effective in achieving a reduction in the sampling bias, measured both in terms of a lower absolute $\KL$-divergence discrepancy as well as consistent reduction of the $\KL$-divergence relative to the raw proportions. This verifies the intuition that regression with gWOT is able to 'smooth over' rapid fluctuations in noisy data. 

Finally, as an additional check of performance for varying $\lambda$, we show in Figure \ref{fig:reprogramming_prop_lambda} the summarized performance of gWOT for varying regularisation levels $\lambda \in [2.5 \times 10^{-3}, 1.28]$. From this we see that the performance worsens as $\lambda$ is increased, indicating that gWOT is performing excessive smoothing.

\section{Discussion}\label{sec:discussion}

In this paper we have developed the beginning of a mathematical theory of trajectory inference for single cell datasets. 
We have stated the trajectory inference problem in terms of reconstructing the law of a stochastic process from its temporal marginals, and have shown that the existing method Waddington-OT \cite{schiebinger2019} falls within this framework. Because this problem is not well posed without additional assumptions on the process, we have restricted to the case of a potential driven process in which cells follow a stochastic differential equation with the drift being the gradient of a potential, which may vary in time. Under this assumption, we showed that the ground truth can be characterized as the solution of a convex variational problem. This leads to a convex optimization-based approach to recover trajectories from empirical estimates of temporal marginals. As the number of distinct temporal snapshots grows, this approach is guaranteed to recover the correct trajectories, even if each individual time-point contains only a few sampled cells. 
We devised an efficient algorithm to solve the problem in practice, and we test our approach on both synthetic and real data. We refer to this method as gWOT, for ``global Waddington-OT'', because it shares information across time-points in a global optimization problem. 

Cellular proliferation and death is challenging to incorporate directly in the framework of reconstructing the law of a stochastic process. We overcome this issue in gWOT by alternating ``branching'' and ``transport'' phases which is reminiscent of splitting schemes of numerical analysis. We demonstrate that our method is able to produce improved estimates of the ground truth process in the setting of simulated potential-driven diffusion-drift processes with and without branching.

\subsection{Prospects for future work}
We hope this theoretical framework lays a rigorous foundation for further development of theory and methods and also helps guide experimental design. 
We envision several directions for future work: 

\paragraph*{Theoretical directions}

On the theoretical front, it remains open to build a satisfying theory in the case of branching. A building block in this direction is the work in progress of the first author with Aymeric Baradat \cite{AymericHugo} about entropy minimization for laws of branching processes. Second, one would also hope to establish a quantitative rate of convergence, building on our asymptotic consistency results. 
Intuitively the rate of convergence should depend on some notion of the ``curvature'' of the developmental curve shown in Figure~\ref{fig:curve_perspective}. From sharp bounds, one might deduce optimal strategies for experimental design. 
For more on this, see below.

Third, our perspective on developmental curves might be extended to consider families of curves. 
For example, one could consider the curve disease progression (or wound healing) in individuals of age $a$. 
By collecting data over a variety of ages, one could recover a separate curve for each age. However, one might use additional prior information that curves from similar ages behave similarly to better reconstruct the entire family of curves. 
Recent related work has developed methodology to recover higher dimensional manifolds, for modeling single cell datasets in cancer patients~\cite{chen2020uncovering} or in COVID-19 patients~\cite{Kuchroo2020}.

\paragraph*{Methods and algorithms}
Future algorithmic work might close the gap between our theoretical results, where we analyze an infinite dimensional convex problem, and our practical implementation, which involves a heuristic discretization over space. 
Ideally, one would search for alternative numerical methods for entropy minimization of laws of stochastic processes. A promising way to approach this might be to follow the conditional gradient approach of~\cite{bredies2020generalized}. Second, our methodology could be extended to incorporate additional information such as lineage tracing, as in \cite{forrow2020}. 
This could increase the accuracy of reconstructed trajectories, especially for difficult settings like ``convergent trajectories''~\cite{packer2019lineage}. Third, we could develop uncertainty quantification for our method. To do this, we would need to adopt a (non parametric) Bayesian perspective. The main difficulty is then to find a prior on laws on the space of paths (or on potential functions) which are quite large spaces.

\paragraph*{Experimental design}
Finally, our theoretical framework motivates the collection of high-density time-courses, with a large number of time-points and relatively few sampled cells per time-point. 
Intuitively, one can view each time-point as a data-point along the curve (Figure~\ref{fig:curve_perspective}); the number of cells sampled determines the noise-level of the time-point. This raises several natural questions: {\em For a fixed budget of $n$ total cells, how should one choose the number of time-points? And how should these time-points be selected?}
Intuitively, finer time-resolution should be collected over periods of sharper ``curvature'' (i.e. periods of rapidly changing development). Note that this is different from periods of rapid change, which could still be described by a geodesic, without significant curvature.  In order to better answer these questions, one would need to establish a quantitative rate of convergence, as mentioned above.
Our methodology enables the analysis of these high-density time-course datasets.

\section*{Code availability}\label{sec:code_avail}

Global Waddington-OT is implemented in the open-source software package gWOT available at \url{https://github.com/zsteve/gWOT}.

\section*{Acknowledgments}

This work was supported in part by a UBC Affiliated Fellowship to S.Z., an Exploration Grant to G.S. and Y.H.K. from the New Frontiers in Research Fund (NFRF), a Career Award at the Scientific Interface from the Burroughs Wellcome Fund to G.S., and NSERC Discovery Grants to Y.H.K. and G.S. Part of this work was done while H.L. was supported by the Pacific Institute for the Mathematical Sciences (PIMS) through a PIMS postdoctoral fellowship. 

\noindent The authors wish to thank Aymeric Baradat and Jonathan Niles-Weed for stimulating discussions, as well as Igor Prünster and Giacomo Zanella for valuable comments on a earlier draft of the present work.

\newpage

\bibliographystyle{plain}
\bibliography{references}

\begin{thebibliography}{10}

\bibitem{HCA}
The human cell atlas.
\newblock {\em eLife}, 2017.

\bibitem{ambrosio2008gradient}
Luigi Ambrosio, Nicola Gigli, and Giuseppe Savar{\'e}.
\newblock {\em Gradient flows: in metric spaces and in the space of probability
  measures}.
\newblock Springer Science \& Business Media, 2008.

\bibitem{arnaudon2017entropic}
Marc Arnaudon, Ana~Bela Cruzeiro, Christian L{\'e}onard, and Jean-Claude
  Zambrini.
\newblock An entropic interpolation problem for incompressible viscid fluids.
\newblock {\em arXiv preprint arXiv:1704.02126}, 2017.

\bibitem{bakry2013analysis}
Dominique Bakry, Ivan Gentil, and Michel Ledoux.
\newblock {\em Analysis and geometry of Markov diffusion operators}, volume
  348.
\newblock Springer Science \& Business Media, 2013.

\bibitem{AymericHugo}
Aymeric Baradat and Hugo Lavenant.
\newblock {Regularized unbalanced optimal transport as entropy minimization
  with respect to branching Brownian motion}.
\newblock {\em arXiv preprint arXiv:2111.01666}, 2021.

\bibitem{baradat2020minimizing}
Aymeric Baradat and Christian L{\'e}onard.
\newblock Minimizing relative entropy of path measures under marginal
  constraints.
\newblock {\em arXiv preprint arXiv:2001.10920}, 2020.

\bibitem{baradat2020small}
Aymeric Baradat and L{\'e}onard Monsaingeon.
\newblock Small noise limit and convexity for generalized incompressible flows,
  schr{\"o}dinger problems, and optimal transport.
\newblock {\em Archive for Rational Mechanics and Analysis}, 235(2):1357--1403,
  2020.

\bibitem{HiC}
J.~M. Belton, J.~H. McCord, R. P.and~Gibcus, Y.~Naumova, N.and~Zhan, and
  J.~Dekker.
\newblock Hi-c: a comprehensive technique to capture the conformation of
  genomes.
\newblock {\em Methods.}, 2012.

\bibitem{benamou2003}
Jean-David Benamou.
\newblock Numerical resolution of an “unbalanced” mass transport problem.
\newblock {\em ESAIM: Mathematical Modelling and Numerical
  Analysis-Mod{\'e}lisation Math{\'e}matique et Analyse Num{\'e}rique},
  37(5):851--868, 2003.

\bibitem{benamou2019}
Jean-David Benamou, Guillaume Carlier, Simone Di~Marino, and Luca Nenna.
\newblock An entropy minimization approach to second-order variational
  mean-field games.
\newblock {\em Mathematical Models and Methods in Applied Sciences},
  29(08):1553--1583, 2019.

\bibitem{benamou2019second}
Jean-David Benamou, Thomas~O Gallou{\"e}t, and Fran{\c{c}}ois-Xavier Vialard.
\newblock Second-order models for optimal transport and cubic splines on the
  wasserstein space.
\newblock {\em Foundations of Computational Mathematics}, 19(5):1113--1143,
  2019.

\bibitem{beskos2006exact}
Alexandros Beskos, Omiros Papaspiliopoulos, Gareth~O Roberts, and Paul
  Fearnhead.
\newblock Exact and computationally efficient likelihood-based estimation for
  discretely observed diffusion processes (with discussion).
\newblock {\em Journal of the Royal Statistical Society: Series B (Statistical
  Methodology)}, 68(3):333--382, 2006.

\bibitem{bishwal2007parameter}
Jaya~PN Bishwal.
\newblock {\em Parameter estimation in stochastic differential equations}.
\newblock Springer, 2007.

\bibitem{bredies2019extremal}
Kristian Bredies, Marcello Carioni, Silvio Fanzon, and Francisco Romero.
\newblock On the extremal points of the ball of the benamou-brenier energy.
\newblock {\em arXiv preprint arXiv:1907.11589}, 2019.

\bibitem{bredies2020generalized}
Kristian Bredies, Marcello Carioni, Silvio Fanzon, and Francisco Romero.
\newblock A generalized conditional gradient method for dynamic inverse
  problems with optimal transport regularization.
\newblock {\em arXiv preprint arXiv:2012.11706}, 2020.

\bibitem{bredies2020optimal}
Kristian Bredies and Silvio Fanzon.
\newblock {An optimal transport approach for solving dynamic inverse problems
  in spaces of measures}.
\newblock {\em ESAIM: Mathematical Modelling and Numerical Analysis},
  54(6):2351--2382, 2020.

\bibitem{brezis2010functional}
Haim Brezis.
\newblock {\em Functional analysis, Sobolev spaces and partial differential
  equations}.
\newblock Springer Science \& Business Media, 2010.

\bibitem{cattiaux1994minimization}
Patrick Cattiaux and Christian L{\'e}onard.
\newblock Minimization of the kullback information of diffusion processes.
\newblock In {\em Annales de l'IHP Probabilit{\'e}s et statistiques},
  volume~30, pages 83--132, 1994.

\bibitem{charlier2020}
Benjamin Charlier, Jean Feydy, Joan~Alexis Glaunais, Francois-David Collin, and
  Ghislain Durif.
\newblock Kernel operations on the gpu, with autodiff, without memory
  overflows.
\newblock {\em Journal of Machine Learning Research}, 22(74):1--6, 2021.

\bibitem{chen1997estimates}
Mu-Fa Chen and Feng-Yu Wang.
\newblock Estimates of logarithmic sobolev constant: An improvement of
  bakry--emery criterion.
\newblock {\em journal of functional analysis}, 144(2):287--300, 1997.

\bibitem{chen2020uncovering}
S~William Chen, Nevena Zivanovic, van~David Dijk, Guy Wolf, Bernd Bodenmiller,
  and Smita Krishnaswamy.
\newblock Uncovering axes of variation among single-cell cancer specimens.
\newblock {\em Nature methods}, pages 1--9, 2020.

\bibitem{chen2020solving}
Xiaoli Chen, Liu Yang, Jinqiao Duan, and George~Em Karniadakis.
\newblock Solving inverse stochastic problems from discrete particle
  observations using the fokker-planck equation and physics-informed neural
  networks.
\newblock {\em arXiv preprint arXiv:2008.10653}, 2020.

\bibitem{chen2018measure}
Yongxin Chen, Giovanni Conforti, and Tryphon~T Georgiou.
\newblock Measure-valued spline curves: An optimal transport viewpoint.
\newblock {\em SIAM Journal on Mathematical Analysis}, 50(6):5947--5968, 2018.

\bibitem{chewi2020fast}
Sinho Chewi, Julien Clancy, Thibaut~Le Gouic, Philippe Rigollet, George
  Stepaniants, and Austin~J Stromme.
\newblock Fast and smooth interpolation on wasserstein space.
\newblock {\em arXiv preprint arXiv:2010.12101}, 2020.

\bibitem{chizat2018scaling}
Lenaic Chizat, Gabriel Peyr{\'e}, Bernhard Schmitzer, and Fran{\c{c}}ois-Xavier
  Vialard.
\newblock Scaling algorithms for unbalanced optimal transport problems.
\newblock {\em Mathematics of Computation}, 87(314):2563--2609, 2018.

\bibitem{chizat2018}
L{\'e}na{\"\i}c Chizat, Gabriel Peyr{\'e}, Bernhard Schmitzer, and
  Fran{\c{c}}ois-Xavier Vialard.
\newblock Unbalanced optimal transport: Dynamic and kantorovich formulations.
\newblock {\em Journal of Functional Analysis}, 274(11):3090--3123, 2018.

\bibitem{csiszar1975divergence}
Imre Csisz{\'a}r.
\newblock I-divergence geometry of probability distributions and minimization
  problems.
\newblock {\em The annals of probability}, pages 146--158, 1975.

\bibitem{cuturi2013}
Marco Cuturi.
\newblock Sinkhorn distances: Lightspeed computation of optimal transport.
\newblock In {\em Advances in neural information processing systems}, pages
  2292--2300, 2013.

\bibitem{cuturi2016smoothed}
Marco Cuturi and Gabriel Peyr{\'e}.
\newblock A smoothed dual approach for variational wasserstein problems.
\newblock {\em SIAM Journal on Imaging Sciences}, 9(1):320--343, 2016.

\bibitem{erbar2010heat}
Matthias Erbar.
\newblock {The heat equation on manifolds as a gradient flow in the Wasserstein
  space}.
\newblock In {\em Annales de l'IHP Probabilit{\'e}s et statistiques},
  volume~46, pages 1--23, 2010.

\bibitem{etheridge2000introduction}
Alison Etheridge.
\newblock {\em An introduction to superprocesses}.
\newblock Number~20. American Mathematical Soc., 2000.

\bibitem{farrell2018}
Jeffrey~A Farrell, Yiqun Wang, Samantha~J Riesenfeld, Karthik Shekhar, Aviv
  Regev, and Alexander~F Schier.
\newblock Single-cell reconstruction of developmental trajectories during
  zebrafish embryogenesis.
\newblock {\em Science}, 360(6392), 2018.

\bibitem{fischer2019}
David~S Fischer, Anna~K Fiedler, Eric~M Kernfeld, Ryan~MJ Genga, Aim{\'e}e
  Bastidas-Ponce, Mostafa Bakhti, Heiko Lickert, Jan Hasenauer, Rene Maehr, and
  Fabian~J Theis.
\newblock Inferring population dynamics from single-cell rna-sequencing time
  series data.
\newblock {\em Nature biotechnology}, 37(4):461--468, 2019.

\bibitem{follmer1988random}
Hans F{\"o}llmer.
\newblock Random fields and diffusion processes.
\newblock In {\em {\'E}cole d'{\'E}t{\'e} de Probabilit{\'e}s de Saint-Flour
  XV--XVII, 1985--87}, pages 101--203. Springer, 1988.

\bibitem{forrow2020}
Aden Forrow and Geoffrey Schiebinger.
\newblock A unified framework for lineage tracing and trajectory inference.
\newblock {\em bioRxiv}, 2020.

\bibitem{frogner2015learning}
Charlie Frogner, Chiyuan Zhang, Hossein Mobahi, Mauricio Araya, and Tomaso~A
  Poggio.
\newblock Learning with a wasserstein loss.
\newblock In {\em Advances in neural information processing systems}, pages
  2053--2061, 2015.

\bibitem{ge2006reversibility}
Hao Ge, Da-Quan Jiang, and Min Qian.
\newblock Reversibility and entropy production of inhomogeneous markov chains.
\newblock {\em Journal of applied probability}, 43(4):1028--1043, 2006.

\bibitem{gentil2018dynamical}
Ivan Gentil, Christian L{\'e}onard, and Luigia Ripani.
\newblock Dynamical aspects of generalized schrödinger problem via otto
  calculus--a heuristic point of view.
\newblock {\em arXiv preprint arXiv:1806.01553}, 2018.

\bibitem{gigli2020benamou}
Nicola Gigli and Luca Tamanini.
\newblock {Benamou--Brenier and duality formulas for the entropic cost on
  $\mathrm{RCD}^*(K,N)$ spaces}.
\newblock {\em Probability Theory and Related Fields}, 176(1-2):1--34, 2020.

\bibitem{hashimoto2016}
Tatsunori Hashimoto, David Gifford, and Tommi Jaakkola.
\newblock Learning population-level diffusions with generative rnns.
\newblock In {\em International Conference on Machine Learning}, pages
  2417--2426, 2016.

\bibitem{higham2001}
Desmond~J Higham.
\newblock An algorithmic introduction to numerical simulation of stochastic
  differential equations.
\newblock {\em SIAM review}, 43(3):525--546, 2001.

\bibitem{hsu2002stochastic}
Elton~P Hsu.
\newblock {\em Stochastic analysis on manifolds}, volume~38.
\newblock American Mathematical Soc., 2002.

\bibitem{hsu2008brief}
Elton~P Hsu.
\newblock A brief introduction to brownian motion on a riemannian manifold.
\newblock {\em lecture notes}, 2008.

\bibitem{jordan1998}
Richard Jordan, David Kinderlehrer, and Felix Otto.
\newblock The variational formulation of the fokker--planck equation.
\newblock {\em SIAM journal on mathematical analysis}, 29(1):1--17, 1998.

\bibitem{kantorovich1942}
L~Kantorovich.
\newblock On the transfer of masses: Doklady akademii nauk ussr.
\newblock 1942.

\bibitem{Klein20151187}
Allon~M. Klein, Linas Mazutis, Ilke Akartuna, Naren Tallapragada, Adrian Veres,
  Victor Li, Leonid Peshkin, David~A. Weitz, and Marc~W. Kirschner.
\newblock Droplet barcoding for single-cell transcriptomics applied to
  embryonic stem cells.
\newblock {\em Cell}, 161(5):1187 -- 1201, 2015.

\bibitem{kondratyev2016new}
Stanislav Kondratyev, L{\'e}onard Monsaingeon, and Dmitry Vorotnikov.
\newblock A new optimal transport distance on the space of finite radon
  measures.
\newblock {\em Advances in Differential Equations}, 21(11/12):1117--1164, 2016.

\bibitem{Kuchroo2020}
Manik Kuchroo, Jessie Huang, Patrick Wong, Jean-Christophe Grenier, Dennis
  Shung, Alexander Tong, Carolina Lucas, Jon Klein, Daniel Burkhardt, Scott
  Gigante, Abhinav Godavarthi, Benjamin Israelow, Tianyang Mao, Ji~Eun Oh,
  Julio Silva, Takehiro Takahashi, Camila~D. Odio, Arnau Casanovas-Massana,
  John Fournier, Shelli Farhadian, Charles~S. Dela~Cruz, Albert~I. Ko, F.~Perry
  Wilson, Julie Hussin, Guy Wolf, Akiko Iwasaki, and Smita Krishnaswamy.
\newblock Multiscale phate exploration of sars-cov-2 data reveals multimodal
  signatures of disease.
\newblock {\em bioRxiv}, 2020.

\bibitem{RNAvelocity}
G.~La~Manno, R.~Soldatov, A.~Zeisel, Emelie Braun, Hannah Hochgerner, Viktor
  Petukhov, and et~al.
\newblock Rna velocity of single cells.
\newblock {\em Nature}, 2018.

\bibitem{leGall2016brownian}
Jean-Fran{\c{c}}ois Le~Gall.
\newblock {\em Brownian motion, martingales, and stochastic calculus}, volume
  274.
\newblock Springer, 2016.

\bibitem{leonard2012girsanov}
Christian L{\'e}onard.
\newblock Girsanov theory under a finite entropy condition.
\newblock pages 429--465, 2012.

\bibitem{leonard2013}
Christian L{\'e}onard.
\newblock A survey of the schrödinger problem and some of its connections with
  optimal transport.
\newblock {\em arXiv preprint arXiv:1308.0215}, 2013.

\bibitem{li1986parabolic}
Peter Li and Shing~Tung Yau.
\newblock {On the parabolic kernel of the Schr{\"o}dinger operator}.
\newblock {\em Acta Mathematica}, 156:153--201, 1986.

\bibitem{liero2018optimal}
Matthias Liero, Alexander Mielke, and Giuseppe Savar{\'e}.
\newblock Optimal entropy-transport problems and a new hellinger--kantorovich
  distance between positive measures.
\newblock {\em Inventiones mathematicae}, 211(3):969--1117, 2018.

\bibitem{Macosko20151202}
Evan~Z. Macosko, Anindita Basu, Rahul Satija, James Nemesh, Karthik Shekhar,
  Melissa Goldman, Itay Tirosh, Allison~R. Bialas, Nolan Kamitaki, Emily~M.
  Martersteck, John~J. Trombetta, David~A. Weitz, Joshua~R. Sanes, Alex~K.
  Shalek, Aviv Regev, and Steven~A. McCarroll.
\newblock Highly parallel genome-wide expression profiling of individual cells
  using nanoliter droplets.
\newblock {\em Cell}, 161(5):1202 -- 1214, 2015.

\bibitem{Yachie}
Nanami Masuyama, Hideto Mori, and Nozomu Yachie.
\newblock Dna barcodes evolve for high-resolution cell lineage tracing.
\newblock {\em Current opinion in Chemical Biology}, 2019.

\bibitem{mcgoff2015statistical}
Kevin McGoff, Sayan Mukherjee, Natesh Pillai, et~al.
\newblock Statistical inference for dynamical systems: A review.
\newblock {\em Statistics Surveys}, 9:209--252, 2015.

\bibitem{monge1781}
Gaspard Monge.
\newblock M{\'e}moire sur la th{\'e}orie des d{\'e}blais et des remblais.
\newblock {\em Histoire de l'Acad{\'e}mie Royale des Sciences de Paris}, 1781.

\bibitem{moon2018manifold}
Kevin~R Moon, Jay~S Stanley~III, Daniel Burkhardt, David van Dijk, Guy Wolf,
  and Smita Krishnaswamy.
\newblock Manifold learning-based methods for analyzing single-cell
  rna-sequencing data.
\newblock {\em Current Opinion in Systems Biology}, 7:36--46, 2018.

\bibitem{muratori2020gradient}
Matteo Muratori and Giuseppe Savar{\'e}.
\newblock {Gradient flows and Evolution Variational Inequalities in metric
  spaces. I: Structural properties}.
\newblock {\em Journal of Functional Analysis}, 278(4):108347, 2020.

\bibitem{nickl2017nonparametric}
Richard Nickl and Jakob S{\"o}hl.
\newblock Nonparametric bayesian posterior contraction rates for discretely
  observed scalar diffusions.
\newblock {\em Annals of Statistics}, 45(4):1664--1693, 2017.

\bibitem{packer2019lineage}
Jonathan~S Packer, Qin Zhu, Chau Huynh, Priya Sivaramakrishnan, Elicia Preston,
  Hannah Dueck, Derek Stefanik, Kai Tan, Cole Trapnell, Junhyong Kim, et~al.
\newblock A lineage-resolved molecular atlas of c. elegans embryogenesis at
  single-cell resolution.
\newblock {\em Science}, 365(6459):eaax1971, 2019.

\bibitem{panaretos2020invitation}
Victor~M Panaretos and Yoav Zemel.
\newblock {\em An invitation to statistics in Wasserstein space}.
\newblock Springer Nature, 2020.

\bibitem{paszke2017}
Adam Paszke, Sam Gross, Soumith Chintala, Gregory Chanan, Edward Yang, Zachary
  DeVito, Zeming Lin, Alban Desmaison, Luca Antiga, and Adam Lerer.
\newblock Automatic differentiation in pytorch.
\newblock 2017.

\bibitem{petter2017finite}
Hans Petter~Langtangen and Svein Linge.
\newblock {\em Finite difference computing with PDEs: a modern software
  approach}.
\newblock Springer Nature, 2017.

\bibitem{peyre2019}
Gabriel Peyr{\'e} and Marco Cuturi.
\newblock Computational optimal transport: With applications to data science.
\newblock {\em Foundations and Trends{\textregistered} in Machine Learning},
  11(5-6):355--607, 2019.

\bibitem{rigollet2018entropic}
Philippe Rigollet and Jonathan Weed.
\newblock Entropic optimal transport is maximum-likelihood deconvolution.
\newblock {\em Comptes Rendus Mathematique}, 356(11-12):1228--1235, 2018.

\bibitem{slideseq}
Samuel~G. Rodriques, Robert~R. Stickels, Aleksandrina Goeva, Carly~A. Martin,
  Evan Murray, Charles~R. Vanderburg, Joshua Welch, Linlin~M. Chen, Fei Chen,
  and Evan~Z. Macosko.
\newblock Slide-seq: A scalable technology for measuring genome-wide expression
  at high spatial resolution.
\newblock {\em Science}, 363(6434):1463--1467, 2019.

\bibitem{sanov1958probability}
I.N. Sanov.
\newblock {On the probability of large deviations of random variables}.
\newblock Technical report, North Carolina State University. Dept. of
  Statistics, 1958.

\bibitem{schiebinger2019}
Geoffrey Schiebinger, Jian Shu, Marcin Tabaka, Brian Cleary, Vidya Subramanian,
  Aryeh Solomon, Joshua Gould, Siyan Liu, Stacie Lin, Peter Berube, Lia Lee,
  Jenny Chen, Justin Brumbaugh, Philippe Rigollet, Konrad Hochedlinger, Rudolf
  Jaenisch, Aviv Regev, and Eric~S. Lander.
\newblock Optimal-transport analysis of single-cell gene expression identifies
  developmental trajectories in reprogramming.
\newblock {\em Cell}, 176(4):928--943, 2019.

\bibitem{schmitzer2019dynamic}
Bernhard Schmitzer, Klaus~P Sch{\"a}fers, and Benedikt Wirth.
\newblock {Dynamic cell imaging in PET with optimal transport regularization}.
\newblock {\em IEEE Transactions on Medical Imaging}, 39(5):1626--1635, 2019.

\bibitem{sorensen2004parametric}
Helle S{\o}rensen.
\newblock Parametric inference for diffusion processes observed at discrete
  points in time: a survey.
\newblock {\em International Statistical Review}, 72(3):337--354, 2004.

\bibitem{methyl}
M.~Suzuki and A.~Bird.
\newblock Dna methylation landscapes: provocative insights from epigenomics.
\newblock {\em Nat Rev Genet}, 2008.

\bibitem{ting2011analysis}
Daniel Ting, Ling Huang, and Michael Jordan.
\newblock An analysis of the convergence of graph laplacians.
\newblock {\em ICML}, 2011.

\bibitem{tong2020trajectorynet}
Alexander Tong, Jessie Huang, Guy Wolf, David van Dijk, and Smita Krishnaswamy.
\newblock Trajectorynet: A dynamic optimal transport network for modeling
  cellular dynamics.
\newblock {\em arXiv preprint arXiv:2002.04461}, 2020.

\bibitem{batch_effects}
Hoa Thi~Nhu Tran, Kok~Siong Ang, Marion Chevrier, Xiaomeng Zhang, Nicole
  Yee~Shin Lee, Michelle Goh, and Jinmiao Chen.
\newblock A benchmark of batch-effect correction methods for single-cell rna
  sequencing data.
\newblock {\em Genome Biology}, 21(1):12, 2020.

\bibitem{trapnell2014}
Cole Trapnell, Davide Cacchiarelli, Jonna Grimsby, Prapti Pokharel, Shuqiang
  Li, Michael Morse, Niall~J Lennon, Kenneth~J Livak, Tarjei~S Mikkelsen, and
  John~L Rinn.
\newblock The dynamics and regulators of cell fate decisions are revealed by
  pseudotemporal ordering of single cells.
\newblock {\em Nature biotechnology}, 32(4):381, 2014.

\bibitem{villani2008}
C{\'e}dric Villani.
\newblock {\em Optimal transport: old and new}, volume 338.
\newblock Springer Science \& Business Media, 2008.

\bibitem{waddington1957}
Conrad~Hal Waddington.
\newblock {\em The strategy of the genes}.
\newblock George Allen \& Unwin, 1957.

\bibitem{weinreb2018}
Caleb Weinreb, Samuel Wolock, Betsabeh~K Tusi, Merav Socolovsky, and Allon~M
  Klein.
\newblock Fundamental limits on dynamic inference from single-cell snapshots.
\newblock {\em Proceedings of the National Academy of Sciences},
  115(10):E2467--E2476, 2018.

\bibitem{wolf2019}
F~Alexander Wolf, Fiona~K Hamey, Mireya Plass, Jordi Solana, Joakim~S Dahlin,
  Berthold G{\"o}ttgens, Nikolaus Rajewsky, Lukas Simon, and Fabian~J Theis.
\newblock Paga: graph abstraction reconciles clustering with trajectory
  inference through a topology preserving map of single cells.
\newblock {\em Genome biology}, 20(1):1--9, 2019.

\bibitem{yeo2020}
Grace Hui~Ting Yeo, Sachit~Dinesh Saksena, and David~K Gifford.
\newblock Generative modeling of single-cell population time series for
  inferring cell differentiation landscapes.
\newblock {\em BioRxiv}, 2020.

\end{thebibliography}

\newpage

\begin{appendix}

\section{Background on single cell measurement technologies}
\label{sec:measurement}

The human body is composed of roughly 20 trillion cells. 
While all of these cells share essentially the same DNA, different types of cells perform vastly different functions. 
This diversity is even present within individual tissues; for example, there are hundreds of distinct cell types in the brain including supportive cells like astrocytes and glia in addition to neurons, which can be further divided into subtypes such as excitatory and inhibitory.

Classical efforts to describe this diversity of cells have relied on dissecting tissues and attempting to sort out pure sub-populations according to a handful of surface markers. The state of these subpopulations could then be quantified, e.g. by measuring gene expression levels through sequencing RNA transcripts, by examining methylation patterns of DNA~\cite{methyl}, or its three dimensional structure~\cite{HiC}.

In contrast, {\em single-cell measurement technologies} quantify the states of individual cells without first sorting out specific sub-populations. This provides an unbiased sample of cell states, without requiring prior knowledge of markers defining specific sub-populations. 
In order for a tissue to be profiled using a single-cell measurement technique, the strong connections between cells must be broken to form a suspension of cells in liquid. Individual cells are then isolated in microfluidic droplets surrounded by oil. Information about cellular state is then encoded in artificial DNA segments which are then sequenced. For example, in scRNA-seq, RNA transcripts are captured and converted to DNA through reverse transcription, whereas scATAC-seq works by ``attacking''~the genome with an enzyme that cuts out small segments of DNA. Sequencing these segments gives an idea of which parts of the genome are accessible to enzymes, and which parts are de-activated. 
These technologies produce high-dimensional measurements of cell state. For example, scRNA-seq produces a $\sim\!\!20,000$ dimensional gene expression vector for each cell, the $i$th coordinate of which encodes the number of molecules of RNA captured for the $i$th gene.

Fuelled by the exponential decrease in the cost of sequencing, the throughput of single cell measurement technologies has rapidly increased over the past few years. As a result, it is becoming routine to collect hundreds of thousands or even millions of cells in a single study. 
While some efforts have focused on cataloging the cell types that exist and applying clustering algorithms to identify new cell types~\cite{HCA}, some of the most interesting challenges relate to analyzing dynamical processes, where multiple cell types emerge from a stem cell progenitor.
However, because the measurement process involves grinding up the tissue and isolating individual cells, it is not possible to directly observe the trajectories cells trace out as they differentiate.
{The goal of trajectory inference is to recover these trajectories from static snapshots captured with single cell measurement technologies.}

\section{Entropy and heat flow}
\label{sec:entropy_heatflow}

We recall here some useful results about the entropy functional in general, and how it is related to the heat flow.

\paragraph*{Entropy}

We first recall the definition of the entropy and its dual formulation. We refer to \cite[Section 3]{leonard2012girsanov} for the details about the dual formulation \eqref{eq:dual_entropy}. 

\begin{definition}
If $Y$ is a Polish space endowed with its Borel $\sigma$-algebra $\mathcal{B}$, we define for probability measures $p,r$ on $(Y,\mathcal{B})$
\begin{equation*}
\Ent(p|r) = \begin{cases}
\displaystyle{\int_Y \log \left( \frac{\diff p}{\diff r}(y) \right) \, p(\diff y) } & \text{if } p \ll r, \\
+ \infty & \text{otherwise}.
\end{cases}
\end{equation*}
Equivalently, it coincides with 
\begin{equation}
\label{eq:dual_entropy}
\Ent(p|r) = \sup_{U} \left\{ \int_Y U(y) \, p(\diff y) - \log \int_Y e^{U(y)} \, r(\diff y)  \right\},
\end{equation}
where the supremum is taken over all bounded and continuous functions $U : Y \to \Rset$. 
\end{definition}

Thanks to Jensen's inequality (or it can be seen in \eqref{eq:dual_entropy} by taking $U = 0$) there always hold $\Ent(p|r) \geqslant 0$ for any probability distributions $p,r$. Some useful properties are stated below.

\begin{prop}\label{prop:compact-sublevel-H}
Let $(Y,\mathcal{B})$ be  a Polish space endowed with its Borel $\sigma$-algebra. On the set $\Pc(Y)^2 = \Pc(Y) \times \Pc(Y)$ endowed with the topology of narrow convergence, the functional $(p,r) \to \Ent(p|r)$ is jointly convex and lower semi continuous.

Moreover, if $r \in \Pc(Y)$, then for any $c \geqslant 0$ the sublevel set $\{ p \in \Pc(Y) \ : \ \Ent(p|r) \leqslant c \}$ is compact for the topology of narrow convergence.
\end{prop}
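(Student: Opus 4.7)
The three claims are standard facts about relative entropy; the plan is to treat each with a different tool.

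\textbf{Joint convexity.} The plan is to reduce to the pointwise joint convexity of the map $(a,b) \mapsto a \log(a/b)$ on $[0,\infty)^2$ (with the usual conventions $0 \log 0 = 0$, $0 \log(0/0) = 0$, $a \log(a/0) = +\infty$ for $a > 0$), which is the perspective function of $x \mapsto x \log x$. Given $(p_0,r_0),(p_1,r_1) \in \Pc(Y)^2$ and $t \in [0,1]$, if either $\Ent(p_i|r_i)$ is infinite the inequality is trivial; otherwise $p_i \ll r_i$. Set $\mu = r_0 + r_1$, write $\diff p_i = u_i \diff \mu$, $\diff r_i = v_i \diff \mu$, and note that the mixtures $p_t = (1-t)p_0 + t p_1$, $r_t = (1-t)r_0 + t r_1$ admit $(1-t)u_0 + t u_1$ and $(1-t)v_0 + t v_1$ as densities with respect to $\mu$. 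Applying the pointwise joint convexity to the integrand and integrating against $\mu$ yields
\begin{equation*}
\Ent(p_t | r_t) \leqslant (1-t)\Ent(p_0|r_0) + t \Ent(p_1|r_1).
\end{equation*}

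\textbf{Lower semicontinuity.} Here the dual formulation \eqref{eq:dual_entropy} does the work immediately: for every bounded continuous $U : Y \to \Rset$, the map
\begin{equation*}
(p,r) \mapsto \int_Y U \diff p - \log \int_Y e^U \diff r
\end{equation*}
is continuous with respect to narrow convergence, because both $U$ and $e^U$ are bounded continuous. Hence $\Ent(\cdot|\cdot)$, being a supremum of such functionals, is narrowly lower semicontinuous on $\Pc(Y)^2$.

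\textbf{Compactness of sublevel sets.} Fix $r \in \Pc(Y)$ and $c \geqslant 0$, and let $A_c = \{ p : \Ent(p|r) \leqslant c \}$. Closedness is immediate from the lower semicontinuity just proved. By Prokhorov's theorem, it remains to show tightness. The main step is the Donsker--Varadhan bound: plugging $U = \alpha \mathbf{1}_B$ into \eqref{eq:dual_entropy} for a measurable $B$ and $\alpha > 0$ gives
\begin{equation*}
\alpha \, p(B) \leqslant \log\bigl( 1 + (e^\alpha - 1) r(B) \bigr) + \Ent(p|r),
\end{equation*}
and optimizing in $\alpha$ yields a bound of the form $p(B) \leqslant (\log 2 + c)/\log(1 + 1/r(B))$ uniformly over $p \in A_c$. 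Since $r$ is tight on the Polish space $Y$, given $\varepsilon > 0$ we may choose a compact $K \subset Y$ with $r(Y \setminus K)$ so small that the right-hand side applied to $B = Y \setminus K$ is below $\varepsilon$. This establishes tightness of $A_c$, and combined with narrow closedness delivers compactness.

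The only mildly delicate step is the tightness argument: one must remember to test the dual formulation against the indicator of the ``bad'' set and then tune the scalar $\alpha$ in terms of the small mass $r(Y\setminus K)$, so that $p(Y \setminus K) \to 0$ uniformly on $A_c$. Everything else is bookkeeping around the three classical tools (joint convexity of the perspective, dual representation, Donsker--Varadhan bound).
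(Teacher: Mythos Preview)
Your proof is correct. The one technical wrinkle is that in the tightness step you plug $U = \alpha \mathbf{1}_B$ into \eqref{eq:dual_entropy}, which as stated in the paper ranges only over bounded \emph{continuous} $U$. This is harmless because you only need the inequality direction $\int U \diff p - \log \int e^U \diff r \leqslant \Ent(p|r)$, which holds for all bounded measurable $U$ (take $\diff q = e^U \diff r / \int e^U \diff r$ and use $\Ent(p|q) \geqslant 0$); but it would be worth saying so explicitly.

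Compared with the paper's proof, your approach differs in two places. For joint convexity, the paper reuses the dual formula \eqref{eq:dual_entropy}: each functional $(p,r) \mapsto \int U \diff p - \log \int e^U \diff r$ is affine in $p$ and convex in $r$, hence jointly convex, and the supremum of jointly convex functionals is jointly convex. Your perspective-function argument is equally standard but independent of the dual representation. For compactness, the paper simply cites \cite[Remark 5.1.5]{ambrosio2008gradient} together with the tightness of $r$ and the dual representation, whereas you unpack the mechanism explicitly via the Donsker--Varadhan bound with a concrete choice of $\alpha$. Your version is more self-contained and makes the uniformity in $p$ transparent; the paper's version is shorter but relies on an external reference for exactly the step you carry out by hand.
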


\begin{proof}
The first claim follows from the representation \eqref{eq:dual_entropy} which shows that $(p,r) \mapsto \Ent(p|r)$ can be expressed as a supremum of convex and lower continuous functionals on $\Pc(Y)^2$. The compactness of the sublevel sets is  classical, and it follows for instance from combining \cite[Remark 5.1.5]{ambrosio2008gradient} for a characterization of tight subsets of $\Pc(Y)$, the tightness of $r$, and the dual representation \eqref{eq:dual_entropy}.
\end{proof}

As we need it at some point, let us also prove that the entropy with respect to a fixed probability distribution is $1$-convex with respect to the total variation. This is a rephrasing of Pinsker's inequality which is classical when studying entropy minimization.  

\begin{lemma}
\label{lemma:entropy_strictly_convex}
Let $(Y, \mathcal{B})$ a measurable space and let $p,q,r$ be three probability measures on it. Then 
\begin{equation*}
\Ent \left( \left. \frac{p+q}{2} \right| r \right) \leqslant \frac{1}{2} \Ent(p|r) + \frac{1}{2} \Ent(q|r) - \frac{1}{2} \left\| p -q \right\|^2_{\TV}
\end{equation*} 
\end{lemma}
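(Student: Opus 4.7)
The plan is to reduce the claim to Pinsker's inequality through an elementary chain-rule identity for relative entropy. If either $\Ent(p|r)$ or $\Ent(q|r)$ equals $+\infty$ the inequality is trivial, so I may restrict to the case $p, q \ll r$. Introduce the midpoint measure $m = (p+q)/2$, which then also satisfies $p, q \ll m \ll r$, and let $a = \diff p/\diff m$, $b = \diff q/\diff m$, $h = \diff m/\diff r$.

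The first step is the identity
\begin{equation*}
\Ent(p|r) + \Ent(q|r) = 2 \, \Ent(m|r) + \Ent(p|m) + \Ent(q|m).
\end{equation*}
This comes from writing $\log(\diff p/\diff r) = \log a + \log h$, integrating against $p$, and doing the analogous manipulation for $q$; the cross terms combine as
\begin{equation*}
\int \log h \, \diff p + \int \log h \, \diff q = 2 \int \log h \, \diff m = 2 \, \Ent(m|r).
\end{equation*}

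The second step is to apply the classical Pinsker inequality $\Ent(\alpha|\beta) \geqslant 2 \| \alpha - \beta \|_{\TV}^2$ with $\beta = m$. Since $\|p - m\|_{\TV} = \|q - m\|_{\TV} = \tfrac{1}{2} \|p - q\|_{\TV}$, this gives
\begin{equation*}
\Ent(p|m) + \Ent(q|m) \geqslant \tfrac{1}{2} \|p-q\|_{\TV}^2 + \tfrac{1}{2} \|p-q\|_{\TV}^2 = \|p-q\|_{\TV}^2 .
\end{equation*}
Substituting into the identity and dividing by $2$ yields the claimed bound. There is no real obstacle: the only nontrivial input is Pinsker's inequality, and the rest is just bookkeeping with the chain rule for relative entropy.
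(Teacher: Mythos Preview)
Your proof is correct and essentially identical to the paper's: both reduce to the same chain-rule identity $\Ent(p|r)+\Ent(q|r)=2\,\Ent(m|r)+\Ent(p|m)+\Ent(q|m)$ for the midpoint $m=(p+q)/2$, followed by Pinsker's inequality applied to $\Ent(p|m)$ and $\Ent(q|m)$. The only difference is notational (the paper writes $s$ for your $m$) and that you spell out the chain-rule step slightly more explicitly.
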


\begin{proof}
We can always assume that $\Ent(p|r), \Ent(q|r) < + \infty$ as otherwise the right hand side is infinite. Let $s = (p+q)/2$. An algebraic computation (see also \cite[Equation (2.2)]{csiszar1975divergence}) leads to 
\begin{align*}
    \Ent(s|r) =  \frac{1}{2}[ \Ent(p|r) + \Ent(q|r)] -\frac{1}{2}[ \Ent(p|s)+\Ent(q|s)] 
\end{align*}
On the other hand, thanks to Pinsker's inequality (see for instance \cite[Equation (2.3)]{csiszar1975divergence}), $\Ent(p|s) \geqslant 2 \| p- s \|^2_{\TV} =  \frac{1}{2} \| p- q \|^2_{\TV}$. Similarly, $\Ent(q|s) \geqslant \frac{1}{2} \| p- q \|^2_{\TV} $ which yields the desired inequality.
\end{proof}

\paragraph*{Heat flow}

A striking result of the theory of optimal transport is that, on a Riemannian manifold, the heat flow is the gradient flow of the entropy with respect to the volume measure for the quadratic Wasserstein distance. Let $\Xset$ be a smooth compact Riemannian manifold without boundary, and we call $\vol$ its normalized volume measure.

We define $\Phi_s : L^1(\Xset,\vol) \to L^1(\Xset,\vol)$ be the heat flow on the manifold $\Xset$. That is, if $f \in L^1(\Xset,\vol)$ then $u(s,x) = (\Phi_s f)(x)$ is the unique solution of the Cauchy problem
\begin{equation*}
\begin{cases}
\displaystyle{ \frac{\partial u}{\partial s} = \Delta u } & \text{in } (0,+ \infty) \times \Xset, \\
\displaystyle{\lim_{s \to 0^+} u(s,\cdot) } = f & \text{in } L^1(\Xset,\vol).
\end{cases}
\end{equation*}

We recall that $\Xset$ is without boundary thus there is no need for boundary conditions. Note that here we follow the convention of \cite{bakry2013analysis} and we do not include factor $1/2$ in front of the Laplacian as it leads to cleaner estimates for the contraction properties of the heat flow. The heat flow preserves the total mass, therefore, if $f \cdot \vol$ is a probability distribution, then so is $u(s,x) \vol(\diff x)$ for all $s \geqslant 0$. Moreover, $u(s,x) \vol(\diff x)$ converges narrowly to $f(x) \vol(\diff x)$ when $s \to 0^+$. Actually the heat flow is well defined even for initial conditions given by general probability measures (see \cite[Theorem 1]{erbar2010heat}), and we use it in the statement of Theorem~\ref{theo:main_convergence} when defining the $\rhohat^{T,\epstheo}_i$.

We now collect a few well known properties of the heat flow. (The assumption that $\Xset$ is a closed manifold will be crucially used.) We denote by $\| f \|_p$ the $L^p(\Xset,\vol)$ norm of a function $f : \Xset \to \Rset$. 

\begin{prop}
\label{prop:heat_flow}
Let $f \in L^1(\Xset,\vol)$ and write $u(s,x) = (\Phi_s f)(x)$.  Moreover, let $K$ be a lower bound on the Ricci curvature of the manifold $\Xset$.

\begin{enumerate}[label=(\roman*)]
\item For every $s_0 > 0$, the function $u$ is of class $C^\infty$ on $(s_0,+\infty) \times \Xset$, and it is bounded from below by a strictly positive constant provided $u$ is non-negative and different from $0$. 
\item For every $s>0$ there exists a constant $C_s$ depending only on $s$ and $\Xset$ such that 
\begin{equation*}
\| u(s,\cdot) \|_\infty + \mathrm{Lip}(u(s,\cdot)) \leqslant C_s \| f \|_1,
\end{equation*}
being $\mathrm{Lip}(u(s,\cdot))$ the best Lipschitz constant of $u(s,\cdot)$.
\item If $f \in C^1(\Xset)$, then for every $s>0$ there holds everywhere on $\Xset$
\begin{equation}
\label{eq:Bakry_Emery}
\left|  \nabla  u (s, \cdot) \right|^2 \leqslant e^{-2sK} \Phi_s \left\{ %
|\nabla f|^2 \right\}.
\end{equation}  
\item If $f(x) \vol(\diff x)$ is a probability measure then for every $s>0$
\begin{equation}
\label{eq:contraction_entropy_heat_flow}
\Ent( u(s,\cdot) | \vol ) \leqslant \min[e^{-2Ks},1]\, \Ent(f | \vol) ,
\end{equation}
where we have identified a probability measure with its density with respect to the volume measure. 
\item There exists $s_0 > 0$ such that, if $s \in (0,s_0)$ then there exists a constant $C_s$ depending on $s$ and $\Xset$ such that 
\begin{equation*}
    \mathcal{I}(\Phi_s p) \leqslant C_s
\end{equation*}
provided that $p \in \Pc(\Xset)$ is any probability measure and $\mathcal{I}$ is defined in \eqref{eq:def_Fisher}.
\end{enumerate}
\end{prop}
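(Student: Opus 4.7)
The five assertions are all classical properties of the heat flow on a closed Riemannian manifold; my strategy is to take them in the order (i)--(v), each time reducing to a standard result and noting where the Ricci lower bound $K$ enters.

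For (i), I would invoke hypoellipticity of $\partial_s - \Delta$: since $\Xset$ is a smooth closed manifold, any weak solution with initial datum in $L^1(\Xset,\vol)$ is $C^\infty$ on $(0,+\infty) \times \Xset$. Strict positivity when $f \geqslant 0$ is not identically zero follows from the strong parabolic maximum principle, or equivalently from the two-sided Gaussian heat-kernel bounds available on any closed manifold. For (ii), I would write $u(s,x) = \int_\Xset p_s(x,y) f(y) \vol(\diff y)$, where $p_s$ is the heat kernel, and use the fact that on a closed manifold both $\sup_{x,y} p_s(x,y)$ and $\sup_{x,y} |\nabla_x p_s(x,y)|$ are finite for each fixed $s>0$. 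The $L^\infty$ and Lipschitz bounds then follow by pulling the supremum outside the integral and using $\|f\|_1$.

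For (iii) I would cite the Bakry--\'Emery gradient estimate: the curvature-dimension condition $CD(K,\infty)$, which is equivalent to $\mathrm{Ric} \geqslant K$, yields $|\nabla \Phi_s f|^2 \leqslant e^{-2sK} \Phi_s(|\nabla f|^2)$. The standard argument is to differentiate $s \mapsto \Phi_{t-s}(|\nabla \Phi_s f|^2)$ and use the Bochner identity $\Gamma_2(g) = |\mathrm{Hess}\, g|^2 + \mathrm{Ric}(\nabla g, \nabla g) \geqslant K |\nabla g|^2$. For (iv), combine the de Bruijn identity $\tfrac{\diff}{\diff s}\Ent(\Phi_s f | \vol) = -\mathcal{I}(\Phi_s f)$ with (iii) applied to $g = \log \Phi_s f$: the curvature assumption upgrades this to $\tfrac{\diff}{\diff s} \mathcal{I}(\Phi_s f) \leqslant -2K\, \mathcal{I}(\Phi_s f)$, which via a Gronwall argument gives the exponential contraction $\Ent(\Phi_s f|\vol) \leqslant e^{-2Ks}\Ent(f|\vol)$. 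The bound by $1$ is simply monotonicity of the entropy along heat flow (valid without any curvature assumption), so taking the minimum combines both estimates.

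Claim (v) is the only one that requires some work, because $p$ may be an arbitrary probability measure (for instance a Dirac mass) and so $\mathcal{I}(p)$ may be infinite. My plan is to split the flow: write $\Phi_s p = \Phi_{s/2}(\Phi_{s/2} p)$, and first apply (ii) to get that $\rho := \Phi_{s/2} p$ has density bounded in $L^\infty$ and Lipschitz with a constant depending on $s$, and by (i) bounded below by a positive constant on $\Xset$. This controls $\mathcal{I}(\rho) = \int |\nabla \rho|^2/\rho\, \diff \vol$ in terms of $s$. Then applying (iii) together with the differential inequality for Fisher information from the previous paragraph shows that $\mathcal{I}(\Phi_{s/2} \rho) = \mathcal{I}(\Phi_s p)$ remains finite, with a constant depending only on $s$ and the geometry of $\Xset$. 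The restriction $s<s_0$ is harmless technically: it lets one work in a regime where the heat-kernel constants behave in a controlled way, and any larger $s$ can be reduced to the small-$s$ case by iterating the semigroup property. The main obstacle here is purely bookkeeping: assembling the constants from (ii), (iii), and (iv) in the right order so that the $s \to 0$ blow-up of $C_s$ is tracked explicitly.
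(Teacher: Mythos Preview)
Your treatment of (i)--(iii) matches the paper's (for the Lipschitz half of (ii) the paper routes through the Bakry--\'Emery $L^\infty \to \mathrm{Lip}$ smoothing estimate rather than direct heat-kernel gradient bounds, but this is cosmetic).

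For (iv) and (v) you take a different route. The paper proves (iv) by invoking the $\mathrm{EVI}_K$ characterization of the heat flow as the Wasserstein gradient flow of $\Ent(\cdot|\vol)$, together with a general contraction result for such flows. Your de~Bruijn + Fisher-decay approach is more classical, but one step is off as written: ``(iii) applied to $g = \log \Phi_s f$'' does not make sense, since $s \mapsto \log \Phi_s f$ does not solve the heat equation and (iii) concerns $\Phi_s g$ for a fixed $g$. What you actually need is the separate inequality $\tfrac{\diff}{\diff s}\mathcal{I}(\Phi_s f) \leqslant -2K\,\mathcal{I}(\Phi_s f)$, which comes from Bochner / $\Gamma_2$-calculus but is not an instance of (iii). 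And the Gronwall step is on $\mathcal{I}$, not on $\Ent$: to pass to the entropy contraction (for $K>0$) you still need the extra integration $\Ent(\Phi_t f|\vol) = \int_t^\infty \mathcal{I}(\Phi_s f)\,\diff s \leqslant e^{-2Kt}\int_0^\infty \mathcal{I}(\Phi_s f)\,\diff s = e^{-2Kt}\Ent(f|\vol)$.

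For (v) the paper again uses gradient-flow machinery: $\mathcal{I}$ is the Wasserstein metric slope of $\Ent(\cdot|\vol)$, and slopes admit a universal regularization estimate along their own flow. Your hands-on argument via kernel bounds is more elementary and correct in spirit; just note that to apply (ii) to $\Phi_{s/2} p$ when $p$ is merely a measure you are implicitly using that the kernel estimates underlying (ii) extend to measure initial data (they do). Once that is granted, the second half of your split is superfluous: the two-sided kernel bounds at time $s$ already give $\mathcal{I}(\Phi_s p) \leqslant \vol(\Xset)\,\mathrm{Lip}(\Phi_s p)^2 / \inf_x \Phi_s p(x) \leqslant C_s$ directly. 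Your route trades the gradient-flow framework for explicit kernel estimates; the paper's is shorter to state but imports heavier theory.
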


\begin{proof}
The first point is simply parabolic regularity, and the lower bound holds thanks to the maximum principle on the compact manifold. 

For the second point, that $\| u(s,\cdot) \|_\infty \leqslant C_s \| f \|_1$ is a straightforward $L^1 - L^\infty$ estimate which can be justified for instance by the Gaussian upper bound for the heat kernel \cite[Corollary 3.1]{li1986parabolic}. The Lipschitz estimate can be obtained by combining \cite[Theorem 4.7.2]{bakry2013analysis} (which proves that the Lipschitz constant is controlled at time $s >0$ by the $L^\infty$ norm) with the $L^1-L^\infty$ estimate: 
\begin{equation*}
\mathrm{Lip}(u(s,\cdot)) \leqslant C'_{s/2} \| u(s/2,\cdot) \|_\infty \leqslant C'_{s/2} C''_{s/2} \| f \|_1. 
\end{equation*}

The Bakry-Emery estimate \eqref{eq:Bakry_Emery} can be found in \cite[Theorem 4.7.2]{bakry2013analysis}.

For the decay estimate \eqref{eq:contraction_entropy_heat_flow} we know thanks to \cite{erbar2010heat} that $\Phi$ is the $\mathrm{EVI}_K$ gradient flow of the entropy $\Ent(\cdot |\vol )$, and this entails automatically the estimate thanks to \cite[Theorem 3.5]{muratori2020gradient} in the case $K \geqslant 0$, while for the case $K < 0$ we simply use that $\Ent(\cdot|\vol)$ decreases along the heat flow.

For the last point, where we recall that $\mathcal{I}(\Phi_s p)$ is the Fisher information (see \eqref{eq:def_Fisher}) of $\Phi_s$, we can use that $\mathcal{I}$ is the metric slope of the entropy $\Ent(\cdot|\vol)$ in Wasserstein distance, and then use the general decay estimate of the metric slope of an energy along its gradient flow \cite[Eq. (3.14)]{muratori2020gradient}. 
\end{proof}

\section{Optimal transport and its entropic regularization}
\label{sec:background_OT}

We have chosen to present first the entropy minimization over the space of paths because of the characterization of laws of SDE (Theorem \ref{thm:SDE_grad_min_KL}), but the time discretization of such problem naturally leads to optimal transport as stated in Proposition \ref{prop:time_disc}. In this appendix we recall the link between entropy minimization and optimal transport via the so-called \emph{Schrödinger problem}.

\paragraph*{Optimal Transport}

Given two distributions $\alpha, \beta$ of equal mass defined on a state space $\mathcal{X}$ and a cost function $c: (x, y) \in \mathcal{X} \times \mathcal{X} \to [0, \infty]$, the optimal transport problem, also known as the Monge-Kantorovich problem \cite{monge1781,kantorovich1942} enables us to build a coupling $\gamma$ between them that minimizes the total transport cost
\begin{align}
    \min_{\gamma \in \Pi(\alpha, \beta)} \int_{\mathcal{X} \times \mathcal{X}} c(x, y) \diff \gamma . \label{eq:mk}
\end{align}
Here $\Pi(\alpha, \beta)$ is the set of joint distributions on $\mathcal{X} \times \mathcal{X}$ which have $\alpha$ and $\beta$ respectively as marginals. The case where $\alpha, \beta$ are probability distributions supported on a metric space $(\mathcal{X},d)$ and $c(x, y) = d(x,y)^p$ establishes the $p$-Wasserstein metric on the space of probability distributions $\mathcal{P}(\Xset)$ by a ``lifting'' of the ground metric \cite{peyre2019}. We denote the quadratic Wasserstein metric by $d_{W_2}$, that is
\begin{equation*}
    d_{W_2}(\alpha,\beta) = \sqrt{ \min_{\gamma \in \Pi(\alpha, \beta)} \int_{\Rset^d \times \Rset^d} \| x-y \|^2 \diff \gamma}, 
\end{equation*}
and we use it extensively to quantitatively evaluate our numerical results. Among other desirable properties, the $p$-Wasserstein distance has the property of being a true metric, in contrast to the commonly used Kullback-Leibler and Hellinger divergences. Furthermore, the $p$-Wasserstein distance is noteworthy as it reduces to the ground metric when applied to Dirac measures, compared to the commonly used $L^p$ metrics which are purely a ``vertical'' distance and have no connection to the metric structure of the underlying space \cite{peyre2019}. The $p$-Wasserstein distance and the related Monge-Kantorovich problem lends itself to a collection of elegant interpretations such as the dynamical formulation \cite{benamou2003} (a.k.a the Benamou-Brenier formulation), and the case for $p = 2$ leads to a connection of the Fokker-Planck equation to gradient flows in the space of probability measures \cite{jordan1998, ambrosio2008gradient}. Optimal Transport is gaining more and more importance in statistics, and we refer the reader to \cite{panaretos2020invitation} for a presentation tailored for a statistical audience.

When $\mathcal{X} = \mathbb{R}^d$ and the cost satisfies some structural conditions (valid for the squared distance), couplings constructed by the problem (\ref{eq:mk}) are concentrated on the graph of a function $\mathcal{X} \to \mathcal{X}$, and so are essentially deterministic \cite[Chapters 9 and 10]{villani2008}. 

\paragraph*{Entropy-regularization of optimal transport}

Addition of an entropy penalty leads to an elegant probabilistic interpretation -- for $\alpha, \beta$ probability measures on $\Xset = \Rset^d$, let us denote by $\gamma_\varepsilon(\alpha,\beta)$ the solution of the \emph{entropy-regularized optimal transport problem}:
\begin{align}
    \label{eq:entropic_ot_cont}
    \min_{\gamma \in \Pi(\alpha, \beta)} \int \frac{1}{2}\|x-y\|^2 \diff \gamma(x, y) - \varepsilon \Ent \left( \gamma |\mathcal{L} \right),
\end{align}  
where the minimum is taken among all probability measures $\gamma$ supported on $\mathbb{R}^{d}\times \mathbb{R}^d$ which have $\alpha$ and $\beta$ as marginals, and $\Ent\left( \gamma | \mathcal{L} \right)$ denotes the entropy of $\gamma$ on $\mathbb{R}^{d} \times \mathbb{R}^d$ with respect to $\mathcal{L}$, the Lebesgue measure on $\mathbb{R}^{d} \times \mathbb{R}^d$. Note that (\ref{eq:entropic_ot_cont}) can be immediately rewritten as the following problem of relative entropy:
\begin{align}
    \label{eq:kl_min}
    \min_{\gamma \in \Pi(\alpha, \beta)} \varepsilon \Ent(\gamma | e^{-\frac{1}{2 \varepsilon}\| x - y \|^2} \mathcal{L}(dx, dy)).
\end{align}
The form of (\ref{eq:kl_min}) hints at the connection between entropy-regularized optimal transport and the Schr\"odinger problem in the theory of large deviations \cite{leonard2013}. Indeed, the coupling $e^{-\frac{1}{2 \varepsilon}\| x - y \|^2} \mathcal{L}(dx, dy)$ corresponds to the joint law of a particle following a Brownian motion with diffusivity $\sigma = \sqrt{\varepsilon}$ and starting uniformly at random in $\mathbb{R}^d$ at time $t=0$ and that we observe at a time $t=1$. 

\begin{remark}
In \eqref{eq:entropic_ot_cont} we use the entropy of the coupling with respect to the Lebesgue measure $\mathcal{L}$. This is not the only possible choice and a popular one (especially when dealing with discrete measures) is to use $\Ent \left( \gamma | \alpha \otimes \beta \right)$ the entropy with respect to the independent coupling. As proved in \cite[Proposition 4.2]{peyre2019}, the optimal coupling depends only on the \emph{support} of the measure with respect to which the entropy is taken. On the other hand, the minimal value depends on the reference measure. As we are interested not only in the optimal coupling but also the optimal transport cost (because the marginals are optimization variables), the choice of this reference measure is important. 

As detailed below, and as already hinted in Definition \ref{def:ot}, with the time discretization of an entropy minimization problem, it becomes natural to choose as reference measure $\alpha \W^{\sqrt{\varepsilon}}_{0, 1}$, that is a coupling where the first marginal is $\alpha$ and such that the transition probabilities are the ones of a reference process $\W^{\sqrt{\varepsilon}}_{0, 1}$.
\end{remark} 

In the case where the measures are discrete, an efficient Sinkhorn scaling scheme that converges linearly was introduced for solution of (\ref{eq:kl_min}) in the seminal work of Cuturi \cite{cuturi2013}.

\paragraph*{Dynamical formulation}

Equation \eqref{eq:kl_min} is static as one only looks at the coupling between two instants, but we can introduce a dynamical formulation. Let $\W^{\sqrt{\varepsilon}}$ be the Wiener measure with diffusivity $\varepsilon$, that is the law of the Brownian motion whose initial position is distributed according to the Lebesgue measure (technically this is not a probability measure but a $\sigma$-finite positive measure). This is a positive measure on $\Omega = C([0,1], \mathcal{X})$ the space of $\mathcal{X}$-valued paths. Then let $\R$ be any measure on $\Omega$. Denoting by $\W_{0,1}$ and $\R_{0,1}$ their finite dimensional distributions at time $t=0$ and $t=1$,
\begin{equation*}
    \Ent(\R | \W^{\sqrt{\varepsilon}}) \geqslant \Ent( \R_{0,1} | \W^{\sqrt{\varepsilon}}_{0,1} ) =  \Ent(\gamma | e^{-\frac{1}{2 \varepsilon}\| x - y \|^2} \mathcal{L}(dx, dy)) + C
\end{equation*}
where the inequality because the entropy decreases under conditioning, and where the constant $C$ is a normalization constant. Moreover there is equality if and only if the following holds: conditionally that a trajectory starts at $x$ at time $t=0$ and ends at $y$ at time $t=1$, the particles follow the same law under $\R$ and under $\W^{\sqrt{\varepsilon}}$. Or, said differently, there is equality if, under $\R$, conditionally to the initial and final position, the particles follow a Brownian bridge.  

In the static case, a sufficient condition for $\gamma$ to be optimal the optimal coupling (between its own marginal) is that it can be written $\diff \gamma(x,y) = \exp( (\varphi(x) + \psi(y)) / \varepsilon  )  \diff \W^{\sqrt{\varepsilon}}(x,y))$ \cite[Theorem 3.3]{leonard2013}. One can read Theorem \ref{thm:SDE_grad_min_KL} as the extension to the dynamical framework of such a result. These results are not surprising: as we are looking at convex problems, satisfying the first order optimality conditions is a sufficient to be a global minimizer.

Following the well understood link between entropy minimization and large deviation theory via Sanov's theorem \cite{sanov1958probability}, this connects the regularized optimal transport problem and large deviation with respect to Brownian motion. Informally, imagine independent Brownian particles $X_t$ of diffusivity $\sigma^2 = \varepsilon$ observed with starting and finishing distribution $X_0 \sim \alpha$ and $X_1 \sim \beta$. If $\beta$ is not equal in distribution to $Z+X_0$, with $Z$ a centered Gaussian of variance $\sigma^2$ then this is a very unlikely event. However, conditionally to the happening of this event, to compute the most likely you can do the following: take $\gamma$ the solution of the entropy-regularized optimal transport problem with $\varepsilon = \sigma^2$ and marginals $\alpha, \beta$, draw random variables $(X,Y)$ according to $\gamma$ (in particular the law of $X \sim X_0$ and $Y \sim X_1$), and connect $X$ to $Y$ by a Brownian bridge.

\paragraph*{Unbalanced transport} 

The formulation presented above concerns \emph{balanced} optimal transport, that is $\alpha$ and $\beta$ must be probability distributions, or at least measures sharing the same total mass. Principled formulations of unbalanced optimal transport (that is when the mass of $\alpha$ and $\beta$ are allowed to differ) were proposed independently by three research groups \cite{chizat2018, liero2018optimal, kondratyev2016new} and were explored from many point of views including both theoretical and numerical aspects \cite{chizat2018scaling}). However, there is not yet a formulation of unbalanced optimal transport as minimization of the entropy with respect to a reference process analogous to the Schrödinger problem: this is the approach currently being explored by Aymeric Baradat with the first author in \cite{AymericHugo}.

As branching leads to distributions over the space $\Xset$ with varying mass, unbalanced optimal transport is a natural tool in this context and was for instance used in \cite{schiebinger2019}. However, as detailed in this article, we preferred to choose a splitting scheme to handle branching. Note that the ``soft branching constraint'' that we enforce with a KL penalization is reminiscent of some penalizations used in unbalanced optimal transport (in particular for quadratic unbalanced optimal transport). 

\section{Entropy minimization with multiple marginal constraints}
\label{sec:proof_time_dis}

As seen above, entropy-regularized transport can be interpreted as entropy minimization with respect to a Wiener measure conditioned at two different time points. In our work, we need to condition with respect to more than one time point, namely all the instants when we have measured something. Such a problem has been investigated recently in \cite{benamou2019} in connection with Mean Field Games, and in \cite{arnaudon2017entropic} and \cite{baradat2020minimizing} in connection with the entropy-regularization of the Euler equations. We summarize below the results phrased into the general framework of laws on paths valued in a Polish space and then we indicate how one can prove Proposition \ref{prop:time_disc}    

In the result below, for a Polish space $\Xset$ we  let $\Omega = C([0,1], \Xset)$ be space of $\Xset$-valued paths. We write $(X_t)_{t \in [0,1]}$ for the canonical process, that is $X_t(\omega) = \omega_t$ for $\omega \in \Omega$ and $t \in [0,1]$. If $\R \in \Pc(\Omega)$ and $t,s \in [0,1]$ we let $\R_{s,t} = (X_t,X_s) \# \R$ be the law of $(X_t,X_s)$ provided $(X_t)_{t \in [0,1]}$ follows $\R$. A law $\R \in \Pc(\Omega)$ is called Markovian if the canonical process $(X_t)_{t \in [0,1]}$ is Markov under $\R$ for the filtration $(\mathcal{F}_t)_{t \in [0,1]}$ where $\mathcal{F}_t$ is the $\sigma$-algebra generated by the $(X_s)_{s \in [0,t]}$. 

\begin{prop}
\label{prop:time_disc_appendix}
Let $\Xset$ be a Polish space and $\Omega$ defined as above. We consider $\W \in \Pc(\Omega)$ a Markovian law on $\Omega$. Moreover, let $\R \in \Pc(\Omega)$ and $t_1,  \ldots, t_T$ a collection of instants. There holds
\begin{multline*}
    \Ent(\R|\W) \stackrel{(\dagger)}{\geqslant} \Ent(\R_{t_1,\ldots,t_T}|\W_{t_1,\ldots,t_T}) \\ \stackrel{(\star)}{\geqslant} \Ent(\R_{t_1,t_2}| \W_{t_1,t_2}) + \sum_{i=2}^{T-1} \left( \Ent(\R_{t_i,t_{i+1}}|\W_{t_i,t_{i+1}}) - \Ent(\R_{t_i}|\W_{t_i}) \right)
\end{multline*}
The first inequality $(\dagger)$ becomes an equality if and only if 
   \begin{align*}
        \R(\cdot) = \int_{\mathcal{X}^T} \W(\cdot | x_1, \ldots, x_T) \diff\R_{t_1, \ldots, t_T}(x_1, \ldots, x_T)
    \end{align*}
    where $\W(\cdot | x_1, \ldots, x_T)$ is the law of $\W$ conditioned on passing through $x_1, \ldots, x_T$ at times $t_1, \ldots, t_T$ respectively. In addition, the second inequality $(\star)$ becomes an equality if and only if $\R$ is Markovian. 
\end{prop}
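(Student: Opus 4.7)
The plan is to establish the two inequalities separately: $(\dagger)$ via a chain-rule for relative entropy along a projection, and $(\star)$ via a Markovianization construction that exploits the Markov property of $\W$.

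For $(\dagger)$ I may assume $\Ent(\R|\W) < +\infty$, otherwise there is nothing to prove. Using the Polish structure of $\Omega$, I would disintegrate both $\R$ and $\W$ along the evaluation map $\pi : \omega \mapsto (\omega_{t_1},\ldots,\omega_{t_T})$, writing them as integrals against regular conditional distributions $\R(\cdot|x_1,\ldots,x_T)$ and $\W(\cdot|x_1,\ldots,x_T)$. Factoring $d\R/d\W$ as the product of $(d\R_{t_1,\ldots,t_T}/d\W_{t_1,\ldots,t_T})\circ \pi$ and the Radon-Nikodym derivative of the conditionals on the fibers, then integrating the logarithm against $\R$, yields the chain rule
$$\Ent(\R|\W) = \Ent(\R_{t_1,\ldots,t_T}|\W_{t_1,\ldots,t_T}) + \int \Ent\bigl(\R(\cdot|x_1,\ldots,x_T)\bigm|\W(\cdot|x_1,\ldots,x_T)\bigr)\,d\R_{t_1,\ldots,t_T}.$$
Non-negativity of the conditional-entropy term gives $(\dagger)$, with equality iff the two conditional laws agree for $\R_{t_1,\ldots,t_T}$-almost every $(x_1,\ldots,x_T)$; this is exactly the stated reconstruction formula for $\R$ in terms of $\R_{t_1,\ldots,t_T}$ and the bridge kernels of $\W$.

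For $(\star)$, the plan is to Markovianize the finite-dimensional distribution. Using the regular conditional distributions $\R(X_{t_{i+1}} \in \cdot \mid X_{t_i})$ determined by the pairwise marginal $\R_{t_i,t_{i+1}}$, I define the Markov probability measure $\R^M$ on $\Xset^T$ by gluing,
$$\R^M := \R_{t_1} \otimes \R(X_{t_2} \in \cdot \mid X_{t_1}) \otimes \cdots \otimes \R(X_{t_T} \in \cdot \mid X_{t_{T-1}}),$$
which has the same pairwise consecutive marginals as $\R_{t_1,\ldots,t_T}$ and coincides with $\R_{t_1,\ldots,t_T}$ iff the latter is Markovian. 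Since $\W$ is Markovian, $\W_{t_1,\ldots,t_T}$ admits the analogous factorization through its pairwise and one-time marginals, informally written as
$$d\W_{t_1,\ldots,t_T} = \frac{\prod_{i=1}^{T-1} d\W_{t_i,t_{i+1}}}{\prod_{i=2}^{T-1} d\W_{t_i}}.$$
Consequently $\log(d\R^M/d\W_{t_1,\ldots,t_T})$ is a sum over $i$ of pairwise log-densities $\log(d\R_{t_i,t_{i+1}}/d\W_{t_i,t_{i+1}})$ minus one-time log-densities $\log(d\R_{t_i}/d\W_{t_i})$. Integrating this expression against $\R_{t_1,\ldots,t_T}$ versus against $\R^M$ gives the same value because their pairwise consecutive marginals and one-time marginals coincide. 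This yields the key identity
$$\Ent(\R_{t_1,\ldots,t_T}|\W_{t_1,\ldots,t_T}) - \Bigl[\sum_{i=1}^{T-1} \Ent(\R_{t_i,t_{i+1}}|\W_{t_i,t_{i+1}}) - \sum_{i=2}^{T-1}\Ent(\R_{t_i}|\W_{t_i})\Bigr] = \Ent(\R_{t_1,\ldots,t_T}|\R^M) \geqslant 0,$$
which is $(\star)$; equality forces $\R_{t_1,\ldots,t_T} = \R^M$, i.e.\ $\R_{t_1,\ldots,t_T}$ is Markovian, and together with the equality case of $(\dagger)$ this promotes $\R$ itself to a Markov law on $\Omega$.

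The only technical subtlety is bookkeeping around absolute-continuity hypotheses: one must verify that the Markovianization is a well-defined probability measure, which is immediate from the regular-conditional-probability construction on Polish spaces and the consistency of the pairwise marginals, and that the identities for $\Ent$ extend across the cases where various Radon-Nikodym derivatives may fail to exist, handled by the standard convention $\Ent(\cdot|\cdot) = +\infty$. No genuinely new inequality is needed: all the work is carried out by the factorization of $\W_{t_1,\ldots,t_T}$ (Markov property of $\W$) combined with the Markovianization of $\R_{t_1,\ldots,t_T}$ at the finite-dimensional level.
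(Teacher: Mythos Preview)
Your proof is correct and follows precisely the approach indicated by the paper, which simply cites \cite[Equation (1.5)]{leonard2013} for $(\dagger)$ and \cite[Lemma 3.4]{benamou2019} for $(\star)$; your argument unpacks exactly what those references contain, namely the chain rule for relative entropy under a measurable projection and the Markovianization identity $\Ent(\R_{t_1,\ldots,t_T}|\W_{t_1,\ldots,t_T}) = \Ent(\R_{t_1,\ldots,t_T}|\R^M) + \Ent(\R^M|\W_{t_1,\ldots,t_T})$. One small remark: the equality case of $(\star)$ in the statement literally says ``$\R$ is Markovian'', but as you implicitly note, $(\star)$ alone only sees $\R_{t_1,\ldots,t_T}$ and so can only force the finite-dimensional law to be Markov; lifting this to $\R$ on all of $\Omega$ indeed requires combining with the equality case of $(\dagger)$, as you do.
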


\begin{proof}
The first inequality $(\dagger)$ and the equality case follows from the general behavior of the entropy under conditioning: see for instance \cite[Equation (1.5)]{leonard2013} for the two marginal case, the extension to $T$ marginals being straightforward. The second inequality $(\star)$ together with the equality case can be found in \cite[Lemma 3.4]{benamou2019}.
\end{proof}

\begin{remark}
\label{rk:still_convex}
Though it is not apparent, as a function of $\R$ the quantity $\Ent(\R_{t_i,t_{i+1}}|\W_{t_i,t_{i+1}}) - \Ent(\R_{t_i}|\W_{t_i})$ is convex. Indeed, playing with the algebraic properties of the logarithm it can written 
\begin{equation}
\label{eq:still_convex}
\Ent(\R_{t_i,t_{i+1}}|\W_{t_i,t_{i+1}}) - \Ent(\R_i|\W_i) = \Ent( \R_{t_{i},t_{i+1}} |  \R_{t_i}  \W_{t_{i}, t_{i+1}} )
\end{equation}
where $\tilde{\W} = \R_{t_i}  \W_{t_{i}, t_{i+1}}$ is the measure on $\Xset^2$ such that $\tilde{\W}_{t_i} = \R_{t_i}$ and the transition probabilities of $\W_{t_i,t_{i+1}}$ and $\tilde{\W}$ coincide: that is conditionally to $X_{t_i} = x$, the law of $X_{t_{i+1}}$ is the same under $\tilde{\W}$ and $\W_{t_i,t_{i+1}}$. Moreover, by joint convexity of the entropy with respect to its two arguments, the mapping $\R \mapsto \Ent( \R_{t_{i},t_{i+1}} |\R_{t_i}  \W_{t_{i}, t_{i+1}} )$ is convex. 
\end{remark}

With the help of this result, the proof of Proposition \ref{prop:time_disc} follows easily. 

\begin{proof}[\bf Proof of Proposition \ref{prop:time_disc}]
By definition of $\OT_\varepsilon$ (Definition \ref{def:ot}), an easy scaling argument and the identity \eqref{eq:still_convex} there holds
\begin{equation}
\label{eq:appendix_proof_time_disc}
\sigma^2 \Ent(\R_{t_i,t_{i+1}}|\W_{t_i,t_{i+1}}^\sigma) - \sigma^2 \Ent(\R_i|\W_i^\sigma) \geqslant \frac{1}{\Delta t_i} \OT_{\sigma^2 \Delta t_i } (\R_{t_i}, \R_{t_{i+1}}; \W^{\sigma}_{t_i}), 
\end{equation}
with equality if and only if $\R_{t_i,t_{i+1}}$ is the optimal coupling between the marginals $\R_{t_i}$ and $\R_{t_{i+1}}$. Thus the quantity $\mathrm{L}(\R)$ defined in \eqref{eq:optim_ctstime} is always larger than what we minimize in \eqref{eq:optim_dsc_time}. Together with the equality cases of Proposition \ref{prop:time_disc_appendix} and Equation \eqref{eq:appendix_proof_time_disc} leads to the ``reconstruction'' procedure of the minimizer of $\mathrm{L}$ from the one of \eqref{eq:optim_dsc_time}. 
\end{proof}

\section{Derivation of the dual problem}
\label{proof:dual}
In this section, we prove Proposition \ref{prop:dual} which states the dual of our fully discretized reconstruction problem.

\begin{definition}[Modified optimal transport $\tildeOT$]
    For convenience of notation later on, let us first define a modified entropy-regularized optimal transport loss
    \begin{align*}
        \tildeOT_\varepsilon(\alpha, \beta) &= \OT_\varepsilon(\alpha, \beta ; \alpha) = \inf_{\gamma \in \Pi(\alpha, \beta)} \varepsilon \Ent(\gamma | \alpha \W^{\sqrt{\varepsilon}}_{0, 1}). 
    \end{align*}
    Compared to the definition of \eqref{eq:entropic_ot}, here we impose that the reference process must start from the source measure $\alpha$. 
\end{definition}

\begin{definition}[Legendre transform]
     Let $\mathcal{V}$ be a real topological vector space and $f: \mathcal{V} \to (-\infty, \infty]$ a proper convex function, i.e. one that is not identically $+\infty$. The Legendre transform of $f$ (also commonly called the convex conjugate) is $f^*$, defined for $u \in \mathcal{V}^*$ as 
    \begin{align*}
    f^*(u) = \sup_{x \in \mathcal{V}} \inner{x, u} - f(x).
    \end{align*}
    Furthermore, $f^{**} =f$ if and only if $f$ is both convex and lower semicontinuous. 
\end{definition}

\begin{remark}
    For our application to finding the dual of the discretized problem, we will take $\mathcal{V} = \mathcal{M}(\overline{\Xset})$, i.e. the set of (signed) measures supported on the finite set $\overline{\Xset}$. We therefore identify $\mathcal{V}^* = \mathbb{R}^{\overline{\Xset}}$. In practice, we regard elements of both $\mathcal{V}$ and $\mathcal{V}^*$ as vectors in $\mathbb{R}^{|\overline{\Xset}|}$.
\end{remark}

In Table \ref{table:legendre} we provide the Legendre transform of the functions that we use in the proof of Proposition \ref{prop:dual}. We also provide the ``primal-dual relation at optimality'', that is the relation that $x$ and $u$ must satisfy for $f^*(u) =  \inner{x, u} - f(x)$ to hold. The proof of the different identities of Table \ref{table:legendre} can be found after it. First, for clarity we define kernel matrices on $\overline{\Xset}$ that we will use.

\begin{definition}[Gibbs kernel]\label{def:gibbs}
    Let $\overline{\Xset}$ be a finite set of points in $\mathbb{R}^d$. Then we define the \emph{Gibbs kernel} of variance $\varepsilon$ to be the matrix of dimensions $|\overline{\Xset}| \times |\overline{\Xset}|$ with entries
    \begin{align*}
        (K_\varepsilon)_{ij} &= \exp\left( -\frac{1}{2\varepsilon} \| x_i - x_j \|^2 \right).
    \end{align*}
    The \emph{row-normalized Gibbs kernel} is then defined as 
    \begin{align*}
        (\overline{K}_\varepsilon)_{ij} &= \frac{(K_\varepsilon)_{ij}}{\sum_j (K_\varepsilon)_{ij} }.
    \end{align*}
\end{definition}

\begin{table}[H]
    {\small
    \centering
    \setlength\extrarowheight{2pt}
    \begin{tabulary}{\linewidth}{|L|L|L|L|}
        \hline
         & $f$ & $f^*$ & Primal-dual relation at optimality\\
        \hline
        OT & $(\alpha, \beta) \mapsto \OT_\varepsilon(\alpha, \beta)$, $\alpha, \beta \in \mathcal{M}_+(\overline{\Xset})$ & $(u, v) \mapsto \varepsilon \inner{e^{u/\varepsilon}, K_\varepsilon e^{v/\varepsilon}}$ & $\gamma = \diag(e^{u/\varepsilon}) K_\varepsilon \diag(e^{v/\varepsilon}) $\\ 
        \hline 
        OTU & $(\alpha, \beta) \mapsto \OT_\varepsilon(\alpha, \beta; \pi_0)$, $\alpha, \beta \in \mathcal{M}_+(\overline{\Xset})$  & $(u, v) \mapsto \varepsilon \inner{e^{u/\varepsilon}, K e^{v/\varepsilon - 1}}$, $K = \diag(\pi_0) \overline{K}_\varepsilon$. & $\gamma = \diag(e^{u/\varepsilon}) K \diag(e^{v/\varepsilon - 1})$\\
        \hline
        OTN & $(\alpha, \beta) \mapsto \OT_\varepsilon(\alpha, \beta; \pi_0)$, $\alpha, \beta \in \mathcal{P}(\overline{\Xset})$  & $(u, v) \mapsto \varepsilon \log\inner{e^{u/\varepsilon}, K e^{v/\varepsilon - 1}}$, $K = \diag(\pi_0) \overline{K}_\varepsilon$. & $\gamma = Z^{-1} \diag(e^{u/\varepsilon}) K \diag(e^{v/\varepsilon - 1})$, $Z = \inner{e^{u/\varepsilon}, Ke^{v/\varepsilon - 1}}$ \\
        \hline 
        OTC & $(\alpha, \beta) \mapsto \tildeOT_\varepsilon(\alpha, \beta)$, $\alpha, \beta \in \mathcal{M}_+(\overline{\Xset})$ & $(u, v) \mapsto \iota\{ u \leq -\varepsilon\log (\overline{K}_\varepsilon e^{v/\varepsilon})\}$ & $\gamma = \diag\left(\dfrac{\alpha}{\overline{K}_\varepsilon e^{v/\varepsilon}}\right) \overline{K}_\varepsilon \diag(e^{v/\varepsilon})$ at equality. \\  
        \hline  
        H12 & $(\alpha, \beta) \mapsto \Ent(\alpha | \beta)$, $\alpha, \beta \in \mathcal{M}_+(\overline{\Xset})$ & $(u, v) \mapsto \iota\{ v \leq -\exp(u-1) \}$ & $\alpha/\beta = e^{u-1}$ at equality \\
        \hline
        KL1 & $\alpha \mapsto \KL(\alpha | \beta)$, $\alpha, \beta \in \mathcal{M}_+(\overline{\Xset})$ & $u \mapsto \inner{e^u - 1, \beta}$ & $\alpha = \beta e^{u}$\\
        \hline
        KL2 & $\beta \mapsto \KL(\alpha | \beta)$, $\alpha, \beta \in \mathcal{M}_+(\overline{\Xset})$ & $v \mapsto \inner{\alpha, -\log(1-v)}$ & $\beta = \alpha / (1 - v)$\\
        \hline 
        KL12 & $(\alpha, \beta) \mapsto \KL(\alpha | \beta)$, $\alpha, \beta \in \mathcal{M}_+(\overline{\Xset})$ & $(u, v) \mapsto \iota\{ v \leq 1-\exp(u) \}$ & $\alpha / \beta = e^u$ at equality. \\
        \hline  
        SB & $(\alpha, \beta) \mapsto \kappa \KL(\beta | g\alpha)$ & $(u, v) \mapsto \iota\{ u \leq \kappa g(1 - \exp(v/\kappa)) \}$ & \\ 
        \hline
        HB & $(\alpha, \beta) \mapsto \iota\{ g \alpha = \beta \}$ & $(u, v) \mapsto \iota\{ u = -g v \}$ &  \\
        \hline
    \end{tabulary}
    }
    \caption{Summary of Legendre transforms. Where we take a transform with respect to two variables, $u, v \in \mathbb{R}^{\overline{\Xset}}$ are the dual variables corresponding respectively to $\alpha, \beta \in \mathcal{M}_+(\overline{\Xset})$ (or $\Pc(\overline{\Xset})$). Here $\overline{\Xset}$ could be any finite set, and $K_\varepsilon$, $\overline{K}_\varepsilon$ are respectively the Gibbs kernel and row-normalized Gibbs kernel with variance $\varepsilon$ on $\overline{\Xset}$. With exception of matrix-vector multiplications, all operations on vectors are to be understood elementwise.}
    \label{table:legendre}
\end{table}

~\paragraph*{Legendre transforms of KL1, KL2 and HB}
These computations are standard: for KL1, KL2 this can be checked in a straightforward way with calculus. The computation for HB, which corresponds to the indicator set of a linear constraint, is also very standard. 

~\paragraph*{Legendre transform of H12, KL12 and SB}
To prove H12, notice that we can restrict to the case $\alpha, \beta \in \Rset$ (that is $\overline{\Xset}$ has a single element) as the function decomposes on the different dimensions of $\Rset^{\overline{\Xset}}$. Then for $f(\alpha, \beta) = \alpha \log(\alpha/\beta)$, the Legendre transform $f^*$ is given by
\begin{equation*}
    f^*(u,v) = \sup_{\alpha, \beta \geqslant 0} \alpha u + \beta v - \alpha \log \frac{\alpha}{\beta} = \sup_{\alpha, \beta \geqslant 0} \alpha \left( u + \frac{\beta}{\alpha} v +  \log\frac{\beta}{\alpha} \right).
\end{equation*}
We can first optimize in $\beta/\alpha$: we need the supremum over $\beta/\alpha$ of the term in parenthesis to be non positive (and then we take $\alpha = 0$) for $f^*(u,v)$ to be $0$, otherwise if the supremum of the term in parenthesis is positive then $f^*(u,v) = + \infty$. The term in parenthesis is maximized for $\beta/\alpha = - v$ and the supremum is $u-1- \log(-v)$ hence the conclusion. Note that KL12 can be derived by an identical argument, except with $f(\alpha, \beta) = \alpha \log(\alpha/\beta) - \alpha + \beta$. From there SB can be deduced by a scaling argument.

\paragraph*{Legendre transform of the optimal transport costs OT, OTU and OTN}
These Legendre transforms are more involved but already derived elsewhere. Note that OT and OTU differ by the normalization of the reference measure and a correction term that amounts to replacing $\Ent(\gamma | K)$ with $\Ent(\gamma | K) - \inner{\ones \otimes \ones, \gamma}$. The difference between OTU and OTN is only that the second function is restricted to $\alpha, \beta$ probability measures. For OT and OTU we can refer the reader to \cite[Proposition 4.4]{peyre2019} while the Legendre transform of OTN can be found in the literature on the Schrödinger problem, see e.g. \cite[Section 2]{leonard2013}.

\paragraph*{Legendre transform of the optimal transport cost OTC}
This transform is the most involved. First, let us consider the case of $\varepsilon = 1$ and let $\overline{K} = \overline{K}_1$. As an intermediate step, consider the function $h : \gamma \in \mathcal{M}_+(\overline{\Xset} \times \overline{\Xset}) \to \Ent(\gamma | (\Pi_1 \gamma) \overline{K})$ where $\Pi_1 \gamma$ is the projection onto the first marginal of $\gamma$. Note that $h$ can be written $h(\gamma) = \Ent(\gamma | A \gamma)$ where $A : \mathcal{M}_+(\overline{\Xset} \times \overline{\Xset}) \to \mathcal{M}_+(\overline{\Xset} \times \overline{\Xset})$ is a linear operator mapping $\gamma \mapsto (\Pi_1 \gamma) \overline{K}$. Hence its Legendre transform is 
\begin{equation*}
    h^*(x) = \min \left\{ \Ent^*(u,v) \ : \ u + A^\top v = x \right\};
\end{equation*}
where $A^\top$ is the adjoint of $A$. Let us write $u,v$ and $x$ as $|\overline{\Xset}| \times |\overline{\Xset}|$ matrices with indices $i,j$. The equation $u + A^\top v = x$ therefore reads
\begin{equation*}
    u_{ij} +   \sum_k \overline{K}_{ik} v_{ik}  =  x_{ij} 
\end{equation*}
for all $i,j$. Then from Table \ref{table:legendre} (H12), $\Ent^*(u,v) = \iota\{v \leqslant -\exp(u-1)\}$, that is, it is the indicator of $u \leq 1 + \log(-v)$ and so we can eliminate $u$ and the question amounts to finding $v = v_{ij}$ such that for all $i,j$, 
\begin{equation*}
    x_{ij}  - \sum_k \overline{K}_{ik} v_{ik} \leqslant 1 + \log(-v_{ij}) \leqslant 0.
\end{equation*}
Exponentiating, multiplying by $\overline{K}_{ij}$ and finally summing in $j$ we see that for all $i$
\begin{equation*}
    \exp\left(-\sum_j \overline{K}_{ij} v_{ij} \right) \sum_{j} \overline{K}_{ij} e^{x_{ij} - 1} + \sum_j \overline{K}_{ij} v_{ij} \leq 0.
\end{equation*}
Optimizing in $\sum_j \overline{K}_{ij} v_{ij}$ we find that a necessary condition for $h^*(x) = 0$ is that
\begin{equation*}
\log \left( \sum_j \overline{K}_{ij} \exp(x_{ij})  \right) \leqslant 0    
\end{equation*}
for all $i$. On the other hand, it is easy to check that this condition is sufficient, that is $h^*(x)$ is the indicator of this constraint. Eventually we write our function of interest 
$$\tildeOT_1(\alpha,\beta) = \min_\gamma \{ h(\gamma) \ : \ (\Pi_1 \gamma, \Pi_2 \gamma) = (\alpha,\beta) \}.$$
It is standard in optimal transport that the Legendre transform of such a function has the form $\tildeOT_1^*(u,v) = h^*(u \oplus v)$ where $u \oplus v$ is the matrix such that $(u \oplus v)_{ij} = u_i + v_j$ Factoring out $u$ in the expression defining $h^*$ gives us the desired result when $\varepsilon = 1$ and then we can get the transform for $\varepsilon > 0$ by a scaling property.

\begin{proof}[\bf Proof of Proposition \ref{prop:dual}]
    Using the modified optimal transport term $\tildeOT_\varepsilon(\cdot, \cdot)$ defined earlier and substituting \eqref{eq:regfunc_growth} and \eqref{eq:dffunc_growth} into \eqref{eq:optim_dsc_time}, the primal optimization problem can be written 
    \begin{align*}
        &\inf_{\substack{\R_{t_1} \in \Pc(\overline{\Xset}), \\ \R_{t_2}, \ldots, \R_{t_T} \in \mathcal{M}_+(\overline{\Xset})}}  \lambda \Reg(\R_{t_1}, \ldots, \R_{t_T}) + \Fit(\R_{t_1}, \ldots, \R_{t_T}) \\ 
        = &\begin{aligned}[t]
            \inf_{\substack{\R_{t_1} \in \Pc(\overline{\Xset}), \\ \R_{t_2}, \ldots, \R_{t_T} \in \mathcal{M}_+(\overline{\Xset})}} \inf_{\substack{\Rbar_{t_2} \in \Pc(\overline{\Xset}), \\ \Rbar_{t_3}, \ldots, \Rbar_{t_T} \in \mathcal{M}_+(\overline{\Xset})}} &\lambda \left\{ \dfrac{1}{m_1 \Delta t_1} \OT_{\sigma^2 \Delta t_1}(\R_{t_1}, \Rbar_{t_2}; \pi_0) + \dfrac{1}{m_2 \Delta t_1} G_2(\Rbar_{t_2}, \R_{t_2}) \right. \\ 
                                              &+ \left.\sum_{i = 2}^{T-1} \left[ \dfrac{1}{m_i \Delta t_i} \tildeOT_{\sigma^2 \Delta t_i}(\R_{t_i}, \Rbar_{t_{i+1}}) + \dfrac{1}{m_{i+1} \Delta t_i} G_{i+1}(\Rbar_{t_{i+1}}, \R_{t_{i+1}})\right] \right\} \\
                                          &+ \sum_{i = 1}^{T} w_i \enskip \inf_{\Rtilde_{t_i} \in \mathcal{M}_+(\overline{\Xset})} \left[ \dfrac{1}{m_i}\OT_{\varepsilon_i} (\R_{t_i}, \Rtilde_{t_i}) + \lambda_i \KL(\rhohat_{t_i} | m_i^{-1} \Rtilde_{t_i}) \right]. 
        \end{aligned}
    \end{align*}
    We now introduce the Legendre transforms of each of the terms $\OT_{\sigma^2 \Delta t_1}(\cdot, \cdot ; \pi_0), \tildeOT_{\sigma^2 \Delta t_i}(\cdot, \cdot), \OT_{\varepsilon_i}(\cdot, \cdot)$ and $G_i(\cdot, \cdot)$ in both their arguments (respectively, OTN, OTC, OT and either SB or HB in Table \ref{table:legendre}).  
    Note particularly for the first term of the regularizing functional $\OT_{\sigma^2 \Delta t_1}(\R_{t_1}, \Rbar_{t_2}; \pi_0)$, we incorporate the simplex constraint $\R_{t_1}, \Rbar_{t_2} \in \Pc(\overline{\Xset})$ implicitly by picking the transform corresponding to OTN in Table \ref{table:legendre} that has the unit mass constraints ``baked in''. Consequently, henceforth we do away with explicit constraints on $\Pc(\overline{\Xset})$ and take infima only over $\mathcal{M}_+(\overline{\Xset})$, i.e. positive measures supported on $\overline{\Xset}$. 
    \begin{align*}
        \begin{split}
            \inf_{ \{\R_{t_i} \}_{i = 1}^T, \{ \Rbar_{t_i} \}_{i = 2}^{T}, \{ \Rtilde_{t_i} \}_{i = 1}^T} &\lambda \left\{ \dfrac{1}{m_1 \Delta t_1} \textcolor{black}{\sup_{u_1, v_1} \left(\inner{u_1, \R_{t_1}} + \inner{v_1, \Rbar_{t_2}} - \OT^*_{\sigma^2 \Delta t_1}(u_1, v_1; \pi_0)\right)} \right. \\
            &\quad + \dfrac{1}{m_2 \Delta t_1} \textcolor{black}{\sup_{\phi_1, \psi_2} \left( \inner{\phi_1, \Rbar_{t_2}} + \inner{\psi_2, \R_{t_2}} - G_2^*(\phi_1, \psi_2)\right)} \\
            &\quad +\sum_{i = 2}^{T-1} \left[ \dfrac{1}{m_i \Delta t_i} \textcolor{black}{\sup_{u_i, v_i} \left( \inner{u_i, \R_{t_i}} + \inner{v_i, \Rbar_{t_{i+1}}} - \tildeOT^*_{\sigma^2 \Delta t_i} (u_i, v_i)\right)} \right.\\ 
            &\left.\left.\quad+ \dfrac{1}{m_{i+1} \Delta t_i} \textcolor{black}{\sup_{\phi_i, \psi_{i+1}} \left( \inner{\phi_i, \Rbar_{t_{i+1}}} + \inner{\psi_{i+1}, \R_{t_{i+1}}} - G^*_{i+1}(\phi_i, \psi_{i+1})\right)} \right] \right\} \\
            &\quad+ \sum_{i = 1}^{T} w_i \left[ \dfrac{1}{m_i} \textcolor{black}{\sup_{\hat{u}_i, \hat{v}_i}\left( \inner{\hat{u}_i, \R_{t_i}} + \inner{\hat{v}_i, \Rhat_{t_i}} - \OT^*_{\varepsilon_i}(\hat{u}_i, \hat{v}_i)\right)} + \lambda_i \KL(\rhohat_{t_i} | m_i^{-1} \Rtilde_{t_i}) \right].
        \end{split}
    \end{align*}
    At this point we make an $\inf-\sup$ which can be justified as usual by the Fenchel-Rockafellar theorem (see e.g \cite[Theorem 1.12]{brezis2010functional}). As we are on a finite dimensional space (we consider only measures over $\overline{\Xset}$ which is finite) the assumptions are easily checked. With this exchange we arrive at
    \begin{align*}
        \begin{split}
            \sup_{ \{ u_i, v_i, \hat{u}_i, \hat{v}_i \}_{i = 1}^T} \inf_{ \{\R_{t_i} \}_{i = 1}^T, \{ \Rbar_{t_i} \}_{i = 2}^{T}, \{ \Rtilde_{t_i} \}_{i = 1}^T} & \sup_{ \{\phi_i\}_{i = 1}^{T-1}, \{ \psi_i \}_{i = 2}^T} \inner{\dfrac{\lambda u_1}{m_1 \Delta t_1} + \dfrac{w_1 \hat{u}_1}{m_1}, \R_{t_1}} \\
            &+ \sum_{i = 2}^{T-1} \inner{\dfrac{\lambda u_i}{m_i \Delta t_i} + \dfrac{\lambda \psi_i}{m_i \Delta t_{i-1}} + \dfrac{w_i \hat{u}_i}{m_i}, \R_{t_i}} \\
            &+ \inner{\dfrac{\lambda \psi_T}{m_T \Delta t_{T-1}} + \dfrac{w_T \hat{u}_T}{m_T}, \R_{t_T}} \\
            &+ \sum_{i = 2}^T \inner{\dfrac{\lambda v_{i-1}}{m_{i-1} \Delta t_{i-1}} + \dfrac{\lambda \phi_{i-1}}{m_i \Delta t_{i-1}}, \Rbar_{t_i}} \\ 
            &+ \sum_{i = 1}^T \left(\inner{\dfrac{w_i \hat{v}_i}{m_i}, \Rhat_{t_i}} + \lambda_i w_i \KL\left(\rhohat_{t_i} \Big| \dfrac{1}{m_i} \Rhat_{t_i}\right)\right) \\
            &-\dfrac{\lambda}{m_1 \Delta t_1}\OT^*_{\sigma^2 \Delta t_1}(u_1, v_1; \pi_0) - \sum_{i = 2}^{T-1} \dfrac{\lambda}{m_i \Delta t_i} \tildeOT^*_{\sigma^2 \Delta t_i}(u_i, v_i) \\
            &- \sum_{i = 1}^T \dfrac{w_i}{m_i} \OT_{\varepsilon_i}^*(\hat{u}_i, \hat{v}_i) - \sum_{i = 2}^T \dfrac{\lambda}{m_i \Delta t_{i-1}} G_i^*(\phi_{i-1}, \psi_i)
        \end{split}
    \end{align*}
    Examining the above carefully, we note that the final form of the dual can be extracted as 
    \begin{align}\label{eq:dual_appendix}
        \begin{split}
            \sup_{ \{ u_i, v_i, \hat{u}_i, \hat{v}_i \}_{i = 1}^T} &-\dfrac{\lambda}{m_1 \Delta t_1}\OT^*_{\sigma^2 \Delta t_1}(u_1, v_1; \pi_0) - \sum_{i = 2}^{T-1} \dfrac{\lambda}{m_i \Delta t_i} \tildeOT^*_{\sigma^2 \Delta t_i}(u_i, v_i) - \sum_{i = 2}^T \dfrac{\lambda}{m_i \Delta t_{i-1}} G_i^*(\phi_{i-1}, \psi_i)  \\
            &- \sum_{i = 1}^T \dfrac{w_i}{m_i} \OT_{\varepsilon_i}^*(\hat{u}_i, \hat{v}_i) - \sum_{i = 1}^T \lambda_i w_i \KL^*\left( \rhohat_{t_i} \Big| -\dfrac{\hat{v}_i}{\lambda_i} \right)
        \end{split}
    \end{align}
    subject to the constraints
    \begin{align}\label{eq:dual_constraints_appendix}
        \begin{cases}
            \dfrac{\lambda u_1}{\Delta t_1} + w_1 \hat{u}_1 &= 0 \\
            \dfrac{\lambda u_i}{\Delta t_i} + \dfrac{\lambda \psi_i}{\Delta t_{i-1}} + w_i \hat{u}_i &= 0, \text{ for } 2 \leq i \leq T - 1 \\
            \dfrac{\lambda \psi_T}{\Delta t_{T-1}} + w_T \hat{u}_T &= 0
        \end{cases}, \\
    \end{align}
    and 
    \begin{align*}
        \dfrac{v_{i-1}}{m_{i-1}} + \dfrac{\phi_{i-1}}{m_i} = 0, 2 \leq i \leq T.
    \end{align*}
    As detailed in Table \ref{table:legendre}, we know that $\tildeOT_\varepsilon^*(u, v) = \iota\{ u \leq -\varepsilon \log(\overline{K}_\varepsilon \exp(v/\varepsilon) \}$, introducing a non-trivial constraint to the dual problem. We reason that at optimality this inequality constraint in the dual must be an equality: examining in detail the relationship between auxiliary variables in \eqref{eq:dual_constraints_appendix} and also depicted in Figure \ref{fig:dependency_diagram} reveals that each entry of the dual potential $v_1$ has a monotonic decreasing relationship with all entries of $u_i, 2 \leq i \leq T-1$, under the assumption that there is an entrywise negative relationship between the optimal $\phi_{i-1}$ and $\psi_i$ for $2 \leq i \leq T$ for the chosen $G_i^*$ (this is true for both the hard and soft branching constraint functions discussed in Section \ref{sec:growth}). In combination with the fact that $\OT^*_{\sigma^2 \Delta t_1}(u_1, v_1)$ is monotonic increasing in $v_1$ (see Table \ref{table:legendre}), we conclude that were the inequality $u \leq -\varepsilon \log(\overline{K}_\varepsilon \exp(v/\varepsilon))$ strict for any of the $\tildeOT^*$ terms, the dual objective could be improved ``for free'' by increasing $u$ to make it an equality. Thus, in practice we may restrict to the case of equality by adding the constraints
    \begin{align}
        u_i = -\sigma^2 \Delta t_i \log( \overline{K}_{\sigma^2 \Delta t_i} \exp(v_i/\sigma^2 \Delta t_i)), \quad 2 \leq i \leq T-1
        \label{eq:ot_tilde_constraint}
    \end{align}
    and setting each $\tildeOT^*$ term to 0. 
    In the case where $G_i$ is chosen as the soft branching constraint \eqref{eq:soft_growth_constraint}, a similar inequality constraint is introduced into the dual problem and we may apply the same reasoning to show that at optimality, all inequalities must be equalities. Thus, we conclude that for $1 \leq i \leq T-1$ that:
    \begin{align}
        \begin{split}
        \begin{cases}
            \phi_{i} = -g_i \psi_{i+1}  &\text{ for hard branching constraint \eqref{eq:exact_growth_constraint}} \\
            \phi_{i} = \kappa g_i \log(1 - \psi_{i+1}/\kappa) &\text{ for soft branching constraint \eqref{eq:soft_growth_constraint}}
        \end{cases}
        \end{split}\label{eq:dual_growth_constraint_appendix}
    \end{align}
    
    Finally, it remains to check that the dual problem \eqref{eq:dual_appendix} remains convex after we restrict to the boundary of the feasible set defined by the inequalities \eqref{eq:ot_tilde_constraint} introduced by $\tildeOT^*$ (and $G_i^*$ in the case of a soft branching constraint, see \eqref{eq:dual_growth_constraint_appendix}). Note that since $\tildeOT^*_{\sigma^2 \Delta t_i}(\cdot, \cdot) = 0$ and $G_i^*(\cdot, \cdot) = 0$ whenever the constraints are satisfied (see Table \ref{table:legendre}), we need only check that the convexity of the term $\OT^*_{\sigma^2 \Delta t_1} (u_1, v_1; \pi_0)$ in \eqref{eq:dual_appendix} as a function of the optimization variables $\{\hat{u}_i, \hat{v}_i\}_{i = 1}^T$ is preserved. This can be done directly:
    a careful examination of the relations in \eqref{eq:dual_constraints_appendix} and \eqref{eq:dual_growth_constraint_appendix} as shown in Figure \ref{fig:dependency_diagram} reveals that the dependencies of each term $v_i$ for $1 \leq i \leq T-2$ can be written as $v_i = v_i(v_{i+1}, \hat{u}_{i+1})$,
    where the relationship is convex in $\hat{u}_{i+1}$ and convex nondecreasing in $v_{i+1}$. The final term in the recurrence $v_{T-1}$ is a convex function of $\hat{u}_{T}$. That is,
    $$v_1 = v_1(\cdot, \Uhat_{2}) \circ v_2(\cdot, \Uhat_3) \circ \cdots \circ v_{T-1}(\Uhat_{T}) $$
    By preservation of convexity under composition with non-decreasing convex functions, we reason that $v_1$ is a convex function of $\{ \hat{u}_i \}_{i = 2}^T$. Since $\OT^*_{\sigma^2 \Delta t_1}(u_1, v_1)$ is convex and non-decreasing in each of $(u_1, v_1)$ and $u_1$ is affine in $\hat{u}_1$, we conclude it is convex in $\{ \hat{u}_i \}_{i = 1}^T$, and the so overall problem retains convexity.
\end{proof}
\begin{figure}[H]
    \begin{subfigure}{\linewidth}
        \centering\begin{tikzpicture}
            \node (v1) at (0, 0) {$v_i$};
            \node (phi1) at (3, 0) {$\phi_i$}; 
            \node (psi2) at (6, 0) {$\psi_{i+1}$}; 
            \node (u2) at (12, 0) {$u_{i+1}$};
            \node (uhat2) at (12, -2) {$\boxed{\Uhat_{i+1}}$};
            \node (v2) at (15, 0) {$v_{i+1}$};
            \draw[->] (phi1) -- (v1) node[midway, above]{$\frac{v_i}{m_i} + \frac{\phi_i}{m_{i+1}} = 0$};
            \draw[->] (psi2) -- (phi1) node[midway, below]{\textcolor{red}{\eqref{eq:dual_growth_constraint_appendix}}} node[midway, above]{$G_{i+1}^*(\phi_i, \psi_{i+1})$};
            \draw[->] (u2) -- (psi2) node[midway, above]{$\frac{\lambda u_{i+1}}{\Delta t_{i+1}} + \frac{\lambda \psi_{i+1}}{\Delta t_i} + w_{i+1} \hat{u}_{i+1} = 0$};
            \draw[->] (v2) -- (u2) node[midway, below]{\textcolor{red}{\eqref{eq:ot_tilde_constraint}}} node[midway, above]{$\tildeOT^*_{\sigma^2 \Delta t_{i+1}}(u_{i+1}, v_{i+1})$};
            \draw[->] (uhat2) -- (psi2);
        \end{tikzpicture}
        \caption{}
    \end{subfigure}
    \begin{subfigure}{\linewidth}
        \centering\begin{tikzpicture}
            \node (vT-1) at (0, 0) {$v_{T-1}$};
            \node (phiT-1) at (4, 0) {$\phi_{T-1}$}; 
            \node (psiT) at (7, 0) {$\psi_{T}$}; 
            \node (uhatT) at (11, 0) {$\boxed{\Uhat_T}$};
            \draw[->] (phiT-1) -- (vT-1) node[midway, above]{$\frac{v_{T-1}}{m_{T-1}} + \frac{\phi_{T-1}}{m_T} = 0$}; 
            \draw[->] (psiT) -- (phiT-1) node[midway, below]{\textcolor{red}{\eqref{eq:dual_growth_constraint_appendix}}} node[midway, above]{$G_{T}^*(\phi_{T-1}, \psi_{T})$};
            \draw[->] (uhatT) -- (psiT) node[midway, above]{$\frac{\lambda \psi_T}{\Delta t_{T-1}} + w_T \hat{u}_T = 0$};
        \end{tikzpicture}
        \caption{}
    \end{subfigure}
    \caption{(a) Diagram showing dependence of $v_i$ as a function of $(v_{i+1}, \hat{u}_{i+1})$ for $1 \leq i \leq T-2$ the dual problem both before (black) and after (red) restricting to the boundary of the dual constraint set. (b) Diagram for the terminating term in the recurrence relation for $v_{T-1}$. The relevant optimization variables $\{\Uhat_i \}_{i = 2}^T$ are boxed, and all other variables are auxiliary. }
    \label{fig:dependency_diagram}
\end{figure}

\FloatBarrier

\section{Supplementary figures}\label{sec:supp_fig}

\begin{figure}[h]
    \centering
    \begin{subfigure}{0.75\linewidth}
        \centering\includegraphics[width = \linewidth]{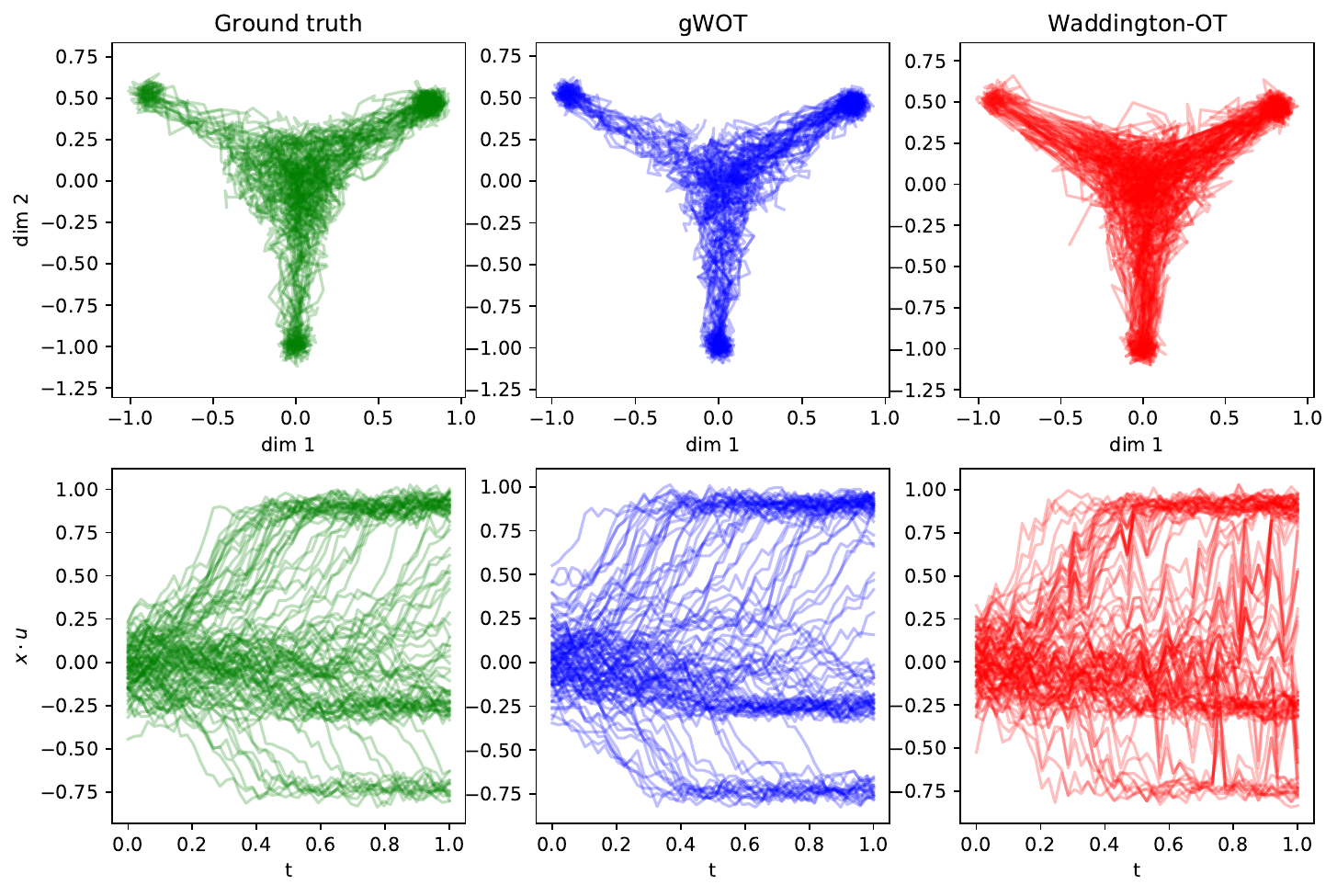}
        \caption{}
    \end{subfigure}
    \begin{subfigure}{0.3\linewidth}
        \centering\includegraphics[width = \linewidth]{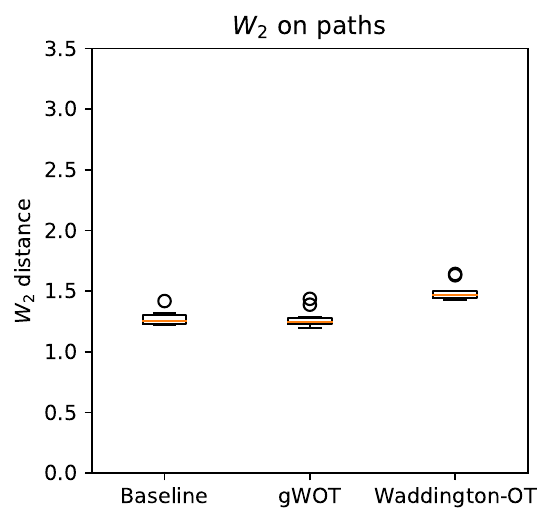}
        \caption{}
    \end{subfigure}
    \caption{(a) Sample paths as in Figure \ref{fig:tristable_sample_paths}, but with $N = 250$ (and $T = 50$). (b) $W_2$ estimates on sample paths, computed over 10 repeats for samples of 1000 paths.}
    \label{fig:tristable_sample_paths_largeN}
\end{figure}

\begin{figure}[h]
    \centering
    \begin{subfigure}{0.3\linewidth}
        \centering\includegraphics[width = \linewidth]{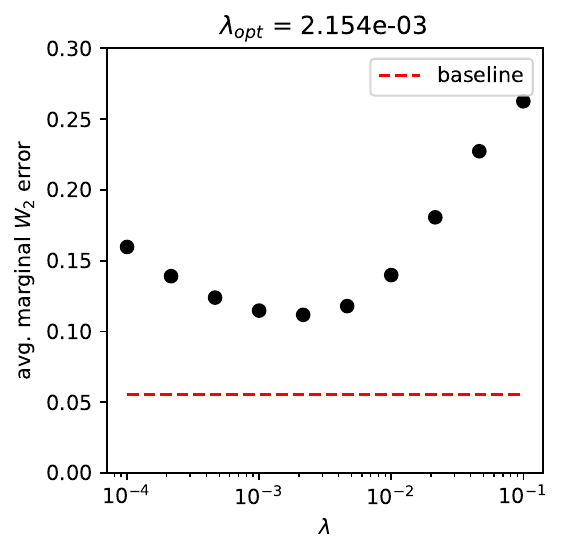}
    \end{subfigure}
    \begin{subfigure}{0.35\linewidth}
        \centering\includegraphics[width = \linewidth]{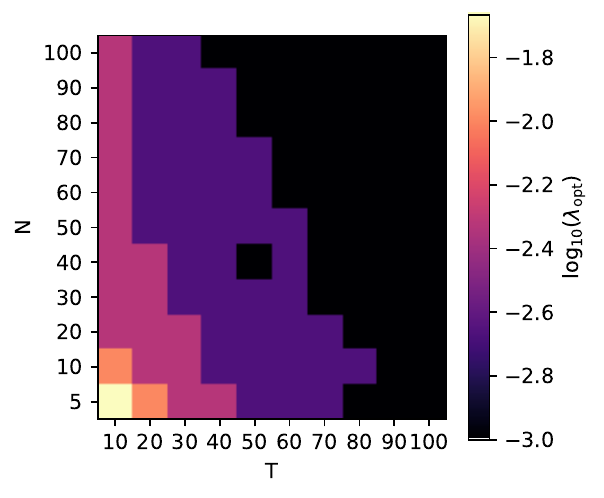}
    \end{subfigure}
    \caption{(a) Average marginal $W_2$ error $T^{-1} \sum_{i = 1}^T d_{W_2}(\R_{t_i}, \rhohat_{t_i})$ of reconstruction $\R$ as function of regularization parameter $\lambda$. Note the presence of a clear minimum at $\lambda_\mathrm{opt} = 2.154\times 10^{-3}$. (b) Optimal value of $\lambda$ as a function of $(N, T)$.}
    \label{fig:tristable_lamda_dep}
\end{figure}

\begin{figure}[h]
    \centering\includegraphics[width = 0.75\linewidth]{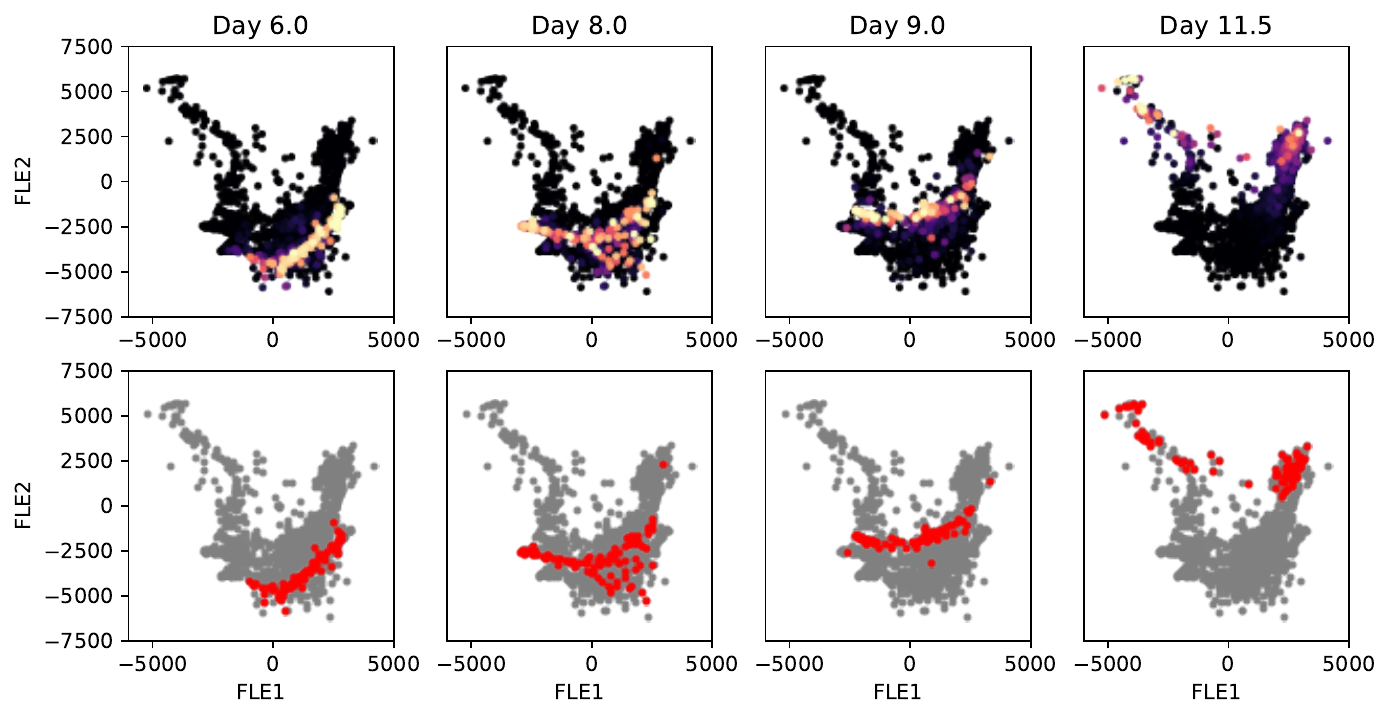}
    \caption{Inferred and sampled marginals at selected time-points for subsampled reprogramming data in the FLE coordinates from \cite{schiebinger2019}.}
    \label{fig:reprog_marginals_fle}
\end{figure}

\begin{figure}
    \centering
    \includegraphics[width = 0.5\linewidth]{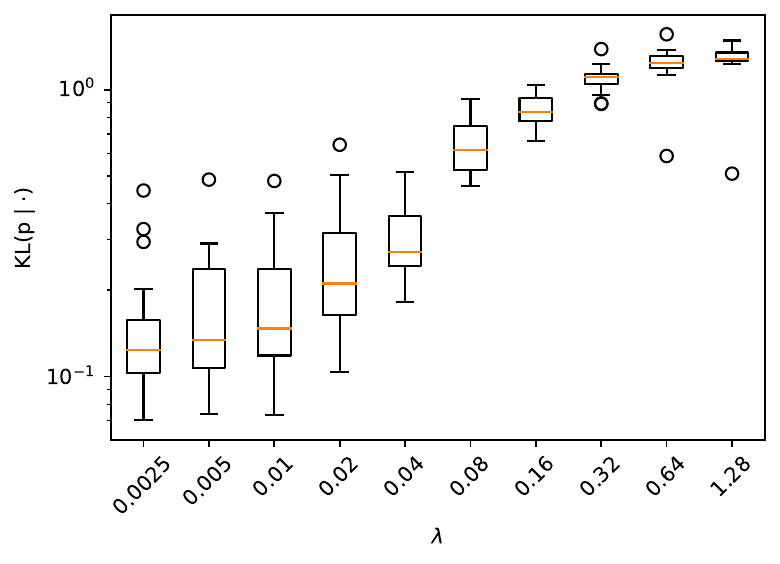}
    \caption{Average proportion discrepancy $T^{-1} \sum_{i = 1}^T \KL(p^{(i)} | \R_{t_i})$ summarized over 25 repeats, for varying regularisation levels $\lambda$. }
    \label{fig:reprogramming_prop_lambda}
\end{figure}

\FloatBarrier

\section{Supplementary results}\label{sec:supp_results}

\subsection{Comparison to kernel smoothing}\label{sec:kernel}

\paragraph*{Kernel smoothing approach} 
Kernel smoothing is an extremely common procedure used in statistics, and in principle could be applied to the setting of time-series measurements which we consider in this paper. In what we will refer to as the ``kernel method'', we share information across time-points by Euclidean averaging of the sampled marginals $\{\rhohat_{t_i}\}_{i = 1}^T$, weighted by a kernel in the time domain. For input marginals $\{ \rhohat_{t_i} \}_{i = 1}^{T}$ and an input bandwidth $h$, we take the resulting \emph{kernel-averaged} marginal to be  
\begin{align*}
    \rhotilde_{t_i} &\propto \sum_{j = 1}^{T} k(t_i - t_j; h) \rhohat_{t_j},
\end{align*}
where we choose the kernel $k$ to be Gaussian, $k(s; h) = \exp\left( -s^2/h^2\right)$. We reason that gWOT is a more natural and flexible approach than the kernel method by virtue of its formulation as an optimization problem over probability laws on paths. Intuitively, the kernel method relies on the assumption that the underlying process is stationary over a short timescale selected through the bandwidth $h$. On the other hand, gWOT is based on the assumption that the underlying process follows loosely a geodesic in the space of probability measures, and therefore may be approximated by piecewise composition of Schr\"odinger bridges. 

\paragraph*{Comparison with gWOT} 
As a toy example illustrating the distinction between the two methods, we consider in $\mathbb{R}^2$ the linear potential
\begin{equation*}
    \Psi(x, y) = -1.5(x + y).
\end{equation*}
Particles are initially distributed following $X_0 \sim 0.5\mathcal{N}(0, I_2) + (1, 1)^\top$ and we take $\sigma^2 = 0.1$. We capture 5 time-points with 250, 1, 1, 1 and 250 particles respectively. We reconstruct marginals using $\lambda = 0.05$ and $\varepsilon_\mathrm{DF} = 0.025$ and $\pi_0$ chosen to be uniform. Default values were used for all other parameters as in Section \ref{sec:tristable}. For the kernel method, we chose the bandwidth to be $h = 0.25$, which gave the best results as judged by eye. As is clear from Figure \ref{fig:2blobs}, gWOT produces estimated marginals of the underlying process at times $t_2, t_3, t_4$ that recapitulate the underlying drift. This illustrates the fact that gWOT seeks to optimize over discrete probability laws on paths, and in doing so encourages paths to follow geodesics with respect to the Wasserstein distance. In contrast, at times $t_2, t_3$ and $t_4$ the kernel method produces weighted Euclidean averages which turn out to be poor estimates of the true process. 
\begin{figure}[H]
    \centering
    \includegraphics[width = 0.75\linewidth]{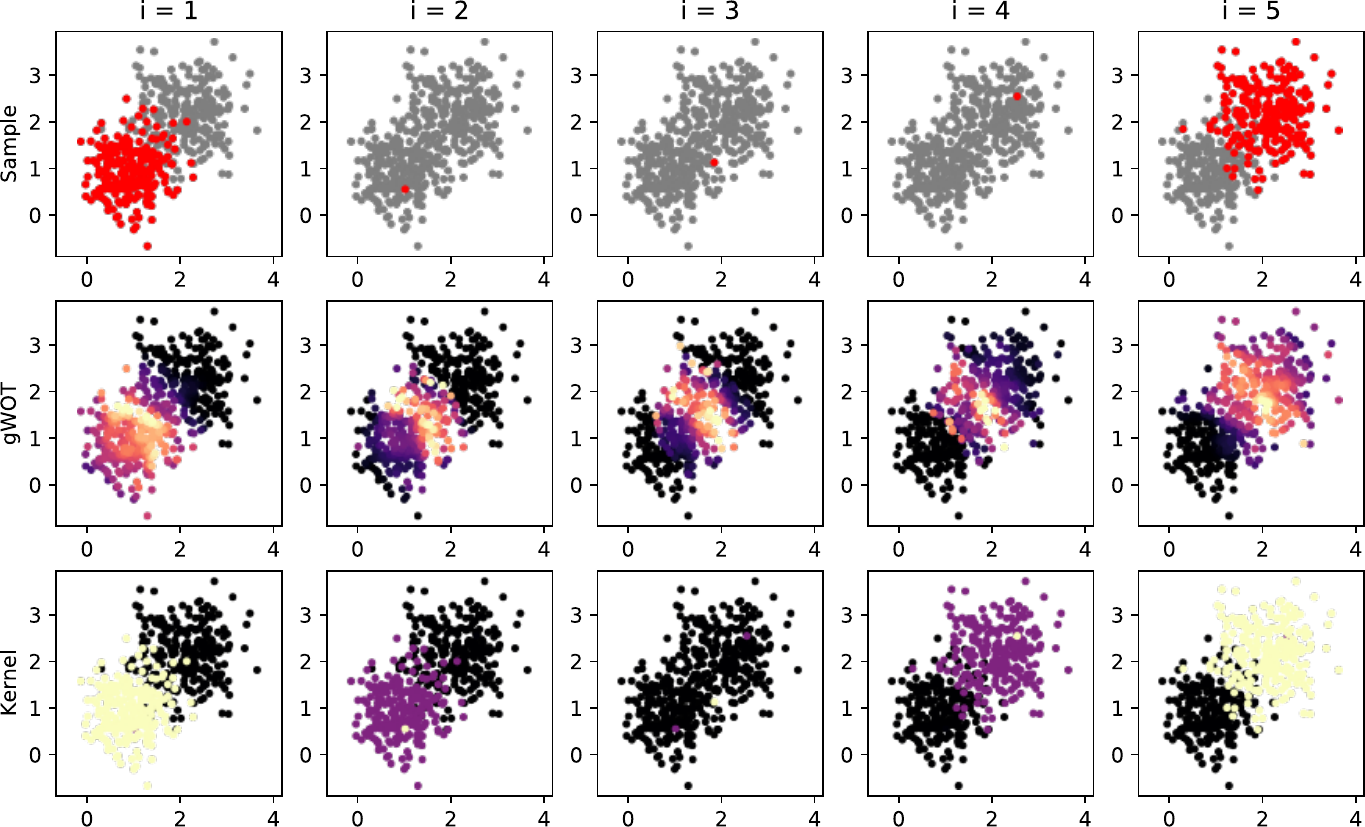}
    \caption{(Top) Observed samples provided as an input to both gWOT and kernel method; (Middle) Marginal estimates output by gWOT; (Bottom) Marginal estimates produced by kernel method }
    \label{fig:2blobs}
\end{figure}

It is also informative to examine the sample paths in this scenario, shown in Figure \ref{fig:2blobs_paths}. Here, we see that the sample paths produced by gWOT are very similar to the ground truth paths. On the other hand, the sample paths produced by the kernel method and Waddington-OT appear visually to be very different. 

\begin{figure}[H]
    \centering\includegraphics[width = 0.75\linewidth]{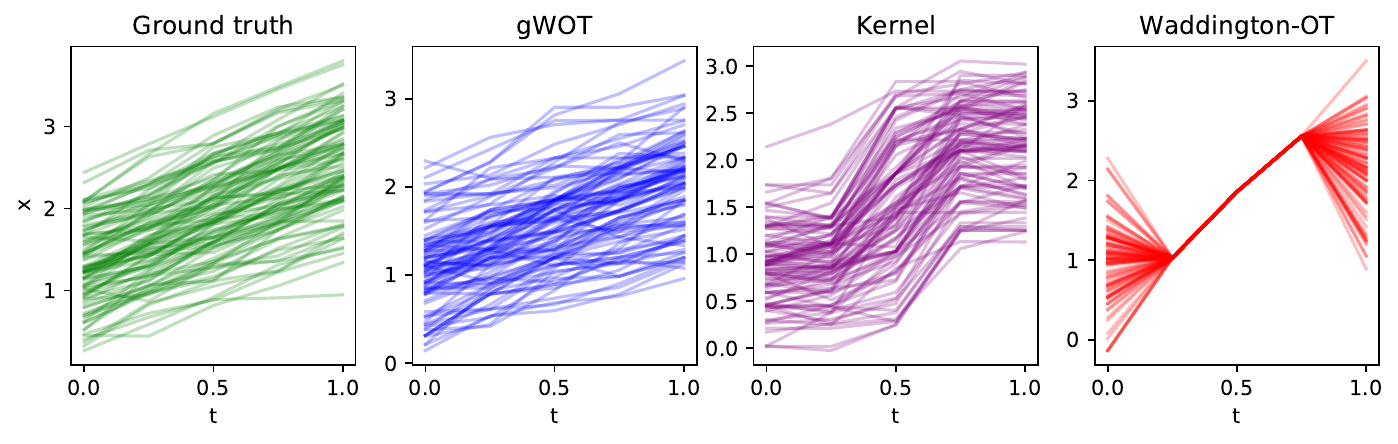}
    \caption{Sample paths drawn from the ground truth (green), gWOT estimates (blue), kernel estimates (purple) and Waddington-OT (red).}
    \label{fig:2blobs_paths}
\end{figure}

\subsection{Some remarks on data preprocessing and choice of parameter values} \label{sec:preprocessing_and_params}

In general, the optimal choice of parameters for a given application will depend on the specific data at hand. However, we will discuss a few guiding principles which may apply generally in practical settings.

\paragraph*{Preprocessing of the input}
    In the simulated diffusion-drift examples discussed earlier in this section, simulation parameters such as the diffusivity were known exactly and therefore no pre-processing was necessary. However, for real-world datasets such as the scRNA-seq example, an initial PCA step is generally advisable \cite{schiebinger2019}. Furthermore, appropriate normalization of the optimal transport cost matrix $C_{ij} = \frac{1}{2}\| x_i - x_j \|^2$, while not consequential from a mathematical standpoint, may be helpful to ensure numerical stability of computations and also allows values of parameters and losses to occupy the same order of magnitude and therefore be roughly comparable across datasets. A common rule of thumb in the optimal transport literature \cite{schiebinger2019} as we also describe in Appendix \ref{sec:reprogramming_details} is to scale cost matrices by their mean or median so as to have entries that are order one.

\paragraph*{Choice of the regularization strength and diffusivity}
    The diffusivity $\sigma^2$ and the regularization strength $\lambda$ are the central parameters for the formulation of gWOT described in Section \ref{sec:methodology}. In most biological applications, the diffusivity $\sigma^2$  is unknown. Therefore, $\sigma^2$ may need to be heuristically chosen judging from the length scale of the data, and the time scale over which the process occurs. As a rule of thumb, $\sigma^2$ should correspond to the mean square displacement that can be expected of a diffusive particle in unit time. Alternatively, $\sigma^2$ may be chosen by empirically examining the resultant pairwise couplings in order to select an appropriate balance between diffusion and drift effects, as was done in \cite{schiebinger2019}.
    
    Since the regularization counteracts effects introduced by having access to limited samples, the optimal $\lambda$ should be inversely related to the number of observed time-points and observed particles at each time-point, as discussed previously. As we found in Sections \ref{sec:tristable}, \ref{sec:branching} and \ref{sec:reprogramming} and illustrated in Figure \ref{fig:tristable_lamda_dep}(b), when cost matrices are normalized to order one, a reasonable range for initial guesses of $\lambda$ is on the scale of $10^{-2}-10^{-3}$. It may also be informative to consider the regularization loss $\Reg(\cdot)$ for various values of $\lambda$ to quantify the tradeoff between regularization and data-fitting. We do this for the example of Section \ref{sec:tristable} where $N = 20, T = 50$ in Figure \ref{fig:tristable_elbow}, where one may reasonably identify an ``elbow'' from the plot, corresponding to an optimal tradeoff between the regularization and data-fitting losses. In the end, some visualization or other downstream analysis with external knowledge of the application domain may be necessary to select a ``best'' value of $\lambda$.

\begin{figure}[h]
    \centering\includegraphics[width = 0.3\linewidth]{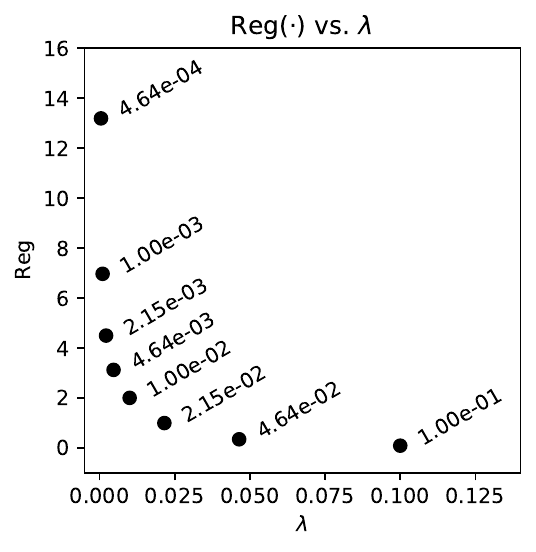}
    \caption{Value of $\Reg(\R)$ at the optimal point $\R$ for varying values of $\lambda$ in the example of Section \ref{sec:tristable} when $N = 20, T = 50$.}
    \label{fig:tristable_elbow}
\end{figure}

\paragraph*{Choice of other parameters}
    \begin{itemize}
        \item Data-fitting regularization $\varepsilon_i$: This parameter arises in the smoothed entropy-regularized approximation to optimal transport in the data-fitting functional. Therefore, it should be chosen sufficiently small so as to have minimal blurring effect on the reconstruction output. For problems where the cost is order one, we have found that values from $0.005-0.05$ typically work well.
        \item Time-point weights $w_i$: The weights $\{ w_i \}_{i = 1}^T$ specify the relative contribution of different time-points to the data-fitting functional. Although this may be tuned by the user, we recommend to weight each time-point $t_i$ proportional to the number of observations $N_i$ made, as was done in Section \ref{sec:kernel}. That is, we take
        \begin{align*}
            w_i = \frac{N_i}{\sum_{i = 1}^T N_i}.
        \end{align*}
        \item Soft branching constraint penalty $\kappa$: We discuss at length in Section \ref{sec:growth} the effect this parameter has in the case where branching is present. From the form written in Section \ref{sec:growth}, it is clear that $\kappa$ scales with the transport cost terms in the regularizing functional \eqref{eq:regfunc_growth} and therefore scales with the cost matrices. In our setting where cost matrices have order one, we find that values from 1-10 tend to work well.
        \item Cross-entropy coefficient $\lambda_i$: For each time point $t_i$, the coefficient $\lambda_i$ controls the tradeoff between the transport and cross-entropy terms in the data-fitting functional \eqref{eq:dffunc_growth}. When the transport cost has order one, we have found that simply setting $\lambda_i = 1$ works well, corresponding to a 1:1 tradeoff.
    \end{itemize}

\subsection{Augmenting the support}\label{sec:aug_supp}

In order to obtain a finite dimensional approximation to~\eqref{eq:opt_theory}, we have been optimizing over measures supported on the discrete set $\overline{\mathcal{X}}$ constructed as the union of all sampled points, i.e. $\overline{\mathcal{X}} = \cup_{i = 1}^T \mathrm{supp}(\rhohat^{t_i})$. 
However, restricting the support in this way can impair performance when we have few samples in a particular temporal window. For example, as in Section \ref{sec:kernel}, suppose that the true process $\rho_t$ is a geodesic in the Wasserstein space, and we obtain a high-fidelity estimate of $\rho_t$ (from a large number of samples) at times $t \in \{0,1\}$, but few samples at time $t=\frac 1 2$. 
If the supports of $\rho_0$ and $\rho_1$ are both sufficiently different from $\rho_{\frac 1 2}$, then we would not be able to reconstruct an accurate estimate of $\rho_{\frac 1 2}$ supported on points from $\overline{\mathcal{X}}$.

To remedy this, we propose to add points to the support $\overline{\mathcal{X}}$ with the following scheme:
\begin{itemize}
    \item Select first a noise level $s^2$: this is different from the typical $\sigma^2 \Delta t_i$ and indeed should be larger, as we seek to add points in regions of $\mathcal{X}$ which are not already represented well in $\overline{\mathcal{X}}$.
    \item For a pair of time-points $(t_i, t_{i+2})$ compute $\gamma$, the entropy-regularized optimal transport coupling between the estimated marginals $\R_{t_i}, \R_{t_{i+2}}$ with $\varepsilon = s^2$. 
    \item To add $k$ points to the support, sample $k$ pairs $(X^{(i)}, Y^{(i)}) \sim \gamma$ and for each pair we add a point $Z_{1/2}^{(i)}$ sampled from the Brownian bridge conditioned at $Z_0^{(i)} = X^{(i)}, Z_1^{(i)} = Y^{(i)}$  at the midpoint:
    \begin{align*}
        Z_{1/2}^{(i)} \sim \mathcal{N}\left(\frac{1}{2}(X^{(i)} + Y^{(i)}), \frac{s}{2} I_d\right)
    \end{align*}
\end{itemize}
We then form the augmented support  
\begin{align*}
    \overline{\mathcal{X}}' = \overline{\mathcal{X}} \cup \{ Z_{1/2}^{(i)} \}_{i = 1}^N.
\end{align*}
Using this augmented support $\overline{\mathcal{X}}'$, an improved estimate of the marginals may be obtained by solving again with gWOT. As an example, we consider again the simulation from Figure \ref{fig:datafitting_counterexample2}, in which we only have one sample per time-point. We display in Figure \ref{fig:aug_support} a scenario where the low number of observed samples results in a noticeable gap in the reconstructed marginals. We employ the method we describe to add points to the support, and solve gWOT again using the augmented support, thereby `filling in' the gap to obtain an improved estimate of the underlying process. 

\begin{figure}[h]
    \centering
    \includegraphics[width = 0.75\linewidth]{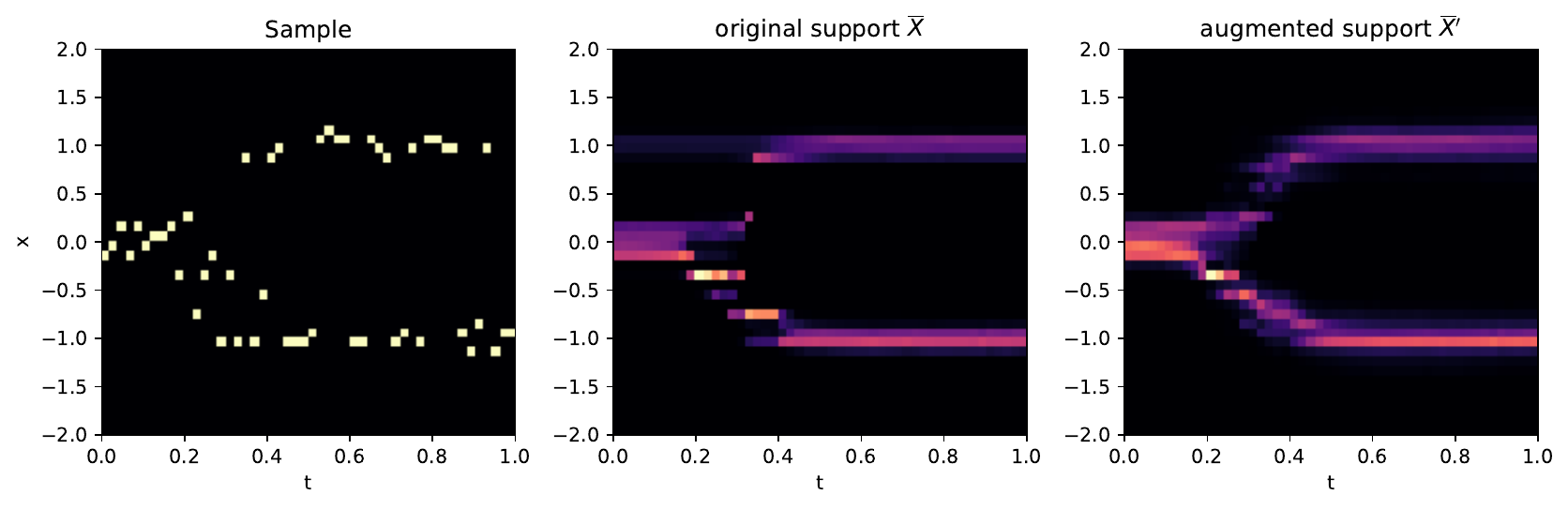}
    \caption{Augmenting the support may increase the quality of reconstructed marginals. With very few samples per time-point (left), the reconstructed marginals display artifacts (middle). By augmenting the support with the scheme described in Section~\ref{sec:aug_supp}, we add points to $\overline{\Xset}$ that were not present before, improving the quality of the reconstructed marginals (right).}
    \label{fig:aug_support}
\end{figure}

\subsection{Preprocessing and application to reprogramming time-series} \label{sec:reprogramming_details}

We compute the 10-dimensional PCA projection of the subsampled days 6-11.5 dataset, from which heteroskedasticity in the time-series became apparent, with the variance of time-points increasing almost 5-fold from day 6 to day 11.5. To prevent uneven weighting of transport cost at later time-points, for successive times $(t_k, t_{k+1})$ we employed a normalized cost matrix $\overline{C}_{t_k}$ by dividing by the average of the cost matrix between $\rhohat_{t_k}$ and $\rhohat_{t_{k+1}}$:
\begin{align*}
    (\overline{C}_{t_k})_{ij} &= \dfrac{C_{ij}}{\E_{(x, y) \sim \rhohat_{t_k} \times \rhohat_{t_{k+1}}} \| x - y \|^2}
\end{align*}
where $C_{ij} = \| x_i - x_j \|^2$ for $x_i, x_j \in \overline{\mathcal{X}}$ . The same procedure was done to normalize the transport terms in the data-fitting functional.

We employed the branching rates provided from the Waddington-OT tutorials \cite{schiebinger2019}. In order to obtain estimates for the relative masses of time-points, we first fit our model with no a priori branching (i.e. we set $g(x, t) = 1$, $m_i = 1$) and $\kappa_i = 10$. For each marginal $\R_{t_i}$ thus obtained, using the branching estimates $g(x, t)$ from \cite{schiebinger2019} we computed the ratio
$$\dfrac{\sum_k g(x_k, t_i) (\R_{t_i})_k}{\sum_k (\R_{t_i})_k},$$
and computed guesses of the relative masses at each time by taking successive products of these ratios. Using the estimated a priori branching rates and approximate values for $m_i$, we fit again our model with $\kappa_i = 10$. For both models, we used $\lambda = 10^{-3}$, $\varepsilon_i = 0.025$ and picked $\sigma$ such that the effective diffusivity for each 0.5 day transport was 0.1, i.e. $\sigma^2 = \frac{0.1}{2\Delta t_1}$. Models were solved using L-BFGS with a duality-gap tolerance of $10^{-5}$. Output marginals were then renormalized to sum to 1.

\end{appendix}

\end{document}